\newcommand{\eUDRL}{eUDRL}
\newcommand{\subnote}[1]{{\color{gray}{#1\:}}}
\newcommand{\prob}{\mathbb{P}}
\newcommand{\stag}[1]{#1^{\Sigma}}
\newcommand{\argmax}{\mathrm{argmax}}
\DeclareMathOperator*{\supp}{\mathrm{supp}}
\newcommand{\indicator}{\mathbf{1}}
\DeclareMathOperator*{\ev}{\mathbb{E}}
\newcommand{\Traj}{\mathrm{Traj}}
\newcommand{\Seg}{\mathrm{Seg}}
\newcommand{\trail}{\mathrm{trail}}
\newcommand{\diag}{\mathrm{diag}}
\newcommand{\SegDiag}{\Seg^{\diag}}
\newcommand{\SegTrail}{\Seg^{\trail}}
\newcommand{\theinterestingstates}{\bar{\mathcal{S}}_{\tkernel_0}}
\newcommand{\num}{\mathrm{num}}
\newcommand{\den}{\mathrm{den}}
\newcommand{\oactions}{\mathcal{O}}
\newcommand{\tkernel}{\mathit{\lambda}}
\newcommand\eref[1]{(\ref{#1})} 
\numberwithin{equation}{section}
\newlength{\typeoverlen}
\def\section{\clearpage\@startsiction{section}{1}{\z@}{-0.24in}{0.10in}
             {\Large\bf\raggedright}}
\begin{document}

\title{On the Convergence and Stability of\\
Upside-Down Reinforcement Learning,\\
Goal-Conditioned Supervised Learning,\\
and Online Decision Transformers}

\author{%
    \name Miroslav {\v{S}}trupl $^{1}$ \email miroslav.strupl@idsia.ch
    \AND
    \name Oleg Szehr $^{1}$ \email oleg.szehr@idsia.ch
    \AND
    \name Francesco Faccio $^{1,2}$ \email francesco.faccio@idsia.ch
    \AND
    \name Dylan R.\ Ashley $^{1,2}$ \email dylan.ashley@idsia.ch
    \AND
    \name Rupesh Kumar Srivastava \email rupspace@gmail.com
    \AND
    \name J{\"{u}}rgen Schmidhuber $^{1,2,3}$ \email juergen.schmidhuber@kaust.edu.sa
    \hfill\\
    \vspace{-0.25cm}\\
    \addr $^1$ Dalle Molle Institute for Artificial Intelligence (IDSIA) - USI/SUPSI, Lugano, %
    Switzerland
    \\
    \addr $^2$ Center of Excellence for Generative AI, King Abdullah University of Science and Technology,
    \\
    \addr $^{\text{\enspace}}$ Thuwal, %
    Saudi Arabia
    \\
    \addr $^3$ NNAISENSE, %
    Lugano, Switzerland
}

\editor{My editor}

\maketitle

\begin{abstract}
This article presents a rigorous analysis of the convergence and stability of Episodic Upside-Down Reinforcement Learning, Goal-Conditioned Supervised Learning, and Online Decision Transformers. While these algorithms perform well across various benchmarks, their theoretical foundations is limited to heuristics and special cases. We develop a theoretical framework for algorithms that build on the paradigm of approaching reinforcement learning through supervised learning. We identify conditions under which the mentioned algorithms identify optimal solutions and evaluate their stability in noisy environments. Specifically, we study the continuity and asymptotic convergence of command-conditioned policies, values, and the goal-reaching objective depending on the transition kernel of the underlying Markov Decision Process. We show that near-optimal behavior occurs when the transition kernel is sufficiently close to a deterministic kernel, and that these quantities are continuous at deterministic kernels, both asymptotically and after a finite number of learning cycles. Our methods provide the first estimates for the convergence and stability of policies and values in terms of the transition kernel. To achieve this, we introduce novel concepts to reinforcement learning, such as segment spaces, quotient topologies, and the application of fixed-point theory from dynamical systems. The theoretical study is supported by investigations of example environments and numerical experiments.
\end{abstract}

\begin{keywords}
    Convergence,
    Reinforcement Learning,
    Upside-down Reinforcement Learning,
    Goal-Conditioned Supervised Learning,
    Online Decision Transformers
\end{keywords}

\tableofcontents

\section{Introduction}
Reinforcement Learning (RL) algorithms are designed to learn policies that choose optimal actions while interacting with an environment. The environment does not reveal optimal actions, but only provides higher rewards for taking better actions. This is in direct contrast with Supervised Learning (SL), where the correct output for each input is available for learning. Nevertheless, a series of algorithms have been proposed that attempt to solve reinforcement learning tasks using purely supervised learning techniques. Upside-Down Reinforcement Learning~\citep{schmidhuber2019reinforcement, srivastava2019training} (UDRL) inverts the traditional RL process by mapping desired returns/goals to actions, treating action prediction as a supervised learning problem. Goal-conditioned Supervised Learning~\citep{ghosh2021learning} (GCSL) utilizes goal information to guide the model’s learning process, and Online Decision Transformers~\citep{zheng2022online} (ODT) leverage transformer architectures to model entire trajectories, treating past states, actions, and rewards as sequences to predict optimal actions. Experiments have shown that in addition to being strikingly simple and scalable due to their dependence on SL, these algorithms can produce good results on several RL benchmarks (such as Vizdoom \citep{kempka2016vizdoom}, robotic manipulation \citep{ahn2020robel} and locomotion \citep{fu2020d4rl}). Their theoretical understanding, however, is limited to heuristics and the study of restrictive special cases. Through a rigorous analysis of convergence and stability, this work initiates the development of a theoretical foundation for algorithms that build on the broad idea of approaching RL via SL or sequence modeling. Two questions guide our investigation: \textit{1)} What can be said about the convergence of UDRL, GCSL and ODT assuming that an explicit model (transition kernel) for the underlying Markovian environment is given? What behavior can be expected of typical objects of interest, such as policies, state and action-values in the limit of infinite resources. \textit{2)} How stable are these quantities under perturbation or in the presence of errors in the environment model? Guarantees that ensure that algorithms reliably identify optimal solutions and remain stable under varying conditions are foundational for their practical deployment in real-world systems~\citep{neuroDynProg1996,numericalOptimization2006}.

Stepping back to establish some basic background, it is notable that UDRL, GCSL, and ODT are very similar algorithms. Although architectural details vary, at their core stand common ideas about the acquisition of information by the learning agent. They all focus on directly predicting actions based on reward signals from trajectories, rather than learning value functions. The key shared ingredient is the interpretation of rewards, observations and planning horizons as task-defining inputs from which a command is computed for the learning agent. The agent's rule of action (the \emph{policy}) is then updated through SL, to map previous trajectory observations and commands into actions, completing the learning process. More formally, suppose a number of trajectory samples have been collected by a learning agent that follows a certain rule of action $\pi_{old}$. Given a trajectory segment that starts with a state-action pair $(s,a)$, has a length of $h$, and where the goal $g$ is a quantity that is computed from features of the segment (such as the sequences of states and rewards), one could reason that the action $a$ is useful for achieving $g$ from $s$ in $h$ steps. It then appears natural to interpret $(h,g)$ as a command for the agent, fitting the new rule of action $\pi_{new}$ to the distribution $a|s,h,g$ using SL,
$$\pi_{new}=\argmax_{\pi}\mathbb{E}[\text{loss}(\pi(a|s,h,g))],$$
where $\text{loss}$ is an appropriate loss function and the expectation is computed over all segments in the trajectory sample. Learning proceeds iteratively by replacing \(\pi_{old}\) with \(\pi_{new}\), sampling new trajectories from \(\pi_{old}\), computing the achieved horizons and goals \((h, g)\), and finally using this information to update \(\pi_{new}\). 

Our analysis of the UDRL, GCSL, and ODT algorithms is conducted within the overarching framework of episodic UDRL (eUDRL), which is characterized by a specific structure of the goal. Specifically \eUDRL{} assumes that a goal map $\rho$ is applied to a segment's terminal state to evaluate whether the goal has been reached $g=\rho(s)$. GCSL can be viewed as a slightly restricted version of \eUDRL{}, as it focuses solely on state-reaching tasks and operates with a fixed horizon. Decision Transformer~\citep{chen2021decision} (DT) essentially corresponds to one iteration of \eUDRL{} aimed at offline RL, and ODT can be seen as a form of \eUDRL{} with entropy regularization; see the background section~\ref{se:background} for details. Unlike standard RL policies which define an agent's action probabilities based on a given state, the \lq\lq{}policies\rq\rq{} \(\pi_{old}\) and \(\pi_{new}\) also condition these probabilities on the command \((h, g)\). This leads to the formalism of Command Extensions (cf.~definition~\ref{de:CE}), a special class of Markov Decision Processes, where the command is included as part of the state, \(\bar{s} = (s, h, g)\). On the technical side this article develops the mathematics of Command Extensions, which provides a solid foundation for exploring RL through SL. 

\eUDRL{}'s learning process only requires a single SL step, with no need for value function or policy proxies. While simplicity and high experimental performance coincide remarkably in the \eUDRL{} algorithm, it is evident only in deterministic environments that \eUDRL{} identifies the optimal policy. This limitation was acknowledged already in the original work \citep{schmidhuber2019reinforcement}, where an alternative approach was proposed for non-deterministic environments (e.g., by operating with expected returns rather than just actual returns). In practice, \eUDRL{}'s simplicity has led to its adoption in non-deterministic environments as well. In part this was based on the observation that many non-deterministic environments exhibit only minor non-determinism, i.e.~they can be viewed as perturbed deterministic environments. This approach was explored in the case of {\eUDRL{}} in the article \citep{srivastava2019training}, as well as concurrently {in the case of GCSL} in \citep{ghosh2021learning}, demonstrating the algorithm's practical utility in various domains, including several MuJoCo tasks. However, these articles did not provide solid convergence guarantees. While \cite{ghosh2021learning} showed that GCSL optimizes a lower bound on the goal-reaching objective, the guarantees regarding its tightness are limited. In fact, \eUDRL{}'s goal-reaching objective can be sensitive to perturbations of the transition kernel even in near-deterministic environments. We will discuss this behavior in several instances throughout the article illustrating it with specific examples and computations. Consider figure~\ref{fig:discontIntro} for an illustration of the mentioned discontinuity of the UDRL-generated goal-reaching objective in a specific environment (see section~\ref{se:contfinite} for details). The figure shows the values of the goal-reaching objective (denoted by $J_\alpha$) along two continuous one-parameter families of environment transition kernels, where the families are parametrized by $\alpha$ and intersect at a parameter value of $\alpha=0$. In the case of figure~\ref{fig:discontIntro:a} the one-parameter families intersect at a deterministic environment. In the case of figure~\ref{fig:discontIntro:b} the families intersect a specific non-deterministic environment. While figure~\ref{fig:discontIntro:a} suggests a continuous behavior, the goal-reaching objective appears to be discontinuous in figure~\ref{fig:discontIntro:b}. Despite \eUDRL{}'s remarkable characteristics, the algorithm's stability is clearly a concern.

\begin{figure}[!htb]
	\centering
 \begin{subfigure}{0.48\linewidth}
            \includegraphics[width=\linewidth]{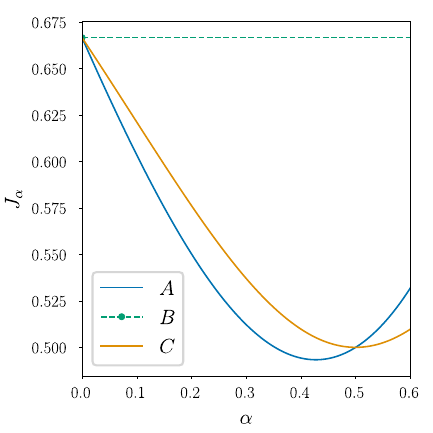}
		\caption{Determinist.~Kernel - Goal-R.~Objective}
         \label{fig:discontIntro:a}
	\end{subfigure} 
	\begin{subfigure}{0.48\linewidth}
            \includegraphics[width=\linewidth]{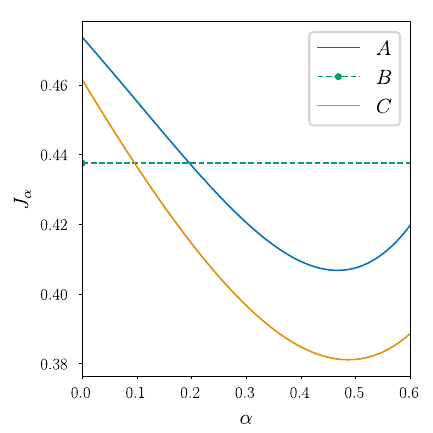}
		\caption{Non-Deter.~Kernel - Goal-R. Objective}
   \label{fig:discontIntro:b}
	\end{subfigure}
	\hfil
	\caption{Illustration of discontinuity of \eUDRL{}-generated goal-reaching objective along two continuous one-parameter rays (items $A$ and $C$) of environments (transition kernels). Horizontal axes show the value of the ray-parameter $\alpha$; the point of intersection is $\alpha=0$. The exact value of the respective quantities at $\alpha=0$ is represented by a horizontal line (item $B$).}
 \label{fig:discontIntro}
\end{figure}

Non-optimal behavior of \eUDRL{} in non-deterministic environments has been highlighted by \cite{strupl2022upsidedown} and \cite{paster2022cant}. The former analyzes the principles behind this non-optimality, while the latter addresses issues with DT and aims for a fully functional variation of the algorithm. Since ODT can be seen as a variation of DT with online fine-tuning, the observed non-optimality is also present in ODT. Subsequent studies, such as \citep{yang2022dichotomy}, have proposed various improvements to UDRL, GCSL, and DT. Although~\cite{brandfonbrener2023does} provide an in-depth analysis of the first \eUDRL{} iteration in the context of offline RL, a comprehensive treatment of the behavior of UDRL, GCSL and ODT at a finite number of iterations and in the asymptotic limit remains an open problem. 

This article does not propose another approach to ensure that \eUDRL{} works in non-deterministic environments, but seeks to understand its original design and justify its empirical success, as seen in \citep{srivastava2019training} and \citep{ghosh2021learning}. We aim to determine if iterating \eUDRL{} beyond the first step leads to convergence to a desired asymptotic behavior, focusing on rigorous convergence guarantees and asymptotic error bounds. To extend convergence guarantees for deterministic transition kernels to nearly-optimal behavior for kernels close to deterministic ones we employ a topological approach. This approach includes mathematical tools that are new to the area of RL, introducing notions like continuity in quotient topologies that employ the weak convergence of measures relative to compact sets (the so-called \emph{relative continuity}). We prove the relative continuity of policies and the continuity of the goal-reaching objectives at deterministic kernels after any finite iteration. Despite its significance, this topic is largely unexplored, partly due to its complexity. As we will show, \eUDRL{} policies become discontinuous from the second iteration onward, particularly at the boundary of the set of transition kernels (including deterministic kernels). This suggests that the first iteration is as effective as any finite iteration but it does not clarify whether repeated iterations reduce the error to optimal behavior. By establishing bounds for the \eUDRL{} recursion {under specific conditions}, we demonstrate the relative continuity of accumulation point sets of \eUDRL{}-generated policies and prove the continuity of related quantities like the goal-reaching objective, along with useful bounds and $q$-linear convergence rates. These conditions outline two important special cases: The first involves a condition on the support of the initial distribution and the second is characterized by the uniqueness of the optimal policy in a deterministic environment, see section~\ref{se:continfty} for details. While the general discussion of continuity and stability of \eUDRL{} remains an open problem, we have found that the theory developed in this article is sufficient to address the regularized \eUDRL{} recursion. The regularized \eUDRL{} recursion closely approximates the ODT, which utilizes entropy regularization. Our analysis of the regularized \eUDRL{} recursion, formulated as a convex combination between the updated policy with the uniform distribution, akin to $\epsilon$-greedy policies, is presented in full generality, without restricting it to any special cases. We restrict
all discussions to finite (discrete) environments, ignoring issues of function approximation and limited sample size.

\subsection{Outline}
Section~\ref{se:background} provides the necessary background for understanding the article.

Section~\ref{se:recrewrites} describes the \eUDRL{} recursion in specific segment subspaces and discusses the connection to the algorithm Reward-weighted Regression.

Section~\ref{se:suppstab} describes sets of states, stable under environment perturbations across all iterations, which proves instrumental for the discussion of continuity of \eUDRL{}-generated quantities in the subsequent sections. 

Section~\ref{se:optdet} proves that \eUDRL{} converges to optimal policies in case of deterministic environments.

Section~\ref{se:contfinite} investigates the continuity of \eUDRL{}-generated quantities at a finite number of iteration for deterministic environments.

Section~\ref{se:continfty} investigates the continuity of sets of accumulation points of \eUDRL{}-generated quantities for deterministic environments.

Section~\ref{se:regrec} investigates the  continuity of sets of accumulation points of quantities generated by regularized \eUDRL{} in full generality.

Section~\ref{se:relwork} discusses related work and section~\ref{se:conclusion} concludes the presentation. 

To make this article accessible to a wider audience several appendices are included.

Appendix~\ref{ap:SegmentDist} contains details about the construction of segment distribution.

Appendix~\ref{ap:examples} contains examples from the main text, worked out in full detail. 

Appendix~\ref{ap:interiorcont} investigates the continuity of \eUDRL{}-generated quantities at interior points (of the set of all transition kernels) for a finite number of iterations.

Appendix~\ref{ap:regrec} contains all details on lemmas and proofs about the regularized \eUDRL{} recursion of section~\ref{se:regrec}.

\section{Background}\label{se:background}
This section provides the necessary background for understanding the article. It offers an orientation on the placement of our work within the \lq\lq{}theoretical landscape\rq\rq{}, reintroducing fundamental theoretical concepts, particularly focusing on Markov Decision Processes of Command Extension type and distributions over segment space. Furthermore, this section outlines how our work fits within the existing literature, describing the integration of ODTs within the \eUDRL{} framework. The innovative developments presented in this article will be introduced in the following sections.

We will be dealing exclusively with finite, real random variables. They are defined as maps $X:\Omega\rightarrow \mathbb{R}$ with a measure space $(\Omega,\mathcal{F},\prob)$, where $\Omega$ is a finite set, $\mathcal{F}$ is a $\sigma$-algebra over $\Omega$ and $\prob$ denotes a probability measure.

\subsection{Markov Decision Process} 
Markov decision processes (MDPs) are a mathematical framework for sequential decision problems in uncertain dynamic environments~\citep{puterman2014markov}. Formally an MDP is a five-tuple $\mathcal{M}=(\mathcal{S},\mathcal{A},\tkernel,\mu,R)$, with a set $\mathcal{S}$ of admissible states, a set of possible actions $\mathcal{A}$, a transition probability kernel $\tkernel$ that defines the probability of entering a new state given that an action is taken in a given state, a distribution $\mu$ over initial states, and a (random) reward $R$ that is granted
through this transition. In MDPs an agent interacts iteratively with an environment over a sequence
of time steps $t$ (where we add the subscript $t$ to random variables to emphasise that they belong to a specific point of time). Let the random variables $S_t:\Omega\rightarrow \mathcal{S}$ describe the state of the MDP and the random variables $A_t:\Omega\rightarrow \mathcal{A}$ describe the actions chosen by the agent. Beginning with an initial distribution over states $\mu(s) = \prob( S_0=s )$ at each step of the MDP the agent observes the current state $s\in\mathcal{S}$ and takes a
respective action $a\in\mathcal{A}$ according a policy $\pi(a |s ) = \mathbb{P}(A_t=a|S_t=s)$, subsequently the environment transitions to $s'$ according to the probability $\tkernel(s'|s,a)= \prob( S_{t+1} = s_{}' \mid S_{t} = s,A_{t} = a)$. A transition kernel $\tkernel$ is \emph{deterministic} if for each $s,a \in \mathcal{S}\times\mathcal{A}$ the distribution $\tkernel(\cdot|s,a)$ is deterministic, i.e. if there exists a state $s'_{s,a}$ such that $\tkernel(s'_{s,a}|s,a) = 1$. Similarly a policy $\pi$ is deterministic if for each $s\in \mathcal{S}$ the distribution $\pi(\cdot|s)$ is deterministic. Consecutive sequences of state-action transitions are commonly called trajectories of the MDP. In this article we will always assume deterministic rewards, i.e., the reward is given by
$R_{t+1}(S_{t+1},S_t,A_t) = r(S_{t+1},S_t,A_t)$ with a deterministic function $r$.
The return $G_t = \sum_{k \in \mathbb{N}_0} R_{t+k+1}$ is the accumulated reward beginning with time $t$ over the entire MDP episode, where we do not discount future rewards.
The performance of an agent following a policy $\pi$ can be measured by means of the
state value function $V^{\pi}(s) = \ev [G_t \mid S_t = s;\pi]$ and the 
action value function $Q^{\pi}(s,a) = \ev [G_t \mid S_t = s,A_t = a;\pi]$. There exist a unique optimal state value function $V^* = \max_{\pi} V^{\pi}$ and a unique optimal action value function $Q^* = \max_{\pi} Q^{\pi}$, where the maximization is taken over the set of all policies. A policy $\pi^*$ for which $V^{\pi^*} = V^*$ and consequently also $Q^{\pi^*} = Q^*$ is called optimal. In what follows we will work with special types of MDPs called \textit{Command Extensions}. In this article we will reserve the symbols $\pi$, $V^{\pi}$, $Q^{\pi}$ for the respective quantities for MDPs of Command Extension type.

\subsection{Markov Decision Processes of Command Extension Type}
The aim of the training process in \eUDRL{} is to achieve that the agent becomes better at executing commands. In \eUDRL{} the command is provided to the agent in the form of a \lq\lq{}goal\rq\rq{} and a \lq\lq{}horizon\rq\rq{}. The goal says which state the agent is supposed to achieve and the horizon says when this state should be achieved. A goal map $\rho:\mathcal{S}\rightarrow\mathcal{G}$ will be employed to evaluate if the command's goal has been reached at the command's horizon, i.e.~if $\rho(s)$ equals to the command's specified goal for the final state $s$. The codomain $\mathcal{G}$ of the goal map is the set of all possible goals, weather they are reached in one specific trajectory or not. Notice that since $\rho$ is defined on the entire state space $\mathcal{S}$, every state $s$ at the horizon corresponds to a valid goal $\rho(s)$. However, this goal might not be the specified target of the chosen command. The introduction of the goal map allows
one to study prototypical \eUDRL{} tasks within a unified formalism: The state reaching task
is covered by $\mathcal{G}=S, \rho = \mathrm{id}_{\mathcal{S}}$ (the identity map on $\mathcal{S}$), while goals related to return of the original MDP $\mathcal{M}$ can be covered by extending the state by a component that accumulates rewards and defining $\rho$ to be the projection on this component.

Apart from the MDP state the \eUDRL{} agent takes an additional command input. Let the random variable $G_t:\Omega\rightarrow \mathcal{G}$ describe the goal of the \eUDRL{} agent and let $H_t:\Omega\rightarrow \bar{\mathbb{N}}_0=\{0,1,...,N\}$, $N\geq1$, be a random variable that describes the remaining horizon. The \eUDRL{} agent can be viewed as an ordinary agent operating in an MDP whose state space is extended by the command. In this context one has to provide an initial distribution over goals and horizons 
$\prob(H_0=h,G_0=g \mid S_0=s)$ and with each transition of the extended MDP the remaining horizon decreases by $1$ until a horizon of $0$ is reached. At this stage the extended MDP enters an absorbing state (see below for a definition), from which no further evolution occurs. A reward is granted if the specified goal is reached when the horizon turns to $0$. In summary we have the following definition of command extensions (cf. Definition 1 in \citep{strupl2022upsidedown}):

\begin{definition}[Command Extension] \label{de:CE}
A Command Extension (CE)
of an MDP of the form $\mathcal{M}=(\mathcal{S}, \mathcal{A}, \tkernel, \mu, r)$
is an MDP of the form $\bar{\mathcal{M}} = (\bar{\mathcal{S}}, \mathcal{A}, \bar{\tkernel},  \bar{\mu}, \bar{r},\rho)$, 
where:
\begin{itemize}
\item A command is a pair $(g,h)\in\mathcal{G}\times \bar{\mathbb{N}}_0$, where  $\mathcal{G}\subset\mathbb{Z}^{n_G}$ and $\bar{\mathbb{N}}_0 = \{0,1,...,N\}$. $N\geq 1$ stands for the maximum horizon and $\mathcal{G}$ for the goal set of the CE. The goal map 
$\rho:S \rightarrow \mathcal{G}$ is used to evaluate if a goal $g\in\mathcal{G}$ has been reached.

\item
The extended state space is $\bar{\mathcal{S}} = \mathcal{S}\times\bar{\mathbb{N}}_0\times{\mathcal{G}}$ and the extended state is a triple $\bar{s} = (s,h,g) \in \bar{\mathcal{S}}$ composed of an original MDP state $s$ and the command $(g,h)$.

\item
The initial distribution of $\bar{\mathcal{M}}$ is given by a product of a distribution of commands and the initial distribution of $\mathcal{M}$
$$\bar{\mu}(\bar{s}) = \prob(H_0=h ,G_0=g \mid S_0=s)\mu(s).$$

\item
The transition kernel $\bar{\tkernel}$ is defined for all $(s,h,g) \in \bar{\mathcal{S}}$, all $s'\in \mathcal{S}$ and $a\in\mathcal{A}$ by 
\begin{align*}
\bar{\tkernel} \Big( (s',h-1,g) \mid (s,h,g), a \Big)
&= \tkernel(s' \mid s,a) &\textnormal{if}\quad h>0,\\
\bar{\tkernel} \Big( (s',h,g) \mid (s,h,g), a \Big)\quad
&= \delta_{ss'}&\textnormal{if}\quad h=0.
\end{align*}

\item The reward function $\bar{r}$ is defined for all $\bar{s}'=(s',h',g')\in\bar{\mathcal{S}},\bar{s}=(s,h,g)\in\bar{\mathcal{S}}$ and all $a\in\mathcal{A}$ by
\begin{align*}
\bar{r}\Big( (s',h',g'), (s,h,g),a\Big) = \begin{cases}
\indicator_{\{\rho(s') = g\}}\quad&\textnormal{if}\quad h=1, h'=0, g'=g,\\
0\quad &\textnormal{otherwise}.
\end{cases}
\end{align*}
\end{itemize}
We will commonly refer to $\mathcal{M}$ as \emph{the original MDP} and to $\bar{\mathcal{M}}$ as \emph{the extended MDP}. Similarly, $s$ will be the \emph{original state} and $\bar s$ will be the \emph{extended state}.
\end{definition}

Our subsequent investigation almost exclusively revolves around the policies of extended MDPs rather than those of the underlying original MDPs. For brevity we reserve the notation $\pi$ (rather than $\bar\pi$) for the policies of CE agents. The reward is defined in a way as to guide the agent towards achieving the intended command. The choice of a binary reward function allows one to interpret the expected reward values as probabilities. In summary the computation of state and action-value functions yields that for all $(s,h,g)\in \bar{\mathcal{S}}$, $h>1$, and all actions $a\in\mathcal{A}$
\begin{equation}
\begin{aligned}
    Q^{\pi}((s,h,g),a) &=
\prob\Big( \rho(S_{t+h})=g\Big|S_{t}=s,H_{t}=h,G_{t}=g,A_t=a;\pi\Big),
\\
V^{\pi}(s,h,g) &= 
\prob\Big(\rho(S_{t+h})=g\Big|S_t=s,H_t=h,G_t=g;\pi \Big).
\end{aligned}
\label{eq:valueexpr}
\end{equation}
With these state and action-value functions, CEs could be interpreted as ordinary finite-horizon RL problems. One could then choose from a variety of algorithms that are guaranteed to converge to {an} optimal policy, see e.g.~the monograph by \cite{sutton2018reinforcement}. However, this work is dedicated to the study of the \eUDRL{} algorithm, which is not just an ordinary RL algorithm applied to an MDP of CE type. Instead \eUDRL{} is defined through a specific iterative procedure on CEs (that involves trajectory sampling, supervised learning and policy updates), see section~\ref{subsection:eUDRL} for a detailed description, which (despite of certain convenient properties) is not always guaranteed to converge to the optimum. In the study of \eUDRL{} it will be convenient to employ the \textit{goal-reaching objective} of \cite{ghosh2021learning}, which is derived from the CE value function and can be written as
\begin{equation}
J^{\pi} = \sum_{\bar{s}\in\bar{\mathcal{S}}} \mu(\bar{s}) V^{\pi}(\bar{s}).
\label{eq:goalreachingobj}
\end{equation}
There exist a unique optimal goal-reaching objective $J^* = \max_{\pi} J^{\pi}$, 
 where maximization is taken over the set of all policies. It is easy to see that for any optimal policy $\pi^*$ it holds that $J^{\pi^*} = J^*$.
It will be convenient to introduce a separate notation of absorbing states of the CE. Recall that a state $s\in\mathcal{{S}}$ is called \emph{absorbing} for $\mathcal{M}$ if once the absorbing state is reached the underlying process always remains in this state, i.e.~for all states $s$, $s'$ and actions $a$ the kernel equals the Kronecker delta $\tkernel(s'|s a)=\delta_{ss'}$.

\begin{remark}[Absorbing State of CE] Let $\bar{\mathcal{M}} = (\bar{\mathcal{S}}, \mathcal{A}, \bar{\tkernel},  \bar{\mu}, \bar{r},\rho)$ be a CE. A state of the form ${(s,g,h)\in\bar{\mathcal{S}}}$ is an absorbing state if {and only if} $h=0$. A state that is not absorbing is called transient. We denote by $\bar{\mathcal{S}}_A\subset\bar{\mathcal{S}}$ and $\bar{\mathcal{S}}_T\subset\bar{\mathcal{S}}$, respectively, the sets of absorbing and transient states.
\end{remark}

Initializing a CE into an absorbing state results in no learnable information because the CE will remain in this state, irrespective of the chosen actions. For this reason we will focus our investigation on such CEs that are not initialized into an absorbing state. In other words we will assume that the initial distribution of the studied CE is such that it assigns $0$ probability to absorbing states,
\begin{equation}
\supp \bar{\mu} \subset \bar{\mathcal{S}}_T.
\label{eq:mucond}
\end{equation}
Such CEs are called \emph{non-degenerate}. Hereafter we will only study non-degenerate CEs, loosely referring to them as simply as CEs.

\subsection{Trajectory and Segment Distributions of CEs}

A key role in the \eUDRL{} algorithm is played by segments of sampled CE trajectories. A CE trajectory contains a sequence of consecutive transitions comprised of extended states and actions.
In full generality CE trajectories could be represented by infinite sequences of the form
$$\tau = ((s_0,h_0,g_0),a_0,(s_1,h_0-1,g_0),a_1,\ldots, (s_{h_0-1},1,g_0),a_{h_0-1}, (s_{h_0},0,g_0),\ldots).$$
However, once the maximum horizon of the CE is reached, the CE enters an absorbing state, from which no further evolution occurs. For this reason it is sufficient to represent trajectories by finite sequences of a maximum length $N$. Let $\Traj \subset (\bar{\mathcal{S}}\times\mathcal{A})^N\times\bar{\mathcal{S}}$ denote the subset of finite trajectories of length $N$ whose transitions satisfy the requirements of definition~\ref{de:CE} on the dynamics of horizons and goals. In what follows we will represent trajectories by
\begin{align*}\tau =((s_0,h_0,g_0),a_0,&(s_1,h_0-1,g_0),a_1,\ldots\\&\ldots,(s_{l(\tau)-1},1,g_0), a_{l(\tau)-1},(s_{l(\tau)},0,g_0),\ldots (s_{N},0,g_0)) \in \Traj.
\end{align*}
The quantity $l(\tau)$ stands for the number of transitions until an absorbing state is entered for the first time; it is equal to the initial horizon of $\tau$.
Assuming finite state and
action spaces $\Traj$ is measurable and we denote by $\mathcal{T}:\Omega\rightarrow\Traj$ a random variable with
components $\mathcal{T} = ((S_0,H_0,G_0),A_0,\ldots,(S_N,H_N,G_N))$ whose outcomes are CE trajectories. The probability of $\tau \in \Traj$ is given by
$$
\prob( \mathcal{T} = \tau;\pi)
=
\left( \prod_{t=1}^{l(\tau)}
\tkernel(s_t | a_{t-1}, s_{t-1})
\right)
\cdot
\left( \prod_{t=0}^{l(\tau)-1}
\pi(a_t|\bar{s}_t)
\right)\cdot\bar{\mu}(\bar{s}_0)
.$$

Each segment contains a chunk of consecutive state-action transitions from a given trajectory and also the segment's initial horizon, goal and length. We will represent segments by tuples of the form
$$\sigma=(l(\sigma),s_0^{\sigma},h_0^{\sigma},g_0^{\sigma},a_0^{\sigma},
s_1^{\sigma},a_1^{\sigma},\ldots,s_{l(\sigma)}^{}),$$
where $a_0^{\sigma},\ldots,a_{l(\sigma)-1}^{\sigma}$ are the chosen actions, $s_0^{\sigma},\ldots,s_{l(\sigma)}^{\sigma}$ are the reached states of the original MDP and the subscript $t$ indicates the time of the respective quantity \textit{within the segment} (i.e.~irrespective of the actual time when the state or action occurred in the original MDP's trajectory). The quantity $l(\sigma)$ stands for the length of the segment (i.e.~the number of actions executed during the course of the segment) and $h_0^{\sigma}$ and $g_0^{\sigma}$ denote the segment's initial horizon and goal. In our notation we leave out the horizons and goals for all extended states except from the first one because they are already determined by the CE's horizon and goal dynamics. We write 
$\Seg$ for the {set} of all tuples of this form. Assuming finite state and action spaces $\Seg$ is measurable and 
we denote by $\Sigma: \Omega \rightarrow \Seg$ a random variable with components $\Sigma = (l(\Sigma),\stag{S}_0,\stag{H}_0,\stag{G}_0,\stag{A}_0,
\stag{S}_1,\stag{A}_1,\ldots,\stag{S}_{l(\Sigma)})$ whose outcomes are segments of CE trajectories. We provide all formal details about the construction of the distribution of $\Sigma$ in appendix~\ref{ap:SegmentDist}. Here we only summarize that for all $\sigma \in \Seg$ holds
$$
\prob(\Sigma=\sigma;\pi)
= c^{-1}\sum_{t \leq N -l(\sigma)}
\prob(S_t=s_0^{\sigma},H_t=h_0^{\sigma},G_t=g_0^{\sigma},A_t=a_0^{\sigma}, \ldots, S_{t+l(\sigma)}=s_{l(\sigma)}^{\sigma} ;\pi)
,
$$
where $c$ is the normalization constant
$$
c = \sum_{\sigma\in \Seg}
\sum_{t \leq N -l(\sigma)}
\prob(S_t=s_0^{\sigma},H_t=h_0^{\sigma},G_t=g_0^{\sigma},A_t=a_0^{\sigma}, \ldots, S_{t+l(\sigma)}=s_{l(\sigma)}^{\sigma} ;\pi).
$$
It is shown in the appendix that the following bounds hold for the normalization constant:
\begin{equation}
0 < c \leq \frac{N(N+1)}{2}.\label{eq:cupperbound}
\end{equation}
We will also demonstrate the following \lq\lq Markovianity properties\rq\rq{} for the distribution of $\Sigma$ in the appendix.
For every segment of length $k$ and any $i \leq k$ we have that
\begin{align}%
&\prob\left(\stag{A}_i=a_i\Big|l(\Sigma)=k,\stag{S}_0=s_0,\stag{H}_0=h_0,\stag{G}_0=g_0,\stag{A}_0=a_0,\ldots,\stag{S}_i=s_i;\pi_n\right)\label{eq:segactions}\\
&=\pi_n(a_i|s_i,h-i,g),\nonumber\\
&\prob\left(\stag{S}_i=s_i\Big|l(\Sigma)=k,\stag{S}_0=s_0,\stag{H}_0=h_0,\stag{G}_0=g_0,\ldots,\stag{S}_{i-1}=s_{i-1},\stag{A}_{i-1}=a_{i-1};\pi_n\right)\nonumber\\
&
=
\prob(\stag{S}_i=s_i|\stag{S}_{i-1}=s_{i-1},\stag{A}_{i-1}=a_{i-1})
=
\tkernel(s_i|s_{i-1},a_{i-1})
\label{eq:segtransitions}
.
\end{align}

\paragraph{Restriction to Trailing Segments:}
Sometimes it is useful restrict the analysis to specific subsets of segments in $\Seg$. For instance it will useful to consider so called \lq\lq{}trailing segments\rq\rq{}.
A segment
$\sigma
=
(l(\sigma),s_0^{\sigma},h_0^{\sigma},g_0^{\sigma},a_0^{\sigma},\ldots,s_{l(\sigma)}^{\sigma})$
is trailing if it is aligned with the end of a trajectory, i.e.,
if $l(\sigma)=h_0^{\sigma}$. The subspace of $\Seg$ consisting of
all trailing segments will be denoted by 
$\SegTrail:=
\{\sigma \in \Seg \mid l(\sigma)=h_0^{\sigma} \}$.
ODT and also sometimes \eUDRL{} operates on trailing segments. In the case of \eUDRL{} this restriction
was motivated by speeding up the learning process reflecting the episodic nature of the problem.

\paragraph{Restriction to Diagonal Segments:}
A segment that satisfies
$g_0^{\sigma}=\rho(s_{l(\sigma)}^{\sigma})$ is called
a goal-diagonal segment. Such segments are characterized by the fact that the initial goal of the segment is actually achieved at the end of the segment. A segment which is trailing and
goal-diagonal will be called diagonal.
The subspace of $\Seg$ which consists of all diagonal segments
will be denoted
$\SegDiag:=
\{\sigma \in \Seg | l(\sigma)=h_0^{\sigma}, g_0^{\sigma}=\rho(s_{l(\sigma)}^{\sigma}) \}$.
We will consider these segments when discussing the relation
to the Reward-Weighted Regression~\citep{peters2007reinforcement}, see section~\ref{se:RWR}.

\subsection{The eUDRL Algorithm} \label{subsection:eUDRL}
Suppose that a CE $\bar{\mathcal{M}}$ is given. Starting with an initial policy $\pi_0$ \eUDRL{} generates a sequence of policies $(\pi_n)_{n\geq0}$ in an iterative process. Each \eUDRL{} iteration comprises the following steps. First, a batch of trajectories is generated from $\bar{\mathcal{M}}$ following the current policy $\pi_n$. Second, {segments} $\sigma$ of trajectories are sampled from the batch according to the distribution $d_{\Sigma}^{\pi_n}$ of $\Sigma$. Finally, a subsequent policy $\pi_{n+1}$ is fitted using supervised learning to the sampled trajectory segments. In practice, a class of parameterized policies will usually be assumed and $\pi_{n+1}$ will be one of the policies that minimize log-likelihood loss within this class. Here, we investigate what \eUDRL{} can achieve in principle. We assume no specific representation of policies and we suppose that $\pi_{n+1}$ is computed using cross-entropy, %
\begin{equation}
\pi_{n+1} = \argmax_{\pi} \ev_{\sigma\sim d_{\Sigma}^{\pi_n}} \log\left(\pi\left(a_0^{\sigma}\:\Big|\: s_0^{\sigma},l(\sigma),\rho(s_{l(\sigma)}^{\sigma})\right)\right),
\label{eq:objective}
\end{equation}
which reflects infinite sample size or complete knowledge about the distribution $d_\Sigma^{\pi_n}$. In other words the new policy $\pi_{n+1}$ is fitted to the conditional probability\footnote{Notice that $\pi_{n+1}$ is defined by \eref{eq:obj2} if and only if
$\prob( \stag{S}_0=s,l(\Sigma)=h,\rho(\stag{S}_{l(\sigma)})=g;\pi_n) > 0$,
otherwise we leave it undefined for now. Later, it will be convenient to choose wlog. $\pi_{n+1}=1/{|\mathcal{A}|}$ if $\prob( \stag{S}_0=s,l(\Sigma)=h,\rho(\stag{S}_{l(\sigma)})=g;\pi_n) = 0$.}
\begin{align}\pi_{n+1}(a \mid (s,h,g)) =
\prob\left(\stag{A}_0=a\:\Big|\: \stag{S}_0=s,l(\Sigma)=h,\rho(\stag{S}_{l(\sigma)})=g;\pi_n\right).
\label{eq:obj2}
\end{align} 

The choice of the conditional in equation~\eqref{eq:objective} can be motivated by a comparison with the Hindsight Experience Replay (HER) algorithm of~\cite{andrychowicz2017hindsight}. The way \eUDRL{} uses segment samples is similar to that of HER in that not only trajectories that achieve the intended goals are used for learning but also trajectories that do not achieve the intended goals, together with their effective outcomes. Following a similar line of reasoning, if a sample $\sigma \sim d_{\Sigma}^{\pi_n}$ is observed, one might assume that the first action $a_0^{\sigma}$ is a good choice for executing the realized command $(l(\sigma),\rho(s_{l(\sigma)}^{\sigma}))$, irrespective of the actual command $(h_0^{\sigma},g_0^{\sigma})$ chosen for this segment. 

Many algorithms that follow the paradigm of \lq\lq{}reinforcement learning via supervised learning\rq\rq{} including the practical \eUDRL{} implementations of \cite{srivastava2019training}, Goal-Conditioned Supervised Learning (GCSL) of \cite{ghosh2021learning} and Online Decision Transformer (ODT) of~\cite{zheng2022online} fit into the framework of this article. It should be said that our assumption of infinite sample size implies a simplification as compared to the \eUDRL{} variations discussed by \cite{srivastava2019training} or \cite{ghosh2021learning}, where we omit the replay buffer (since it is not required).
The algorithm in \cite{srivastava2019training} uses a trajectory organized replay buffer prioritized by return, where goals are related to the trajectory returns. Further initial commands are sampled according to a best quantile of the distribution of returns in the replay buffer. This means that the initial distribution is changing over time, while here we assume that it is fixed. The GCSL algorithm of~\cite{ghosh2021learning} can be viewed as a simplified version of~\cite{srivastava2019training} in that GCSL executes pure state-reaching tasks. GCSL omits the horizon component from commands assuming a fixed horizon for its tasks. Similarly to \cite{srivastava2019training}, GCSL employs a replay buffer, but without any prioritization.
The relation between \eUDRL{} and ODT is described in a separate section, section~\ref{subseciton:ODT}, below.

A convenient feature of the \eUDRL{} iteration is that it is based solely on supervised learning as opposed to algorithms that employ Bellman-type updates like, e.g., in value iteration or actor-critic algorithms.
It was hypothesized that if \eUDRL{} is close to SL it could be easier used for training large scale networks which may require many stabilization tricks with traditional RL algorithms.
While the motivation for the \eUDRL{} iteration is to improve successive policies through updates, it was shown by \cite{strupl2022upsidedown} that the described updates neither guarantee monotonic policy improvements nor the convergence to an optimal policy in stochastic episodic environments (in sense of CE value functions). It is interesting that when introducing \eUDRL{}, \cite{schmidhuber2019reinforcement}
reserved the described iteration for deterministic environments, where, as we will discuss in the main body of this article, the algorithm converges to an optimum. On the other hand, there are also successful reports about \eUDRL{}'s practical implementations, see work by \cite{srivastava2019training} and \cite{ghosh2021learning}, on certain stochastic environments.

\subsection{The Relation of \eUDRL{} to the ODT Recursion}\label{subseciton:ODT}
The idea of combining the successful transformer architecture (see \citep{schmidhuber1992learningFWP}, \citep{vaswani2017attention}, \citep{schlag2021linear}) with UDRL led to the concept of Online Decision Transformers (ODTs) that interpret RL as a sequence modeling problem. The sequences in question consist of tuples $(s_t',a_t',g_t')$ of states, actions and so-called \lq\lq{}returns-to-go\rq\rq{} of the MDP to be solved by ODT. The return-to-go $g_t'$ is computed once a trajectory is completed. It is given by the rewards accumulated from the time $t$ until the end of the trajectory. 

In ODT the transformer model receives sequence of states and returns-to-go to predict the subsequent action. To approximate the distribution of $a_t'$ at each time step $t$ the transformer (with context length $K$) receives sequences of length $\min\{t,K\}$ of preceding states $s_{-K,t}' := s_{\max\{0,t-K+1\}:t}'$ and returns-to-go $g_{-K,t}' := g_{\max\{0,t-K+1\}:t}'$. Similar to UDRL this distribution is employed as a policy that is conditioned on returns-to-go. In contrast to the original transformer architecture, which uses positional encoding, ODT uses time embeddings. Furthermore, ODT operates exclusively on trailing segments. As compared to its predecessor, the Decision Transformer (DT), ODT introduces a series of improvements that allow for online fine-tuning. First, ODT uses stochastic policies (while DT solely allowed for deterministic policies) and, second, these policies are trained using the maximum likelihood criterion (while DT relied on mean squares). These features are present in similar form also in the \eUDRL{} algorithm.
Furthermore, ODT introduced entropy regularization, which motivates the investigation of regularization techniques also for \eUDRL{}. In section~\ref{se:regrec} we will consider \eUDRL{} with policy regularization using a convex combination with a uniform distribution, which brings our investigation close to the ODT algorithm. %

Like the \eUDRL{} implementations of \cite{srivastava2019training} and \cite{ghosh2021learning}, ODT, too, needs some simplifications in order to conform to the CE framework. As before we omit replay buffers and we assume a fixed horizon, which entails that CE initializes the horizon to $N$. This is to ensure a one-to-one correspondence between the remaining horizon $h$ and current time $t$ via $t = N-h$. Consequently, after ODT performs the time embedding this difference becomes obsolete. With these simplifications ODTs match goal-reaching scenarios in the CE formalism in situations, where the reward is granted in the end of the episode. The correspondence is as follows: First the states $s_t'$ of the MDP to be solved by ODT are grouped together into sequences of past states $s_t= s_{-K,t}'$. The original MDP underlying the CE then operates on the grouped states $s_t$.
The codomain of the goal map $\rho$ is chosen as the set $\mathcal{G} = \{\textnormal{\lq\lq{}all possible rewards\rq\rq{}}\}$, where the rewards depend only on the terminal states $s_N$. Policy updates are executed using maximum likelihood matching to the conditional
$a_t| s_t, l(\sigma), \rho(s_N)$. Using the ODT assumptions of trailing segments, fixed horizon $l(\sigma) = h_t$ and the fact that $\rho(s_N)$ computes rewards corresponding to $s_N'$ this is equivalent to matching $a_t| s_{-K,t}', h_t, \rho(s_N)$. The general scenario can be phrased in the language of CEs only partially by the described accumulation of rewards as part of the state. We assume the state is given by $s_t = (s_{-K,t}', z_{-K,t})$ and $z_t$ corresponds to the accumulated reward of the underlying MDP. The goal map is chosen to be $\rho(s_t) = z_t$.
In this case the policy updates are executed by maximum likelihood matching to the conditional
$a_t| (s_{-K,t}',z_{-K,t}), h_t, z_N$, where 
returns-to-go can be recovered from $z_N - z_{-K,t}$. This does not
fit perfectly the CE framework since apart from the desired returns-to-go $z_N - z_{-K,t}$ we also condition on $(z_N,z_{-K,t})$. In principle this might introduce some noise into the learning process. However, this poses no problem as we assume no limitation on the number of available samples. The formulation of ODTs in the CE framework has no drawback as the effect of noise becomes negligible in this way.

\section{The \eUDRL{} Recursion, Reward-Weighted Regression and Practical Implementations}
\label{se:recrewrites}

The standard \eUDRL{} recursion is carried out on the entire segment space. In this section, we will show recursion formulas corresponding to the subspaces of trailing and diagonal segments. This will allow us to connect \eUDRL{} to the Reward-Weighted Regression (RWR) algorithm, see~\citep{dayan1997using} and~\citep{peters2007reinforcement}. The known convergence of RWR~\citep{strupl2021reward} will be an intuitive guidance for our proofs in later sections.

\subsection{The \eUDRL{} Recursion in Specific Segment Spaces} 

The following lemma specifies how the \eUDRL{} recursion can be written when only segments from a specific subset of $\Seg$ are used for fitting in the recursion \eref{eq:objective}, see the discussion of trailing and diagonal segments above.
Recall that $\sigma = (l(\sigma),s_0^{\sigma},h_0^{\sigma},g_0^{\sigma},a_0^{\sigma},\ldots , s_{l(\sigma)}^{\sigma})\in\Seg$ is trailing if $l(\sigma)=h_{0}^{\sigma}$ and it is diagonal if it is trailing and moreover $\rho(s_{l(\sigma)}^{\sigma})=g_0^{\sigma}$.
We will often use the proportional $\propto$ sign instead of equality in the lemma (and also in its proof) in order to save some space by leaving out bulky normalizing factors in policy distributions (which can be easily recovered). As a consequence here the \lq\lq{}proportionality\rq\rq{} refers to the action dimension only.

\begin{lemma}
\label{le:recrewrites}
Consider the recursive policy updates in eUDRL described by equation~\eqref{eq:objective}.
\begin{enumerate}
\item
Suppose the recursion is carried out on the entire set $\Seg$. Then for all states $(s,h,g) \in \bar{S}_T$ and all actions $a \in \mathcal{A}$ it holds that 
\begin{equation}
\begin{aligned}
\pi_{n+1}(a \mid s,h,g)
&=
\prob(\stag{A}_0=a \mid \stag{S}_0=s,l(\Sigma)=h, \rho(\stag{S}_h)=g ;\pi_n)
\\
&\propto
\sum_{h'\geq h,g'\in \mathcal{G}}
\prob(\rho(S_h)=g\mid A_0=a, H_0=h', G_0=g', S_0=s ;\pi_n)
\\
&\quad\cdot
\pi_n(a \mid s, h',g')\:
\prob( \stag{H}_0=h', \stag{G}_0=g'  \mid \stag{S}_0=s, l(\Sigma)=h ;\pi_n).
\end{aligned}
\label{eq:seg}
\end{equation}
\item
Suppose the recursion is carried out on the set $\SegTrail$. Then for all states $(s,h,g) \in \bar{S}_T$ and all actions $a \in \mathcal{A}$ it holds that
\begin{equation}
\begin{aligned}
\pi_{n+1}^{\trail}(a\mid s,h,g) &=
\prob(\stag{A}_0=a \mid \stag{S}_0=s,l(\Sigma)=h, \rho(\stag{S}_h)=g, l(\Sigma) = \stag{H}_0 ;\pi_n^{\trail})
\\
&\propto
\sum_{g'\in \mathcal{G}}
\prob(\rho(S_h)=g\mid A_0=a, H_0=h, G_0=g', S_0=s ;\pi_n^{\trail})
\\
&\quad\cdot
\pi_n^{\trail}(a \mid s, h,g')\:
\prob( \stag{H}_0=h, \stag{G}_0=g' \mid \stag{S}_0=s, l(\Sigma)=h ;\pi_n^{\trail}).
\end{aligned}
\label{eq:segtrailing}
\end{equation}
\item
Suppose the recursion is carried out on the set $\SegDiag$. Then for all states $(s,h,g) \in \bar{S}_T$ and all actions $a \in \mathcal{A}$ it holds that
\begin{equation}
\begin{aligned}
\pi_{n+1}^{\diag}(a \mid s,&h,g)\\
&= 
\prob(\stag{A}_0=a \mid \stag{S}_0=s,l(\Sigma)=h, \rho(\stag{S}_h)=g, 
\rho(\stag{S}_h)=\stag{G}_0, l(\Sigma)=\stag{H}_0 ; \pi_n^{\diag} )
\\
&\propto
\prob(\rho(S_h)=g \mid A_0=a, H_0=h, G_0=g,  S_0=s ;\pi_n^{\diag})
\\
&\quad\cdot\pi_n^{\diag}(a \mid s,h,g)\prob(\stag{H}_0=h, \stag{G}_0=g \mid \stag{S}_0=s,l(\Sigma)=h ;\pi_n^{\diag}).
\end{aligned}
\label{eq:segdiag}
\end{equation}
Moreover, the policy can be written in terms of the $Q$-function as
\begin{align*}
\pi_{n+1}^{\diag}(a\mid s,h,g)&\propto
Q^{\pi_n^{\diag}}((s,h,g),a)
\pi_n^{\diag}(a \mid s,h,g)\\
&\quad\cdot
\prob(\stag{H}_0=h, \stag{G}_0=g \mid \stag{S}_0=s,l(\Sigma)=h;\pi_n^{\diag})
,
\\
\pi_{n+1}^{\diag}(a \mid s,h,g) &\propto
Q^{\pi_n^{\diag}}((s,h,g),a)
\pi_n^{\diag}(a \mid s,h,g).
\end{align*}

\end{enumerate}
\end{lemma}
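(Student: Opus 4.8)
The plan is to treat all three identities by the same Bayesian bookkeeping on the segment distribution, using only the recursion definition \eqref{eq:obj2}, the Markovianity properties \eqref{eq:segactions}--\eqref{eq:segtransitions}, and the value-function formula \eqref{eq:valueexpr}; the three cases differ only in which constraints are imposed on the sampled segments. I would first record that restricting the recursion to $\SegTrail$ (resp.\ $\SegDiag$) amounts to conditioning the segment distribution on the event $\{l(\Sigma)=\stag{H}_0\}$ (resp.\ on $\{l(\Sigma)=\stag{H}_0\}\cap\{\rho(\stag{S}_{l(\Sigma)})=\stag{G}_0\}$), which is exactly what turns the first displayed equality of each part into the definition of $\pi_{n+1}$ under the corresponding restricted distribution. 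For the diagonal case this gives immediately
\begin{equation*}
\pi_{n+1}^{\diag}(a\mid s,h,g)=\prob\!\left(\stag{A}_0=a \,\Big|\, \stag{S}_0=s,\,l(\Sigma)=h,\,\rho(\stag{S}_h)=g,\,\rho(\stag{S}_h)=\stag{G}_0,\,l(\Sigma)=\stag{H}_0;\pi_n^{\diag}\right),
\end{equation*}
and I would note that on the conditioning event the constraints $l(\Sigma)=h=\stag{H}_0$ and $\rho(\stag{S}_h)=g=\stag{G}_0$ force $\stag{H}_0=h$ and $\stag{G}_0=g$.

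The core step is the general identity \eqref{eq:seg}, which I would derive by applying Bayes' rule in the action variable (so the $a$-independent denominator is absorbed into $\propto$), writing the numerator $\prob(\rho(\stag{S}_h)=g,\stag{A}_0=a\mid \stag{S}_0=s,l(\Sigma)=h)$, and then marginalizing over the initial command $(\stag{H}_0,\stag{G}_0)=(h',g')$. Each summand factors as a transition term, an action term, and a command-prior term; the transition term becomes the trajectory probability $\prob(\rho(S_h)=g\mid A_0=a,H_0=h',G_0=g',S_0=s)$ by iterating \eqref{eq:segtransitions}, while the action term becomes $\pi_n(a\mid s,h',g')$ by \eqref{eq:segactions} with $i=0$; the range $h'\geq h$ is forced by $l(\Sigma)=h$. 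The trailing and diagonal formulas then follow by repeating this computation under the corresponding restricted distribution: the constraint $l(\Sigma)=\stag{H}_0$ collapses the sum to $h'=h$, and the additional diagonal constraint $\rho(\stag{S}_h)=\stag{G}_0$ collapses it further to $g'=g$, leaving the single term \eqref{eq:segdiag}.

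For the two $Q$-function rewrites I would recognize the surviving transition factor $\prob(\rho(S_h)=g\mid A_0=a,H_0=h,G_0=g,S_0=s;\pi_n^{\diag})$ as precisely $Q^{\pi_n^{\diag}}((s,h,g),a)$ via \eqref{eq:valueexpr}, yielding the first rewrite; since the remaining factor $\prob(\stag{H}_0=h,\stag{G}_0=g\mid \stag{S}_0=s,l(\Sigma)=h;\pi_n^{\diag})$ does not depend on $a$, it is a normalizing constant in the action dimension and may be dropped, giving the reward-weighted form $\pi_{n+1}^{\diag}\propto Q^{\pi_n^{\diag}}\,\pi_n^{\diag}$.

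I expect the main obstacle to be the careful justification of the Markovianity translation in the presence of the diagonal conditioning. The subtlety is that the diagonal event constrains the \emph{terminal} state $\stag{S}_{l(\Sigma)}$, so naively conditioning on it could destroy the Markov structure on which \eqref{eq:segactions}--\eqref{eq:segtransitions} rely. The resolution, which I would make explicit, is that the Bayes step moves the terminal goal event $\rho(\stag{S}_h)=g$ to the \emph{forward} (event) side, so that the conditioning sets feeding into \eqref{eq:segactions}--\eqref{eq:segtransitions} involve only prefix data $(\stag{S}_0,\stag{H}_0,\stag{G}_0,l(\Sigma))$; for these the Markovianity properties apply verbatim, and the terminal constraint reappears only as the forward probability that we identify with $Q$. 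A secondary point to verify is that $\pi_{n+1}^{\diag}$ is well defined exactly when the relevant conditioning event has positive probability, matching the convention of the footnote to \eqref{eq:obj2}.
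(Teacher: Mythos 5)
Your proposal is correct and follows essentially the same route as the paper: Bayes' rule in the action variable, marginalization over the initial command $(\stag{H}_0,\stag{G}_0)$, factorization via the product rule, translation through the Markovianity properties \eqref{eq:segactions}--\eqref{eq:segtransitions}, and collapse of the sum under the trailing/diagonal constraints, with the $Q$-function identification via \eqref{eq:valueexpr}. Your explicit resolution of the diagonal-conditioning subtlety (using the implications $l(\Sigma)=h\wedge\stag{H}_0=h\implies l(\Sigma)=\stag{H}_0$ etc.\ to remove the defining events from the conditioning before invoking Markovianity) is exactly the device the paper uses.
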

Notice that the recursion formulas in $\Seg$, $\SegTrail$ and $\SegDiag$ bear a basic similarity in that while $\pi_{n+1}^{\diag/\trail}(a\mid s,h,g)$ depends on the conditional probabilities $\prob(\stag{H}_0=h, \stag{G}_0=g \mid \stag{S}_0=s,l(\Sigma)=h;\pi_n)$ it does not depend on the conditions $l(\Sigma) = \stag{H}_0$ and $\rho(\stag{S}_{l(\Sigma)})=\stag{G}_0$ that define $\SegDiag$ and $\SegTrail$. The recursion formulas are different only through the sets over which the summations are taken. Later this property will allow us to derive common bounds to all three recursions (i.e.~bounding $\pi_{n+1}$ in terms of $\pi_n$) resulting in simpler proofs.\\

\begin{proof}
Lemma~\ref{le:recrewrites}, Point 1.
The equality is just \eref{eq:obj2}. We start by applying Bayes' rule, where we assume $\prob(\stag{S}_0=s,l(\Sigma)=h, \rho(\stag{S}_h)=g ;\pi_n)>0$. Subsequently we marginalize over $\stag{H}_0$ and 
$\stag{G}_0$ provided that $\stag{H}_0 \geq l(\Sigma) = h$ (by definition a segment is always contained in a trajectory) and we apply the product rule. We obtain
\begin{align*}
\pi_{n+1}(a \mid s,h,g)
&=
\prob(\stag{A}_0=a \mid \stag{S}_0=s,l(\Sigma)=h, \rho(\stag{S}_h)=g ;\pi_n)
\\
&\propto
\prob(\rho(\stag{S}_h)=g, \stag{A}_0=a \mid \stag{S}_0=s,l(\Sigma)=h ;\pi_n)
\\
&=
\sum_{h'\geq h,g'\in \mathcal{G}}
\prob(\rho(\stag{S}_h)=g, \stag{A}_0=a, \stag{H}_0=h', \stag{G}_0=g' \mid \stag{S}_0=s,l(\Sigma)=h ;\pi_n)
\\
&=
\sum_{h'\geq h,g'\in \mathcal{G}}
\prob(\rho(\stag{S}_h)=g \mid \stag{A}_0=a, \stag{H}_0=h', \stag{G}_0=g', \stag{S}_0=s,l(\Sigma)=h ;\pi_n)
\\
&\qquad\qquad\;\cdot
\prob(\stag{A}_0=a \mid \stag{H}_0=h', \stag{G}_0=g', \stag{S}_0=s, l(\Sigma)=h ;\pi_n)
\\
&\qquad\qquad\;\cdot
\prob( \stag{H}_0=h', \stag{G}_0=g'  \mid \stag{S}_0=s, l(\Sigma)=h ;\pi_n)
\\
&=
\sum_{h'\geq h,g'\in \mathcal{G}}
\prob(\rho(S_h)=g \mid A_0=a, H_0=h', G_0=g', S_0=s ;\pi_n)
\\
&\qquad\qquad\;\cdot
\pi_n(a \mid s, h',g')
\prob( \stag{H}_0=h', \stag{G}_0=g' \mid \stag{S}_0=s, l(\Sigma)=h ;\pi_n),
\end{align*}
where in the last equality we applied the segment distribution properties \eref{eq:segactions}
and \eref{eq:segtransitions}.

Lemma~\ref{le:recrewrites}, Point 2.
We have that
\begin{align*}
\pi_{n+1}^{\trail}(a\mid s,h,g)
&=
\prob(\stag{A}_0=a \mid \stag{S}_0=s,l(\Sigma)=h, \rho(\stag{S}_h)=g, l(\Sigma) = \stag{H}_0 ;\pi_n^{\trail})
\\
&=
\prob(\stag{A}_0=a, \stag{H}_0=h \mid \stag{S}_0=s,l(\Sigma)=h, \rho(\stag{S}_h)=g, l(\Sigma) = \stag{H}_0 ;\pi_n^{\trail}).
\end{align*}
The first equality is a consequence of \eref{eq:objective}, where the condition $l(\Sigma) = \stag{H}_0$ is added to restrict the equation to $\SegTrail$. For the second equality we made use of the implication $(l(\Sigma) = h)\;\wedge\;(l(\Sigma) = \stag{H}_0)\implies \stag{H}_0 = h$. Reasoning as above we apply Bayes’ rule,
where we assume $\prob(\stag{S}_0=s,l(\Sigma)=h, \rho(\stag{S}_h)=g, l(\Sigma) = \stag{H}_0 ;\pi_n^{\trail})>0$. Subsequently we marginalize over $\stag{H}_0$ and 
$\stag{G}_0$, apply the product rule and make use of the implication $(l(\Sigma) = h)\;\wedge\;(\stag{H}_0 = h)\implies l(\Sigma) = \stag{H}_0$ to eliminate the event $l(\Sigma) = \stag{H}_0$. We obtain
\begin{align*}
\pi_{n+1}^{\trail}(a \mid s,h,g)&\propto
\prob(\rho(\stag{S}_h)=g, \stag{A}_0=a, \stag{H}_0=h \mid \stag{S}_0=s,l(\Sigma)=h, l(\Sigma) = \stag{H}_0 ;\pi_n^{\trail})
\\
&\hspace*{-13mm}=
\sum_{g'\in \mathcal{G}}
\prob(\rho(\stag{S}_h)=g, \stag{A}_0=a, \stag{H}_0=h, \stag{G}_0=g' \mid \stag{S}_0=s,l(\Sigma)=h, l(\Sigma) = \stag{H}_0 ;\pi_n^{\trail})
\\
\pi_{n+1}^{\trail}(a \mid s,h,g)&
\\
&\hspace*{-13mm}\propto
\sum_{g'\in \mathcal{G}}
\prob(\rho(\stag{S}_h)=g \mid \stag{A}_0=a, \stag{H}_0=h, \stag{G}_0=g', \stag{S}_0=s,l(\Sigma)=h, {l(\Sigma) = \stag{H}_0};\pi_n^{\trail})
\\
&\hspace*{-13mm}\qquad\;\cdot
\prob(\stag{A}_0=a \mid \stag{H}_0=h, \stag{G}_0=g', \stag{S}_0=s, l(\Sigma)=h, {l(\Sigma) = \stag{H}_0} ;\pi_n^{\trail})
\\
&\hspace*{-13mm}\qquad\;\cdot
\prob( \stag{H}_0=h, \stag{G}_0=g', {l(\Sigma) = \stag{H}_0}  \mid \stag{S}_0=s, l(\Sigma)=h ; \pi_n^{\trail})
\\
&\hspace*{-13mm}=
\sum_{g'\in \mathcal{G}}
\prob(\rho(S_h)=g \mid A_0=a, H_0=h, G_0=g', S_0=s ;\pi_n^{\trail})
\\
&\hspace*{-13mm}\qquad\;\cdot
\pi_n^{\trail}(a \mid s, h,g')
\prob( \stag{H}_0=h, \stag{G}_0=g'  \mid \stag{S}_0=s, l(\Sigma)=h ;\pi_n^{\trail}).
\end{align*}

Lemma~\ref{le:recrewrites}, Point 3. We have that
\begin{align*}
&\pi_{n+1}^{\diag}(a \mid s,h,g)\\
&\hspace*{-1mm}= 
\prob(\stag{A}_0=a \mid \stag{S}_0=s,l(\Sigma)=h, \rho(\stag{S}_h)=g, 
\rho(\stag{S}_h)=\stag{G}_0, l(\Sigma)=\stag{H}_0 ; \pi_n^{\diag} )
\\
&\hspace*{-1mm}=
\prob(\stag{A}_0=a, \stag{H}_0=h, \stag{G}_0=g \mid \stag{S}_0=s,l(\Sigma)=h, \rho(\stag{S}_h)=g, \rho(\stag{S}_h)=\stag{G}_0, l(\Sigma)=\stag{H}_0 ; \pi_n^{\diag} )
\end{align*}

The first equality is a consequence of~\eref{eq:objective}, where the conditions $l(\Sigma) = \stag{H}_0$ and $\rho(\stag{S}_h)=\stag{G}_0$
are added to restrict the equation to $\SegDiag$.  For the second equality we made use of the implications $(l(\Sigma)=h) \wedge (l(\Sigma)=\stag{H}_0) \implies \stag{H}_0=h$ and
$(\rho(\stag{S}_h)=g) \wedge (\rho(\stag{S}_h)=\stag{G}_0) \implies \stag{G}_0=g$. Reasoning as above we apply Bayes’ rule,
where we assume $\prob(\stag{S}_0=s,l(\Sigma)=h, \rho(\stag{S}_h)=g, 
\rho(\stag{S}_h)=\stag{G}_0, l(\Sigma)=\stag{H}_0 ; \pi_n^{\diag} )>0$. Subsequently we marginalize over $\stag{H}_0$ and 
$\stag{G}_0$, apply the product rule and make use of the implications $(\rho(\stag{S}_h)=g) \wedge (\stag{G}_0=g) \implies \rho(\stag{S}_h)=\stag{G}_0$ and $(l(\Sigma)=h) \wedge (\stag{H}_0=h) \implies l(\Sigma)=\stag{H}_0$ to eliminate the events $\rho(\stag{S}_h)=\stag{G}_0$ and $l(\Sigma) = \stag{H}_0$. We obtain
$$
\begin{aligned}
&\pi_{n+1}^{\diag}(a \mid s,h,g) \\
&\hspace*{-1mm}\propto
\prob(\rho(\stag{S}_h)=g,{\rho(\stag{S}_h)=\stag{G}_0}, \stag{A}_0=a, \stag{H}_0=h, \stag{G}_0=g \mid \stag{S}_0=s,l(\Sigma)=h,  l(\Sigma)=\stag{H}_0 ;\pi_n^{\diag})
\\
&\pi_{n+1}^{\diag}(a \mid s,h,g) \\
&\hspace*{-1mm}\propto
\prob(\rho(\stag{S}_h)=g \mid \stag{A}_0=a, \stag{H}_0=h, \stag{G}_0=g,  \stag{S}_0=s, l(\Sigma)=h,  {l(\Sigma)=\stag{H}_0} ;\pi_n^{\diag})
\\
&\hspace*{-1mm}\;\;\cdot
\prob( \stag{A}_0=a \mid \stag{H}_0=h, \stag{G}_0=g, \stag{S}_0=s,l(\Sigma)=h,  {l(\Sigma)=\stag{H}_0} ;\pi_n^{\diag})
\\
&\hspace*{-1mm}\;\;\cdot
\prob(\stag{H}_0=h, \stag{G}_0=g, {l(\Sigma)=\stag{H}_0} \mid \stag{S}_0=s,l(\Sigma)=h ;\pi_n^{\diag}).
\end{aligned}
$$
Finally, making use of the expression \eref{eq:valueexpr} for the action-value functions we obtain 
\begin{align*}
\pi_{n+1}^{\diag}(a \mid &s,h,g)\\
&\propto
Q^{\pi_n^{\diag}}((s,h,g),a)
\pi_n^{\diag}(a \mid s,h,g)
\prob(\stag{H}_0=h, \stag{G}_0=g | \stag{S}_0=s,l(\Sigma)=h;\pi_n^{\diag})\\
&\propto Q^{\pi_n^{\diag}}((s,h,g),a)
\pi_n^{\diag}(a \mid s,h,g).
\end{align*}%
\end{proof}

\subsection{The Relation of eUDRL to Reward-Weighted Regression}\label{se:RWR}
Reward-Weighted Regression (RWR, see \citep{dayan1997using}, \citep{peters2007reinforcement}, \citep{peng2019advantage}) is an RL algorithm that, similarly to \eUDRL{}, relies on a recursive sequence of policy updates. As in \eUDRL{} a batch of trajectories is generated following the current policy $\pi_n$ and the subsequent policy $\pi_{n+1}$ is fitted using supervised learning on the sampled trajectories but the contribution of actions is weighted by trajectory returns. In other words RWR goes beyond vanilla imitation learning on sampled trajectories in that it weights the relevance of trajectories. RWR is characterized by update formulas of the form $\pi_{n+1}(a\mid s) \propto Q(s,a)\pi_n(a \mid s)$, see~\cite{strupl2021reward} for details. In the context of CE this translates to the following update rule
$$
\pi_{n+1}(a \mid s,h,g) \propto Q^{\pi_n}((s,h,g),a) \pi_{n}(a \mid s,h,g).
$$
In view of Lemma~\ref{le:recrewrites}, point \textit{3.}~this corresponds to the update rule of \eUDRL{} in the set $\SegDiag$. In other words in this special situation \eUDRL{} becomes RWR. 
Notice, however, that contrary to general \eUDRL{} the RWR recursion always converges\footnote{Although~\cite{strupl2021reward} assumes  positive rewards, while rewards are non-negative in the article at hand, a generalization of~\cite{strupl2021reward} is straightforward.} to optimal policies~\cite{strupl2021reward}. On the other hand, operating on $\SegDiag$ instead of $\Seg$, RWR uses a far smaller set of samples, which leads to a slower overall learning process. We will not rely on the convergence of RWR in our continuity proofs, but the intuition that \lq\lq{}focusing solely on diagonal segments is sufficient\rq\rq{} will guide our discussion.

\subsection{The Role of Prioritized Replay}

As an application of the described correspondence of \eUDRL{} and RWR, we study the role of prioritized replay in \eUDRL{}. The \eUDRL{} implementation of \cite{srivastava2019training}
employs a replay buffer, where initial goals or returns-to-go are sampled from such trajectories in the buffer that correspond to a high quantile of returns. The following examples illustrate that prioritization by return can both increase and decrease the performance of the learning system. 
\paragraph{Performance deterioration through Prioritized replay:}
Consider an environment that is characterized by the following features:\\
\textit{i)} High returns are hard to reach: Even for optimal sequences of action (in the sense of the CE) the probability of obtaining a high return is small.\\
\textit{ii)} Moderate returns are easy to reach.\\
\textit{iii)} Attempting to reach a high return entails sequences of actions that can lead to 'dead ends', i.e.~low returns.

In this setup it might occur that the preference of the buffer for choosing high returns might lead to small expected returns. Choosing trajectories with high returns from the replay buffer might thus obfuscate the exploration of moderate returns that might have a higher expected value.
\paragraph{Performance improvement through Prioritized replay:}
Consider an environment that is characterized by the following features:\\
\textit{i)} There are two possible returns $\{0,1\}$.\\
\textit{ii)} There are only trailing segments, i.e.~segments are aligned with the end of the trajectory.

In this setup the return-prioritized buffer is filled over time with trajectories of return $1$. As a consequence sampling results in the same initial goals or returns-to-go of $1$ and the buffer will be filled with trajectories
with initial goal or return-to-go of $1$ and realized return of $1$.
Together with the assumption that segments are trailing, it follows that the algorithm operates on diagonal segments. This effectively transforms the algorithm to RWR, which is proven to converge.

\section{The Continuity of \eUDRL{} in the Transition Kernel and Stability Properties}
\label{se:suppstab}

We investigate the continuity of various \eUDRL{}-generated quantities like policies, associated values and the goal-reaching objective under small changes of the transition kernel. The established continuity results will be used to generalize the known optimality of \eUDRL{} when kernels are deterministic (see section~\ref{se:optdet} below) to the case of near optimality of \eUDRL{} (i.e.~optimality up to a fixed error) when kernels are nearly deterministic (i.e.~the kernels are located in a neighborhood of a deterministic kernel).

\subsection{Compatible Families and the Stability of Supports}
\label{subsection:compatible_families}

Let $\mathcal{M}$ be a given MDP and let $\bar{\mathcal{M}}$ denote its CE. Given a state and an action, one can view the transition kernel $\tkernel(\cdot\:|\:s,a)$ as a vector in a \emph{probability simplex} $\Delta S$, i.e., one can write $\tkernel \in (\Delta \mathcal{S})^{\mathcal{S}\times\mathcal{A}}\subset \mathbb{R}^{\mathcal{S}\times\mathcal{S}\times\mathcal{A}}$. We study how properties of $\mathcal{M}$ and $\bar{\mathcal{M}}$, such as \eUDRL{}-generated policies and values depend on $\tkernel$. For our purpose it is sufficient to study the sensitivity with respect to changes of $\tkernel$ (and $\bar{\tkernel}$, respectively), where other components of the MDP and its CE remain fixed. To this end we define the notion of \textit{compatible families of MDPs}.

\begin{definition}
(Compatible families of MDPs)
Let $\lambda_0$ be a given transition kernel, and let $\mathcal{M}=(\mathcal{S},\mathcal{A},\tkernel_0,\mu,r)$ be a respective MDP with CE $\bar{\mathcal{M}} = (\bar{\mathcal{S}},\mathcal{A},\bar{\tkernel_0},\bar{\mu},\bar{r},\rho)$. Write $\{\mathcal{M}_{\tkernel} = (\mathcal{S},\mathcal{A},\tkernel,\mu,r)\:|\:\lambda\in(\Delta \mathcal{S})^{\mathcal{S}\times\mathcal{A}} \}$ for the family of MDPs, whose kernel belongs to the same product of simplexes $(\Delta \mathcal{S})^{\mathcal{S}\times\mathcal{A}}$ as $\tkernel_0$. Similarly, write $\{\bar{\mathcal{M}}_{\tkernel} = (\bar{\mathcal{S}},\mathcal{A},\bar{\tkernel},\bar{\mu},\bar{r},\rho)\:|\:\lambda\in(\Delta \mathcal{S})^{\mathcal{S}\times\mathcal{A}} \}$ for the family of the respective CEs. We will refer to any pair of MDP families resulting from the above construction as \emph{compatible families}.
\end{definition}

Hereafter we will reserve the subscript $\tkernel$ to refer to quantities that stem from the MDPs $\mathcal{M}_\tkernel$ or $\bar{\mathcal{M}}_\tkernel$ of given compatible families. We will occasionally add the subscript $\lambda$ to emphasize that a certain quantity (such as a family of policies) is associated with a family of MDPs.

The following definition summarizes some notions that will play an important role in our proofs.
\begin{definition}
\label{de:theinterestingstates}
Let $\{\mathcal{M}_{\tkernel} | \tkernel\in(\Delta \mathcal{S})^{\mathcal{S}\times\mathcal{A}} \}$ and $\{\bar{\mathcal{M}}_{\tkernel} |\tkernel\in (\Delta \mathcal{S})^{\mathcal{S}\times\mathcal{A}} \}$ be compatible families.
\begin{enumerate}
\item
(Numerator and denominator in \eUDRL{} recursion)
For all $a \in \mathcal{A}$ and all $(s,h,g) \in \bar{S}_T$ let $\num_{\tkernel,\pi}(a,s,h,g)$ (numerator) and $\den_{\tkernel,\pi}(s,h,g)$ (denominator) be defined by
\begin{align*}
\num_{\tkernel,\pi}(a,s,h,g) &= \prob_\tkernel (\stag{A}_0=a, \stag{S}_0=s, l(\Sigma)=h, \rho(\stag{S}_h)=g; \pi ),\\
\den_{\tkernel,\pi}(s,h,g) &= \prob_\tkernel (\stag{S}_0=s, l(\Sigma)=h, \rho(\stag{S}_h)=g; \pi )
.
\end{align*}
\item
(The CE state visitation distribution)
The CE state visitation distribution for transition kernel $\tkernel$ and policy $\pi$ is defined
for all $\bar{s} \in \bar{S}$ by\footnote{Here we can refer to \cite{sutton2018reinforcement} equations (9.2) and (9.3) with addition that one has to restrict the computation only to transient states.}
$$
\nu_{\tkernel,\pi}(\bar{s})
= 
\frac{
\sum_{t<N} \prob_{\tkernel}(\bar{S}_t = \bar{s}, \bar{S}_t \in \bar{S}_T )
}{
\sum_{\bar{s} \in \bar{S}}
\sum_{t<N} \prob_{\tkernel}(\bar{S}_t = \bar{s}, \bar{S}_t \in \bar{S}_T )
}.
$$
A state is called \emph{visited} under $\tkernel$ and $\pi$ if and only if $\nu_{\tkernel,\pi}(\bar{s}) > 0$.

\item (The set of critical states)
Let $\tkernel_0 \in (\Delta \mathcal{S})^{\mathcal{S}\times \mathcal{A}}$ be a deterministic kernel and $\pi_0 >0$ a policy. The set of critical states (for \eUDRL{} learning) is
$$
\theinterestingstates = \supp \den_{\tkernel_0,\pi_0} \cap
\supp \nu_{\tkernel_0,\pi_0}.
$$
Notice that the particular choice of $\pi_0 >0$ does not matter.
\end{enumerate}
\end{definition}
Using the numerator and denominator defined above the \eUDRL{} recursion \eref{eq:obj2} can be written
(for any $\pi_0\in (\Delta\mathcal{A})^\mathcal{S}$) as
\begin{align}
\pi_{n+1}(a|s,h,g) &= \frac{\num_{\tkernel,\pi_n}(a,s,h,g)}{\den_{\tkernel,\pi_n}(s,h,g)} \quad \text{for}\:  (s,h,g) \in \supp \den_{\tkernel,\pi_n},\label{eq:recursionUsingNumeratorDenominator}
\end{align}
where we set\footnote{It is worth noting that our continuity proofs will not depend on the concrete way of defining
$\pi_{n+1}$ for $(s,h,g) \notin \supp \den_{\tkernel,\pi_n}$. The discontinuities that we will present cannot be removed by an alternative definition. See the discussion below example~\ref{ex:boundarypoint} in the appendix~\ref{ap:examples} for details.} $\pi_{n+1}(a|s,h,g)=1/{|\mathcal{A}|}$ outside the support $\supp \den_{\tkernel,\pi_n}$.
The state visitation distribution is restricted to the transient states because this is the set of states where the learned policy affects the evolution of the CE. The set of all visited states equals to the support of $\nu_{\tkernel,\pi}$. The support can be written shortly as
$$
\supp \nu_{\tkernel,\pi} = \{ \bar{s} \in \bar{S}_T ;
(\exists t < N) : \prob_{\tkernel}(\bar{S}_t = \bar{s} ;\pi) > 0
\}.
$$
While it is evident from the recursion equation that $\pi_{n}$ depends on $\lambda$ for $n>1$, we assume that the initial $\pi_0$ is constant through the family of MDPs. Notice that the definition of \lq\lq{}critical states\rq\rq{} $\theinterestingstates$ depends on the specified family through the set $(\Delta \mathcal{S})^{\mathcal{S}\times\mathcal{A}}$. The definition of $\theinterestingstates$ is motivated by the fact that the non-trivial behavior of our continuity discussion occurs on these sets of states. In fact the chosen intersection of states ensures the following criteria:\\
\textit{i)} the command can actually be satisfied (through $\prob_{\tkernel_0} (\stag{S}_0=s, l(\Sigma)=h, \rho(\stag{S}_h)=g; \pi_0 )>0$ on $\supp \den_{\lambda_0,\pi_0})$ and\\
\textit{ii)} critical states have a non-zero visitation probability (on $\supp \nu_{\lambda_0,\pi_0})$.\\
In view of these points, inspecting formula~\eqref{eq:goalreachingobj} yields that given a fixed kernel $\tkernel_0$, the states that are outside $\theinterestingstates$ do not contribute to the goal-reaching objective. %
We will discuss rigorously in section~\ref{se:contfinite} that policies and values in other states are not relevant for establishing the continuity of the goal-reaching objective.

The following lemma reveals some nice stability properties of $\theinterestingstates$ that will be useful later. In particular, we demonstrate that $\theinterestingstates$ is stable under small perturbations of the transitions kernel $\tkernel$ within a fixed compatible family and also through the course of the \eUDRL{} iteration process. For a $\delta>0$ we write
$$U_{\delta}(\tkernel_0)= \left\{\tkernel\:\Big|\:\max_{(s,a)\in\mathcal{S}\times\mathcal{A}}
\|\tkernel(\cdot|s,a)-\tkernel_0(\cdot|s,a) \|_1<\delta\right\}$$
for the $\delta$-neighborhood in composite $\max$-$1$ norm.

\begin{lemma}\label{le:suppstab} (Stability of $\theinterestingstates$)
Let $\{\mathcal{M}_{\tkernel} | \tkernel\in(\Delta \mathcal{S})^{\mathcal{S}\times\mathcal{A}} \}$ and $\{\bar{\mathcal{M}}_{\tkernel} |\tkernel\in (\Delta \mathcal{S})^{\mathcal{S}\times\mathcal{A}} \}$ be compatible families. Let $(\pi_{n,\tkernel})_{n\geq0}$ be a sequence of 
policies generated by the \eUDRL{} iteration given a transition kernel $\tkernel\in(\Delta \mathcal{S})^{\mathcal{S}\times\mathcal{A}}$ and an initial policy $\pi_0$ (that does not depend on $\tkernel$). Fix a deterministic transition kernel $\tkernel_0 \in (\Delta \mathcal{S})^{\mathcal{S}\times\mathcal{A}}$ 
then 
for all initial conditions $\pi_0 > 0$ it
holds:
\begin{enumerate}
    \item For all $n\geq 0$ and all $\tkernel \in U_{2}(\tkernel_0)$ we have that $\supp \num_{\tkernel_0,\pi_0} \cap ( \mathcal{A} \times \supp \nu_{\tkernel_0,\pi_0} ) \subset \supp \num_{\tkernel,\pi_n} \cap ( \mathcal{A} \times \supp \nu_{\tkernel,\pi_n} )$,
    where the inclusion becomes equality for $\tkernel = \tkernel_0$.
    \item For all $n\geq 0$ and all $\tkernel \in U_{2}(\tkernel_0)$ we have that
    $\theinterestingstates \subset \supp \den_{\tkernel,\pi_n} \cap \supp \nu_{\tkernel,\pi_n}$,
    where the inclusion becomes an equality for $\tkernel = \tkernel_0$.
    \item For all $n\geq 0$ and all $\tkernel\in U_{2}(\tkernel_0)$ we have that $$\prob_\tkernel (\stag{S}_0=s, l(\Sigma)=h, \rho(\stag{S}_h)=g, \stag{H}_0=h, \stag{G}_0=g; \pi_{n} ) > 0$$ for all $(s,h,g)\in \theinterestingstates$. 
    \item For all $(a,s,h,g) \in \supp \num_{\tkernel_0,\pi_0} \cap ( \mathcal{A} \times \supp \nu_{\tkernel_0,\pi_0} )$, $h>1$
    there exists an $s' \in \mathcal{S}$ so that 
    $\tkernel_0(s'|s,a) > 0$ and $(s',h-1,g) \in  \theinterestingstates$.
    \item
    For all $(a,s,h,g) \notin \supp \num_{\tkernel_0,\pi_0}$ with $(s,h,g) \in \supp \den_{\tkernel_0,\pi_0}$ we have for any policy $\pi$ that $Q_{\lambda_0}^{\pi}((s,h,g),a) = 0$ .
\end{enumerate}
\end{lemma}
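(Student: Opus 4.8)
The plan is to prove each of the five points by unwinding the definitions of the numerator, denominator, visitation distribution, and critical states, and tracking how the supports behave under a kernel perturbation of $\ell_1$-radius less than $2$. The crucial observation underlying the whole lemma is that for a deterministic kernel $\tkernel_0$, whenever $\tkernel_0(s'|s,a)>0$ we actually have $\tkernel_0(s'|s,a)=1$; and for any $\tkernel\in U_2(\tkernel_0)$, $\|\tkernel(\cdot|s,a)-\tkernel_0(\cdot|s,a)\|_1<2$ forces $\tkernel(s'|s,a)>0$ at the unique successor $s'$ with $\tkernel_0(s'|s,a)=1$, since otherwise the two distributions would have disjoint supports and $\ell_1$-distance exactly $2$. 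This single fact is what propagates positivity of transition probabilities from $\tkernel_0$ to every $\tkernel$ in the neighborhood, and hence positivity of the relevant trajectory/segment probabilities.

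For Point 1, I would expand $\num_{\tkernel,\pi_n}(a,s,h,g)$ as a sum over trajectories (or using the Markovianity properties \eref{eq:segactions}, \eref{eq:segtransitions}) into a product of $\pi_n$-factors and kernel factors $\tkernel(s_{i}|s_{i-1},a_{i-1})$. If $(a,s,h,g)$ lies in $\supp\num_{\tkernel_0,\pi_0}\cap(\mathcal A\times\supp\nu_{\tkernel_0,\pi_0})$, there is a realizing path under $\tkernel_0$ along which every kernel factor equals $1$; by the neighborhood fact each corresponding factor stays positive under $\tkernel$, and since $\pi_0>0$ forces (inductively, via the recursion \eref{eq:recursionUsingNumeratorDenominator}) that each $\pi_n>0$ on the visited states, every policy factor is positive as well. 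Thus the same path witnesses membership in $\supp\num_{\tkernel,\pi_n}$ and, analogously, in $\supp\nu_{\tkernel,\pi_n}$, giving the inclusion; equality at $\tkernel=\tkernel_0$ holds because the supports coincide by construction (and $\pi_n$ has the same support as $\pi_0$ under the deterministic kernel). Point 2 follows by marginalizing Point 1 over the action $a$: $\den=\sum_a\num$, so $\supp\den_{\tkernel_0,\pi_0}\subset\supp\den_{\tkernel,\pi_n}$, and intersecting with the $\nu$-supports and recalling the definition $\theinterestingstates=\supp\den_{\tkernel_0,\pi_0}\cap\supp\nu_{\tkernel_0,\pi_0}$ yields the claim. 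Point 3 is the observation that for a diagonal realization (with $\stag H_0=h,\stag G_0=g$) the same positive path used above already achieves $\rho(\stag S_h)=g$, so restricting the event to $\stag H_0=h,\stag G_0=g$ keeps the probability strictly positive on $\theinterestingstates$.

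For Point 4, given $(a,s,h,g)\in\supp\num_{\tkernel_0,\pi_0}\cap(\mathcal A\times\supp\nu_{\tkernel_0,\pi_0})$ with $h>1$, I would take the realizing path and let $s'$ be the state reached after the first transition; since $\tkernel_0$ is deterministic this $s'$ satisfies $\tkernel_0(s'|s,a)=1>0$. It then remains to check that $(s',h-1,g)\in\theinterestingstates$, i.e.\ that the remaining sub-path both realizes the command from $(s',h-1,g)$ (membership in $\supp\den_{\tkernel_0,\pi_0}$) and is visited (membership in $\supp\nu_{\tkernel_0,\pi_0}$); both follow from restricting the original positive path, using that a suffix of a visited, goal-realizing path is again visited and goal-realizing under the deterministic dynamics. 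Point 5 is essentially a consequence of the value expression \eref{eq:valueexpr}: $Q^{\pi}_{\tkernel_0}((s,h,g),a)=\prob_{\tkernel_0}(\rho(S_{t+h})=g\mid S_t=s,H_t=h,G_t=g,A_t=a;\pi)$, and I would argue that $(a,s,h,g)\notin\supp\num_{\tkernel_0,\pi_0}$ together with $(s,h,g)\in\supp\den_{\tkernel_0,\pi_0}$ and $\pi_0>0$ forces this conditional reaching probability to vanish — because under the deterministic kernel the event $\rho(S_{t+h})=g$ after taking $a$ in $(s,h,g)$ is independent of the specific full policy (it depends only on which successor states are deterministically reachable), so its being zero for the full-support $\pi_0$ makes it zero for every $\pi$.

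The main obstacle I anticipate is Point 5, and more subtly the bookkeeping in Points 1 and 4 to guarantee that the policy factors $\pi_n$ are positive along the realizing path. The delicate issue is that $\pi_n$ for $n\ge1$ is supported only on $\supp\num_{\tkernel,\pi_{n-1}}$, so one must argue \emph{by induction on $n$} that the support of $\pi_n$ (restricted to visited states) never shrinks below the deterministic support, which is precisely where Point 1's inclusion chain must be invoked self-referentially; care is needed to ensure the induction is set up so that the positivity used to prove the $n$-th inclusion is supplied by the $(n-1)$-st. For Point 5 the challenge is to make rigorous the claim that $Q^{\pi}_{\tkernel_0}$ is policy-independent on the relevant event: one must trace through the deterministic transition structure to show that the probability of reaching goal $g$ at horizon $h$ after action $a$ is determined entirely by the fixed deterministic dynamics once the non-membership in $\supp\num_{\tkernel_0,\pi_0}$ rules out any goal-achieving continuation, and I would handle this by relating $Q^{\pi}_{\tkernel_0}((s,h,g),a)$ directly to the numerator via a Bayes-type manipulation that exposes its vanishing independently of $\pi$.
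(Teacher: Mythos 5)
Your overall skeleton matches the paper's: the $\ell_1$-radius-$2$ observation forcing $\tkernel(s'|s,a)>0$ at the deterministic successor, the induction on $n$ to propagate positivity of the policy factors along a fixed witnessing trajectory, marginalization over $a$ for Point~2, and the contradiction/full-support argument for Point~5. However, there is a concrete gap in your treatment of Point~1 that then undermines Point~3. You write as if membership in $\supp \num_{\tkernel_0,\pi_0} \cap (\mathcal{A}\times\supp\nu_{\tkernel_0,\pi_0})$ directly hands you a single ``realizing path,'' but the numerator event $\{\stag{A}_0=a,\stag{S}_0=s,l(\Sigma)=h,\rho(\stag{S}_h)=g\}$ only constrains the segment's \emph{length} to be $h$; the segment's initial horizon component $\stag{H}_0$ may strictly exceed $h$, so the trajectory witnessing the numerator need never visit the extended state $(s,h,g)$ at all. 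The paper's proof explicitly \emph{splices} two trajectories: a prefix that reaches the extended state $(s,h,g)$ (which exists precisely because $(s,h,g)\in\supp\nu_{\tkernel_0,\pi_0}$) and the state-action suffix of the numerator-witnessing trajectory with its horizon and goal components rewritten to match. This construction is what produces a trajectory containing a \emph{diagonal} segment ($\stag{H}_0=h$, $\stag{G}_0=g$), and without it Point~3 --- which asserts positivity of the event \emph{including} $\stag{H}_0=h,\stag{G}_0=g$ --- does not follow from what you have written. Your Point~3 paragraph presupposes ``a diagonal realization'' whose existence is exactly the missing step.

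A secondary issue: in Point~5 you justify the conclusion by claiming the reaching probability after taking $a$ in $(s,h,g)$ ``is independent of the specific full policy'' under a deterministic kernel. That is false --- $Q^{\pi}_{\tkernel_0}$ certainly depends on which actions $\pi$ selects downstream. The correct (and standard) direction is: if $Q^{\pi}_{\tkernel_0}((s,h,g),a)>0$ for \emph{some} $\pi$, then some action sequence reaches the goal, hence $Q^{\pi_0}_{\tkernel_0}((s,h,g),a)>0$ because $\pi_0>0$ assigns positive mass to that sequence; one then uses $(s,h,g)\in\supp\den_{\tkernel_0,\pi_0}$ to produce a prefix reaching some $(s,h',g')$ with $h'\geq h$ and the fact that $\supp\pi_0$ ignores horizon and goal components to assemble a full trajectory witnessing $\num_{\tkernel_0,\pi_0}(a,s,h,g)>0$, contradicting the hypothesis. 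Note this last assembly is again the splicing construction, so repairing the gap in Point~1 also supplies what your ``Bayes-type manipulation'' gestures at here.
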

It is straightforward to extend the lemma for arbitrary
$
\tkernel_0 \in (\Delta \mathcal{S})^{\mathcal{S}\times \mathcal{A}}
$, but we will need the result only for deterministic $\tkernel_0$.
\begin{proof}
Assuming that $\tkernel_0$ is deterministic, the following inclusion holds:  
\begin{align}\label{eq:kernelInclusion}
(\forall \tkernel \in U_{2}(\tkernel_0)):\; \supp \tkernel_0 \subset \supp \tkernel.
\end{align}
This means that any state that is reachable through an environment transition by the kernel $\tkernel_0$ is also reachable by $\tkernel$.
Since $\tkernel_0$
is deterministic there exists a unique $s'$ such that $1=\tkernel_0(s'|s,a)$.
It suffices to show that $\tkernel(s'|s,a)$ is non-zero. We have
\begin{align*}
2 > \| \tkernel(\cdot|s,a) - \tkernel_0(\cdot|s,a) \|_1 &=
| \tkernel(s'|s,a) - \tkernel_0(s'|s,a) | + \sum_{s''\neq s'} | \tkernel(s''|s,a) - \tkernel_0(s''|s,a) |
\\
&= 1-\tkernel(s'|s,a) + \sum_{s''\neq s'} \tkernel(s''|s,a)
= 2(1-\tkernel(s'|s,a))
\end{align*}
which is equivalent to $\tkernel(s'|s,a) > 0$.

1. %
Assuming $(a,s,h,g) \in \supp \num_{\tkernel_0,\pi_0} \cap ( \mathcal{A} \times \supp \nu_{\tkernel_0,\pi_0} )$, it follows that there exists a trajectory $\tau' = ((s_0',H',g'),a_0',(s_1',H'-1,g'),a_1',\ldots)$ with positive probability
$\prob_{\tkernel_0}(\mathcal{T} = \tau'; \pi_0) > 0$ with the following property: for a certain $t' \leq N-h$ we have
$s_{t'}' = s$, $a_{t'}' = a$, $H'-t' \geq h$ and $\rho(s_{t'+h}) = g$. We claim that there exists a trajectory
$\tau = ((s_0,H,g),a_0,(s_1,H-1,g),\ldots)$ and $t\leq N$ with positive probability $\prob_{\tkernel_0}(\mathcal{T} = \tau; \pi_0) > 0$ with the property that
$s_{t} = s$, $a_{t} = a$, $h=H-t$ and $\rho(s_{t+h}) = g$.
The existence of a beginning segment of a trajectory (so-called prefix) $((s_0,H,g),a_0,\ldots,(s_t,H-t,g))$ that fits $\tau$ at $t$ and has positive
probability (under $\tkernel_0,\pi_0$) follows from $(s_t,H-t,g) = (s,h,g) \in \theinterestingstates \subset \supp \nu_{\tkernel_0,\pi_0}$. For the terminal segment of the trajectory (the so-called suffix) we take the suffix of $\tau'$ (starting at $t'$) and adjust the horizon and goal components so that they coincide with those of $\tau$ at $t$. The entire trajectory will still have positive probability 
(under $\tkernel_0,\pi_0$), i.e.,
$\prob_{\tkernel_0}(A_t=a_t,\bar{S}_{t+1} = (s_{t+1},H-t+1,g),A_{t+1}=a_{t+1},\ldots  |\bar{S}_t = (s_t,H-t,g);\pi_0) > 0$, since we kept original MDP state components and we can use transitions from 
$\tkernel_0(s_{t'+i+1}'|s_{t'+i}',a_{t'+i}')$, $0\leq i\leq h-1$ and
$\pi_0 > 0$. This proves the existence of $\tau$.

Making use of the inclusion~\eqref{eq:kernelInclusion} we have $\prob_{\tkernel}(\mathcal{T} = \tau; \pi_0) > 0$ for all $\tkernel \in U_{\delta}(\tkernel_0)$. Assuming $\tkernel \in U_{\delta}(\tkernel_0)$ fixed, we prove by induction that
\begin{equation}
\prob_{\tkernel}(\mathcal{T} = \tau; \pi_n) > 0.
\label{eq:taupositive}
\end{equation}
The statement for $n=0$ is established already. Assume that it holds for $n\geq0$. Because the probability of $\tau$ is positive (under $\lambda,\pi_n$) we get
$\forall i : \prob_\tkernel (\stag{A}_0= a_i,\stag{S}_0= s_i ,l(\sigma)= H-i, \rho(\stag{S}_{H-i})=g;\pi_n) > 0$. It follows that
$\pi_{n+1}(a_i|s_i,H-i,g) = \prob_\tkernel(\stag{A}_0=a_i|\stag{S}_0=s_i,l(\sigma)= H -i, \rho(\stag{S}_{H-i})=g ; \pi_n ) > 0$ for all $i < H$.
Since the transition kernel $\tkernel$ and the initial distribution
$\mu$ remain unchanged $\prob_\tkernel(\mathcal{T} = \tau; \pi_{n+1}) > 0$ follows. 

From \eref{eq:taupositive} follows that $\num_{\tkernel,\pi_{n}}(a,s,h,g)>0$ and $\nu_{\tkernel,\pi_n}(s,h,g) > 0$,
also we see that $\prob_\tkernel(\stag{A}_0= a,\stag{S}_0= s ,l(\Sigma)=  h, \rho(\stag{S}_{h})=g, \stag{H}_0=h,\stag{G}_0=g; \pi_{n}) > 0$. This finishes the proof of the inclusion. To establish equality in case of $\tkernel = \tkernel_0$ it suffices to show $\supp \num_{\tkernel_0,\pi_n} \subset \supp \num_{\tkernel_0,\pi_0}$ and $\supp \nu_{\tkernel_0,\pi_n} \subset \supp \nu_{\tkernel_0,\pi_0}$.  But this follows since it is assumed that $\pi_0$ has maximal support.

2. This follows immediately from point 1. and the fact that for any $\tkernel$ we have (for all $n\geq0$, $(s,h,g) \in \bar{S}_T$)
$$
(s,h,g) \in \supp \den_{\tkernel,\pi_n} \iff
(\exists a \in \mathcal{A} : (a,s,h,g) \in \supp \num_{\tkernel,\pi_n} ).
$$

3. The fact that $\prob_\tkernel (\stag{S}_0=s, l(\Sigma)=h, \rho(\stag{S}_h)=g; \stag{H}_0=h, \stag{G}_0=g, \pi_{n} ) > 0 $ follows from inequality~\eqref{eq:taupositive}.

4. This follows immediately from existence of the trajectory $\tau$ constructed in the proof of point 1.

5.
For the sake of contradiction assume $Q_{\tkernel_0}^{\pi}((s,h,g),a) > 0$
for some $\pi$. By the maximality of the support of $\pi_0$ it follows that $Q_{\tkernel_0}^{\pi_0}((s,h,g),a) > 0$.
From $(s,h,g) \in \supp \den_{\tkernel_0,\pi_0}$ and the equivalence used in the proof of point 2.~above we deduce the existence of a trajectory $\tau'$ with positive probability (under $\tkernel_0$, $\pi_0$) (as in point 1. above). We will need just its prefix
$\prob_{\tkernel_0}(S_0 = s_0', H_0 = H', G_0 = g', \ldots S_{t'} = s,
H_{t'} = h', G_{t'} = g'; \pi_0) > 0$, $h' \geq h$.
Since $0< Q_{\tkernel_0}^{\pi_0}((s,h,g),a) =
\prob_{\tkernel_0} (\rho(S_h) = g | S_0 = s, H_0 = h, G_0 = g, A_0 = a ;\pi_0)$ it follows that 
$\prob_{\tkernel_0} (\rho(S_{t'+h}) = g | S_{t'} = s, H_{t'} = h', G_{t'} = g', A_{t'} = a ;\pi_0) > 0$, where we used that support of $\pi_0$ neglects horizon and goal components. Similarly it holds
$\pi_0(a|s,h',g') > 0$. Putting this together we conclude that there exists a trajectory $((s_0',H',g'), \ldots ,(s,h',g'),a, \ldots (s_{t'+h},h'-h,g'),\ldots)$ with $\rho(s_{t'+h})= g$
with positive probability (under $\tkernel_0$, $\pi_0$)
which demonstrates that $\num_{\tkernel_0,\pi_0}(a,s,h,g) > 0$ and contradicts our assumption.
\end{proof}

\subsection{On the Stability of Supports in Segment Sub-Spaces}
\label{sse:suppstabDiagTrail}

The discussion of stability of sets of critical states in compatible families presented in section~\ref{subsection:compatible_families} translate mutatis mutandis to a discussion of the \eUDRL{} recursion on segment sub-spaces $\SegTrail$ and $\SegDiag$. Motivated by the theoretical description of \eUDRL{}-type iterations in algorithms like ODT or RWR we briefly outline the respective stability properties in this section. Let  $\num_{\tkernel,\pi}^{\trail/\diag}$ and $\den_{\tkernel,\pi}^{\trail/\diag}$ denote the numerator and denominator of the \eUDRL{} recursion (compare definition~\ref{eq:recursionUsingNumeratorDenominator}). The quantities are modified by introducing conditions that define the respective segment sub-spaces. For example, for all $a\in \mathcal{A}$ and all $(s,h,g) \in \bar{S}_T$ we set
$$
\begin{aligned}
\num^{\mathrm{diag}}_{\tkernel,\pi}(a,s,h,g) &:= \prob_{\tkernel}(\stag{A}_0 = a, \stag{S}_0 = s, l(\Sigma) = h, \stag{S}_h = g | l(\Sigma) = \stag{H}_0, \rho(\stag{S}_{l(\Sigma)}) = \stag{G}_0 ; \pi),\\
\den^{\mathrm{diag}}_{\tkernel,\pi}(s,h,g) &:= \prob_{\tkernel}(\stag{S}_0 = s, l(\Sigma) = h, \stag{S}_h = g | l(\Sigma) = \stag{H}_0, \rho(\stag{S}_{l(\Sigma)}) = \stag{G}_0 ; \pi).
\end{aligned}
$$
The \eUDRL{} recursion on $\SegTrail$ and $\SegDiag$ has the form as equation~\eqref{eq:recursionUsingNumeratorDenominator} with the renaming $\pi_{n}\rightarrow\pi_{n}^{\diag/\trail}$, $\num\rightarrow \num^{\diag/\trail}$ and $\den\rightarrow \den^{\diag/\trail}$.
\begin{lemma}\label{le:suppstabTrailDiag} (Stability of $\theinterestingstates$ in trailing and diagonal sub-spaces) In the setting of Lemma~\ref{le:suppstab} consider restrictions of the \eUDRL{} recursion to the sets $\SegDiag$ and $\SegTrail$. Then the set of critical states $\theinterestingstates$ remains unchanged and the conclusion of Lemma~\ref{le:suppstab} remain valid under the renaming $\pi_n\rightarrow\pi_n^{\diag/\trail}$, $\num\rightarrow \num^{\diag/\trail}$ and $\den\rightarrow \den^{\diag/\trail}$.
\end{lemma}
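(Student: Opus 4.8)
The plan is to show that restricting the eUDRL recursion to the trailing or diagonal segment sub-spaces changes essentially nothing in the stability argument of Lemma~\ref{le:suppstab}, because the only structural facts that proof uses carry over verbatim. Concretely, the proof of Lemma~\ref{le:suppstab} rests on three pillars: (a) the kernel-inclusion $\supp\tkernel_0\subset\supp\tkernel$ for all $\tkernel\in U_2(\tkernel_0)$, valid whenever $\tkernel_0$ is deterministic; (b) an induction showing $\prob_{\tkernel}(\mathcal{T}=\tau;\pi_n)>0$ for a suitable trajectory $\tau$; and (c) the fact that $\pi_0$ has maximal support. Pillar (a) is a statement purely about kernels and is untouched by the choice of segment sub-space. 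So the first step is to observe that $\theinterestingstates$ is defined in Definition~\ref{de:theinterestingstates} via $\den_{\tkernel_0,\pi_0}$ and $\nu_{\tkernel_0,\pi_0}$, neither of which involves the segment-space restriction in a way that survives conditioning: I would check that under $\tkernel_0$ deterministic, the extra conditioning events $l(\Sigma)=\stag{H}_0$ and $\rho(\stag{S}_{l(\Sigma)})=\stag{G}_0$ that define $\SegTrail,\SegDiag$ have full effective support on the critical states, so that $\supp\den^{\diag/\trail}_{\tkernel_0,\pi_0}\cap\supp\nu_{\tkernel_0,\pi_0}=\theinterestingstates$ is genuinely unchanged.

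The crux is that the trajectory $\tau$ constructed in the proof of point~1 of Lemma~\ref{le:suppstab} can be taken to be \emph{diagonal} already. In that construction the suffix of $\tau$ is lifted from the witnessing trajectory $\tau'$ with its horizon and goal components re-labelled to match the prefix; the freedom in choosing the command $(H,g)$ lets me arrange $H=t+h$ (so the segment of interest is trailing, $l(\Sigma)=\stag{H}_0$) and $g=\rho(s_{t+h})$ (so it is goal-diagonal, $\rho(\stag{S}_{l(\Sigma)})=\stag{G}_0$). Since point~3 of the original lemma already produces precisely the event $\{\stag{H}_0=h,\stag{G}_0=g\}$ with positive probability on $\theinterestingstates$, the same witnessing $\tau$ lies in the diagonal sub-space. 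I would therefore re-run the induction \eqref{eq:taupositive} verbatim, replacing $\pi_n$ by $\pi_n^{\diag/\trail}$ and invoking the recursion formulas of Lemma~\ref{le:recrewrites} points~2 and~3: the key observation noted right after that lemma is that $\pi_{n+1}^{\diag/\trail}(a\mid s,h,g)$ is positive as soon as the corresponding $\num^{\diag/\trail}$ is, and $\num^{\diag/\trail}$ inherits positivity from the diagonal trajectory $\tau$. The remaining points~2,~4,~5 follow formally as in the original proof, since point~2 is a pure support-equivalence, point~4 reuses the trajectory $\tau$, and point~5 concerns $Q^\pi_{\tkernel_0}$, which is a property of the extended MDP independent of the segment sub-space.

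The main obstacle I anticipate is verifying that restricting to the smaller sub-spaces does not \emph{shrink} the relevant supports, i.e.\ that no critical state is lost by the extra conditioning. This is where pillar (c), the maximal support of $\pi_0$, and the positivity from point~3 do the work: they guarantee that for every $(s,h,g)\in\theinterestingstates$ the diagonal event already carries positive mass, so conditioning on $l(\Sigma)=\stag{H}_0,\ \rho(\stag{S}_{l(\Sigma)})=\stag{G}_0$ neither empties the numerator nor removes the state from the visitation support. Once this is checked, the equalities at $\tkernel=\tkernel_0$ (as opposed to mere inclusions for general $\tkernel\in U_2(\tkernel_0)$) follow exactly as before from $\supp\num^{\diag/\trail}_{\tkernel_0,\pi_n}\subset\supp\num^{\diag/\trail}_{\tkernel_0,\pi_0}$, again a consequence of $\pi_0>0$. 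I would thus present the proof as a short \emph{mutatis mutandis} argument, explicitly pointing to the diagonal choice of $\tau$ as the one non-routine modification and leaving the five points to be read off from the original proof under the stated renaming.
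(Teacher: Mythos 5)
Your proposal is correct and follows essentially the same route as the paper: you identify the one non-routine step — that the witnessing trajectory $\tau$ from the proof of Lemma~\ref{le:suppstab} already contains $(a,s,h,g)$ in a diagonal (hence also trailing) segment, since by construction its segment satisfies $l(\Sigma)=\stag{H}_0=h$ and $\stag{G}_0=g=\rho(\stag{S}_h)$ — and from this deduce both the invariance of $\theinterestingstates$ and the positivity of $\pi_{n+1}^{\diag/\trail}$ in the induction, exactly as the paper does. The only minor imprecision is in point 5, where the paper additionally notes that the diagonal construction is needed to show $\num^{\diag/\trail}_{\tkernel_0,\pi_0}(a,s,h,g)>0$ in the contradiction argument (not merely that $Q^{\pi}_{\tkernel_0}$ is sub-space independent), but your construction supplies this anyway.
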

\begin{proof}
We follow the proof of Lemma~\ref{le:suppstab} and only highlight the differences. Consider any $(a,s,h,g) \in \supp \num_{\tkernel_0,\pi_0}^{\diag/\trail} \cap (\mathcal{A}\times \supp \nu_{\tkernel_0,\pi_0})$ that is contained in a trajectory $\tau'$. We construct a trajectory $\tau$ with positive probability that contains $(a,s,h,g)$
in a diagonal segment. As the diagonal/trailing segment is also contained in $\Seg$, we can take the same approach as before.
As a byproduct of the construction of $\tau$ we conclude that
\begin{equation}
\supp \num_{\tkernel_0,\pi_0} \cap (\mathcal{A}\times \supp \nu_{\tkernel_0,\pi_0}) = \supp \num_{\tkernel_0,\pi_0}^{\diag/\trail} \cap (\mathcal{A}\times \supp \nu_{\tkernel_0,\pi_0}),
\label{eq:saisthesame}
\end{equation}
and consequently
\begin{equation}
\theinterestingstates = \supp \den_{\tkernel_0,\pi_0} \cap \supp \nu_{\tkernel_0,\pi_0} = \supp \den_{\tkernel_0,\pi_0}^{\diag/\trail} \cap  \supp \nu_{\tkernel_0,\pi_0}.
\label{eq:isthesame}
\end{equation}
In other words the set $\theinterestingstates$ and its state-action variant $\num_{\tkernel_0,\pi_0} \cap (\mathcal{A}\times \supp \nu_{\tkernel_0,\pi_0})$ remain unchanged. To prove that
$
\prob_{\tkernel}(\mathcal{T}=\tau;\pi_n^{\diag/\trail})>0
$
it suffice to realize that the segments used to prove $\pi_{n+1}(a_i|s_i,H-i,g) >0$ were diagonal, i.e.~$\pi_{n+1}^{\diag/\trail}(a_i|s_i,H-i,g) >0$.
The rest of point 1. and points 2.,3.,4. follow similarly. In the proof of point 5. the fact that we began with a diagonal/trailing segment is employed to demonstrate that $\num_{\tkernel_0,\pi_0}^{\diag/\trail}(a,s,h,g) > 0$ and to obtain a contradiction.
\end{proof}

\section{Convergence to Optimal Policies for Deterministic Kernels}
\label{se:optdet}

While the \eUDRL{} recursion converges to optimal policies for deterministic environments, it can result in sub-optimal behavior
in the presence of stochasticity. This fact is apparent already in the early paper by \cite{schmidhuber2019reinforcement},
where the \eUDRL{} recursion in stochastic environments is replaced by an algorithm that operates on expected rather than actual returns. The fact that GCSL converges to an optimal policy is mentioned by \cite{ghosh2021learning}. A related fact is also discussed by \cite{brandfonbrener2023does}, focusing on the first UDRL iteration (offline RL) in return-reaching tasks, where it is proved that intended returns coincide with expected returns in deterministic environments. The fact that GCSL converges to optimal policies is rather straightforward such that a full proof is usually omitted. We provide a complete proof in this article as we will later build on this result and it is convenient to introduce our notation. We begin by characterizing the set of optimal actions for CEs in the case of a deterministic environment in section~\ref{subsec:optimalActions}.
Subsequently, in sections \ref{subsec:detconvSeg}
and \ref{subsec:detconvOthers} we show that this set corresponds to the support of the policies emerging in the course of the \eUDRL{} recursion.
As a consequence the support of $\pi_n$ is constant for $n\geq1$, which demonstrates that \eUDRL{}-generated policies are optimal in deterministic environments for $n\geq1$.

\subsection{Optimal Actions for Deterministic Transition Kernels}\label{subsec:optimalActions}
The goal of this section is to identify the sets of optimal actions for CEs arising form MDPs with a given deterministic transition kernel. We begin by defining the notion of an optimal action.
\begin{definition}\label{def:optimalAction}
(Optimal actions)
Let $\mathcal{M} = (\mathcal{S},\mathcal{A},\tkernel,\mu,r)$ be an MDP. An action $a\in\mathcal{A}$ is called optimal in the state $s\in\mathcal{S}$ if and only if there exists
a policy $\pi$ with $\pi(a|s)>0$ and $\pi$ reaches maximal (optimal) value in $s$. We write $\oactions(s)$ for the set of optimal actions at $s$.
\end{definition}

Notice that, since CE is a particular MDP, this definition includes optimal actions in the context of CEs.
Accordingly, in the context of CEs, any action that maximizes the probability of achieving the intended command is optimal.

\begin{lemma}\label{le:optactions} (The set of optimal actions of CEs for deterministic kernels) Let $\lambda_0$ be a given deterministic transition kernel and let $\mathcal{M}=(\mathcal{S},\mathcal{A},\tkernel_0,\mu,r)$ be a respective MDP with CE $\bar{\mathcal{M}} = (\bar{\mathcal{S}},\mathcal{A},\bar{\tkernel_0},\bar{\mu},\bar{r},\rho)$ and consider a policy $\pi_0 > 0$. For any state $\bar{s}\in\supp \den_{\tkernel_0,\pi_0} (\cdot,\bar{s})$ the set of optimal actions is $\mathcal{O}(\bar{s}) = \supp \num_{\tkernel_0,\pi_0} (\cdot,\bar{s})$.
\end{lemma}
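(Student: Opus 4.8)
The plan is to first reduce the claim to a statement about the optimal action-value function and then to identify its support with that of $\num_{\tkernel_0,\pi_0}$. By Definition~\ref{def:optimalAction}, $a\in\mathcal{O}(\bar s)$ iff some policy $\pi$ with $\pi(a\mid\bar s)>0$ attains $V^{\pi}(\bar s)=V^{*}(\bar s)$. Using the finite-horizon Bellman optimality identity $V^{*}(\bar s)=\max_{a'}Q^{*}(\bar s,a')$ together with $V^{\pi}(\bar s)=\sum_{a'}\pi(a'\mid\bar s)Q^{\pi}(\bar s,a')\le\sum_{a'}\pi(a'\mid\bar s)Q^{*}(\bar s,a')$, I would establish both inclusions of $\mathcal{O}(\bar s)=\{a : Q^{*}(\bar s,a)=V^{*}(\bar s)\}$. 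The $\subseteq$ direction follows because equality in the averaging forces every action in the support of an optimal $\pi$ to achieve the maximum; the $\supseteq$ direction follows by taking a globally optimal policy $\pi^{*}$ and overwriting its first action at $\bar s$ by $a$, which still attains $V^{*}(\bar s)$ since $Q^{\pi^{*}}(\bar s,a)=Q^{*}(\bar s,a)$ whenever $V^{\pi^{*}}=V^{*}$.

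Next I would exploit the binary reward and the determinism of $\tkernel_0$. By \eref{eq:valueexpr} the CE values are goal-reaching probabilities, so for a deterministic kernel every value attained by a deterministic policy lies in $\{0,1\}$, whence $V^{*}(\bar s),Q^{*}(\bar s,a)\in\{0,1\}$. The key observation is that $\bar s=(s,h,g)\in\supp\den_{\tkernel_0,\pi_0}$ forces $V^{*}(\bar s)=1$: since $\den_{\tkernel_0,\pi_0}(\bar s)=\sum_{a}\num_{\tkernel_0,\pi_0}(a,\bar s)>0$, some length-$h$ segment started at $s$ reaches the goal $g$ with positive probability, and because the kernel is deterministic this goal is genuinely reachable in $h$ steps, so the optimal value equals $1$. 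Consequently $\mathcal{O}(\bar s)=\{a : Q^{*}(\bar s,a)=1\}$.

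Finally I would match this set with $\supp\num_{\tkernel_0,\pi_0}(\cdot,\bar s)$ by two inclusions. For $a\in\supp\num_{\tkernel_0,\pi_0}(\cdot,\bar s)$, the segment Markov properties \eref{eq:segactions} and \eref{eq:segtransitions} together with $\pi_0>0$ yield a positive-probability trajectory that takes $a$ at $(s,h,g)$ and reaches $g$ after $h$ steps, whence $Q^{\pi_0}(\bar s,a)>0$ and therefore $Q^{*}(\bar s,a)=1$ by the dichotomy. For the reverse inclusion I would invoke Lemma~\ref{le:suppstab}, point~5: any $a\notin\supp\num_{\tkernel_0,\pi_0}(\cdot,\bar s)$ with $\bar s\in\supp\den_{\tkernel_0,\pi_0}$ satisfies $Q^{\pi}_{\tkernel_0}(\bar s,a)=0$ for every $\pi$, so $Q^{*}(\bar s,a)=0\neq1$ and such $a$ is not optimal. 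Combining this with the previous two steps gives $\mathcal{O}(\bar s)=\{a : Q^{*}(\bar s,a)=1\}=\supp\num_{\tkernel_0,\pi_0}(\cdot,\bar s)$.

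The Bellman manipulations and the $0/1$ dichotomy are routine. The main obstacle is the careful bookkeeping connecting the segment-sampling quantity $\num_{\tkernel_0,\pi_0}$ (a probability over sampled subsegments, involving a sum over start times) to the trajectory-level action-value $Q^{\pi_0}$; this link rests on the Markovianity of the segment distribution and on $\pi_0$ having full support, and it is precisely what Lemma~\ref{le:suppstab} and its proof already supply, so I would reuse that machinery rather than reconstruct the positive-probability trajectory by hand.
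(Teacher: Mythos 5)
Your proof is correct, and it reaches the same two inclusions as the paper while packaging the arguments somewhat differently. For $\supp\num_{\tkernel_0,\pi_0}(\cdot,\bar s)\subseteq\mathcal{O}(\bar s)$ the paper runs an explicit backward induction on the remaining horizon along the witnessing trajectory, showing $Q^{\pi^*}((s_{t+h-k},k,g),a_{t+h-k})=1$ step by step; you instead observe that for a deterministic kernel $V^*,Q^*\in\{0,1\}$ and that $Q^{\pi_0}(\bar s,a)>0$ already forces $Q^*(\bar s,a)=1$, which is a cleaner way of extracting the same conclusion (the dichotomy is itself justified by the determinism of the induced dynamics, so nothing is lost). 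For the reverse inclusion the paper re-constructs a positive-probability trajectory by hand to show $\num_{\tkernel_0,\pi_0}(a,\bar s)>0$; you take the contrapositive and cite Lemma~\ref{le:suppstab}, point~5. That citation is legitimate — Lemma~\ref{le:suppstab} precedes this result and its proof of point~5 does not depend on the present lemma — and in fact the paper's inline construction is essentially a repetition of that proof, so your version avoids duplication at the cost of making the dependence on the earlier lemma explicit. The one place where you should be slightly more careful is the reduction $\mathcal{O}(\bar s)=\{a:Q^*(\bar s,a)=V^*(\bar s)\}$: Definition~\ref{def:optimalAction} only requires $\pi$ to attain the optimal value \emph{at} $\bar s$, so the averaging argument must use $Q^{\pi}\le Q^{*}\le V^{*}$ to squeeze, and the reverse direction needs a globally optimal $\pi^{*}$ modified only at $\bar s$; your sketch covers this, and in the CE setting the horizon component of the state guarantees the modification does not affect continuation values.
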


Lemma~\ref{le:optactions} identifies the set of optimal actions for deterministic kernels for general CE-based algorithm but, it has no reference to \eUDRL{} specifically. Notice that the occurring quantities are related to the segment space $\Seg$, which can be constructed without resorting to \eUDRL{}. In our subsequent discussion of continuity we will investigate the impact of changing $\tkernel$ around some fixed deterministic kernel $\lambda_0$. We will drop the index $\oactions_{\lambda_0}(\bar{s})$ in our notation, writing $\oactions(\bar{s})$ to denote optimal actions at fixed $\tkernel_0$ (we will not investigate $\oactions_{\lambda}(\bar{s})$ in situations that the kernel varies).

\begin{proof} We begin with the reverse inclusion. Fix a state $\bar{s}=(s,h,g) \in \supp \den_{\tkernel_0,\pi_0}$ and an action $ a \in \supp \num_{\tkernel_0,\pi_0}(\cdot,\bar{s})$. Since
$\num_{\tkernel_0,\pi_0}(a,\bar{s})>0$ there exists a trajectory $\tau' = ((s_0,H',g'),a_0,(s_1,H'-1,g'),a_1,\ldots)$ with positive probability
under $\pi_0$ with $s_t = s,a_t = a,\rho(s_{t+h})=g, t+h \leq H'$.
Moreover $\tkernel_0(s_{i+1}|s_{i},a_{i}) = 1$ for $i<H'$, since $\tkernel_0$ is deterministic.
Since $Q((s_{t+h-1},1,g),a_{t+h-1}) = \sum_{s' \in \rho^{-1}(\{g\})} \tkernel_0(s'|s_{t+h-1},a_{t+h-1}) = \tkernel_0(s_{t+h}|s_{t+h-1},a_{t+h-1})=1$ is the maximal value, $a_{t+h-1}$ is optimal in $(s_{t+h-1},1,g)$. Therefore, $\pi^*(a_{t+h-1}| s_{t+h-1},1,g) = 1$ is an optimal policy in this state. Furthermore,
\begin{align*}
V^{\pi^*}(s_{t+h-1},1,g) &= \sum_{a\in \mathcal{A}} Q((s_{t+h-1},1,g),a)\pi^*(a| s_{t+h-1},1,g)
\\
&= Q((s_{t+h-1},1,g),a_{t+h-1})\pi^*(a_{t+h-1}| s_{t+h-1},1,g) = 1,
\end{align*}
and
\begin{align*}
Q^{\pi^*}(s_{t+h-2},2,g,a_{t+h-2}) &= \sum_{s'\in \mathcal{S}} V^{\pi^*}(s',1,g) \tkernel_0(s'|s_{t+h-2},a_{t+h-2})
\\
&= V^{\pi^*}(s_{t+h-1},1,g) \tkernel_0(s_{t+h-1}|s_{t+h-2},a_{t+h-2}) = 1,
\end{align*}
which achieves the maximal possible value. Therefore, $a_{t+h-2}$ is optimal in $(s_{t+h-2},2,g)$. Repeating this procedure $h$ times yields that $Q^{\pi^*}(s_{t},h,g,a_t) = 1$, $a_t$ is an optimal action in $(s_{t},h,g)$, and the value is $V^{\pi^*}(s_{t},h,g) = 1$.

For the forward inclusion, let $(s,h,g) \in \supp \den_{\tkernel_0,\pi_0}$ and suppose
$a$ is optimal in $(s,h,g)$. Following the same reasoning as in the proof of the reverse inclusion presented above, we can observe that $(s,h,g)$ achieves the maximal value of $1$. Since $a$ is optimal, there exists a policy $\pi^*$ that reaches the goal $g$
in $h$ steps from the state $s$ following the action $a$. Due to the fact that $\pi_0>0$, it follows that $\pi_0$ also reaches $g$ in $h$ steps from $s$ following $a$, i.e.,
$\prob_{\tkernel_0}( \rho(S_h)=g, A_0=a| S_0=s,H_0=h,G_0=g; \pi_0) > 0$. Since $(s,h,g) \in \supp \den_{\tkernel_0,\pi_0}$, we can similarly as
in proof of point 1.~of lemma~\ref{le:suppstab} find 
$\tau' = (s_0',H',g'),\ldots,(s,h',g'),\ldots)$ with $h'\geq h$
with positive probability under $\tkernel_0$ and $\pi_0$. It follows that
$\prob_{\tkernel_0}( \rho(S_h)=g, A_0=a| S_0=s,H_0=h',G_0=g'; \pi_0) > 0$. Combining this with the suitable prefix of $\tau'$ we conclude
the existence of a trajectory that demonstrates that $\num_{\tkernel_0,\pi_0}(a,s,h,g) >0$.
\end{proof}

\subsection{Convergence of \eUDRL{} in the Space of all Segments}
\label{subsec:detconvSeg}
In the following lemma, we utilize the notion of optimal actions to demonstrate that \eUDRL{}-generated policies $(\pi_n)_{n\geq1}$ are optimal if the transition kernel is deterministic.

\begin{lemma}\label{le:detopt}(Optimality of \eUDRL{} policies for deterministic transition kernels) Let $\lambda_0$ be a given deterministic transition kernel and let $\mathcal{M}=(\mathcal{S},\mathcal{A},\tkernel_0,\mu,r)$ be a respective MDP with CE $\bar{\mathcal{M}} = (\bar{\mathcal{S}},\mathcal{A},\bar{\tkernel_0},\bar{\mu},\bar{r},\rho)$. Assume $\pi_0 > 0$ and let $(\pi_n)_{n\geq0}$ be the policy sequence generated by the \eUDRL{} iteration given $\pi_0$ and $\tkernel_0$. Then it holds:
\begin{enumerate}
\item For all $n \geq 1$ and all $\bar{s}\in \theinterestingstates$ the support of $\pi_{n+1} (\cdot|\bar{s})$ is identical to the set of optimal actions $\oactions(\bar{s})$. In particular, the policy  $\pi_n$ is an optimal policy on $\theinterestingstates$.
\item For all $n\geq 1$ and all $\bar{s}\in \theinterestingstates$ the value is maximal, i.e.,
$$
V^{\pi_n}(\bar{s}) = 1.
$$
\item For all $n\geq 1$, all $\bar{s}\in \theinterestingstates$, and all $a \in \mathcal{A}$, the action-value function is given by
\begin{align*}
Q^{\pi_n}(\bar{s},a) = \begin{cases}&1 \quad \text{if} \:  a \in \oactions(\bar{s}),
\\
&0 \quad \text{otherwise}. 
\end{cases}
\end{align*}
\end{enumerate}
\end{lemma}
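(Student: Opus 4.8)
The plan is to prove the three claims in order, first pinning down the supports of the iterates on $\theinterestingstates$ and then transporting the value information inward by a backward induction on the horizon.

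First I would settle the support claim (point 1). Fix $\bar{s}=(s,h,g)\in\theinterestingstates$ and $n\geq0$. By part 2 of Lemma~\ref{le:suppstab} evaluated at $\tkernel=\tkernel_0$ one has $\theinterestingstates=\supp\den_{\tkernel_0,\pi_n}\cap\supp\nu_{\tkernel_0,\pi_n}$, so $\bar{s}\in\supp\den_{\tkernel_0,\pi_n}$ and the recursion~\eqref{eq:recursionUsingNumeratorDenominator} gives $\supp\pi_{n+1}(\cdot\mid\bar{s})=\supp\num_{\tkernel_0,\pi_n}(\cdot,\bar{s})$. The equality case of part 1 of Lemma~\ref{le:suppstab} at $\tkernel_0$ states $\supp\num_{\tkernel_0,\pi_n}\cap(\mathcal{A}\times\supp\nu_{\tkernel_0,\pi_n})=\supp\num_{\tkernel_0,\pi_0}\cap(\mathcal{A}\times\supp\nu_{\tkernel_0,\pi_0})$; since $\bar{s}$ lies in both $\supp\nu_{\tkernel_0,\pi_n}$ and $\supp\nu_{\tkernel_0,\pi_0}$, restricting to the fixed state $\bar{s}$ yields $\supp\num_{\tkernel_0,\pi_n}(\cdot,\bar{s})=\supp\num_{\tkernel_0,\pi_0}(\cdot,\bar{s})=\oactions(\bar{s})$, where the last equality is Lemma~\ref{le:optactions}. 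Hence $\supp\pi_{n+1}(\cdot\mid\bar{s})=\oactions(\bar{s})$ for all $n\geq0$, which in particular covers the stated range $n\geq1$ and shows that every $\pi_n$, $n\geq1$, plays only optimal actions on $\theinterestingstates$.

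Next I would prove points 2 and 3 jointly by backward induction on the horizon $h$ of $\bar{s}=(s,h,g)\in\theinterestingstates$, for a fixed $n\geq1$. The workhorse is the Bellman identity derived from~\eqref{eq:valueexpr}: since $\tkernel_0$ is deterministic, $Q^{\pi_n}((s,h,g),a)=V^{\pi_n}(s',h-1,g)$ for $h>1$, with $s'$ the unique successor satisfying $\tkernel_0(s'\mid s,a)=1$, while $Q^{\pi_n}((s,1,g),a)=\sum_{s'':\rho(s'')=g}\tkernel_0(s''\mid s,a)$ at the base horizon. In the base case $h=1$, an action $a\in\oactions(\bar{s})$ forces its unique successor to satisfy $\rho(s')=g$, whence $Q^{\pi_n}(\bar{s},a)=1$, while for $a\notin\oactions(\bar{s})$ part 5 of Lemma~\ref{le:suppstab} gives $Q^{\pi_n}(\bar{s},a)=0$ (its hypotheses hold as $\bar{s}\in\supp\den_{\tkernel_0,\pi_0}$ and $(a,\bar{s})\notin\supp\num_{\tkernel_0,\pi_0}$). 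This is point 3 at $h=1$, and combined with $\supp\pi_n(\cdot\mid\bar{s})=\oactions(\bar{s})$ it yields $V^{\pi_n}(\bar{s})=\sum_{a\in\oactions(\bar{s})}\pi_n(a\mid\bar{s})=1$.

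For the inductive step, assuming points 2 and 3 at horizon $h-1$, let $\bar{s}=(s,h,g)\in\theinterestingstates$ with $h>1$. For $a\in\oactions(\bar{s})$ we have $(a,\bar{s})\in\supp\num_{\tkernel_0,\pi_0}\cap(\mathcal{A}\times\supp\nu_{\tkernel_0,\pi_0})$, so part 4 of Lemma~\ref{le:suppstab} furnishes the unique successor $s'$ with $(s',h-1,g)\in\theinterestingstates$; the Bellman identity and the induction hypothesis then give $Q^{\pi_n}(\bar{s},a)=V^{\pi_n}(s',h-1,g)=1$, while part 5 again gives $Q^{\pi_n}(\bar{s},a)=0$ for $a\notin\oactions(\bar{s})$. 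Using $\supp\pi_n(\cdot\mid\bar{s})=\oactions(\bar{s})$ one concludes $V^{\pi_n}(\bar{s})=1$, the maximal attainable value of the binary-reward CE, so $\pi_n$ is optimal on $\theinterestingstates$. The main obstacle is exactly this step: the value recursion can only be closed if taking an optimal action from a critical state deterministically re-enters $\theinterestingstates$ at the reduced horizon, and it is part 4 of Lemma~\ref{le:suppstab} that guarantees this stability; a secondary bookkeeping point is keeping the support identity of point 1 synchronized across iterations via the equality cases of parts 1 and 2.
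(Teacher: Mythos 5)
Your proof is correct and follows essentially the same route as the paper: point 1 via the equality case of Lemma~\ref{le:suppstab}, point 1 combined with Lemma~\ref{le:optactions}, and points 2--3 via Lemma~\ref{le:suppstab}, points 4 and 5, together with the deterministic Bellman recursion. The only difference is presentational: you establish $V^{\pi_n}=1$ and the $Q$-values by an explicit joint backward induction on the horizon, whereas the paper asserts optimality of $\pi_n$ directly from the support identity and then reads off the values; your version makes that step slightly more self-contained but is the same argument.
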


\begin{proof}
1. Let $n\geq 0$, $(s,h,g) \in \theinterestingstates (= \supp \den_{\tkernel_0,\pi_n} \cap \supp \nu_{\tkernel_0,\pi_n} )$.
We have $a\in \oactions(s,h,g) \iff \num_{\tkernel_0,\pi_0}(a,s,h,g) >0$. By Lemma~\ref{le:suppstab}, point 1., this is equivalent to
$\num_{\tkernel_0,\pi_n}(a,s,h,g) >0$ and it follows
$$
\pi_{n+1}(a|s,h,g)  = \frac{\num_{\tkernel_0,\pi_n}(a,s,h,g)}{\den_{\tkernel_0,\pi_n}(s,h,g)}>0
\iff
a \in \oactions(s,h,g).
$$
Thus the support of $\pi_{n+1} (\cdot|s,h,g)$ is identical to the set of optimal actions.

2. Let $n\geq 1$, $(s,h,g) \in \theinterestingstates \subset \supp \den_{\tkernel_0,\pi_0}$. As in the proof of lemma~\ref{le:optactions}, $V^*(s,h,g) = 1$ and from point 1. above,
we know that $\pi_n$ is optimal on $\theinterestingstates$. Therefore
$V^{\pi_n}(s,h,g) = 1$.

3. Let $n\geq 1$, $(s,h,g) \in \theinterestingstates$.
First assume $a \in \oactions(s,h,g)$. By lemma~\ref{le:suppstab} point 4.~there exists $s' \in \mathcal{S}$
so that $\tkernel_0(s'|s,a)=1$ and $(s',h-1,g) \in \theinterestingstates$. By point 2. above, $V^{\pi_n}(s',h-1,g)=1$ and so $Q^{\pi_n}((s,h,g),a)= \sum_{s''\in \mathcal{S}} V^{\pi_n}(s'',h-1,g)\tkernel(s''|s,a) = 1$. If $a \notin \oactions(s,h,g)$ the assertion follows from lemma~\ref{le:suppstab} point 5.
\end{proof}

\subsection{Convergence of \eUDRL{} in Segment Sub-Spaces}
\label{subsec:detconvOthers}

In the context of CEs, the set of optimal actions $\oactions(\bar{s})$, $\bar{s} \in \supp \den_{\tkernel_0, \pi_0}$ is linked to the distribution on the segment space $\Seg$ through the quantities $\num_{\tkernel_0, \pi_0}$ and $\den_{\tkernel_0, \pi_0}$, cf.~lemma~\ref{le:optactions}. However, from~\eref{eq:saisthesame} we conclude that for all $\bar{s}\in \theinterestingstates$ it holds that
\begin{equation}
\oactions(\bar{s})=\supp \num_{\tkernel_0,\pi_0}^{\diag/\trail}(\cdot,\bar{s}),
\label{eq:optathesame}
\end{equation}
such that $\supp \num_{\tkernel_0,\pi_0}^{\diag/\trail}(\cdot,\bar{s})$ remains the set of optimal actions even when the algorithm operates on $\Seg^{\diag/\trail}$ as, e.g., in the case of RWR.
The variants of lemma~\ref{le:detopt} for the mentioned segment sub-spaces remain valid mutatis mutandis.

\begin{lemma}\label{le:detoptDiagTrail}(Optimality of \eUDRL{} policies for deterministic kernels in $\Seg^{\diag/\trail}$) In the setting of Lemma~\ref{le:detopt} consider restrictions of the \eUDRL{} recursion to the sets $\Seg^{\diag/\trail}$. The conclusions of the lemma~ remain valid under the renaming $\pi_n\rightarrow\pi_n^{\diag/\trail}$ (cf.~\eref{eq:segdiag}, \eref{eq:segtrailing}).
\end{lemma}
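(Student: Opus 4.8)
The plan is to reproduce the proof of Lemma~\ref{le:detopt} essentially verbatim, with every occurring quantity replaced by its $\diag/\trail$ analogue, resting on two facts that have already been established for the restricted recursion. First, Lemma~\ref{le:suppstabTrailDiag} guarantees that the set of critical states $\theinterestingstates$ is unchanged and that all five conclusions of Lemma~\ref{le:suppstab} survive the renaming $\pi_n\to\pi_n^{\diag/\trail}$, $\num\to\num^{\diag/\trail}$, $\den\to\den^{\diag/\trail}$. Second, equation~\eqref{eq:optathesame} records that the optimal-action set admits the identical support characterization in the restricted spaces, namely $\oactions(\bar{s})=\supp\num_{\tkernel_0,\pi_0}^{\diag/\trail}(\cdot,\bar{s})$ for every $\bar{s}\in\theinterestingstates$. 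The conceptual observation that makes this a transfer rather than a fresh argument is that the value functions $V^{\pi}$ and $Q^{\pi}$ are intrinsic to the extended MDP $\bar{\mathcal{M}}_{\tkernel_0}$ and the policy $\pi$: restricting to a segment sub-space alters only the recursion that \emph{produces} the policy, not the MDP in which its value is measured, so $V^{*}=1$ on $\theinterestingstates$ (from Lemma~\ref{le:optactions}) and the Bellman relations remain literally the same.

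For Point~1 I would fix $(s,h,g)\in\theinterestingstates$ and chain the equivalences exactly as in Lemma~\ref{le:detopt}: by \eqref{eq:optathesame}, $a\in\oactions(s,h,g)\iff\num_{\tkernel_0,\pi_0}^{\diag/\trail}(a,s,h,g)>0$; by the renamed Point~1 of Lemma~\ref{le:suppstab} (supplied through Lemma~\ref{le:suppstabTrailDiag}) this is equivalent to $\num_{\tkernel_0,\pi_n}^{\diag/\trail}(a,s,h,g)>0$; and by the $\diag/\trail$ form of the recursion~\eqref{eq:recursionUsingNumeratorDenominator} (cf.~\eqref{eq:segdiag},~\eqref{eq:segtrailing}) the latter is equivalent to $\pi_{n+1}^{\diag/\trail}(a\mid s,h,g)>0$. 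This identifies the support of $\pi_{n+1}^{\diag/\trail}(\cdot\mid\bar{s})$ with $\oactions(\bar{s})$ and hence shows $\pi_n^{\diag/\trail}$ is optimal on $\theinterestingstates$ for $n\geq1$.

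Points~2 and~3 then follow from Point~1 with no further work. Since $V^{*}(\bar{s})=1$ on $\theinterestingstates$ depends only on the deterministic kernel and Point~1 shows $\pi_n^{\diag/\trail}$ is optimal there, we get $V^{\pi_n^{\diag/\trail}}(\bar{s})=1$; for Point~3 the renamed Point~4 of Lemma~\ref{le:suppstab} supplies, for each $a\in\oactions(\bar{s})$, a successor $s'$ with $\tkernel_0(s'\mid s,a)=1$ and $(s',h-1,g)\in\theinterestingstates$, so the one-step expansion $Q^{\pi_n^{\diag/\trail}}(\bar{s},a)=\sum_{s''}V^{\pi_n^{\diag/\trail}}(s'',h-1,g)\,\tkernel_0(s''\mid s,a)=1$, while for $a\notin\oactions(\bar{s})$ the renamed Point~5 gives $Q^{\pi_n^{\diag/\trail}}(\bar{s},a)=0$. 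I do not expect a genuine obstacle here; the single place I would slow down is the bookkeeping in Point~1, confirming that both the optimal-action identification of \eqref{eq:optathesame} and the support-stability equivalence $\num^{\diag/\trail}_{\tkernel_0,\pi_0}>0\iff\num^{\diag/\trail}_{\tkernel_0,\pi_n}>0$ hold on the \emph{same} unchanged set $\theinterestingstates$, which is exactly the content of \eqref{eq:saisthesame}. Once these identifications are pinned down, the deterministic-kernel arguments of Lemma~\ref{le:detopt} carry over mechanically, since they only ever invoke membership in $\oactions(\bar{s})$, the value $V^{*}=1$, and the support-stability conclusions, all of which are preserved by the restriction to $\SegDiag$ and $\SegTrail$.
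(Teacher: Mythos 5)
Your proposal is correct and follows essentially the same route as the paper, which proves this lemma in one line by rerunning the proof of Lemma~\ref{le:detopt} with Lemma~\ref{le:suppstabTrailDiag} in place of Lemma~\ref{le:suppstab} and invoking equations~\eqref{eq:isthesame} and~\eqref{eq:saisthesame}. Your expanded bookkeeping — in particular the observation that the value functions are intrinsic to the CE and only the policy-producing recursion changes, and the verification that \eqref{eq:optathesame} and the support-stability equivalence live on the same set $\theinterestingstates$ — is exactly the content the paper leaves implicit, and it all checks out.
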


\begin{proof} The proof follows the same lines as the proof of lemma~\ref{le:detopt}, except that we apply lemma~\ref{le:suppstabTrailDiag} (instead of lemma~\ref{le:suppstab}) and equations~\eqref{eq:isthesame},~\eqref{eq:saisthesame}.
\end{proof}

\section{The Continuity of \eUDRL{} at Deterministic Transition Kernels for a Finite Number of Iterations}
\label{se:contfinite}

\begin{figure}[!htb]
	\centering
 \begin{subfigure}{0.48\linewidth}
		\includegraphics[width=\linewidth]{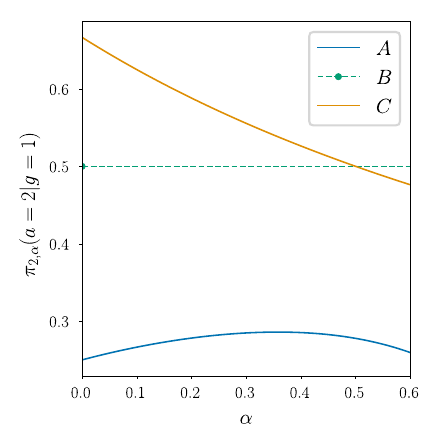}
		\caption{Determinist.~Kernel - Policy}
         \label{fig:discont:a}
	\end{subfigure}
	\hfil
	\begin{subfigure}{0.48\linewidth}
		\includegraphics[width=\linewidth]{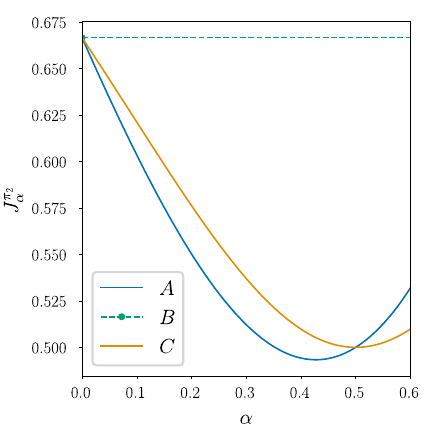}
		\caption{Determinist.~Kernel - Goal-R.~Objective}
  \label{fig:discont:b}
	\end{subfigure}
 
	\begin{subfigure}{0.48\linewidth}
		\includegraphics[width=\linewidth]{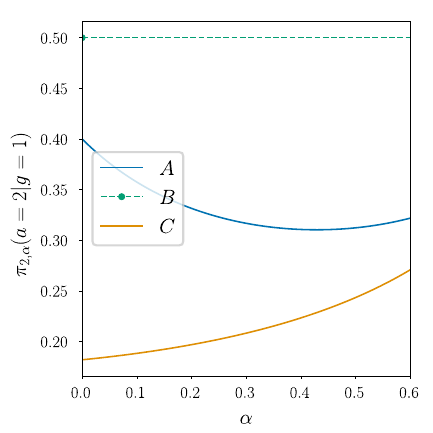}
		\caption{Non-Deter.~Kernel - Policy}
   \label{fig:discont:c}
	\end{subfigure}
	\hfil
	\begin{subfigure}{0.48\linewidth}
		\includegraphics[width=\linewidth]{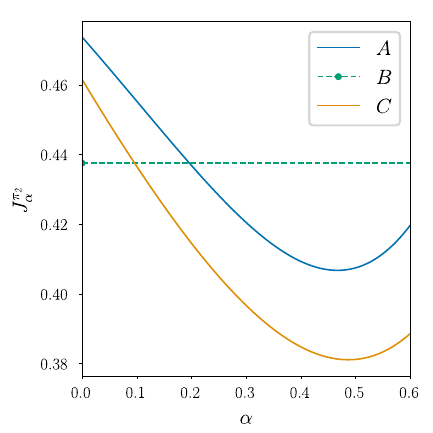}
		\caption{Non-Deter.~Kernel - Goal-R.~Objective}
         \label{fig:discont:d}
	\end{subfigure}
	\caption{Illustration of discontinuity of \eUDRL{}-generated quantities at the boundary of $(\Delta \mathcal{S})^{\mathcal{S}\times\mathcal{A}}$. The figures present values of policy and goal-reaching objective along two continuous one-parameter rays in $(\Delta \mathcal{S})^{\mathcal{S}\times\mathcal{A}}$, see items $A$ and $C$ in the legend. Horizontal axes show the value of the ray-parameter $\alpha$; the boundary is reached at $\alpha=0$, where the rays intersect. The exact value of the respective quantities at the boundary is represented by a horizontal line, see item $B$.}
 \label{fig:discont}
\end{figure}

We begin our discussion by observing that \eUDRL{}-generated quantities, like the policies $\pi_{n,\tkernel}$ and the respective goal-reaching objective $J_{\tkernel}^{\pi_{n,\tkernel}}$, in general do not need to be continuous (functions of the transition kernel) at all points $\tkernel \in (\Delta \mathcal{S})^{\mathcal{S}\times\mathcal{A}}$. Kernels located at the boundary\footnote{In this article we always interpret the terms boundary/interior in the elementary way, viewing $(\Delta \mathcal{S})^{\mathcal{S}\times\mathcal{A}}$ as continuously embedded into $\mathbb{R}^{m}$, where $m = |\mathcal{A}||\mathcal{S}|(|\mathcal{S}|-1)$. The term boundary/interior then refers to the elementary geometric understanding of the boundary/interior of a simplex.} of $(\Delta \mathcal{S})^{\mathcal{S}\times\mathcal{A}}$ are particularly problematic. Figure~\ref{fig:discont} illustrates this by showing the presence of continuous and discontinuous behavior of the policy and the goal-reaching objective near two specific kernels (where one is deterministic\footnote{Notice that deterministic kernels are located on the boundary of the simplex in general. This is because an arbitrary small perturbation that makes the zero-entries of a deterministic kernel negative will leave the simplex of probabilities.} and the other is non-deterministic) at the boundary of $(\Delta \mathcal{S})^{\mathcal{S}\times\mathcal{A}}$. All details of the construction of the MDPs underlying figure~\ref{fig:discont} can be found in the appendix~\ref{ap:examples}, examples \ref{ex:boundarypoint} and \ref{ex:detpoint}. Since \eUDRL{} converges to the optimum already for $n\geq1$ in case of a deterministic kernel, we use $n=2$ for our illustration. Specifically, figure~\ref{fig:discont} shows a comparison of the policy and the goal-reaching objective along two distinct, continuous rays of kernels in $(\Delta \mathcal{S})^{\mathcal{S}\times\mathcal{A}}$, where each ray is parametrized by a parameter $\alpha$. The rays intersect at the boundary when $\alpha=0$. Subfigure~\ref{fig:discont:b} suggests that the goal-reaching objective $J_{\tkernel}^{\pi_{2,\tkernel}}$ is continuous at the boundary (as $\alpha=0+$) if the kernel is deterministic, but subfigure~\ref{fig:discont:d} suggest that the goal-reaching objective can exhibit a non-removable discontinuity at the boundary in general. In view of the behavior in subfigure~\ref{fig:discont:d} the continuity of $J_{\tkernel}^{\pi_{2,\tkernel}}$ at a deterministic point (a product of $\Delta \mathcal{S}$ vertices, which also lies on the boundary) might come as a surprise and, as it turns out, is non-trivial to prove (see below). Notice, however, that \eUDRL{}-generated policies are discontinuous even in the case of a deterministic kernel (see subfigures~\ref{fig:discont:a},~\ref{fig:discont:c}). This can be seen as a consequence of the non-uniqueness of policies near optimality. This fact will play a major role in our proofs and cause a number of technical complications. One might wonder how $\pi_{n,\tkernel}$ and $J_{\tkernel}^{\pi_{n,\tkernel}}$ behave in the interior of $(\Delta \mathcal{S})^{\mathcal{S}\times\mathcal{A}}$, even though this plays a minor role for our discussion of convergence of \eUDRL{}. We leave the proof of continuity of \eUDRL{}-generated $\pi_{n,\tkernel}$ and $J_{\tkernel}^{\pi_{n,\tkernel}}$ strictly in the interior of $(\Delta \mathcal{S})^{\mathcal{S}\times\mathcal{A}}$~to appendix~\ref{ap:interiorcont}.
We also include a discussion describing the causes leading to discontinuities of \eUDRL{}-generated quantities at the end of appendix~\ref{ap:interiorcont}.

The remaining part of this section is structured as follows. Section~\ref{sse:contsegdist} introduces the discussion of continuity along given compatible families of CEs. In section~\ref{sse:theoremfinite}, we define a notion of relative limit and relative continuity and demonstrate the relative continuity of \eUDRL{}-generated policies and the continuity of related values at deterministic kernels for finite number of iterations. We extend the results to other segment
subspaces in section~\ref{sse:extendingfinite}. We conclude by showing the continuity of the goal-reaching objective at deterministic kernels for finite number of iterations and discussing the error to optimality in section~\ref{sse:Jfinite}.

Since we always deal with finite-dimensional vector spaces, all norms are equivalent and we will often omit a norm specification from our results. Similarly, we will use norms interchangeably in some of our continuity-proofs.  

\subsection{Continuity of Values and Segment Distribution Marginals}
\label{sse:contsegdist}

While \eUDRL{}-generated policies and goal-reaching objectives are not generally continuous functions of the transition kernel, this section shows that state-value functions, action-value functions and the segment distribution of marginals are indeed continuous (functions of the transition kernel). The difference is that here we interpret (see below) the state/action-values and segment-distribution marginals as functions of two variables $\tkernel, \pi$, whereas we interpreted them as a function solely of $\tkernel$ in the discussion of figure~\ref{fig:discont}. In other words, in our discussion of figure~\ref{fig:discont} the discontinuity appears because the value function is concatenated with a $\tkernel$-dependent policy.

\begin{lemma} (Continuity in compatible families)
\label{le:contoptbound}
Let $\{\mathcal{M}_{\tkernel} | \tkernel\in(\Delta \mathcal{S})^{\mathcal{S}\times\mathcal{A}} \}$ and $\{\bar{\mathcal{M}}_{\tkernel} |\tkernel\in (\Delta \mathcal{S})^{\mathcal{S}\times\mathcal{A}} \}$ be compatible families.
\begin{enumerate}
\item (Continuity of value)
Fix $\bar{s}\in \bar{S}_T$ and $a \in \mathcal{A}$.
Then the values $V^{\pi}_{\tkernel}(\bar{s})$ and action-values $Q^{\pi}_{\tkernel}(\bar{s},a)$ are continuous when viewed as a function of $(\tkernel,\pi)$. The optimal values $V^*_{\tkernel}(\bar{s})$ and action values $Q^*_{\tkernel}(\bar{s},a)$ are continuous as function of $\tkernel$.

\item (Upper bound on $\mathbb{P}_\tkernel$)
Fix $\bar{s}=(s,h,g)\in \bar{S}_T$ and $a \in \mathcal{A}$. Then for all $h' \geq h$ and
$g'\in \mathcal{G}$ it holds that
$$\prob_\tkernel(\rho(S_h)=g|S_0=s,H_0=h',G_0=g',A_0=a;\pi) \leq 
Q^*_\tkernel((s,h,g),a).$$

\item
(Continuity at value zero)
Let $\pi_0 > 0$ be a given policy, which is assumed constant in $\tkernel$, and let 
$\tkernel_0 \in (\Delta \mathcal{S})^{\mathcal{S}\times \mathcal{A}}$ be a given transition kernel. Let $\tkernel \mapsto \pi_{\tkernel}$
denote a policy that depends on a transition kernel. Then for all $\bar{s}^* \in \bar{\mathcal{S}}_T$ with $V_{\tkernel_0}^{\pi_0}(\bar{s}^*) = 0$ it holds that:
\begin{itemize}
    \item $\tkernel \mapsto V_{\tkernel}^{\pi_{\tkernel}}(\bar{s}^*)$ is continuous at $\tkernel_0$ and
    \item $V_{\tkernel_0}^{\pi_{\tkernel_0}}(\bar{s}^*) = 0$.
\end{itemize}
\item (Continuity of marginals)
Segment distribution marginals are continuous in the transition kernel and the policy. The normalization constant
of segment distribution $c(\tkernel,\pi)$ and $c(\tkernel,\pi)^{-1}$
are continuous.
\end{enumerate}
\end{lemma}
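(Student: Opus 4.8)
The plan is to handle the four parts essentially independently, exploiting throughout that every quantity involved is assembled from finitely many sums and products of the entries of $\tkernel$ and $\pi$ over a horizon bounded by $N$. For Part~1 I would start from the closed form~\eqref{eq:valueexpr}, which writes $V^{\pi}_{\tkernel}(\bar{s})$ and $Q^{\pi}_{\tkernel}(\bar{s},a)$ as reaching probabilities. Expanding such a probability as a sum over the finitely many length-$h$ state-action paths exhibits it as a polynomial in the entries of $\tkernel$ and $\pi$ (each path contributes a product $\prod_i \pi(a_i\mid s_i,h-i,g)\,\tkernel(s_{i+1}\mid s_i,a_i)$), and polynomials are jointly continuous. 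For the optimal values I would instead use backward induction on the remaining horizon via the finite-horizon Bellman optimality equations $Q^*_{\tkernel}((s,h,g),a)=\sum_{s'}\tkernel(s'\mid s,a)V^*_{\tkernel}(s',h-1,g)$ and $V^*_{\tkernel}(s,h,g)=\max_a Q^*_{\tkernel}((s,h,g),a)$, with base case $Q^*_{\tkernel}((s,1,g),a)=\sum_{s'}\tkernel(s'\mid s,a)\indicator_{\{\rho(s')=g\}}$; since finite sums, products, and pointwise maxima of finitely many continuous functions are continuous, the induction gives continuity of $V^*_{\tkernel}$ and $Q^*_{\tkernel}$ in $\tkernel$. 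Part~4 is of the same flavour: each trajectory probability is a product of entries of $\tkernel$, $\pi$, and $\bar{\mu}$, so every segment-distribution marginal and the normalization $c(\tkernel,\pi)$ are finite sums of such products, hence polynomials and thus continuous; continuity of $c(\tkernel,\pi)^{-1}$ then follows from the bound~\eqref{eq:cupperbound}, which guarantees $c(\tkernel,\pi)>0$ everywhere, so the reciprocal of a continuous, everywhere-positive function is continuous.

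For Part~2 the idea is to exhibit a policy under the command $(h,g)$ that reproduces the left-hand probability and then bound it by the maximum $Q^*_{\tkernel}$. The crucial observation is that the original-MDP state trajectory $S_0,\dots,S_h$ is driven only by $\tkernel$ and the chosen actions and is insensitive to the command labels $(h',g')$, provided all horizons met during the first $h$ steps are positive, which holds because $h'\ge h$. Concretely, I would define $\pi'(\cdot\mid s'',j,g):=\pi(\cdot\mid s'',j+(h'-h),g')$ for $j=1,\dots,h-1$ (and arbitrarily elsewhere); following $\pi'$ under command $(h,g)$ from $(s,h,g)$ with $A_0=a$ then uses, at step $i$, the action distribution $\pi(\cdot\mid S_i,h'-i,g')$, which exactly matches the behaviour of $\pi$ under command $(h',g')$ at that step. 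Hence the two reaching probabilities coincide, and since $Q^*_{\tkernel}((s,h,g),a)$ is the maximum over all policies of precisely such reaching probabilities, the left-hand side is at most $Q^*_{\tkernel}((s,h,g),a)$.

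Part~3 then follows by a squeeze argument. Because $\pi_0>0$ has full support, $V^{\pi_0}_{\tkernel_0}(\bar{s}^*)=0$ forces every goal-reaching length-$h$ path at $\bar{s}^*$ to carry a vanishing $\tkernel_0$-factor; since that factor is independent of the policy, no policy can reach the goal and therefore $V^*_{\tkernel_0}(\bar{s}^*)=0$ as well. Combining $0\le V^{\pi_{\tkernel}}_{\tkernel}(\bar{s}^*)\le V^*_{\tkernel}(\bar{s}^*)$ with the continuity of $V^*_{\tkernel}$ from Part~1 and $V^*_{\tkernel_0}(\bar{s}^*)=0$, the sandwiched quantity tends to $0$ as $\tkernel\to\tkernel_0$, which yields both continuity at $\tkernel_0$ and the value $V^{\pi_{\tkernel_0}}_{\tkernel_0}(\bar{s}^*)=0$.

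I expect the main obstacle to be Part~2, the only part that is not a mechanical continuity argument: the care lies in setting up the command re-labelling $j\mapsto j+(h'-h)$ correctly and in justifying that the original-state dynamics, and hence the event $\rho(S_h)=g$, are command-independent, so that the constructed $\pi'$ genuinely realizes the left-hand probability. The remaining parts are routine once the polynomial structure and the Bellman recursion are made explicit, and Part~3 is essentially a corollary of Parts~1 and~2.
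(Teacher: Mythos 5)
Your proposal is correct; Parts 1, 2 and 4 follow essentially the same route as the paper (finitely many sums and products give joint continuity, the optimal values by taking a maximum, and for Part 2 an explicit command re-labelling that the paper only sketches when it invokes the policy $\pi^*$ maximizing the probability of ``reach $g$ in exactly $h$ steps''; your construction $\pi'(\cdot\mid s'',j,g):=\pi(\cdot\mid s'',j+(h'-h),g')$ is a welcome fleshing-out of that step). The one genuine divergence is Part 3. The paper does \emph{not} pass through $V^*$: it picks $\epsilon>0$ with $\pi_{\tkernel}\leq \epsilon^{-1}\pi_0$ and proves, by induction on the remaining horizon, the domination $V_{\tkernel}^{\pi_{\tkernel}}(\bar{s})\leq \epsilon^{-h}V_{\tkernel}^{\pi_0}(\bar{s})$, then squeezes against the continuous function $\tkernel\mapsto V_{\tkernel}^{\pi_0}(\bar{s}^*)$, which vanishes at $\tkernel_0$ by hypothesis. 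You instead observe that $V_{\tkernel_0}^{\pi_0}(\bar{s}^*)=0$ together with $\pi_0>0$ forces every goal-reaching path to carry a vanishing $\tkernel_0$-factor, hence $V_{\tkernel_0}^{*}(\bar{s}^*)=0$, and squeeze $0\leq V_{\tkernel}^{\pi_{\tkernel}}(\bar{s}^*)\leq V_{\tkernel}^{*}(\bar{s}^*)$ using continuity of the optimal value from Part 1. Both arguments are sound and both use the full support of $\pi_0$ in an essential way; yours is shorter and avoids the horizon induction, at the cost of leaning on the continuity of $V^*_{\tkernel}$ (itself a max over the policy simplex), while the paper's domination inequality only needs continuity of the fixed-policy value $V^{\pi_0}_{\tkernel}$ and, as a by-product, yields the quantitative bound $V_{\tkernel}^{\pi_{\tkernel}}(\bar{s}^*)\leq \epsilon^{-h}V_{\tkernel}^{\pi_0}(\bar{s}^*)$, a template that the paper later sharpens into the explicit estimate of Lemma~\ref{le:opthbound}.
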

\begin{proof}
1. Values $V^{\pi}_{\tkernel}(\bar{s})$ and action-values $Q^{\pi}_{\tkernel}(\bar{s},a)$ and jointly continuous in $\tkernel$ and $\pi$ because the recursion relation for the value function only contains a finite number of multiplication and summation operations involving $\tkernel$ and $\pi$. The continuity of
$Q^{*}_{\tkernel}(\bar{s},a)$ follows from
$Q^*_{\tkernel}(\bar{s},a) =\max_{\pi}  Q^{\pi}_{\tkernel}(\bar{s},a)$ and
similarly for $V^{*}_{\tkernel}(\bar{s})$.

2. If $\pi^*$ is a policy that achieves the highest probability of executing the command \lq\lq{}reach goal $g$ in exactly $h$ steps\rq\rq{}, then
\begin{multline*}
\prob_\tkernel(\rho(S_h)=g|S_0=s,H_0=h',G_0=g',A_0=a;\pi) \\
\leq
\prob_\tkernel(\rho(S_h)=g|S_0=s,H_0=h,G_0=g,A_0=a;\pi^*) = Q^*_{\tkernel}((s,h,g),a).
\end{multline*}

3. There exists $\epsilon>0$  %
with $\pi_{\lambda} \leq 1 \leq \epsilon^{-1} \pi_0$. We claim that for all $\bar{s} = (s,h,g) \in \bar{\mathcal{S}}_T$ it holds that
$$
V_{\tkernel}^{\pi_{\tkernel}}(\bar{s})\leq V_{\tkernel}^{\pi_0}(\bar{s}) \epsilon^{-h}.
$$
This is shown by induction. For $h=1$ we get for all $\bar{s} = (s,1,g) \in \bar{\mathcal{S}}_T$ that
\begin{align*}
V_{\tkernel}^{\pi_{\tkernel}}(\bar{s})
&=
\sum_{a\in\mathcal{A}} Q_{\tkernel}((s,1,g),a)
\pi_{\tkernel}(a|s,1,g)
\\
&\leq
\sum_{a\in\mathcal{A}} Q_{\tkernel}((s,1,g),a) \epsilon^{-1} \pi_0(a|s,1,g)
=
\frac{1}{\epsilon} V^{\pi_0}_{\tkernel}(s,1,g).
\end{align*}
If the claim holds for $h \geq 1$ then, for all $\bar{s} = (s,h+1,g) \in \bar{\mathcal{S}}_T$, it also holds for $h+1$:
\begin{align*}
V_{\tkernel}^{\pi_{\tkernel}}(\bar{s})
&=
\sum_{a\in\mathcal{A}}
( \sum_{s'\in\mathcal{S}} V_{\tkernel}^{\pi_{\tkernel}}(s',h,g)
\tkernel(s'|s,a) )
\pi_{\tkernel}(a|s,h+1,g)
\\
&\leq
\sum_{a\in\mathcal{A}}
( \sum_{s'\in\mathcal{S}} \epsilon^{-h}V_{\tkernel}^{\pi_{0}}(s',h,g)
\tkernel(s'|s,a) )
\epsilon^{-1} \pi_{0}(a|s,h+1,g)
=
\epsilon^{-h-1}
V_{\tkernel}^{\pi_0}(s,h+1,g)
.
\end{align*}
Since $\tkernel \mapsto V_{\tkernel}^{\pi_0}$ is continuous by point 1. we conclude that
$$
0 \leq V_{\tkernel}^{\pi_{\tkernel}} (\bar{s}^*) \leq \epsilon^{-h}  V_{\tkernel}^{\pi_0} (\bar{s}^*) \rightarrow \epsilon^{-h} V_{\tkernel_0}^{\pi_0} (\bar{s}^*) = 0
\;\text{as}\;  \tkernel \rightarrow \tkernel_0.
$$

4.
Since $\prob_\tkernel(\mathcal{T}=\tau;\pi)$ is polynomial
in the components of $\tkernel$ and $\pi$, it is continuous in $(\tkernel,\pi)$. Since the overall number of trajectories is finite the normalizing term $c(\tkernel,\pi)$ is continuous in $(\tkernel,\pi)$ as a finite sum of marginals of  $\prob_\tkernel(\mathcal{T}=\tau;\pi)$.
Further, $c(\tkernel,\pi)>0$ due to the condition \eref{eq:mucond} on the initial distribution $\bar{\mu}$.
Therefore, the segment probabilities $\prob_\tkernel(\Sigma=\sigma;\pi)$
are continuous in $(\tkernel,\pi)$.
The continuity of the marginals of $\prob_\tkernel(\Sigma=\sigma;\pi)$ follows
since they are composed by finite sums of continuous functions $\prob_\tkernel(\Sigma=\sigma;\pi)$.
The continuity of $c^{-1}(\tkernel,\pi)$ follows from the
continuity and positivity of $c(\tkernel,\pi)$.
\end{proof}

\subsection{Relative Continuity of \eUDRL{} Policies and Continuity of Related Values}
\label{sse:theoremfinite}

Policies generated by \eUDRL{} are not generally continuous in deterministic kernels (see figure~\ref{fig:discont:c} and example~\ref{ex:detpoint} in the appendix for details). As explained above discontinuities in policies can lead to discontinuous behavior of derived quantities, like the goal-reaching objective. Another key aspect is the non-uniqueness of optimal policies, which additionally complicates the analysis of continuity. However, a weaker notion of continuity still holds for \eUDRL{}'s policies, which will be useful in the study of continuity of emergent quantities. We will utilize the notion of relative weak limit of~\cite{strupl2021reward}, except that in the finite setting of the article at hand weak convergence coincides with ordinary point-wise convergence and we drop the predicate \lq\lq{}weak\rq\rq{}. For completeness we recall the underlying notion of convergence of probability distributions on finite quotient spaces, which we call \emph{relative convergence} in this article. The standard notion of continuity then applies to maps of the form $f:Y \rightarrow \Delta X$ (e.g.~policies) with respect to the quotient topology.
\def\aprob{P}
\begin{definition}
\label{de:rellimit}
Let $X$ be a finite set, let $F \subset X$ be a subset, let $\Delta X$ denote the probability simplex over $X$. 
\begin{enumerate}
    \item
We say that a sequence of probabilities $(\aprob_k)_{k\geq0}$, $\aprob_k\in\Delta X$ converges relatively to $F$, $\aprob_k\xrightarrow{F} \aprob$, if and only if it converges over the quotient $X/F$, i.e.~
$$
(\forall x \in X\setminus F): \aprob_k(x) \rightarrow \aprob(x) \quad\text{and}\quad \sum_{x\in F}\aprob_k(x) \rightarrow \sum_{x\in F}\aprob(x).
$$
\item Let $(Y,d)$ be a metric space. We say that $f:Y \rightarrow \Delta X$ is continuous at $y'\in Y$ relative to $F$ if and only if for every sequence $(y_k)_{k\geq0}$, $y_k\in Y$ that converges to $y'$ the sequence
$(f(y_k))_{k\geq0}$ converges to $f(y')$ relatively to $F$. In this case we will write:
$$f(y)\xrightarrow[]{F}f(y'),\ \textnormal{as}\ y\rightarrow y'.$$
\end{enumerate}
\end{definition}
\begin{remark}
Recall that the above continuity of $f$ at $y'$ relative to $F$ can be written equivalently using limit of a function, i.e.  
$$
(\forall x \in X\setminus F): [f(y)](x) \rightarrow [f(y')](x) \quad\text{and}\quad \sum_{x\in F}[f(y)](x) \rightarrow \sum_{x\in F}[f(y')](x)
\quad\text{as}\quad y \rightarrow y'.
$$
Here the functions $y \mapsto  [f(y)](x)$, $x \in X\setminus F$ and $y \mapsto \sum_{x\in F}[f(y)](x)$.
Furthermore, the sequential approach used in point 2 of the definition is equivalent to the $\epsilon$-$\delta$ formalism, since we are working in metric spaces.
\end{remark}

In what follows we will demonstrate that the mapping $\tkernel \mapsto \pi_{n,\tkernel}$ has a
relative limit that at deterministic kernels equals to optimal policies.
Thus $\pi_{n,\tkernel}$ are relatively continuous at deterministic kernels.
Before we will state a simple lemma which will be needed during the proof.

\begin{lemma}
\label{le:relproduct}
Assume the same situation as in definition~\ref{de:rellimit}, where $f$ is continuous at $y'$ relative to $F$. Moreover, consider a second function $h:Y\rightarrow \mathbb{R}^{X}$ that is continuous at $y'\in Y$ and assume $h(y')$ is constant on $F$. Then the following  holds as $y\rightarrow y'$:
$$
\sum_{x \in X} [f(y)](x)\cdot[h(y)](x) \rightarrow \sum_{x \in X}  [f(y')](x)\cdot[h(y')](x).
$$
\end{lemma}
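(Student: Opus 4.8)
The plan is to prove the convergence by decoupling the two sources of variation: the ordinary continuity of $h$ in $y$, and the merely \emph{relative} continuity of $f$, which controls individual coordinates only off $F$ and on $F$ only through their total mass. Write $c$ for the common value of $[h(y')](x)$, $x\in F$, which is well defined by hypothesis. The decomposition I would use is
$$
\sum_{x\in X}[f(y)](x)\,[h(y)](x)
=
\underbrace{\sum_{x\in X}[f(y)](x)\big([h(y)](x)-[h(y')](x)\big)}_{(\mathrm{I})}
+
\underbrace{\sum_{x\in X}[f(y)](x)\,[h(y')](x)}_{(\mathrm{II})},
$$
so that $h$'s $y$-dependence is absorbed entirely into $(\mathrm{I})$, while $(\mathrm{II})$ involves the \emph{fixed} weight function $h(y')$.

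First I would dispose of $(\mathrm{I})$. Since $f(y)\in\Delta X$ we have $0\le[f(y)](x)\le 1$ for every $x$, so $|(\mathrm{I})|\le\sum_{x\in X}\big|[h(y)](x)-[h(y')](x)\big|$. As $h$ is continuous at $y'$ and $X$ is finite, each summand tends to $0$ as $y\to y'$, whence $(\mathrm{I})\to 0$. This step is routine and uses only ordinary continuity of $h$ together with the uniform boundedness of probability vectors; crucially it does not require any control of the individual coordinates of $f(y)$.

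The substance lies in $(\mathrm{II})$, which I would split along $F$:
$$
(\mathrm{II})=\sum_{x\in X\setminus F}[f(y)](x)\,[h(y')](x)\;+\;c\sum_{x\in F}[f(y)](x),
$$
using $[h(y')](x)=c$ on $F$ to collapse the $F$-block into a multiple of the total mass of $f(y)$ on $F$. Relative continuity of $f$ now applies exactly as formulated in Definition~\ref{de:rellimit}: the finitely many coordinates $[f(y)](x)$, $x\in X\setminus F$, converge to $[f(y')](x)$, and $\sum_{x\in F}[f(y)](x)\to\sum_{x\in F}[f(y')](x)$. Hence $(\mathrm{II})$ tends to $\sum_{x\in X\setminus F}[f(y')](x)[h(y')](x)+c\sum_{x\in F}[f(y')](x)=\sum_{x\in X}[f(y')](x)[h(y')](x)$, and combining with $(\mathrm{I})\to 0$ gives the claim.

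The hard part — and the reason the hypothesis that $h(y')$ be constant on $F$ is indispensable — is precisely this $F$-block: relative continuity yields no information about the individual coordinates $[f(y)](x)$ for $x\in F$, only about their sum, so without constancy the product $\sum_{x\in F}[f(y)](x)[h(y')](x)$ could fail to converge. The decomposition is engineered so that the delicate $F$-sum is weighted only by the constant $c$, turning it into a statement purely about total mass, to which relative continuity is tailored.
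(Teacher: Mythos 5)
Your proof is correct. The core mechanism is the same as the paper's: split the sum along $F$, use the constancy of $h(y')$ on $F$ to collapse the $F$-block into a multiple of the total mass $\sum_{x\in F}[f(y)](x)$, and then invoke exactly the two clauses of relative continuity. The difference is purely in where the error $h(y)-h(y')$ is absorbed. You subtract $h(y')$ globally at the outset and kill the difference term $(\mathrm{I})$ by the uniform bound $0\le [f(y)](x)\le 1$ for probability vectors, so that the remaining term $(\mathrm{II})$ carries a \emph{fixed} weight function and relative continuity applies verbatim. The paper instead splits along $F$ first, handles $X\setminus F$ coordinatewise, and deals with the $y$-dependence of $h$ inside the $F$-block via a sequential $\varepsilon$--$k_0$ estimate, bounding the block by $c\left|\sum_{F}[f(y_k)](x)-\sum_{F}[f(y')](x)\right|+|X|\varepsilon$. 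Your bookkeeping is slightly cleaner: it makes the decoupling of ``ordinary continuity of $h$'' from ``relative continuity of $f$'' explicit in the decomposition itself, and it sidesteps a small blemish in the paper's displayed bound, which implicitly treats the constant $c$ as nonnegative (it should be $|c|$ in general). Both arguments are equally elementary and prove the same statement; neither is more general than the other.
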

\begin{proof} We employ the defining properties of convergence relative to $F$.
Expanding the left hand side yields
$$
\sum_{x \in X} [f(y)](x)[h(y)](x)
=
\sum_{x \in F} [f(y)](x)[h(y)](x)+
\sum_{x \in X\setminus F} [f(y)](x)[h(y)](x).
$$
Since convergence relative to $F$ behaves like point-wise convergence on $X\setminus F$ we have
$$
\sum_{x \in X\setminus F} [f(y)](x)\cdot[h(y)](x)
\xrightarrow[]{}
\sum_{x \in X\setminus F} [f(y')](x)\cdot[h(y')](x).
$$
On the other hand, let $(y_k)_{k\geq0}$, $y_k\in Y$ be a sequence that converges to $y'$. Since $h(y')=c$ is constant on $F$, choosing $k_0$ large enough, we have for all $k>k_0$ that $|{h(y_k)-c}|\leq\varepsilon$. It follows for any $k>k_0$ that
\begin{multline*}
\left|{\sum_{x \in F} [f(y_k)](x)[h(y_k)](x)-\sum_{x \in F} [f(y')](x)[h(y')](x)}\right|
\\
\leq c\left|{\sum_{x \in F}[f(y_k)](x)-\sum_{x \in F} [f(y')](x)}\right|+|X|\cdot\varepsilon,
\end{multline*}
where $|X|$ stands for the finite number of elements in $X$. Making use of $f(y)\xrightarrow[]{F}f(y')$ the right hand side converges to $\epsilon|X|$. Since we can choose $\epsilon$ arbitrary small the left-hand side of the inequality has to converge to 0. This completes the proof.
\end{proof}

The following theorem is the main result of this section. It demonstrates that \eUDRL{}-generated policies converge relatively to their sets of optimal actions. The result is stated for the \eUDRL{} recursion on $\Seg$ space~\eref{eq:seg}, but it can be modified mutatis mutandis for recursions on trailing or diagonal segments.

\begin{theorem}(Relative continuity of \eUDRL{} policies and continuity of value at deterministic kernels)
\label{le:detcont}
Let $\{\mathcal{M}_{\tkernel} : \tkernel \in (\Delta \mathcal{S})^{\mathcal{S}\times\mathcal{A}}\}$
and $\{\bar{\mathcal{M}}_{\tkernel} : \tkernel \in (\Delta \mathcal{S})^{\mathcal{S}\times\mathcal{A}}\}$ be compatible families of MDPs.  Let $\tkernel_0 \in (\Delta \mathcal{S})^{\mathcal{S}\times\mathcal{A}}$
be a deterministic kernel and let $(\pi_{n,\tkernel})_{n\geq0}$ with $\pi_0 >0$ be a 
\eUDRL{}-generated sequence of policies.
\begin{enumerate}
\item
For all $\bar{s} \in \theinterestingstates$ the policy $\pi_{n,\tkernel}(\cdot|\bar{s})$ is continuous relative to $\oactions(\bar{s})$ at $\tkernel_0$.
\item
For all $\bar{s} \in \theinterestingstates$ and for all $a\in \mathcal{A}$, the values $Q^{\pi_n}_\tkernel(\bar{s},a)$, $V^{\pi_n}_\tkernel(\bar{s})$ are continuous functions of $\tkernel$ at $\tkernel_0$.
\item There exists a sequence of neighborhoods $(U_{\delta_n}(\tkernel_0))_{n\geq0}$ with $U_{\delta_{n+1}}(\tkernel_0)\subset U_{\delta_n}(\tkernel_0)$ such that for all $n\geq 0$ it holds:
\begin{itemize}
\item $\pi_{n,\tkernel}(a|\bar{s})$ is separated from $0$ for all $(a,\bar{s},\tkernel) \in (\supp \num_{\tkernel_0,\pi_0} 
\cap (\mathcal{A} \times \supp \nu_{\tkernel_0,\pi_0} ))\times U_{\delta_n}(\tkernel_0)$, and
\item the probabilities
$\prob_\tkernel(\stag{H}=h,\stag{G}_0=g|\stag{S}_0=s,l(\Sigma)=h;\pi_n)$
are separated from $0$ for all $((s,h,g),\tkernel) \in \theinterestingstates \times U_{\delta_n}(\tkernel_0)$.
\end{itemize}
\end{enumerate}
\end{theorem}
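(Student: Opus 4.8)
The plan is to prove all three assertions simultaneously by induction on $n$, exploiting the recursive form \eref{eq:seg} of the \eUDRL{} update together with the support-stability results of Lemma~\ref{le:suppstab} and the structure of optimal actions from Lemmas~\ref{le:optactions} and~\ref{le:detopt}. Throughout I abbreviate $\bar s=(s,h,g)$ and write the update as $\pi_{n+1,\tkernel}(a|\bar s)=N_\tkernel(a,\bar s)/D_\tkernel(\bar s)$, where $N_\tkernel(a,\bar s)$ denotes the right-hand side of \eref{eq:seg} for a fixed action $a$ (before normalisation) and $D_\tkernel(\bar s)=\sum_{a'}N_\tkernel(a',\bar s)$. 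For the base case $n=0$ the policy $\pi_0>0$ is constant in $\tkernel$, so assertion~1 is trivial, assertion~2 follows from joint continuity of values (Lemma~\ref{le:contoptbound}, point~1), and assertion~3 holds because $\pi_0$ is a fixed positive policy while the marginal $\prob_\tkernel(\stag{H}_0=h,\stag{G}_0=g\mid\stag{S}_0=s,l(\Sigma)=h;\pi_0)$ is continuous in $\tkernel$ (Lemma~\ref{le:contoptbound}, point~4) and positive at $\tkernel_0$ on the finite set $\theinterestingstates$ (Lemma~\ref{le:suppstab}, point~3); a finite intersection of neighborhoods yields $U_{\delta_0}(\tkernel_0)$. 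The key observation for assertion~1 is that, since $\supp\pi_{n+1,\tkernel_0}(\cdot|\bar s)=\oactions(\bar s)$ for $n\geq0$ (Lemma~\ref{le:detopt}, point~1), relative continuity with respect to $\oactions(\bar s)$ is equivalent to $\pi_{n+1,\tkernel}(a|\bar s)\to0$ for every non-optimal $a\notin\oactions(\bar s)$; the convergence of the mass on $\oactions(\bar s)$ then follows by normalisation.

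In the inductive step I first establish assertion~1 at level $n+1$ from assertions~2 and~3 at level $n$. For $a\notin\oactions(\bar s)=\supp\num_{\tkernel_0,\pi_0}(\cdot,\bar s)$ I bound each factor $\prob_\tkernel(\rho(S_h)=g\mid A_0=a,H_0=h',G_0=g',S_0=s;\pi_n)$ in $N_\tkernel(a,\bar s)$ by $Q^*_\tkernel(\bar s,a)$ (Lemma~\ref{le:contoptbound}, point~2), so that $N_\tkernel(a,\bar s)\leq Q^*_\tkernel(\bar s,a)$ (the remaining weights summing to at most one); since $a\notin\supp\num_{\tkernel_0,\pi_0}$ we have $Q^*_{\tkernel_0}(\bar s,a)=0$ (Lemma~\ref{le:suppstab}, point~5), and continuity of $Q^*$ in $\tkernel$ (Lemma~\ref{le:contoptbound}, point~1) gives $N_\tkernel(a,\bar s)\to0$. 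The denominator is bounded below by a single term: fixing any $a^*\in\oactions(\bar s)$ and keeping only $h'=h,g'=g$ gives $D_\tkernel(\bar s)\geq Q^{\pi_n}_\tkernel(\bar s,a^*)\,\pi_{n,\tkernel}(a^*|\bar s)\,\prob_\tkernel(\stag{H}_0=h,\stag{G}_0=g\mid\stag{S}_0=s,l(\Sigma)=h;\pi_n)$, and each factor is separated from zero near $\tkernel_0$ — the first by continuity of $Q^{\pi_n}$ (assertion~2 at level $n$) together with positivity of $Q^{\pi_n}_{\tkernel_0}(\bar s,a^*)$, the other two by assertion~3 at level $n$. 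Hence $\pi_{n+1,\tkernel}(a|\bar s)\to0$ for non-optimal $a$, proving assertion~1 at $n+1$.

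Next I prove assertion~2 at level $n+1$ by a nested induction on the horizon $h$. For $h=1$ the action-value $Q^{\pi_{n+1}}_\tkernel((s,1,g),a)=\sum_{s'\in\rho^{-1}(g)}\tkernel(s'|s,a)$ is policy-independent, hence continuous in $\tkernel$. For $h>1$ and $a\in\oactions(\bar s)$, determinism of $\tkernel_0$ singles out a unique successor $s'_{s,a}$ with $(s'_{s,a},h-1,g)\in\theinterestingstates$ (Lemma~\ref{le:suppstab}, point~4); in $Q^{\pi_{n+1}}_\tkernel(\bar s,a)=\sum_{s'}\tkernel(s'|s,a)V^{\pi_{n+1}}_\tkernel(s',h-1,g)$ the surviving term is continuous by the horizon hypothesis while the others vanish because $\tkernel(s'|s,a)\to0$ and values lie in $[0,1]$; for $a\notin\oactions(\bar s)$ one has the squeeze $0\leq Q^{\pi_{n+1}}_\tkernel(\bar s,a)\leq Q^*_\tkernel(\bar s,a)\to0$. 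Continuity of $V^{\pi_{n+1}}_\tkernel(\bar s)=\sum_a\pi_{n+1,\tkernel}(a|\bar s)\,Q^{\pi_{n+1}}_\tkernel(\bar s,a)$ then follows from Lemma~\ref{le:relproduct}, applied with the relatively continuous factor $\pi_{n+1,\tkernel}(\cdot|\bar s)$ (assertion~1 at $n+1$) and the continuous factor $Q^{\pi_{n+1}}_\tkernel(\bar s,\cdot)$, using that $Q^{\pi_{n+1}}_{\tkernel_0}(\bar s,\cdot)\equiv1$ is constant on $\oactions(\bar s)$ (Lemma~\ref{le:detopt}, point~3). Any reachable states $(s',h-1,g)$ lying off $\theinterestingstates$ with $V^{\pi_0}_{\tkernel_0}=0$ are handled by the value-zero continuity of Lemma~\ref{le:contoptbound}, point~3.

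Finally I establish assertion~3 at level $n+1$, taking $U_{\delta_{n+1}}(\tkernel_0)\subset U_{\delta_n}(\tkernel_0)$ so nestedness holds automatically. Separation of $\pi_{n+1,\tkernel}(a|\bar s)$ from zero on $\supp\num_{\tkernel_0,\pi_0}\cap(\mathcal A\times\supp\nu_{\tkernel_0,\pi_0})$ reuses the lower bound on $N_\tkernel(a,\bar s)$ (restricting to $h'=h,g'=g$ as above, now for $a\in\oactions(\bar s)$) divided by the trivial upper bound $D_\tkernel(\bar s)\leq|\mathcal A|$. Separation of the marginal $\prob_\tkernel(\stag{H}_0=h,\stag{G}_0=g\mid\stag{S}_0=s,l(\Sigma)=h;\pi_{n+1})$ is obtained by bounding it below by the normalised probability of the single positive-probability trajectory through $\bar s$ constructed in the proof of Lemma~\ref{le:suppstab}, point~1: its probability under $(\tkernel,\pi_{n+1,\tkernel})$ is a finite product of policy factors (separated from zero by the part of assertion~3 just proved) and kernel factors (tending to $1$ since $\tkernel_0$ is deterministic), and division by the bounded normaliser $c(\tkernel,\pi_{n+1})$ (Lemma~\ref{le:contoptbound}, point~4, and \eref{eq:cupperbound}) preserves the bound. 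I expect the main obstacle to be the value-continuity step: the value is built from a policy that is only relatively continuous, so ordinary continuity arguments fail, and one must combine the relative-product Lemma~\ref{le:relproduct} (which hinges on $Q^{\pi_{n+1}}_{\tkernel_0}$ being constant on each optimal-action set) with the horizon induction and the value-zero estimate, while interleaving the three assertions in the order above so that no circularity arises.
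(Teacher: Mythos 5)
Your proposal is correct and follows essentially the same route as the paper's proof: induction on $n$ with the upper bound $u(a)\leq Q^{*}_{\tkernel}(\bar s,a)$ and the lower bound from the single $(h'=h,g'=g)$ term, a sub-induction on the horizon combined with Lemma~\ref{le:relproduct} for the values, and the trajectory-based bound (divided by the normaliser $c$) for the marginal separation. The only deviation is cosmetic: for non-optimal actions you squeeze $Q^{\pi_{n+1}}_{\tkernel}$ directly against $Q^{*}_{\tkernel}$ rather than routing through the value-zero continuity of Lemma~\ref{le:contoptbound}, point~3, which is an equally valid shortcut.
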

\begin{proof} The proof proceeds by induction on the index $n$ of the \eUDRL{} iteration. 

\emph{Base case (n=0):} Since $\pi_0$ is positive, constant in $\tkernel$ and state and action spaces are finite it follows that $\pi_0$ is continuous, relatively continuous, and separated from $0$. Since $\pi_0$ is continuous in $\tkernel$, and values are continuous in $(\tkernel,\pi_0)$ by lemma~\ref{le:contoptbound} point 1, values are continuous in $\tkernel$ at $\tkernel_0$ (after composition with $\pi_0$). Choose $2 > \delta_0 > 0$ such that $\|\tkernel -\tkernel_0\|_{\max} < \frac{1}{2}$ for $\tkernel \in \overline{U_{\delta_0}(\tkernel_0)}$. From lemma~\ref{le:contoptbound} point 4.~it follows that
the segment distribution marginal $\prob_\tkernel(\stag{H}_0=h,\stag{G}_0=g,\stag{S}_0=s,l(\Sigma)=h;\pi_0)$ is continuous. Since $(s,h,g) \in \theinterestingstates$, it follows from lemma~\ref{le:suppstab} point 3.~that the marginal is positive. 
In summary, the marginal is a positive and continuous function on the compact set $\overline{U_{\delta_0}(\tkernel_0)}$, which implies that it has a positive minimum.

\emph{Induction:} Assume all three statements hold for $n$. We aim to prove them for $n+1$. By point 3.~for all $\lambda\in\overline{U_{\delta_n}(\tkernel_0)}$ we have that
lemma~\ref{le:suppstab} applies. Consequently choosing a fixed $(s,h,g) \in \theinterestingstates \subset \supp \den_{\tkernel,\pi_n} \cap \supp \nu_{\tkernel,\pi_n} $ we can use lemma~\ref{le:recrewrites} point 1 to get
\begin{align}
\pi_{n+1}(a|s,h,g) &= \frac{u(a)}{\sum_{a\in \mathcal{A}}u(a)}\ \textnormal{with}\nonumber\\
u(a)&:=\sum_{N \geq h'\geq h, g' \in \mathcal{G}}
\prob_\tkernel (\rho(\stag{S}_h)=g|\stag{A}_0=a, \stag{S}_0, \stag{H}_0=h',
\stag{G}_0=g', l(\Sigma)=h; \pi_n)\nonumber\\
&\qquad\qquad\qquad\cdot\pi_n(a|s,h',g')
\prob_\tkernel (\stag{H}_0=h',
\stag{G}_0=g'| \stag{S}_0, l(\Sigma)=h; \pi_n).\label{eq:udef}
\end{align}
To demonstrate point 1.~we begin by providing lower and upper estimates on $u$. Using lemma~\ref{le:contoptbound} point 2.~we have an upper bound
\begin{equation}
u(a) \leq 
\sum_{N \geq h' \geq  h, g' \in \mathcal{G}}\hspace*{-1em}
Q^{*}_\tkernel((s,h,g),a)
\cdot
1
\cdot
\prob_\tkernel (\stag{H}_0=h',
\stag{G}_0=g'| \stag{S}_0, l(\Sigma)=h; \pi_n)
\leq 
Q^{*}_\tkernel((s,h,g),a)
\label{eq:uub}
\end{equation}
and the lower bound
\begin{equation}
\begin{aligned}
u(a) &\geq
Q^{\pi_n}_\tkernel((s,h,g),a) \pi_n(a|s,h,g)
\prob_\tkernel (\stag{H}_0=h, \stag{G}_0=g| \stag{S}_0, l(\Sigma)=h; \pi_n)\\
&\geq \begin{cases}Q^{\pi_n}_\tkernel((s,h,g),a) \epsilon \;\text{for}\; a\in \oactions(s,h,g),\\
0 \; \text{otherwise},
\end{cases}
\end{aligned}
\label{eq:ulb}
\end{equation}
where the existence of $\epsilon>0$ follows by point 3.~of the theorem by the induction assumption.
Using these bounds we now show that $\pi_{n+1}(a|s,h,g) \xrightarrow{\oactions(s,h,g)} \pi^*_{\tkernel_0}$ as $\tkernel \rightarrow \tkernel_0$. For this it is sufficient to show that $\pi_{n+1}(a|s,h,g) \rightarrow 0$ for all $a \notin \oactions(s,h,g)$ because $\pi_{n+1}(\oactions(s,h,g)|s,h,g)\rightarrow1$ follows automatically. We show that $u(a) \rightarrow 0$, $a\notin\oactions(s,h,g)$ and that the denominator $\sum_{a\in \mathcal{A}}u(a)$ is separated from $0$. First, $u(a) \rightarrow 0$, $a \notin \oactions(s,h,g)$ since the upper bound $Q^{*}_\tkernel((s,h,g),a) \rightarrow Q^{*}_{\tkernel_0}((s,h,g),a) = 0$ for $a \notin \oactions(s,h,g)$ by the continuity of $Q^{*}_\tkernel((s,h,g),a)$. By assumption $Q^{\pi_n}_\tkernel((s,h,g),a)$ is continuous and we can choose $\delta_{n} \geq \delta_{n+1}>0$ such that
$Q^{\pi_n}_\tkernel((s,h,g),a) \geq \frac{1}{2}$ for $a,(s,h,g),\tkernel \in (\supp \num_{\tkernel_0,\pi_0} \cap (\mathcal{A} \times \supp \nu_{\tkernel_0,\pi_0})) \times U_{\delta_{n+1}}(\tkernel_0)$. Finally, using the lower bound we conclude that
$$
\sum_{a\in \mathcal{A}} u(a) \geq
\epsilon \sum_{a\in \oactions(s,h,g)} Q^{\pi_n}_\tkernel((s,h,g),a)
\geq \epsilon \frac{1}{2} > 0.
$$
Thus $\pi_{n+1}(a|s,h,g)$ is relatively continuous in $\tkernel_0$. Using both bounds on $u$ we show that $\pi_{n+1}(a|s,h,g)$ is also
separated form $0$ for all $(s,h,g),a,\tkernel \in (\supp \num_{\tkernel_0,\pi_0} \cap (\mathcal{A} \times \supp \nu_{\tkernel_0,\pi_0})) \times U_{\delta_{n+1}}(\tkernel_0)$,
$$
\pi_{n+1}(a|s,h,g)
= 
\frac{u(a)}{\sum_{a\in \mathcal{A}}u(a)}
\geq
\frac{Q^{\pi_n}_\tkernel((s,h,g),a) \epsilon}
{
\sum_{a\in \mathcal{A}} Q^{*}_\tkernel((s,h,g),a)
}
\geq
\frac{\frac{1}{2}\epsilon}
{
\sum_{a\in \mathcal{A}} 1
}
\geq
\frac{1}{2|\mathcal{A}|}\epsilon > 0
.
$$
To demonstrate the continuity of state/action-values we will employ an additional induction argument (for fixed $n$) over the remaining horizon
$N \geq h \geq 1$.

\emph{Sub-induction base case ($h=1$):} For $h=1$, $(s,1,g)\in \bar{S}_T$, $a\in \mathcal{A}$ the action-value function
$$
Q^{\pi_{n+1}}_{\tkernel}((s,1,g),a)
=
\sum_{s'\in \rho^{-1}(\{g\})} \tkernel(s'|s,a)
$$
is continuous in $\tkernel$. 

\emph{Sub-induction:} Fix the remaining horizon $N \geq h' \geq 1$.
Assume that for all $(s,h,g)\in \theinterestingstates$, $h=h'$, $a\in\mathcal{A}$ the action values $Q^{\pi_{n+1}}_{\tkernel}((s,h,g),a)$ are continuous in $\tkernel$ at $\tkernel_0$. It remains to prove continuity of $V^{\pi_{n+1}}_{\tkernel}(s,h,g)$
for all $(s,h,g)\in \theinterestingstates$, $h=h'$ and of $Q^{\pi_{n+1}}_{\tkernel}((s,h+1,g),a)$ for all $(s,h+1,g)\in \theinterestingstates$, $h=h'$, $a\in\mathcal{A}$ at $\tkernel_0$.
From lemma~\ref{le:relproduct} and the relative continuity of $\pi_{n+1}$ we obtain
\begin{align*}
\lim_{\tkernel \rightarrow \tkernel_0}
V^{\pi_{n+1}}_{\tkernel}(s,h,g)
&=
\lim_{\tkernel \rightarrow \tkernel_0}
\sum_{a\in \mathcal{A}}
Q^{\pi_{n+1}}_{\tkernel}((s,h,g),a)
\pi_{n+1}(a|s,h,g)
\\
%
&=
\sum_{a\in \oactions(s,h,g)}
\pi^{*}_{\tkernel_0}(a|s,h,g)
=1 = V^{*}_{\tkernel_0}(s,h,g).
\end{align*}
Thus the continuity $V^{\pi_{n+1}}_{\tkernel}(s,h,g)$ in $\tkernel_0$ follows.
Now for all $(s,h+1,g)\in \theinterestingstates$, $h=h'$, $a\in\mathcal{A}$
the limit of the action-value function can be computed as
$$
\lim_{\tkernel \rightarrow \tkernel_0}
Q^{\pi_{n+1,\tkernel}}_{\tkernel}((s,h+1,g),a)
=
\lim_{\tkernel \rightarrow \tkernel_0}
V^{\pi_{n+1,\tkernel}}_{\tkernel}(s',h,g) \tkernel(s'|s,a)
,
$$
where we used that $\tkernel_0$ is deterministic, chose $s'$ so that $\tkernel_0(s'|s,a) = 1$ and made use of the fact that $\lambda(s''|s,a)\rightarrow0$ as $\tkernel\rightarrow\tkernel_0$ for $s''\neq s'$. It remains to show the continuity of $V^{\pi_{n+1,\tkernel}}_{\tkernel}(s',h,g)$ and that
$V^{\pi_{n+1,\tkernel_0}}_{\tkernel_0}(s',h,g) = Q^{\pi_{n+1,\tkernel_0}}_{\tkernel_0}((s,h+1,g),a) = Q^{*}_{\tkernel_0}((s,h+1,g),a)$.
First assume $a \in \oactions(s,h+1,g)$ then form lemma~\ref{le:suppstab} point 4.~it holds that $(s',h,g) \in \theinterestingstates$. But we have just proved $V_{\tkernel}^{\pi_{n+1,\tkernel}}(s',h,g) \rightarrow V_{\tkernel_0}^{\pi_{n+1,\tkernel_0}}(s',h,g) = 1$, $\tkernel \rightarrow \tkernel_0$ for such states.
Finally assume $a \notin \oactions(s,h+1,g)$ then from lemma~\ref{le:suppstab} point 5.~we conclude that $
Q_{\lambda_0}^{\pi_{n+1,\lambda_0}}((s,h+1,g),a) = 0 = V_{\lambda_0}^{\pi_{n+1,\lambda_0}}((s',h,g))\cdot 1. 
$
Then lemma~\ref{le:contoptbound} point 3.~implies
$$
V_{\lambda}^{\pi_{n+1,\lambda}}((s',h,g)) \rightarrow V_{\lambda_0}^{\pi_{n+1,\lambda_0}}((s',h,g)) = 0 
\; \text{for} \; \tkernel \rightarrow \tkernel_0
.
$$
This completes the sub-induction.

It remains to show that the probabilities $\prob_{\tkernel} (\stag{H}_0=h,\stag{G}_0=g \mid \stag{S}_0=s,l(\Sigma)=h,\rho(\stag{S}_h)=g;\pi_{n+1})$ remain separated from $0$ on a neighborhood of $\tkernel_0$. Notice that the segment distribution marginal
$\prob_{\tkernel} (\stag{H}_0=h,\stag{G}_0=g,\stag{S}_0=s,l(\Sigma)=h,\rho(\stag{S}_h)=g;\pi_{n+1}) > 0$
is continuous in $(\tkernel,\pi_{n+1,\tkernel})$ (viewed as independent variables)
but $\pi_{n+1,\tkernel}$ is not generally continuous in $\tkernel$ at $\tkernel_0$
(but only relatively continuous). Hence, it would not be sufficient to argue that \lq\lq{}a positive continuous function on a compact must have a positive minimum\rq\rq{} and use lemma~\ref{le:suppstab} point 3.~and lemma~\ref{le:contoptbound} point 4. This motivates the following, more elaborate, approach.

From lemma~\ref{le:suppstab} points 2., 3.~we have that $(s,h,g) \in \theinterestingstates = \supp \den_{\tkernel_0,\pi_{n+1}} \cap \supp \nu_{\tkernel_0,\pi_{n+1}}$ and that
$$
\prob_{\tkernel_0}(\stag{S}_0=s,\stag{H}_0=h,\stag{G}_0=g,l(\Sigma)=h;\pi_{n+1}) > 0
.
$$
It follows that there exists a segment $\sigma$ that makes the above marginal of the 
segment distribution positive. In turn there must be a trajectory $\tau$ with positive
probability.
We end up with the following inequalities
$$
\prob_{\tkernel_0}(\stag{S}_0=s,\stag{H}_0=h,\stag{G}_0=g,l(\Sigma)=h;\pi_{n+1})
\geq
\prob_{\tkernel_0}(\Sigma=\sigma;\pi_{n+1}) \geq
\frac{\prob_{\tkernel_0}(\mathcal{T}=\tau;\pi_{n+1})}
{c_{\tkernel_0}}
.
$$
The normalizing factor of the segment distribution $c_{\tkernel}$
can be bounded above $c_{\tkernel} \leq N(N+1)/2$ (see equation~\eref{eq:cupperbound}). Since the transition probabilities in $\prob_{\tkernel_0}(\mathcal{T}=\tau;\pi_{n+1})$ are $1$ as $\tkernel_0$ is deterministic and $\|\tkernel-\tkernel_0\|_{\max} < \frac{1}{2}$ on $U_{\delta_{n+1}}(\tkernel_0)\subset U_{\delta_0}(\tkernel_0)$ we bound them on $U_{\delta_{n+1}}(\tkernel_0)$ by $\frac{1}{2}$
from below. Similarly, since $\pi_{n+1,\tkernel_0}=\pi^{*}$ chooses
actions from the optimal action set $\oactions(\cdot)$, where 
$\pi_{n} > 0$ is separated from $0$ on $U_{\delta_{n+1}}(\tkernel_0)$,
we conclude
\begin{align*}
\prob_{\tkernel}(\stag{S}_0=s,\stag{H}_0=h,\stag{G}_0=g,l(\Sigma)=h;\pi_{n+1})
&\geq
\frac{
\prob_{\tkernel}(\mathcal{T}=\tau;\pi_{n+1})
}{c_\tkernel}=
\\
\frac{
\bar{\mu}(s_0,h_0,g)\prod_{t=0}^{N-1} \tkernel(s_{t+1}|s_t,a_t) \pi_{n+1}(a_t|s_t,h_0-t,g)
}
{c_\tkernel}
&\geq
\underbrace{
\frac{2
\bar{\mu}(s_0,h_0,g)
(\frac{1}{2})^N
(\epsilon')^N
}
{N(N+1)}
}_{C(s,h,g) :=}
>0
,
\end{align*}
where $\epsilon'>0$ denotes the respective lower bound. %
This implies that the probabilities $\prob_{\tkernel}(\stag{H}_0=h,\stag{G}_0=g\mid\stag{S}_0=s,l(\Sigma)=h;\pi_{n+1})$ are well-defined and bounded from below by $C(s,h,g)$.
Since the set $\theinterestingstates$ is finite we can choose a common positive lower bound as the minimum of all positive lower bounds.
\end{proof}
As an illustration of theorem~\ref{le:detcont} and of the behavior of the continuity of \eUDRL{}-generated policies, figure~\ref{fig:oscillationsPI} shows plots of the quantity $\min_{\bar{s}\in\theinterestingstates}\pi_n(\oactions(\bar{s})|\bar{s})$ in case of a random walk on $\mathbb{Z}_3$. All details of the construction of the MDP underlying figure~\ref{fig:oscillationsPI} can be found in the appendix~\ref{ap:examples}, example~\ref{ex:oscillations}. The figure shows how $\min_{\bar{s}\in\theinterestingstates}\pi_n(\oactions(\bar{s})|\bar{s})$ depends on $n$ for varying distances from a deterministic kernel (as measured by $\delta=\|\tkernel-\tkernel_0\|_1$) and varying initial conditions. The initial conditions are given by the starting policy for \eUDRL{} and are reflected by the offset of the respective graph on the vertical axis (at $n=0$). Different initial conditions are depicted using different colors. The dependency $\min_{\bar{s}\in\theinterestingstates}\pi_n(\oactions(\bar{s})|\bar{s})$ on $\delta$ is illustrated by different graphs, where the increase of $\min_{\bar{s}\in\theinterestingstates}\pi_n(\oactions(\bar{s})|\bar{s})$ as $\delta$ decreases is consistent with the established continuity of policy $\pi_{n}$ in $\tkernel$. It is also visible that for small values of $\delta$ the \eUDRL{} iteration leads to an increase of $\min_{\bar{s}\in\theinterestingstates}\pi_n(\oactions(\bar{s})|\bar{s})$, whereas for large values of $\delta$ the \eUDRL{} iteration leads to a deterioration of performance. The dependency of \eUDRL{}'s overall performance on the initial conditions at fixed $\delta$ can be estimated from the graphs by comparing the asymptotic behavior when $n$ is increasing. 
\begin{figure}[!htb]
\centering
\includegraphics[width=0.75\linewidth]{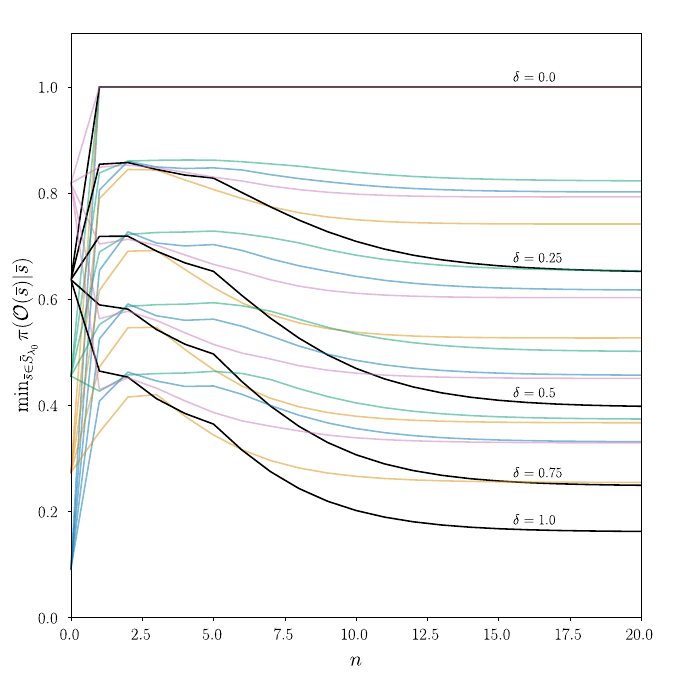}
\caption{Illustration of the behavior of $\min_{\bar{s}\in\theinterestingstates}\pi_n(\oactions(\bar{s})|\bar{s})$ when varying the distance to a deterministic kernel and varying the initial policy for a random walk on $\mathbb{Z}^3$. The legend shows the distance $\delta$ of the respective graph to the deterministic kernel. Each initial condition is depicted by a different color and plotted for various distances $\delta$. One particular initial condition is highlighted in black.}
\label{fig:oscillationsPI}
\end{figure}

\subsection{Extending the Continuity Results to Other Segment Sub-Spaces}
\label{sse:extendingfinite}

To cover algorithms like ODT, RWR (and some specific variants of \eUDRL{}) a modification of theorem~\ref{le:detcont} is needed that restricts the recursion (cf.~section~\ref{subsection:eUDRL} and following) to $\SegTrail$ or $\SegDiag$ subspaces.
\begin{theorem}
\label{le:detcontDiagTrail}
(Relative continuity of \eUDRL{} policies and continuity of values in deterministic kernels)
Theorem~\ref{le:detcont} remains valid under the renaming $\pi_n \rightarrow \pi_n^{\diag/\trail}$.
\end{theorem}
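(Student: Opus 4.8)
The plan is to rerun the induction from the proof of Theorem~\ref{le:detcont} essentially verbatim, making only the three substitutions already prepared in the earlier sections. First, I would replace the recursion formula of Lemma~\ref{le:recrewrites} point~1 by its trailing or diagonal counterpart, equation~\eqref{eq:segtrailing} or~\eqref{eq:segdiag}. Second, I would replace every appeal to Lemma~\ref{le:suppstab} by the corresponding statement of Lemma~\ref{le:suppstabTrailDiag}. Third, I would invoke the invariances $\oactions(\bar{s})=\supp\num_{\tkernel_0,\pi_0}^{\diag/\trail}(\cdot,\bar{s})$ from~\eqref{eq:optathesame} together with the unchanged critical set~\eqref{eq:isthesame} and its state-action version~\eqref{eq:saisthesame}. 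Because $\theinterestingstates$, the optimal-action sets $\oactions(\bar{s})$, and the supports entering the neighborhood statements are all literally unchanged under the renaming, every object that the original induction tracks has a direct counterpart in the restricted setting, and the bookkeeping of the shrinking neighborhoods $U_{\delta_n}(\tkernel_0)$ is inherited without change.

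The heart of the matter is the structural observation made right after Lemma~\ref{le:recrewrites}: the recursion formulas on $\Seg$, $\SegTrail$, and $\SegDiag$ differ only in the index set over which the $(h',g')$-summation in~\eqref{eq:udef} runs, while the summand is identical. For the trailing recursion the sum is restricted to $h'=h$, and for the diagonal recursion it collapses to the single index $(h',g')=(h,g)$. I would therefore redefine $u(a)$ by the restricted sum and observe that the two bounds that drive the entire argument survive intact. The upper bound $u(a)\le Q^{*}_\tkernel((s,h,g),a)$ holds because each summand is still bounded via Lemma~\ref{le:contoptbound} point~2 and the restricted marginal masses sum to at most one, so passing to a subsum can only decrease $u(a)$. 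The lower bound
$$u(a)\ge Q^{\pi_n}_\tkernel((s,h,g),a)\,\pi_n^{\diag/\trail}(a\mid s,h,g)\,\prob_\tkernel(\stag{H}_0=h,\stag{G}_0=g\mid \stag{S}_0=s,l(\Sigma)=h;\pi_n^{\diag/\trail})$$
holds because the distinguished index $(h',g')=(h,g)$ is retained by both restrictions, so the single surviving term is exactly the one that produced the lower bound in the full recursion. The only point genuinely deserving a line of checking, and the one I would flag as the main (if mild) obstacle, is the diagonal case, where the sum collapses entirely: one must confirm that the lone surviving term is this distinguished $(h,g)$ term rather than something lost. This is immediate from~\eqref{eq:segdiag}, which forces $(h',g')=(h,g)$. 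With these two bounds in hand, the relative-continuity computation and the separation-from-zero estimate for $\pi_{n+1}^{\diag/\trail}$ go through word for word.

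The remaining ingredients require no new work. The value sub-induction over the horizon $h$ uses only Lemma~\ref{le:relproduct}, the relative continuity of $\pi_{n+1}^{\diag/\trail}$ just established, and points~4 and~5 of Lemma~\ref{le:suppstabTrailDiag}; its base case $Q^{\pi_{n+1}}_\tkernel((s,1,g),a)=\sum_{s'\in\rho^{-1}(\{g\})}\tkernel(s'\mid s,a)$ is policy-independent and hence unaffected by the renaming. The final separation-from-zero estimate for the marginal $\prob_\tkernel(\stag{H}_0=h,\stag{G}_0=g\mid\stag{S}_0=s,l(\Sigma)=h;\pi_{n+1}^{\diag/\trail})$ reuses the normalization bound $c_\tkernel\le N(N+1)/2$ from~\eqref{eq:cupperbound} and the trajectory $\tau$ built in Lemma~\ref{le:suppstabTrailDiag}; since that $\tau$ can be taken to contain a diagonal (hence also trailing) segment, the same positive lower bound $C(s,h,g)>0$ results. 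Finally, Lemma~\ref{le:contoptbound} is stated for an arbitrary $\tkernel$-dependent policy and is agnostic to the segment space, so it applies to $\pi_n^{\diag/\trail}$ with no modification, completing the induction.
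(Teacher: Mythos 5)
Your proposal is correct and follows essentially the same route as the paper: the paper's proof likewise redefines $u$ via the restricted sums $u^{\trail}$ and $u^{\diag}$, observes $u^{\diag}(\pi_n)\leq u^{\trail}(\pi_n)\leq u(\pi_n)$ so that the upper bound \eqref{eq:uub} transfers by monotonicity of the subsum and the lower bound \eqref{eq:ulb} transfers because the distinguished $(h',g')=(h,g)$ term is retained in both restrictions, and invokes the invariance of $\theinterestingstates$ and $\oactions(\bar{s})$ via \eqref{eq:isthesame}, \eqref{eq:saisthesame} together with the trailing/diagonal variants of the supporting lemmas. Your handling of the separation-from-zero marginal is also consistent with the paper's observation that the probability in point 3 needs no additional trailing/diagonal conditioning, so no gap remains.
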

Notice that the renaming applies to policies only. It ignores the sets $\theinterestingstates$ and $\supp \num_{\tkernel_0,\pi_0} \cap (\mathcal{A} \times \supp \nu_{\tkernel_0,\pi_0})$, which is justified by the equations~\eqref{eq:saisthesame} and~\eqref{eq:isthesame}. There is also no need to investigate trailing/diagonal variants of the probability in theorem~\ref{le:detcont}, point 3 (by conditioning on the events $\stag{H}_0=l(\Sigma)$, $\stag{G}_0=\rho(\stag{S}_h)$). The probability to bound remains the same for all segment sub-spaces. This is a consequence of lemma~\ref{le:recrewrites}, particularly equation~\eqref{eq:udef}, which permits rewriting the \eUDRL{} recursion for all segment sub-spaces exclusively using terms that appear in the recursion rewrite for $\Seg$.
More broadly, lemma~\ref{le:recrewrites} simplifies proofs in $\Seg^{\diag/\trail}$ considerably since it allows us to restrict to quantities without the conditioning on $\stag{H}_0=l(\Sigma)$, $\stag{G}_0=\rho(\stag{S}_h)$ in all three versions of the proof.

\begin{proof} We follow the proof of theorem~\ref{le:detcont} and only highlight differences. First, we employ lemma~\ref{le:detoptDiagTrail} for $\Seg^{\diag/\trail}$ instead of lemma~\ref{le:detopt}, which we employed for $\Seg$. Second, as explained above, there is no need to bound diagonal/trailing variants of the probability in point 3. Third, the fact that we restricted the proof to $\bar{s} \in \theinterestingstates$ (in \eref{eq:udef}) implies that not only $\pi_{n+1,\tkernel}(\cdot|\bar{s})$ is defined  but also $\pi_{n+1,\tkernel}^{\trail}(\cdot|\bar{s})$ and
$\pi_{n+1,\tkernel}^{\mathrm{diag}}(\cdot|\bar{s})$ are defined, see \eref{eq:isthesame} and lemma~\ref{le:suppstabTrailDiag} point 2. For the recursion in $\SegTrail$ or $\SegDiag$ we have to modify $u$ (compare \eref{eq:udef} and \eref{eq:segtrailing}, \eref{eq:segdiag}) choosing
\begin{align*}
u^{\trail}(\pi_n^{\trail}) &:= 
\sum_{g' \in \mathcal{G}}
\prob_\tkernel (\rho(\stag{S}_h)=g|\stag{A}_0=a, \stag{S}_0, \stag{H}_0=h,
\stag{G}_0=g', l(\Sigma)=h; \pi_n^{\trail}  )
\\
&\qquad\qquad\qquad\qquad\cdot\pi_n^{\trail}(a|s,h,g')
\prob_\tkernel (\stag{H}_0=h,
\stag{G}_0=g'| \stag{S}_0, l(\Sigma)=h; \pi_n^{\trail}),
\\
u^{\mathrm{diag}}(\pi_n^{\diag}) &:= 
\prob_\tkernel (\rho(\stag{S}_h)=g|\stag{A}_0=a, \stag{S}_0, \stag{H}_0=h,
\stag{G}_0=g, l(\Sigma)=h; \pi_n^{\diag}  )
\\
&\qquad\qquad\qquad\qquad\cdot\pi_n^{\diag}(a|s,h,g)
\prob_\tkernel (\stag{H}_0=h,
\stag{G}_0=g| \stag{S}_0, l(\Sigma)=h; \pi_n^{\diag})
.
\end{align*}
Notice that given a policy $\pi_n$ it holds that
$$
u^{\mathrm{diag}}(\pi_n)\leq u^{\trail}(\pi_n)
\leq u(\pi_n).
$$
Since the introduced upper bound \eref{eq:uub} bounds $u(\pi_n)$ 
it also applies for $u^{\diag}(\pi_n)$ and $u^{\trail}(\pi_n)$.
Similarly, since 
all non-diagonal terms were lower bounded by zero,
the introduced lower bound \eref{eq:ulb} bounds $u^{\mathrm{diag}}(\pi_n)$ and
consequently it bounds also $u^{\trail}(\pi_n)$.
Thus the upper and lower estimates that are used in the proof of theorem~\ref{le:detcont} apply 
also for $u^{\trail}(\pi_n)$ and $u^{\diag}(\pi_n)$. Thus apart from the change of the definition of $u$ the proof remains unchanged.
\end{proof}
We will apply this method to derive universal bounds by lower bounding $u^{\diag}$
and upper bounding $u$ also in later proofs when studying the \eUDRL{}-type recursions.

\subsection{Continuity of the Goal-Reaching Objective for a Finite Number of Iterations}
\label{sse:Jfinite}
\begin{corollary}(Continuity of goal-reaching objective at deterministic points)
\label{le:detJcont}
Let $(\pi_{n,\tkernel})_{n\geq0}$ with $\pi_0 >0$ be a sequence of \eUDRL{}-generated policies.
Let $\tkernel_0 \in (\Delta \mathcal{S})^{\mathcal{S}\times\mathcal{A}}$
be a deterministic kernel.
Then the goal-reaching objective, viewed as a function of $\lambda$,
$J_{\tkernel}^{\pi_{n,\tkernel}} =
\sum_{\bar{s} \in \bar{\mathcal{S}}_T} \bar{\mu}(\bar{s}) V_{\tkernel}^{\pi_{n,\tkernel}}$
is continuous at $\lambda_0$ for all $n\geq 0$.
\end{corollary}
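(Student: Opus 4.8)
The plan is to reduce the continuity of $J_\tkernel^{\pi_{n,\tkernel}}$ to the continuity of its individual summands and then to treat each summand according to whether the corresponding initial state is critical. By non-degeneracy $\supp\bar\mu\subset\bar{\mathcal{S}}_T$ (condition~\eqref{eq:mucond}), so only initial states contribute and I can write
$$J_\tkernel^{\pi_{n,\tkernel}}=\sum_{\bar{s}\in\supp\bar\mu}\bar\mu(\bar{s})\,V_\tkernel^{\pi_{n,\tkernel}}(\bar{s}).$$
Being a finite sum with fixed coefficients $\bar\mu(\bar{s})$, it is continuous at $\tkernel_0$ as soon as every map $\tkernel\mapsto V_\tkernel^{\pi_{n,\tkernel}}(\bar{s})$, $\bar{s}\in\supp\bar\mu$, is continuous there. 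First I would record that every initial state is visited: for $\bar{s}\in\supp\bar\mu\subset\bar{\mathcal{S}}_T$ the $t=0$ term of $\nu_{\tkernel_0,\pi_0}$ is positive (this uses $N\geq1$), hence $\supp\bar\mu\subset\supp\nu_{\tkernel_0,\pi_0}$.

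Next I would split $\supp\bar\mu$ according to membership in $\theinterestingstates=\supp\den_{\tkernel_0,\pi_0}\cap\supp\nu_{\tkernel_0,\pi_0}$. For $\bar{s}\in\supp\bar\mu\cap\theinterestingstates$, continuity of $\tkernel\mapsto V_\tkernel^{\pi_{n,\tkernel}}(\bar{s})$ at $\tkernel_0$ is exactly Theorem~\ref{le:detcont}, point 2. For the remaining states $\bar{s}=(s,h,g)\in\supp\bar\mu\setminus\theinterestingstates$ we have $\bar{s}\in\supp\nu_{\tkernel_0,\pi_0}$ but $\bar{s}\notin\theinterestingstates$, so necessarily $\den_{\tkernel_0,\pi_0}(s,h,g)=0$; I would handle these via Lemma~\ref{le:contoptbound}, point 3 (continuity at value zero), for which it only remains to verify the hypothesis $V^{\pi_0}_{\tkernel_0}(\bar{s})=0$.

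The key linking step, and the only nontrivial one, is the implication $\den_{\tkernel_0,\pi_0}(s,h,g)=0\Rightarrow V^{\pi_0}_{\tkernel_0}(s,h,g)=0$, which I would prove by contraposition. If $V^{\pi_0}_{\tkernel_0}(s,h,g)>0$, then multiplying the conditional probability defining $V^{\pi_0}_{\tkernel_0}(s,h,g)$ by $\prob_{\tkernel_0}(S_0=s,H_0=h,G_0=g;\pi_0)=\bar\mu(s,h,g)>0$ yields a positive-probability trajectory with $S_0=s$, $H_0=h$, $G_0=g$ and $\rho(S_h)=g$. Reading off the trailing segment of length $h$ starting at $t=0$ from this trajectory, the $t=0$ term in the segment-distribution formula gives $\prob_{\tkernel_0}(\Sigma=\sigma;\pi_0)>0$ for that segment $\sigma$; summing over all segments with $\stag{S}_0=s$, $l(\Sigma)=h$, $\rho(\stag{S}_h)=g$ then forces $\den_{\tkernel_0,\pi_0}(s,h,g)>0$, a contradiction. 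With the hypothesis verified, Lemma~\ref{le:contoptbound}, point 3 delivers both the continuity of $\tkernel\mapsto V_\tkernel^{\pi_{n,\tkernel}}(\bar{s})$ at $\tkernel_0$ and the limiting value $0$. Combining the two cases, every summand is continuous at $\tkernel_0$, and hence so is $J_\tkernel^{\pi_{n,\tkernel}}$; the argument is uniform in $n\geq0$.

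I expect the main obstacle to be precisely this linking step, since $\den$ is defined on segment space while $V$ lives on the trajectory/value side, and one must keep careful track that the achieved goal $\rho(\stag{S}_h)=g$ appearing in $\den$ plays the role of the commanded goal $G_0=g$ appearing in $V$. The match succeeds because the CE transition kernel leaves the original-state evolution independent of the commanded goal component, and because $\pi_0>0$ guarantees that every admissible action sequence carries positive probability.
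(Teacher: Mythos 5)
Your proposal is correct and follows essentially the same route as the paper's proof: the same decomposition of $J_{\tkernel}^{\pi_{n,\tkernel}}$ over $\supp\bar\mu$ into the parts inside and outside $\theinterestingstates$, Theorem~\ref{le:detcont} point 2 for the critical states, and Lemma~\ref{le:contoptbound} point 3 for the rest after establishing $V_{\tkernel_0}^{\pi_0}(\bar{s})=0$ by the same contradiction argument (a positive-value state would yield a positive-probability trajectory, hence a segment witnessing $\den_{\tkernel_0,\pi_0}(\bar{s})>0$). You are somewhat more explicit than the paper about the auxiliary inclusion $\supp\bar\mu\subset\supp\nu_{\tkernel_0,\pi_0}$ and the trajectory-to-segment bookkeeping, but these are refinements of the identical argument.
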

\begin{proof}
The goal-reaching objective can be written as follows
$$
J_{\tkernel}^{\pi_{n,\tkernel}}
=
\sum_{\bar{s} \in \supp \bar{\mu}} \bar{\mu}(\bar{s}) V_{\tkernel}^{\pi_{n,\tkernel}}
= 
\sum_{\bar{s} \in \supp \bar{\mu} \setminus \theinterestingstates } \bar{\mu}(\bar{s}) V_{\tkernel}^{\pi_{n,\tkernel}}
+
\sum_{\bar{s} \in \supp \bar{\mu} \cap \theinterestingstates}
\bar{\mu}(\bar{s}) V_{\tkernel}^{\pi_{n,\tkernel}},
$$%
where the continuity of the second summand follows from theorem~\ref{le:detcont}, point 2. For the first summand we prove that for all $\bar{s} \in \supp \bar{\mu} \setminus \theinterestingstates = \supp \bar{\mu} \setminus \supp \den_{\tkernel_0,\pi_0}$ it holds that
$V_{\tkernel_0}^{\pi_0}(\bar{s}) = 0$. Continuity then follows
from lemma~\ref{le:contoptbound}, point 3. For the sake of contradiction assume that $V_{\tkernel_0}^{\pi_0}(\bar{s}) > 0$ for a certain $\bar{s} = (s,h,g) \in \supp \bar{\mu} \setminus \supp \den_{\tkernel_0,\pi_0}$ (recall that $\supp \bar{\mu} \subset \bar{S}_T$).
This means that there exists a trajectory $\tau = \bar{s}_0, a_0, \ldots, \bar{s_h}$ with $\bar{s} = (s,h,g) = \bar{s}_0$, $\rho(s_h) = g$ and
$\prob_{\lambda_0} (A_0= a_0, \ldots, \rho(S_h) = g | \bar{S}_0 = \bar{s} ; \pi_0) > 0$. Since $\bar{\mu}(\bar{s}) > 0$ we conclude $\prob_{\lambda_0} (\mathcal{T}=\tau ; \pi_0) > 0$ but this contradicts $\bar{s} \notin \supp \den_{\tkernel_0,\pi_0}$.
\end{proof}
We have shown that the goal-reaching objective derived from \eUDRL{}-generated policies is continuous for any iteration $n \geq 0$. This implies that if an environment possesses a transition kernel that is close to a deterministic kernel, then the \eUDRL{} generated policy
$\pi_{n,\tkernel}$, $n\geq 1$ (with initial condition $\pi_0>0$) has
a goal-reaching objective $J_{\tkernel}^{\pi_{n,\tkernel}}$
close to the optimal goal-reaching objective $J_{\tkernel}^{\pi_{\tkernel}^*}$. We summarize this observation in a corollary.
\begin{corollary}\label{le:detJnearopt}(Near-optimal behavior near deterministic kernels)
Let $\tkernel_0$ be a deterministic transition kernel and let $(\pi_{n,\tkernel})_{n\geq0}$ with $\pi_0 >0$ be a \eUDRL{}-generated sequence of policies. Then for any fixed $n$ and all $\epsilon > 0$ there exists $\delta >0$ such that if $\tkernel \in U_{\delta}(\tkernel_0)$ then
$$
|J_{\tkernel}^{\pi_{n,\tkernel}} - J_{\tkernel}^{\pi_{\tkernel}^*}|
< \epsilon.
$$
\end{corollary}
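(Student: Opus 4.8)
The plan is to insert the deterministic kernel $\tkernel_0$ as an intermediate comparison point and thereby reduce the statement to the continuity facts already established. Writing the optimal goal-reaching objective as $J_{\tkernel}^{\pi_{\tkernel}^*}=J_{\tkernel}^*$, the triangle inequality gives
\begin{align*}
\left|J_{\tkernel}^{\pi_{n,\tkernel}} - J_{\tkernel}^*\right|
\leq
\underbrace{\left|J_{\tkernel}^{\pi_{n,\tkernel}} - J_{\tkernel_0}^{\pi_{n,\tkernel_0}}\right|}_{(\mathrm{I})}
+\underbrace{\left|J_{\tkernel_0}^{\pi_{n,\tkernel_0}} - J_{\tkernel_0}^*\right|}_{(\mathrm{II})}
+\underbrace{\left|J_{\tkernel_0}^* - J_{\tkernel}^*\right|}_{(\mathrm{III})},
\end{align*}
and I would bound each of the three summands separately, the statement being understood for $n\geq 1$.

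Term $(\mathrm{I})$ tends to $0$ as $\tkernel\rightarrow\tkernel_0$ directly by Corollary~\ref{le:detJcont}, which asserts the continuity of $\tkernel\mapsto J_{\tkernel}^{\pi_{n,\tkernel}}$ at the deterministic kernel $\tkernel_0$. Term $(\mathrm{III})$ likewise tends to $0$: since an optimal policy attains $V^{\pi^*}=V^*$ in every state, one has $J_{\tkernel}^*=\sum_{\bar s\in\bar{\mathcal S}_T}\bar\mu(\bar s)V_{\tkernel}^*(\bar s)$, a finite sum of the optimal values $V_{\tkernel}^*(\bar s)$, each continuous in $\tkernel$ by Lemma~\ref{le:contoptbound} point 1; hence $J_{\tkernel}^*$ is continuous at $\tkernel_0$. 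The crux is term $(\mathrm{II})$, which I claim is exactly $0$. Here I would invoke Lemma~\ref{le:detopt}: for $n\geq 1$ the policy $\pi_{n,\tkernel_0}$ is optimal on $\theinterestingstates$ with $V^{\pi_{n,\tkernel_0}}_{\tkernel_0}(\bar s)=1=V_{\tkernel_0}^*(\bar s)$ there, while for $\bar s\in\supp\bar\mu\setminus\theinterestingstates=\supp\bar\mu\setminus\supp\den_{\tkernel_0,\pi_0}$ the goal cannot be reached under any policy, so $V_{\tkernel_0}^{\pi_{n,\tkernel_0}}(\bar s)=0=V_{\tkernel_0}^*(\bar s)$ (this uses $\supp\bar\mu\subset\supp\nu_{\tkernel_0,\pi_0}$, which holds because the initial states are visited at $t=0$, exactly as in the proof of Corollary~\ref{le:detJcont}). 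Summing against $\bar\mu$ then yields $J_{\tkernel_0}^{\pi_{n,\tkernel_0}}=J_{\tkernel_0}^*$.

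With $(\mathrm{II})$ eliminated, the conclusion is immediate: given $\epsilon>0$, the continuity of $(\mathrm{I})$ and $(\mathrm{III})$ furnishes a $\delta>0$ so that each of these two terms is below $\epsilon/2$ on $U_{\delta}(\tkernel_0)$, whence $|J_{\tkernel}^{\pi_{n,\tkernel}}-J_{\tkernel}^{\pi_{\tkernel}^*}|<\epsilon$. I do not expect a genuine obstacle here, since the heavy lifting resides in Corollary~\ref{le:detJcont} and the optimality Lemma~\ref{le:detopt}; the only points demanding care are the exact vanishing of term $(\mathrm{II})$, i.e.\ matching the value of $\pi_{n,\tkernel_0}$ to the optimal value on the non-critical part of $\supp\bar\mu$, and the implicit restriction to $n\geq 1$ (for $n=0$ the starting policy $\pi_0$ need not be optimal at $\tkernel_0$, so $(\mathrm{II})$ would generally fail to vanish).
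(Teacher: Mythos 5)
Your proof is correct and follows essentially the same route as the paper: a triangle inequality through the deterministic kernel $\tkernel_0$, with Corollary~\ref{le:detJcont} handling the term $|J_{\tkernel}^{\pi_{n,\tkernel}}-J_{\tkernel_0}^{\pi_{n,\tkernel_0}}|$, Lemma~\ref{le:contoptbound} handling $|J_{\tkernel}^{*}-J_{\tkernel_0}^{*}|$, and Lemma~\ref{le:detopt} forcing $J_{\tkernel_0}^{\pi_{n,\tkernel_0}}=J_{\tkernel_0}^{*}$ for $n\geq 1$ (the paper simply absorbs your vanishing term $(\mathrm{II})$ into term $(\mathrm{I})$ via this equality). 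Your explicit verification that the values also agree on $\supp\bar\mu\setminus\theinterestingstates$, and your remark on the implicit $n\geq 1$ restriction, are both consistent with the paper.
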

\begin{proof}
Due to continuity of the optimal values in $\tkernel$ (see 
lemma~\ref{le:contoptbound}), we can choose $\delta_1>0$ such that
$$|J_{\tkernel}^{\pi_{\tkernel}^*} - J_{\tkernel_0}^{\pi_{\tkernel_0}^*}|
< \frac{\epsilon}{2}
.$$
By lemma~\ref{le:detopt} the \eUDRL{}-generated policies are optimal for deterministic kernels when $n \geq 1$. By the continuity of $J_{\tkernel}^{\pi_{n,\tkernel}}$ at
$\tkernel_0$ (see lemma~\ref{le:detJcont}), we can 
chose $\delta_2 >0 $ such that for all $\tkernel \in U_{\delta_2}(\tkernel_0)$ it holds that
$$
\frac{\epsilon}{2} > |J_{\tkernel}^{\pi_{n,\tkernel}} - J_{\tkernel_0}^{\pi_{n,\tkernel_0}}|
=
|J_{\tkernel}^{\pi_{n,\tkernel}} - J_{\tkernel_0}^{\pi_{\tkernel_0}^*}|
.
$$
By the triangle inequality and choosing 
$\delta := \min \{\delta_1,\delta_2 \}$ we obtain
$
|J_{\tkernel}^{\pi_{n,\tkernel}} - J_{\tkernel}^{\pi_{\tkernel}^*}|
<\frac{\epsilon}{2} + \frac{\epsilon}{2}=\epsilon$.
\end{proof}

This says that $J_{\tkernel}^{\pi_{n,\tkernel}}$ has a small error to optimality in a small neighborhood around a deterministic transition kernel for all finite $n \geq 1$. However, the lemma provides no information on how for a given level of error the size of the neighborhood $\delta$ (for given fixed $\epsilon$) depends on the iteration. As shown in~\citep{strupl2022upsidedown} by explicit counterexamples there exist non-deterministic environments, where \eUDRL{}-generated policies remain constant for $n\geq1$, such that there is no improvement at all. The same article shows that even if initialized with the optimal policy a priori, \eUDRL{} might produce non-optimal policies such that there is no monotonic improvement.

An illustration of the behavior of the goal-reaching objective for \eUDRL{}-generated policies in case of a random walk on $\mathbb{Z}_3$ is shown in the figure~\ref{fig:oscillations}. All details of the construction of the MDP underlying figure~\ref{fig:oscillations} can be found in the appendix~\ref{ap:examples}, example~\ref{ex:oscillations}. Figure~\ref{fig:oscillations:a} shows the dependency of $J_\tkernel^{\pi_{n,\tkernel}}$ on the iteration for varying distances from a deterministic kernel (as measured by $\delta=\|\tkernel-\tkernel_0\|_1$) and varying initial conditions. The initial conditions are given by the starting policy for \eUDRL{} and are reflected by the offset of the respective graph on the vertical axis (at $n=0$). Different initial conditions are depicted using different colors. The dependency $J_\tkernel^{\pi_{n,\tkernel}}$ on $\delta$ is illustrated by different graphs, where the increase of values $J_\tkernel^{\pi_{n,\tkernel}}$ as $\delta$ decreases is consistent with the established continuity of $J_\tkernel^{\pi_{n,\tkernel}}$ in $\tkernel$. It is also visible that for small values of $\delta$ the \eUDRL{} iteration leads to an increase of $J_\tkernel^{\pi_{n,\tkernel}}$, whereas for large values of $\delta$ the \eUDRL{} iteration leads to a deterioration of performance. The dependency of \eUDRL{}'s overall performance on the initial conditions at fixed $\delta$ can be estimated from the graphs by comparing the asymptotic behavior when $n$ is increasing. Figure~\ref{fig:oscillations:b} shows a closer resolution of one of the graphs of figure~\ref{fig:oscillations:a} for $\delta=0.25$. It is visible that two distinct regimes are present in the performance of \eUDRL{} in this example, with a sharp increase for small values of $n$ and a slow subsequent deterioration.
\begin{figure}[!htb]
	\centering
	\begin{subfigure}{0.49\linewidth}
		\includegraphics[width=\linewidth]{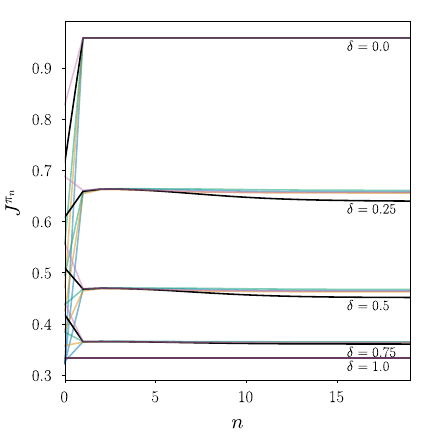}
		\caption{Goal-reaching objective}
  \label{fig:oscillations:a}
	\end{subfigure}
	\begin{subfigure}{0.49\linewidth}
		\includegraphics[width=\linewidth]{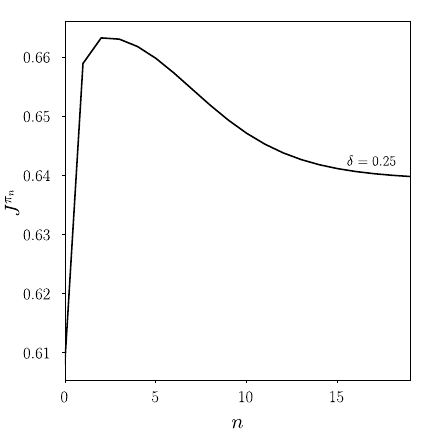}
		\caption{Goal-reaching objective - closer zoom}
   \label{fig:oscillations:b}
	\end{subfigure}
	\caption{Illustration of the continuity of the goal-reaching objective for varying distance to a deterministic kernel and varying initial policy for a random walk on $\mathbb{Z}^3$. The legend shows the distance $\delta$ of the respective graph to the deterministic kernel. Each initial condition is depicted by a different color and plotted for various distances $\delta$. One particular initial condition is highlighted in black in (a). One particular example is highlighted in closer zoom in (b).}
 \label{fig:oscillations}
\end{figure}
Furthermore, considering the asymptotic behavior present in figure~\ref{fig:oscillations} for different values of $\delta$ it appears that the set of accumulation points of the sequences of values of the goal-reaching objective also exhibits a form of continuity in $\tkernel$. The proofs of continuity properties of the sets of accumulation points of \eUDRL{}-generated policies and the associated goal-reaching objectives is the content of the coming section. 
\section{The Continuity of \eUDRL{} at Deterministic Transition Kernels as the Number of Iterations Approaches Infinity}
\label{se:continfty}
The main objective of this section is to investigate the relative continuity of \eUDRL{}-generated policies, the associated values and goal-reaching objective as the number of iterations approaches infinity. The rigorous asymptotic analysis of \eUDRL{}-type recursion schemes is open in the literature. A major hurdle is that it is unknown if, in general,  \eUDRL{} converges to a limiting policy. Since all policies form a compact set we will focus on the set of accumulation points of \eUDRL{}-generated policies instead. For this we will introduce a notion of \lq\lq{}relative continuity of sets of accumulation points of policies\rq\rq{}. Although, we do not provide a complete analysis of the continuity of asymptotic polices and values, we contribute significantly to the progress of the theory of 
\lq\lq{}RL as SL\rq\rq{} algorithms providing the first rigorous account of the topic.
%
The study of continuity of \eUDRL{} addresses the algorithm's asymptotic stability, i.e.~the question whether small changes within the environment entail a small change in the overall learning performance. Observe that the largest values of the goal-reaching objective in figure~\ref{fig:oscillations} seem to be reached roughly at the first to third iteration. Increasing the number of iterations further results in an effective deterioration of performance. This raises concerns on whether
the \eUDRL{} recursion scheme makes sense at all. Thinking along this line one might ask for guarantees on the behavior of the accumulation points of \eUDRL{}-generated policies and derived quantities. Since optimal values are always continuous (see lemma~\ref{le:contoptbound} point 1.) and \eUDRL{}-values are optimal for deterministic kernels (both in case of a finite number of iterations and also in the limit) a discontinuity of \eUDRL{}-values at $\tkernel_0$ (a deterministic kernel) would imply a discontinuous error between \eUDRL{}-values and optimal values in the limit (in the sense that $\|\tkernel-\tkernel_0\|_1\leq\delta$ but the values have a positive separation). Thus the set of accumulation points would show a discontinuous behavior. Since this deficiency does not occur at a finite number of iterations (see Theorem~\ref{le:detcont}), this would imply that \eUDRL{} is asymptotically unstable, making the effort to achieve any limiting behavior via \eUDRL{} a questionable pursuit.

As we are working in a metric space (rather than an abstract topological space), establishing a proof of continuity is tantamount to the derivation of error bounds on \eUDRL{}-generated quantities. Given a bound $\delta$ on the distance $\|\tkernel-\tkernel_0\|_1$ one could determine an upper bound $\epsilon$ on the error of, e.g., the goal-reaching objective. Building on this observation, the estimates derived during our exploration of continuity can help determine whether it is worthwhile to continue with the \eUDRL{} iteration to achieve a specific level of error. This observation extends mutatis mutandis to GCSL, ODT and other \lq\lq{}RL as SL\rq\rq{} approaches.

We have no estimates on the change of the visitation distribution
on the set of optimal policies (in contrast to, e.g., the action value function which is constant) to our disposal, which constitutes a major technical hurdle. We address this by restricting our discussion to the following special cases: \textit{(1)} the case when the support of the initial distribution includes
$\theinterestingstates$, which will be discussed in section~\ref{sse:specmu}, and \textit{(2)} the case when an optimal policy is unique on $\theinterestingstates$, which will be discussed in section~\ref{sse:specM1}.

\subsection{Asymptotic Continuity for Special Initial Distributions}
\label{sse:specmu}

The main result of this section is to demonstrate that the set of accumulation points of the sequence of \eUDRL{}-generated policies is relatively continuous at a deterministic kernel $\tkernel_0$ if $\theinterestingstates \subset
\supp \bar{\mu}$ (see theorem~\ref{le:limdetcont}). Similar to the preceding sections, the core idea that enables our asymptotic continuity analysis is to derive bounds for the \eUDRL{} policy recursion. This is achieved by bounding various individual terms occurring in the recursion, like $Q$-values (see lemmas \ref{le:opthbound} and \ref{le:contQfac}) and visitation probabilities (lemma~\ref{le:alpha}). However, the analysis must be carried out more carefully here as the individual estimates for the numerator and denominator of the policy recursion turn out to be insufficient. This leads us to develop a joint analysis of numerator and denominator during the course of the proof of the main result of this section, theorem~\ref{le:limdetcont}. Unsurprisingly, bounding the policy recursion via bounds to the respective individual terms can be achieved through monotonicity bounds for the underlying rational function (see remark~\ref{re:monotonicity}), which plays a key role in our discussion and will be used repeatedly.
This finally leads to an estimation of the \eUDRL{} recursion through a dynamical system (which we introduce in lemma~\ref{le:f}). This dynamical system possesses a unique fixed point that bounds the accumulation points of \eUDRL{}'s policy recursion and enjoys desirable properties as $\tkernel \rightarrow \tkernel_0$.

We derive some preliminary lemmata in section~\ref{ssse:specmulemmas}. The main theorem is contained in section~\ref{ssse:specmutheorem}. In section~\ref{ssse:extendingfinite} we show how it can be extended to the $\Seg^{\diag/\trail}$ segment spaces. As a corollary, we study the continuity of accumulation point sets also for other quantities like values and the goal-reaching objective in section~\ref{ssse:specmuaccbound}. This yields upper bounds on the errors of these quantities as a bonus. Somewhat surprisingly, these error bounds turn out to be independent of the initial condition $\pi_0 > 0$. Furthermore, we establish a preliminary result about the convergence rate of the policy error. We conclude with a few examples that illustrate the obtained bounds.

\subsubsection{Preliminary lemmata}
\label{ssse:specmulemmas}
\paragraph{Bounding optimal Values:}
Lemma~\ref{le:contoptbound} point 3.~asserts roughly that \eUDRL{}-generated state values are continuous at the points with value $0$ (see the lemma for the precise statement). We begin by showing a stronger version of the lemma, which makes the continuity explicit by bounding the discrepancy in values given a discrepancy $\delta$ in the transition kernels.
\begin{lemma} (Continuity at value zero)
\label{le:opthbound}
Let $\{\mathcal{M}_{\tkernel} | \tkernel\in(\Delta \mathcal{S})^{\mathcal{S}\times\mathcal{A}} \}$ and $\{\bar{\mathcal{M}}_{\tkernel} |\tkernel\in (\Delta \mathcal{S})^{\mathcal{S}\times\mathcal{A}} \}$ be compatible families. Let $\tkernel_0$ be a deterministic kernel. Choose $\delta\in(0,2)$ and assume $\tkernel\in U_{\delta}(\tkernel_0) = \{\tkernel | \max_{s,a\in \mathcal{S}\times\mathcal{A}} \| \tkernel(\cdot|s,a) -\tkernel_0(\cdot|s,a)\|_1 < \delta \}$. Then for any policy $\pi$, state $\bar{s}=(s,h,g) \in \bar{S}_T$ and action $a \in \mathcal{A}$ the following holds:
\begin{enumerate}
\item
Suppose $V_{\tkernel_0}^{*}(\bar{s}) = 0$
then $V_{\tkernel}^{\pi}(\bar{s}) \leq \frac{\delta h}{2} \leq \frac{\delta N}{2}.$
\item
Suppose $Q_{\tkernel_0}^{*}(\bar{s},a) = 0$
then $Q_{\tkernel}^{\pi}(\bar{s},a) \leq \frac{\delta h}{2} \leq \frac{\delta N}{2}.$
\end{enumerate}
\end{lemma}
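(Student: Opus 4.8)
The plan is to prove both statements by a coupled induction on the horizon $h$, exploiting the fact that, for command extensions, $V^\pi_\tkernel$ and $Q^\pi_\tkernel$ are probabilities (hence lie in $[0,1]$) and satisfy the finite-horizon Bellman recursion
\[
Q^\pi_\tkernel((s,h,g),a) = \sum_{s'\in\mathcal{S}} \tkernel(s'|s,a)\, V^\pi_\tkernel(s',h-1,g), \qquad V^\pi_\tkernel(s,h,g) = \sum_{a\in\mathcal{A}} \pi(a|s,h,g)\, Q^\pi_\tkernel((s,h,g),a),
\]
for $h>1$, with $Q^\pi_\tkernel((s,1,g),a) = \sum_{s'\in\rho^{-1}(\{g\})}\tkernel(s'|s,a)$. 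The only input about the kernel perturbation I need is the elementary estimate already used in the proof of lemma~\ref{le:suppstab}: since $\tkernel_0$ is deterministic, writing $s'_{s,a}$ for the unique state with $\tkernel_0(s'_{s,a}|s,a)=1$, the bound $\tkernel\in U_\delta(\tkernel_0)$ gives $\|\tkernel(\cdot|s,a)-\tkernel_0(\cdot|s,a)\|_1 = 2(1-\tkernel(s'_{s,a}|s,a))$, so that the total \emph{leaked} mass satisfies $\sum_{s'\neq s'_{s,a}}\tkernel(s'|s,a) = 1-\tkernel(s'_{s,a}|s,a) \leq \delta/2$.

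I would observe first that point 1 follows from point 2 at the same horizon: if $V^*_{\tkernel_0}(s,h,g)=0$ then, since $V^*_{\tkernel_0}=\max_a Q^*_{\tkernel_0}$ and action-values are nonnegative, $Q^*_{\tkernel_0}((s,h,g),a)=0$ for every $a$; point 2 then bounds each $Q^\pi_\tkernel((s,h,g),a)\leq \delta h/2$, and taking the $\pi$-convex combination yields $V^\pi_\tkernel(s,h,g)\leq\delta h/2$. Hence it suffices to establish point 2 by induction on $h$, carrying point 1 along as a free consequence at each level.

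For the base case $h=1$, the hypothesis $Q^*_{\tkernel_0}((s,1,g),a)=0$ means $\rho(s'_{s,a})\neq g$, i.e.\ the deterministic successor is not a goal state; thus the goal states in $\rho^{-1}(\{g\})$ all lie among the states $s'\neq s'_{s,a}$, and $Q^\pi_\tkernel((s,1,g),a)=\sum_{s'\in\rho^{-1}(\{g\})}\tkernel(s'|s,a)\leq\sum_{s'\neq s'_{s,a}}\tkernel(s'|s,a)\leq\delta/2$. For the inductive step at $h+1$, determinism of $\tkernel_0$ gives $Q^*_{\tkernel_0}((s,h+1,g),a)=V^*_{\tkernel_0}(s'_{s,a},h,g)$, so the hypothesis forces $V^*_{\tkernel_0}(s'_{s,a},h,g)=0$. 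Splitting the Bellman sum into the deterministic target and the remainder,
\[
Q^\pi_\tkernel((s,h+1,g),a) = \tkernel(s'_{s,a}|s,a)\, V^\pi_\tkernel(s'_{s,a},h,g) + \sum_{s'\neq s'_{s,a}}\tkernel(s'|s,a)\, V^\pi_\tkernel(s',h,g),
\]
I bound the first term by the induction hypothesis (point 1 at level $h$) as $\leq 1\cdot\delta h/2$, and the second using $V^\pi_\tkernel\leq1$ together with the leaked-mass estimate as $\leq\delta/2$, giving $Q^\pi_\tkernel((s,h+1,g),a)\leq\delta(h+1)/2$. The bound $\delta h/2\leq\delta N/2$ is immediate from $h\leq N$.

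The substance of the argument is the coupling of the two statements and the book-keeping that makes the error accumulate only linearly: each additional unit of horizon contributes at most one portion of leaked mass $\delta/2$, while the deterministic-target term merely propagates the previously accumulated error without amplification, since its coefficient $\tkernel(s'_{s,a}|s,a)$ is at most $1$. The main point to get right --- and the only genuinely delicate one --- is the base case, where the condition $V^*_{\tkernel_0}=0$ must be translated correctly into the geometric statement that the entire goal preimage $\rho^{-1}(\{g\})$ avoids the deterministic successor, so that all of the goal-reaching probability under $\tkernel$ consists of leaked mass controlled by $\delta/2$.
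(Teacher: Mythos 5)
Your proof is correct, and it reaches the same linear-in-$h$ bound as the paper, but it organizes the argument the other way around. The paper's engine is a direct proof of point 1: it decomposes the event $\{\rho(S_h)=g\}$ under $\tkernel$ over the events $\Lambda_k$ that the \emph{first} transition with $\tkernel_0$-probability zero occurs at step $k$ (such a transition must occur, since $V^*_{\tkernel_0}(\bar s)=0$ forces every $g$-reaching trajectory to contain one), bounds each $\prob_\tkernel(\Lambda_k\mid\cdot)$ by the leaked mass $\delta/2$, and sums over $k\le h$; point 2 is then derived from point 1 by splitting the one-step Bellman sum at the deterministic successor. Your engine is instead an induction on $h$ for point 2, whose inductive step is precisely that same Bellman split, while point 1 is recovered at each horizon level from the elementary facts that $V^*_{\tkernel_0}(\bar s)=0$ forces $Q^*_{\tkernel_0}(\bar s,a)=0$ for every $a$ and that $V^\pi_\tkernel(\bar s)$ is a $\pi$-convex combination of the $Q^\pi_\tkernel(\bar s,a)$. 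What your route buys is the elimination of the first-forbidden-transition decomposition and the exhaustiveness argument it requires, replacing it with a purely local one-step recursion; what the paper's route buys is a direct, non-inductive probabilistic interpretation of the error as the total mass leaked off the deterministic trajectory tree, counted once per time step. Both correctly identify the key quantitative facts: the leaked mass per transition is $\sum_{s'\neq s'_{s,a}}\tkernel(s'|s,a)=\tfrac12\|\tkernel(\cdot|s,a)-\tkernel_0(\cdot|s,a)\|_1\le\delta/2$, and the deterministic-target term propagates the accumulated error with coefficient at most $1$, so the error grows only additively in $h$. Your base case, translating $Q^*_{\tkernel_0}((s,1,g),a)=0$ into $\rho(s'_{s,a})\neq g$ so that all goal mass under $\tkernel$ is leaked mass, matches the paper's treatment of the $h=1$ term.
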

\begin{proof}
We begin by showing that point 1.~implies point 2. Let $s^*$ be the unique state for which $\tkernel_0(s^*| s,a) =1$, let $\bar{s}^* = (s^*,h-1,g)$.
Notice that $V_{\tkernel_0}^{*}(\bar{s}^*) = 0$ (for $h\geq 2$) since the opposite would contradict the assumption $Q_{\tkernel_0}^{*}(\bar{s},a) = 0$.
\begin{align*}
Q_{\tkernel}^{\pi}(\bar{s},a) &= 
\prob_{\tkernel} (\rho(S_h)=g | \bar{S}_0=\bar{s},A_0 = a; \pi)
=
\sum_{s'\in \mathcal{S}}
\prob_{\tkernel} (\rho(S_h)=g, \bar{S}_1=\bar{s}' | \bar{S}_0=\bar{s},A_0 = a; \pi)
\\
&=
\sum_{s'\in \mathcal{S}}
\prob_{\tkernel} (\rho(S_h)=g| \bar{S}_1=\bar{s}'; \pi)  \prob_{\tkernel} ( \bar{S}_1=\bar{s}' | \bar{S}_0=\bar{s},A_0 = a; \pi)
\\
&=
\prob_{\tkernel} (\rho(S_h)=g| \bar{S}_1=\bar{s}^*; \pi)  \tkernel(s^*|s,a)
+
\sum_{s'\in \mathcal{S}, s'\neq s^* }
\prob_{\tkernel} (\rho(S_h)=g| \bar{S}_1=\bar{s}'; \pi)  \tkernel(s'|s,a),
\end{align*}
where $\bar{s}'=(s',h-1,g)$. If $h=1$ the first summand is 0 and the second summand can be bounded by $\sum_{s'\in \mathcal{S}, s'\neq s^* } \tkernel(s'|s,a)
\leq \frac{\delta}{2}$. For $h\geq 2$ we get
$$
Q_{\tkernel}^{\pi}(\bar{s},a)
\leq
V_{\tkernel}^{\pi}(\bar{s}^*)
+
\sum_{s'\in \mathcal{S}, s'\neq s^* } \tkernel(s'|s,a)
\leq
\frac{\delta (h-1)}{2}
+
\frac{\delta}{2}.
$$
We show point 1. Recall that $
V_{\tkernel}^{\pi}(\bar{s}) =
\prob_{\tkernel} (\rho(S_h)=g | \bar{S}_0= \bar{s}; \pi)$. Reaching $g$ from $s$ in $h$ steps has a $0$ probability under $\tkernel_0$ because $V_{\tkernel_0}^*(\bar{s})=0$. Thus we decompose the probability by restricting to the event $$\Lambda_k=\left\{\text{The first transition with $0$ probability according to $\tkernel_0$ occurred at step $k$}\right\},$$ i.e.
\begin{align*}
\prob_{\tkernel} (\rho(S_h)=g | \bar{S}_0= \bar{s}; \pi)=
\sum_{k=1}^h \prob_{\tkernel} (\rho(S_h)=g, \Lambda_k | \bar{S}_0= \bar{s}; \pi).
\end{align*}
We bound the individual summands by
\begin{align*}
&\prob_{\tkernel} (\rho(S_h)=g, \Lambda_k | \bar{S}_0= \bar{s}; \pi)\\
&=
\sum_{\substack{\bar{s}'=(s',h-(k-1),g)\in \bar{S}_T\\
\bar{s}''=(s'',h-k,g)\in \bar{S}_T}}
\hspace*{-12mm}\prob_{\tkernel} (\bar{S}_{k-1}=\bar{s}' | \bar{S}_0= \bar{s}; \pi)
\prob_{\tkernel} (\bar{S}_{k}=\bar{s}'', \Lambda_k| \bar{S}_{k-1}=\bar{s}'; \pi)
\prob_{\tkernel} (\rho(S_h)=g | \bar{S}_{k}=\bar{s}''; \pi)
\\
&\leq
\sum_{\bar{s}'=(s',h-(k-1),g)\in \bar{S}_T}
\sum_{\bar{s}''=(s'',h-k,g)\in \bar{S}_T}
\prob_{\tkernel} (\bar{S}_{k-1}=\bar{s}' | \bar{S}_0= \bar{s}; \pi)
\prob_{\tkernel} (\bar{S}_{k}=\bar{s}'', \Lambda_k| \bar{S}_{k-1}=\bar{s}'; \pi)
\\
&=
\sum_{\bar{s}'=(s',h-(k-1),g)\in \bar{S}_T}
\prob_{\tkernel} (\bar{S}_{k-1}=\bar{s}' | \bar{S}_0= \bar{s}; \pi)
\sum_{a\in \mathcal{A}}
\sum_{\bar{s}''=(s'',h-k,g)\in \bar{S}_T}
\tkernel(s'',\Lambda_k|s',a')\pi(a'|\bar{s}')
\\
&\leq
\sum_{\bar{s}'=(s',h-(k-1),g)\in \bar{S}_T}
\prob_{\tkernel} (\bar{S}_{k-1}=\bar{s}' | \bar{S}_0= \bar{s}; \pi)
\sum_{a\in \mathcal{A}}
\frac{\delta}{2}\pi(a'|\bar{s}')
\leq
\frac{\delta}{2}.
\end{align*}

\end{proof}

\paragraph{Continuity of action-values in quotient topology:}
The continuity of $Q_{\tkernel}^{\pi}$ as a function of $(\pi,\tkernel)$ was proven in lemma~\ref{le:contoptbound} point 1. The following lemma establishes a weaker variant that only requires the relative continuity of policies (theorem~\ref{le:detcont}). It establishes the continuity of $Q_{\tkernel}^{\pi}$ in a factor topology (where the factorization occurs at the policy input). It will also be useful for the proof of the main theorem below to show this form of continuity for a fixed remaining horizon.
An explicit estimate, which will be useful when constructing error bounds later, is provided in the second part of the lemma.

\begin{lemma}
\label{le:contQfac}
(Continuity of action-values in quotient topology) 
Let $\{\mathcal{M}_{\tkernel} | \tkernel\in(\Delta \mathcal{S})^{\mathcal{S}\times\mathcal{A}} \}$ and $\{\bar{\mathcal{M}}_{\tkernel} |\tkernel\in (\Delta \mathcal{S})^{\mathcal{S}\times\mathcal{A}} \}$ be compatible families. Let $\tkernel_0$ be a deterministic kernel.
For all $\epsilon >0$ there exists $\delta >0$ such that if $\tkernel\in U_{\delta}(\tkernel_0)$ and if the policy $\pi$ satisfies $2(1-\pi(\oactions(\bar{s}')|\bar{s}')) < \delta$ for
all $\bar{s'}\in \theinterestingstates$ then $|Q^{\pi}_{\tkernel}(\bar{s},\cdot)- Q^*_{\tkernel_0}(\bar{s},\cdot)|\leq\epsilon$ for all $\bar{s}\in \theinterestingstates$. The statement can be made explicit by making use of the following recursive estimate: For all $\bar{s}\in\theinterestingstates$ with $h\geq2$ and all $a\in\mathcal{O}(\bar{s})$ it holds that
\begin{align*}
|Q_{\tkernel}^{\pi}(\bar{s},a)-Q_{\tkernel_0}^{*}(\bar{s},a) |
&\leq
\|\tkernel(\cdot|s,a)-\tkernel_0(\cdot|s,a)\|_1\\
&+\max_{\bar{s}'=(s',h-1,g) \in\theinterestingstates}
2(1-\pi(\oactions(\bar{s}')|\bar{s}'))
\\
&+
\max_{\substack{\bar{s}'=(s',h-1,g) \in\theinterestingstates,\\a' \in \oactions(\bar{s}')}} %
|Q_{\tkernel}^{\pi}(\bar{s}',a')-Q_{\tkernel_0}^{*}(\bar{s}', a')|.
\end{align*}
\end{lemma}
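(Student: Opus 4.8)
The plan is to first establish the explicit recursive estimate and then deduce the $\epsilon$--$\delta$ statement from it by induction on the remaining horizon $h$. Throughout I would use that, by~\eqref{eq:valueexpr}, both $V_{\tkernel}^{\pi}$ and $Q_{\tkernel}^{\pi}$ take values in $[0,1]$; that for $\bar{s}\in\theinterestingstates$ the optimal value satisfies $V^{*}_{\tkernel_0}(\bar{s})=1$ and $Q^{*}_{\tkernel_0}(\bar{s},a)=1$ exactly for $a\in\oactions(\bar{s})$, with $Q^{*}_{\tkernel_0}(\bar{s},a)=0$ otherwise (by Lemma~\ref{le:optactions}, Lemma~\ref{le:detopt} point 3, and Lemma~\ref{le:suppstab} point 5); and that for $a\in\oactions(\bar{s})$ the unique $\tkernel_0$-successor $s^{*}$ (with $\tkernel_0(s^{*}|s,a)=1$) yields $\bar{s}^{*}=(s^{*},h-1,g)\in\theinterestingstates$ by Lemma~\ref{le:suppstab} point 4.

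To prove the recursive estimate, I would fix $\bar{s}=(s,h,g)\in\theinterestingstates$ with $h\geq 2$ and $a\in\oactions(\bar{s})$, and expand $Q_{\tkernel}^{\pi}(\bar{s},a)=\sum_{s'}V_{\tkernel}^{\pi}(s',h-1,g)\,\tkernel(s'|s,a)$ against $Q^{*}_{\tkernel_0}(\bar{s},a)=V^{*}_{\tkernel_0}(\bar{s}^{*})=1$. Isolating the $\tkernel_0$-successor $s^{*}$ and adding and subtracting $V_{\tkernel}^{\pi}(\bar{s}^{*})$, the triangle inequality together with $V_{\tkernel}^{\pi}\leq 1$ gives
\[
|Q_{\tkernel}^{\pi}(\bar{s},a)-Q^{*}_{\tkernel_0}(\bar{s},a)|
\leq 2\bigl(1-\tkernel(s^{*}|s,a)\bigr)+\bigl|V_{\tkernel}^{\pi}(\bar{s}^{*})-V^{*}_{\tkernel_0}(\bar{s}^{*})\bigr|,
\]
where the first term equals $\|\tkernel(\cdot|s,a)-\tkernel_0(\cdot|s,a)\|_1$ exactly as computed in the proof of Lemma~\ref{le:suppstab}. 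It then remains to bound the value discrepancy at $\bar{s}^{*}$. Since $V^{*}_{\tkernel_0}(\bar{s}^{*})=1$ and $\sum_{a'}\pi(a'|\bar{s}^{*})=1$, I would write $1-V_{\tkernel}^{\pi}(\bar{s}^{*})=\sum_{a'}\pi(a'|\bar{s}^{*})\bigl(1-Q_{\tkernel}^{\pi}(\bar{s}^{*},a')\bigr)$ and split the sum over $\oactions(\bar{s}^{*})$ and its complement. On $\oactions(\bar{s}^{*})$ the summand equals $Q^{*}_{\tkernel_0}(\bar{s}^{*},a')-Q_{\tkernel}^{\pi}(\bar{s}^{*},a')$ and is bounded by the recursive $Q$-difference at horizon $h-1$; off $\oactions(\bar{s}^{*})$ it is at most $1$ and contributes the total off-support policy mass $1-\pi(\oactions(\bar{s}^{*})|\bar{s}^{*})$. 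Bounding $\bar{s}^{*}$ by the maximum over all $\bar{s}'=(s',h-1,g)\in\theinterestingstates$ and absorbing the slack into the stated factor $2(1-\pi(\oactions(\bar{s}')|\bar{s}'))$ then yields the claimed recursion.

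With the recursive estimate in hand, I would finish the $\epsilon$--$\delta$ statement by induction on $h$. Define $E_h:=\max_{\bar{s}=(s,h,g)\in\theinterestingstates,\,a\in\oactions(\bar{s})}|Q_{\tkernel}^{\pi}(\bar{s},a)-Q^{*}_{\tkernel_0}(\bar{s},a)|$. The base case $h=1$ is direct, since $Q_{\tkernel}^{\pi}((s,1,g),a)=\sum_{s'\in\rho^{-1}(\{g\})}\tkernel(s'|s,a)$ gives $E_1\leq 1-\tkernel(s^{*}|s,a)\leq\delta/2$. Writing $\Delta_{\tkernel}:=\max_{s,a}\|\tkernel(\cdot|s,a)-\tkernel_0(\cdot|s,a)\|_1<\delta$ and invoking the hypothesis $2(1-\pi(\oactions(\bar{s}')|\bar{s}'))<\delta$, the recursion reads $E_h\leq E_{h-1}+\Delta_{\tkernel}+\delta<E_{h-1}+2\delta$, whence $E_h\leq 2N\delta$ for all $h\leq N$. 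For the non-optimal actions $a\notin\oactions(\bar{s})$, where $Q^{*}_{\tkernel_0}(\bar{s},a)=0$, I would invoke Lemma~\ref{le:opthbound} point 2 to obtain $Q_{\tkernel}^{\pi}(\bar{s},a)\leq \delta N/2$. Choosing $\delta<2$ small enough that $2N\delta\leq\epsilon$ then yields $|Q_{\tkernel}^{\pi}(\bar{s},\cdot)-Q^{*}_{\tkernel_0}(\bar{s},\cdot)|\leq\epsilon$ for every $\bar{s}\in\theinterestingstates$ and every action.

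The main obstacle, and the point at which the earlier structural lemmata become indispensable, is ensuring the induction genuinely closes over $\theinterestingstates$: the recursion descends from $\bar{s}$ to its $\tkernel_0$-successor $\bar{s}^{*}$, and one must know both that $\bar{s}^{*}\in\theinterestingstates$ (so that $V^{*}_{\tkernel_0}(\bar{s}^{*})=1$ and the hypothesis on $\pi$ applies there) and that only optimal actions at $\bar{s}^{*}$ contribute a recursive $Q$-term while non-optimal actions are controlled purely through policy mass. This is precisely what Lemma~\ref{le:suppstab} points 4 and 5 provide. A secondary subtlety to handle with care is that the recursion is established only for optimal actions, so the non-optimal case must be dispatched separately via Lemma~\ref{le:opthbound}; the two cases together cover all of $\mathcal{A}$ and deliver the stated uniform bound.
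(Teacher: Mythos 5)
Your proposal is correct and follows essentially the same route as the paper's proof: induction on the remaining horizon, a Bellman-recursion expansion of $Q_{\tkernel}^{\pi}(\bar{s},a)-Q_{\tkernel_0}^{*}(\bar{s},a)$ split by the triangle inequality into a kernel-perturbation term, a policy-mass term, and a recursive $Q$-difference term, with lemma~\ref{le:suppstab} points 4 and 5 closing the induction over $\theinterestingstates$ and lemma~\ref{le:opthbound} dispatching non-optimal actions. The only (cosmetic) difference is that you route the middle term through $V^{*}_{\tkernel_0}(\bar{s}^{*})=1$ and obtain $1-\pi(\oactions(\bar{s}^{*})|\bar{s}^{*})$ directly, whereas the paper compares $\pi$ to a dominating optimal policy $\pi^{*}$ in total variation to produce the factor $2$; both yield the stated estimate.
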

In fact, we will prove the following more detailed statement. For all remaining horizons $h$, $1 \leq h \leq N$ and $\epsilon >0$ there exists $\delta >0$ such that if $\tkernel\in U_{\delta}(\tkernel_0)$ and if the policy $\pi$ satisfies $2(1-\pi(\oactions(\bar{s}')|\bar{s}')) < \delta$ for
all $\bar{s'}=(s',h',g')\in \theinterestingstates$ with $h'<h$ then $|Q^{\pi}_{\tkernel}(\bar{s},\cdot)- Q^*_{\tkernel_0}(\bar{s},\cdot)|\leq\epsilon$ for all $\bar{s}\in \theinterestingstates$ with remaining horizon $h$.

\begin{proof}
\textit{Base case ($h=1$):} In this case
$
Q^{\pi}_{\tkernel}((s,1,g),a) = \sum_{s'\in \rho^{-1}(\{g\})}
\tkernel(s'|s,a) 
$
is continuous in $\tkernel$ and independent of $\pi$.

\textit{Induction:} Now assume the statement holds for $h-1 \geq 1$ and let $\pi^*$ be an optimal policy for $\lambda_0$. Without loss of generality
$\pi^*(a|\bar{s}') \geq \pi(a|\bar{s}')$ for all $a \in \oactions(\bar{s}')$, 
$\bar{s}'\in \theinterestingstates$. For all $\bar{s} = (s,h,g) \in \theinterestingstates$ and $a\in \oactions(\bar{s})$ it holds that
\begin{align}
&|Q^{\pi}_{\tkernel}(\bar{s},a) - Q^*_{\tkernel_0}(\bar{s},a))|
=
\Bigg|
\sum_{s'\in\bar{\mathcal{S}}}
\tkernel(s'|s,a)
\sum_{a'\in\mathcal{A}}
\pi(a'|s',h-1,g)
Q_{\tkernel}^{\pi}((s',h-1,g),a')
\nonumber\\
&\quad-
\sum_{s'\in\bar{\mathcal{S}}}
\tkernel_0(s'|s,a)
\sum_{a'\in\mathcal{A}}
\pi^*(a'|s',h-1,g)
Q_{\tkernel_0}^*((s',h-1,g),a')
\Bigg|
\nonumber\\
&\leq
\sum_{s'\in\bar{\mathcal{S}}}
|\tkernel(s'|s,a)
-
\tkernel_0(s'|s,a)
|
\sum_{a'\in\mathcal{A}}
\pi(a'|s',h-1,g)
Q_{\tkernel}^{\pi}((s',h-1,g),a')
\nonumber\\
&\quad+
\sum_{s'\in\bar{\mathcal{S}}}
\tkernel_0(s'|s,a)
\sum_{a'\in\mathcal{A}}
|\pi(a'|s',h-1,g)
-
\pi^*(a'|s',h-1,g)|
Q_{\tkernel}^{\pi}((s',h-1,g),a')
\nonumber\\
&\quad+
\sum_{s'\in\bar{\mathcal{S}}}
\tkernel_0(s'|s,a)
\sum_{a'\in\mathcal{A}}
\pi^*(a'|s',h-1,g)
|
Q_{\tkernel}^{\pi}((s',h-1,g),a')
-
Q_{\tkernel_0}^*((s',h-1,g),a')
|
\nonumber\\
&\leq
\|\tkernel(\cdot|s,a)
-
\tkernel_0(\cdot|s,a)
\|_1
+
2(1-\pi(\oactions(\bar{s}'')|\bar{s}'')
+
\max_{a \in \oactions(\bar{s}'')}
|
Q_{\tkernel}^{\pi}(\bar{s}'',a')
-
Q_{\tkernel_0}^*(\bar{s}'',a')
|\\
\nonumber
&\leq
\delta
+
\delta
+
\|
Q_{\tkernel}^{\pi}(\bar{s}'',\cdot)-
Q_{\tkernel_0}^*(\bar{s}'',\cdot)
\|_{\max}\nonumber
\end{align}
where we made use of the fact that if $\tkernel_0(\cdot|s,a)$ is deterministic then there is exactly one $s''$, $\bar{s}'' =(s'',h-1,g)$ with $\tkernel_0(s''|s,a) = 1$ and $0$ otherwise. The assumption $\|\tkernel(s'|s,a)-\tkernel_0(s'|s,a)\|_1 \leq \delta$ is used in the last line. Further by lemma~\ref{le:suppstab} point 4. $\bar{s}''\in\theinterestingstates$ and it follows that
$
\sum_{a'\in\mathcal{A}}
|\pi(a'|\bar{s}'')-\pi^*(a'|\bar{s}'')| =
2
\sum_{a'\in \oactions(\bar{s}'')} \pi^*(a'|\bar{s}'')-\pi(a'|\bar{s}'')
=2(1-\pi(\oactions(\bar{s}'')|\bar{s}'') < \delta
$ since $Q_{\tkernel}^{\pi}((s',h-1,g),a') \leq 1$.
Further we used that $\pi^*(a'|\bar{s}'')$ is zero for $a'\notin \oactions(\bar{s}'')$. It follows from the induction assumption that
$|
Q_{\tkernel}^{\pi}((\bar{s}''),a')-
Q_{\tkernel_0}^*((\bar{s}''),a')
| \rightarrow 0$ for $\delta \rightarrow 0$ and the right hand side goes to $0$. To complete the induction we prove continuity also in the remaining case $a\notin \oactions(\bar{s})$. For all $\bar{s} = (s,h,g) \in \theinterestingstates$
and $a\notin \oactions(\bar{s})$ it holds
$$
|Q^{\pi}_{\tkernel}(\bar{s},a) - Q^*_{\tkernel_0}(\bar{s},a))|
= Q^{\pi}_{\tkernel}(\bar{s},a)
\leq
\frac{\delta N}{2} \rightarrow 0,\ \delta \rightarrow 0
,
$$
where we used $Q^*_{\tkernel_0}(\bar{s},a)) = 0$ by lemma~\ref{le:suppstab} point 5.\ and lemma~\ref{le:opthbound}. This completes the induction.
\end{proof}

\paragraph{Visitation probabilities are separated  from $0$:}
The assumption $\theinterestingstates \subset \supp \bar{\mu}$ allows one to bound state visitation probabilities in \eUDRL{} recursion rewrites (compare lemma~\ref{le:recrewrites}).
\begin{lemma}
\label{le:alpha}
{(Lower bound on visitation probabilities)} Let $\{\mathcal{M}_{\tkernel} | \tkernel\in(\Delta \mathcal{S})^{\mathcal{S}\times\mathcal{A}} \}$ and $\{\bar{\mathcal{M}}_{\tkernel} |\tkernel\in (\Delta \mathcal{S})^{\mathcal{S}\times\mathcal{A}} \}$ be compatible families. Let $\tkernel_0$ be a deterministic kernel. Assume that $\theinterestingstates \subset \supp \bar{\mu}$, where $\bar{\mu}$ denotes the
CE's initial distribution. Then for all $\bar{s} = (s,h,g) \in \theinterestingstates$, {all transition kernels $\tkernel$ and all policies $\pi$} it holds that
$$
\prob_{\tkernel}(H_0^{\Sigma}=h,G_0^{\Sigma}=g |
S_0^{\Sigma}=s,l(\Sigma)=h;\pi) > \alpha,
$$
where
$$
\alpha = \frac{2}{N(N+1)} \min_{\bar{s}\in \theinterestingstates} \bar{\mu}(\bar{s}) > 0.
$$
\end{lemma}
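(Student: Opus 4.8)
The plan is to express the conditional probability as a ratio of two marginals of the segment distribution and to bound numerator and denominator separately, isolating the fixed initial‑distribution mass $\bar\mu(\bar s)$ in the numerator. Fix $\bar s=(s,h,g)\in\theinterestingstates$ and abbreviate
$$
P_{\mathrm{num}}:=\prob_\tkernel(\stag{S}_0=s,\stag{H}_0=h,\stag{G}_0=g,l(\Sigma)=h;\pi),\qquad
P_{\mathrm{den}}:=\prob_\tkernel(\stag{S}_0=s,l(\Sigma)=h;\pi),
$$
so that the quantity to be bounded equals $P_{\mathrm{num}}/P_{\mathrm{den}}$. Since $\theinterestingstates\subset\supp\bar\mu$ by assumption we have $\bar\mu(\bar s)>0$, and well‑definedness of the conditional will follow a posteriori from $P_{\mathrm{den}}\geq P_{\mathrm{num}}>0$.

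First I would unfold the numerator using the explicit summation form of the segment distribution. Summing $\prob(\Sigma=\sigma;\pi)$ over all segments $\sigma$ with $(l(\sigma),s_0^\sigma,h_0^\sigma,g_0^\sigma)=(h,s,h,g)$ and arbitrary intermediate components, the inner sum over $a_0^\sigma,s_1^\sigma,\dots,s_h^\sigma$ marginalizes (telescopes) and, after exchanging the order of summation, yields
$$
P_{\mathrm{num}}=c^{-1}\sum_{t\leq N-h}\prob_\tkernel(S_t=s,H_t=h,G_t=g;\pi).
$$
All summands are non‑negative, so retaining only the $t=0$ term gives $P_{\mathrm{num}}\geq c^{-1}\,\prob_\tkernel(S_0=s,H_0=h,G_0=g;\pi)=c^{-1}\bar\mu(\bar s)$, the last equality being the definition of the CE initial distribution. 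Note that this extracted term is independent of both $\tkernel$ and $\pi$.

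Next I would bound the denominator trivially by $P_{\mathrm{den}}\leq 1$ (it is a probability) and invoke the normalization bound \eref{eq:cupperbound}, $c\leq N(N+1)/2$, hence $c^{-1}\geq 2/(N(N+1))$. Combining these with $\bar\mu(\bar s)\geq\min_{\bar s'\in\theinterestingstates}\bar\mu(\bar s')$ gives
$$
\frac{P_{\mathrm{num}}}{P_{\mathrm{den}}}\;\geq\;c^{-1}\bar\mu(\bar s)\;\geq\;\frac{2}{N(N+1)}\bar\mu(\bar s)\;\geq\;\frac{2}{N(N+1)}\min_{\bar s'\in\theinterestingstates}\bar\mu(\bar s')=\alpha,
$$
and $\alpha>0$ because the minimum ranges over a finite set of states all lying in $\supp\bar\mu$. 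Because both $\bar\mu(\bar s)$ and the bound on $c$ are independent of $\tkernel$ and $\pi$, the estimate is automatically \emph{uniform} over all transition kernels and all policies, exactly as the lemma requires.

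The computation is otherwise routine; the two points needing care are (i) performing the marginalization of the segment distribution correctly, so that precisely the term $\bar\mu(\bar s)$ is recovered at $t=0$, and (ii) upgrading the final $\geq$ to the strict inequality in the statement. For (ii) I would observe that the chain can be equalities only if simultaneously $P_{\mathrm{den}}=1$ and $c=N(N+1)/2$; but $P_{\mathrm{den}}=1$ forces \emph{every} segment to start at $s$ with length exactly $h$, which is impossible once segments of any other length or starting state carry positive probability, so at least one estimate is strict. Equivalently, tracking $c\cdot P_{\mathrm{den}}=\sum_{t\leq N-h}\prob_\tkernel(S_t=s,H_t\geq h;\pi)\leq N-h+1$ and comparing with $N(N+1)/2$ sharpens the bound to a strict inequality away from the degenerate single‑segment situation. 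I expect this strictness bookkeeping, rather than the main estimate, to be the only delicate part of the argument.
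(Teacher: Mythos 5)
Your proof follows essentially the same route as the paper's: expand the relevant marginal of the segment distribution, keep only the $t=0$ term to extract $c^{-1}\bar\mu(\bar s)$, bound the conditioning event's probability by $1$ and $c^{-1}$ from below by $2/(N(N+1))$ via \eref{eq:cupperbound}. The only divergence is your closing discussion of upgrading $\geq$ to $>$, which the paper itself glosses over (its own proof concludes with $\geq \alpha$), so that extra bookkeeping is not part of, and not needed to reproduce, the paper's argument.
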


\begin{proof}
Write $v(s,h,g) := \prob_{\tkernel}(H_0^{\Sigma}=h,G_0^{\Sigma}=g,
 S_0^{\Sigma}=s,l(\Sigma)=h;\pi)$. By the definition of the segment distribution we obtain that
$$
\begin{aligned}
v(s,h,g)
&=
\sum_{\sigma:l(\sigma)=h, \bar{s}_0^{\sigma} = (s,h,g)}
c_{\tkernel,\pi}^{-1} \sum_{t<N-l(\sigma)}
\prob_{\tkernel} (\bar{S}_{t+l(\sigma)}=\bar{s}_{l(\sigma)}^{\sigma},
\ldots,\bar{S}_t = \bar{s}_0^{\sigma};\pi)
\\
&\geq
\sum_{\sigma:l(\sigma)=h, \bar{s}_0^{\sigma} = (s,h,g)}
c_{\tkernel,\pi}^{-1}
\prob_{\tkernel} (\bar{S}_{h}=\bar{s}_{h}^{\sigma},
\ldots,\bar{S}_0 = \bar{s}_0^{\sigma};\pi)
\\
&=
\sum_{\sigma:l(\sigma)=h, \bar{s}_0^{\sigma} = (s,h,g)}
c_{\tkernel,\pi}^{-1}
\prob_{\tkernel} (\bar{S}_{h}=\bar{s}_{h}^{\sigma},
\ldots,\bar{S}_1 = \bar{s}_1^{\sigma}, A_0=a_0^{\sigma}|\bar{S}_0 = \bar{s}_0^{\sigma};\pi)
\bar{\mu}(\bar{s}_0^{\sigma})
\\
&=
c_{\tkernel,\pi}^{-1}\bar{\mu}(\bar{s}_0^{\sigma}).
\end{aligned}
$$
In the last equation we made use of the fact that by summing over all possible segments with the starting condition $(s,h,g)$ we are actually summing
over all $h-1$ tuples of action and extended state pairs compatible with the CE framework, leaving out only tuples whose probability is $0$. The latter are tuples where the remaining horizon does not decrease by $1$ or where the goal is not constant. By equation~\eref{eq:cupperbound} we have $c_{\tkernel,\pi}^{-1} \geq \frac{2}{N(N+1)}$.
We choose $\alpha = \frac{2}{N(N+1)} \min_{\bar{s}\in \theinterestingstates} \bar{\mu}(\bar{s})>0$ and conclude for all $(s,h,g) \in \theinterestingstates$ that
\begin{align*}
\prob_{\tkernel}(H_0^{\Sigma}=h,G_0^{\Sigma}=g |
S_0^{\Sigma}=s,l(\Sigma)=h;\pi)
&=
\frac{v(s,h,g)}
{
\prob_{\tkernel}(S_0^{\Sigma}=s,l(\Sigma)=h;\pi)
}
\geq
\frac{v(s,h,g)}
{1}\\
&\geq
c_{\tkernel,\pi}^{-1}\bar{\mu}(s,h,g)
\geq
\frac{2}{N(N+1)} \min_{\bar{s}\in \theinterestingstates} \bar{\mu}(\bar{s})
=
\alpha.
\end{align*}
\end{proof}

\paragraph{Method for bounding the \eUDRL{} recursion:}

Our analysis of \eUDRL{}'s convergence draws on the properties of 
dynamical systems induced by certain rational functions.
Specifically, we proceed describing the \eUDRL{} recursion in terms of a multi-dimensional rational function $F:D\rightarrow\mathbb{R}$ on a domain $D=\prod_{k<K}[y_k,1]$, $y=(y_1,...,y_{K-1})\in[0,1]^K$. The recursive application of this function leads to a set of accumulation points that, in turn, is bounded by studying the fixed points of a simplified dynamical system that emerges through the iterative application of a scalar rational function. 

\begin{remark}\label{re:monotonicity} (Monotonicity) The function $g:[0,1]\rightarrow\mathbb{R}$ with
$g(x) := \frac{ax+b}{ax+b+c},
$
with $a,b\geq 0$, $c>0$ is non-decreasing.
\end{remark}

Bounding of $F$ proceeds in $K$ steps, where in each step $k<K$ the following estimates are used: \textit{1.} For all $z \in D$ the function $x_{k} \mapsto F(y_0,y_1,\ldots,y_{k-1},x_k,z_{k+1},\ldots,z_{K-1})$ has the a form specified in the remark. \textit{2.}~In conclusion for all $z \in D$ it holds that $ F(y_0,y_1,\ldots,y_{k-1},z_k,z_{k+1},\ldots,z_{K-1})\geq F(y_0,y_1,\ldots,y_{k-1},y_k,z_{k+1},\ldots,z_{K-1})$. Applying this reasoning along all coordinates leads to the estimate $F(z) \geq F(y)$ for all $z\in D$. $F(y)$, in turn, is fully described by a scalar rational function. The following lemma summarizes the features of a dynamical system that arises through the iterative application of this function.
Given a function $f:X\rightarrow X$ on a set $X$ write 
$f^{\circ n}:X\rightarrow X$ for the $n$-fold composition of $f$ with itself, i.e.~$f^{\circ n}(x) = (f\circ f \circ \ldots \circ f)(x)$.

\begin{lemma} (f-lemma)
\label{le:f}
Let $\gamma\in(0,1)$ and $f_{\gamma}:(0,1] \rightarrow (0,1]$ be defined as
$
f_{\gamma}(x) =
\frac{x}
{x + \gamma}
$. Then the following assertions hold.
\begin{enumerate}
\item $f_{\gamma}$ is strictly monotonically increasing.
\item $f_{\gamma}$ has a unique fixed point
$
x^*(\gamma) = 1-\gamma
$
and
$$
\begin{aligned}
f_{\gamma}(x) &> x \quad \text{for}\; x < x^*(\gamma),
\\
f_{\gamma}(x) &< x \quad \text{for}\; x > x^*(\gamma).
\end{aligned}
$$
\item For all $x \in (0,1]$ the iterated application of $f_\gamma$ converges point-wise, $f_{\gamma}^{\circ n} (x) \xrightarrow{n\rightarrow\infty} x^*(\gamma)$.

\item Assume a sequence $(y_n)_{n\geq0}$, $y_n\in [0,1]$, $y_0>0$ such that for all $n\geq 0$ we have
$y_{n+1} \geq f_{\gamma}(y_n)$.
Then $y_n \geq f_{\gamma}^{\circ n}(y_0)$ and $\liminf_n y_n \geq x^*(\gamma)$.
\end{enumerate}
\end{lemma}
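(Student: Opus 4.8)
The plan is to treat the four assertions in order, with points 1 and 2 reducing to elementary algebra and points 3 and 4 following from the standard theory of monotone fixed-point iterations. Points 1 and 2 I would dispatch immediately. For point 1, observe that $f_{\gamma}(x) = \frac{x}{x+\gamma}$ has exactly the form of the rational function in Remark~\ref{re:monotonicity} with $a=1$, $b=0$, $c=\gamma>0$, so $f_{\gamma}$ is non-decreasing; strictness follows since $f_{\gamma}(x) = 1 - \frac{\gamma}{x+\gamma}$ and $\frac{\gamma}{x+\gamma}$ is strictly decreasing in $x$ whenever $\gamma>0$. For point 2, solving $f_{\gamma}(x)=x$ gives $x(x+\gamma-1)=0$, whose only root in $(0,1]$ is $x^*(\gamma)=1-\gamma$ (which lies in $(0,1)$ because $\gamma\in(0,1)$). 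The strict inequalities come from the identity $f_{\gamma}(x)-x = \frac{x(x^*(\gamma)-x)}{x+\gamma}$, whose sign equals that of $x^*(\gamma)-x$ since $x>0$ and $x+\gamma>0$.

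For point 3, I would argue via the monotone convergence theorem after a case split on the position of $x$ relative to $x^*(\gamma)$. If $x=x^*(\gamma)$ the orbit is constant. If $x<x^*(\gamma)$, then point 2 gives $f_{\gamma}(x)>x$, while strict monotonicity (point 1) together with $f_{\gamma}(x^*(\gamma))=x^*(\gamma)$ gives $f_{\gamma}(x)<x^*(\gamma)$; hence the orbit $(f_{\gamma}^{\circ n}(x))_{n}$ is strictly increasing and bounded above by $x^*(\gamma)$, so it converges to some limit $L\le x^*(\gamma)$. Continuity of $f_{\gamma}$ forces $L$ to be a fixed point, and since $L\ge x>0$, the uniqueness in point 2 yields $L=x^*(\gamma)$. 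The case $x>x^*(\gamma)$ is symmetric, producing a decreasing orbit bounded below by $x^*(\gamma)$ with the same limit.

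For point 4, I would first establish $y_n \ge f_{\gamma}^{\circ n}(y_0)$ by induction. Since $y_0>0$ and $f_{\gamma}$ maps $(0,1]$ into $(0,1]$, every iterate $f_{\gamma}^{\circ n}(y_0)$ is positive; the inductive hypothesis $y_n \ge f_{\gamma}^{\circ n}(y_0)$ then keeps $y_n$ positive and inside the domain of $f_{\gamma}$, so that applying the increasing map $f_{\gamma}$ gives $f_{\gamma}(y_n)\ge f_{\gamma}^{\circ(n+1)}(y_0)$, and the recursion hypothesis $y_{n+1}\ge f_{\gamma}(y_n)$ closes the step. Taking $\liminf$ and invoking point 3 (applicable since $y_0\in(0,1]$) gives $\liminf_n y_n \ge \lim_n f_{\gamma}^{\circ n}(y_0) = x^*(\gamma)$.

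The lemma is essentially routine, and I expect no serious obstacle; the only point demanding genuine care is the convergence in point 3, where one must verify that each orbit remains on a fixed side of $x^*(\gamma)$ so that monotonicity and boundedness may both be invoked, and that the resulting limit, being a \emph{positive} fixed point, must coincide with $x^*(\gamma)$ rather than with the spurious root $0$. In point 4 the subtlety is purely bookkeeping: ensuring positivity of the iterates throughout the induction so that $f_{\gamma}$ is defined and its monotonicity applies at every step.
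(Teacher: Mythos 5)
Your proof is correct and follows essentially the same route as the paper's: elementary algebra for points 1 and 2, a case split on the position of the orbit relative to $x^*(\gamma)$ combined with monotone convergence, continuity, and uniqueness of the positive fixed point for point 3, and the same induction for point 4. The explicit identity $f_{\gamma}(x)-x=\frac{x(x^*(\gamma)-x)}{x+\gamma}$ is a slightly cleaner way to get the sign statements in point 2, but the argument is otherwise identical in substance.
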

Figure~\ref{fig:f} illustrates the dynamical system induced by iterative application of $f_{\gamma}$.
\begin{proof}
\textit{1.} The derivative of $f$ is positive.

\textit{2.} Solving the equation $f_{\gamma}(x) = x$ shows that $x^*=1-\gamma$ is a unique fixed point. The remaining statements follow by plugging $f$ into the respective inequalities.

\textit{3.} Write $I_1 = (0,x^*]$ and $I_2 = [x^*,1]$ and  $f_1 = f\restriction_{I_1}$, $f_2 = f\restriction_{I_2}$. $f$ is increasing so that $f(I_1) \leq f(x^*) = x^*$ and
thus $f(I_1) \subset I_1$, i.e. $f_1:I_1 \rightarrow I_1$.
Similarly, $f_2:I_2 \rightarrow I_2$. Fix $x\in I_1$ then the sequence $f^{\circ n}(x) = f_1^{\circ n}(x)$ is bounded from above by $x^*$. The sequence is also increasing by point 2. Thus the sequence converges to a limit in ${I}_{1}$.
Since $f_1$ is continuous on $I_1$, the limit has to be a fixed point. Since $x^*$ is the only fixed point we conclude $f_1^{\circ n}(x) \xrightarrow{n\rightarrow\infty} x^*$.
Similarly, for $x\in I_2$ we get $f^{\circ n}(x) = f_2^{\circ n}(x) \xrightarrow{n\rightarrow\infty} x^*$.

\textit{4.} We proceed by induction. For $n=0$ we have $y_0 \geq f^{\circ 0}(y_0)= y_0$. Assume that
$
y_n \geq f^{\circ n}(y_0)
$. It follows that $y_n>0$. Since $f$ is increasing $f(y_n) \geq f(f^{\circ n}(y_0))$. Making use of
$
y_{n+1} \geq f(y_n)
$ yields
$
y_{n+1} \geq f(y_n) \geq f(f^{\circ n}(y_0))
$, which concludes the induction. $\liminf_n y_n \geq x^*$ then follows directly from point 3.
\end{proof}
\begin{figure}
  \begin{subfigure}{0.48\linewidth}
		\includegraphics[width=\linewidth]{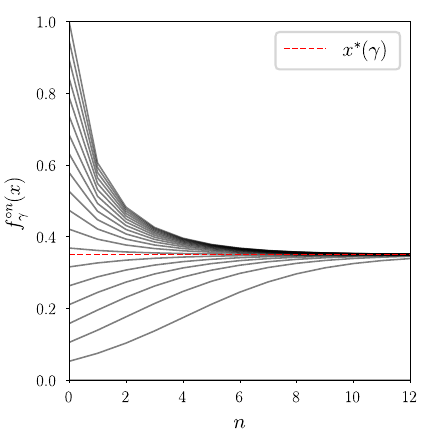}
		\caption{$\gamma=0.65$}
	\end{subfigure}
    \begin{subfigure}{0.48\linewidth}
		\includegraphics[width=\linewidth]{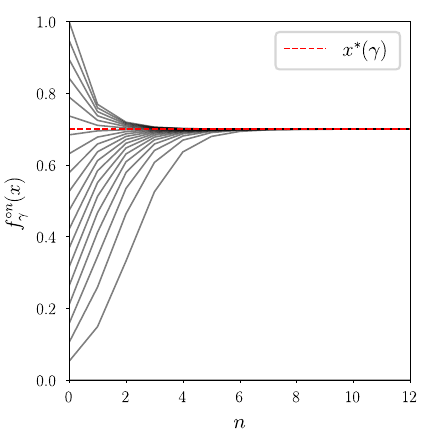}
		\caption{$\gamma=0.3$}
	\end{subfigure}
    \caption{Illustration of the dynamical system induced by iterative application of  $f_{\gamma}$. The figure shows the dependence of $f_{\gamma}^{\circ n}(x)$ given initial condition $x$ on iteration $n$.}
    \label{fig:f}
\end{figure}

\subsubsection{The main theorem}

We describe the continuity of \eUDRL{}-generated policies when the number of iterations approaches infinity. We have already discussed that the set of accumulation points is not empty. We proved in theorem~\ref{le:detopt} that for every finite iteration the \eUDRL{}-generated policy is optimal if the transition kernel $\tkernel_0$ is deterministic.  Since for all states $\bar{s}\in \theinterestingstates$ and actions $a \in \mathcal{A}\setminus\oactions(\bar{s})$ and $n \geq 1$ it holds that $\pi_n(a|\bar{s})=0$ it also follows that $\pi_n(a|\bar{s})\rightarrow 0$ as $n\rightarrow \infty$. As a consequence the set of accumulation points of the sequence of policies for a deterministic kernel is a subset of $\{ \pi_{\tkernel_0}^* \mid \pi_{\tkernel_0}^*\; \text{is an optimal policy for} \; \tkernel_0 \}$.
\label{ssse:specmutheorem}
\begin{theorem}
\label{le:limdetcont}
(Relative continuity of \eUDRL{} limit policies in deterministic kernels -- the case $\theinterestingstates \subset \supp \bar{\mu}$) Let $\{\mathcal{M}_{\tkernel} | \tkernel\in(\Delta \mathcal{S})^{\mathcal{S}\times\mathcal{A}} \}$ and $\{\bar{\mathcal{M}}_{\tkernel} |\tkernel\in (\Delta \mathcal{S})^{\mathcal{S}\times\mathcal{A}} \}$ be compatible families. Let $\tkernel_0\in(\Delta\mathcal{S})^{\mathcal{S}\times\mathcal{A}}$ be a deterministic kernel and let $(\pi_{n,\tkernel})_{n\geq0}$ be a sequence of \eUDRL{}-generated policies with initial condition $\pi_0 > 0$ and transition kernel $\tkernel$. Assume $\theinterestingstates \subset \supp \bar{\mu}$. Then for all $\pi_0>0$ the following holds:
\begin{enumerate}
\item Let $\mathcal{L}(\pi_0,\tkernel)$ denote the set of accumulation points of $(\pi_{n,\tkernel})_{n\geq0}$.
Then any function $u: (\pi_0,\tkernel)\mapsto u(\pi_0,\tkernel) \in \mathcal{L}(\pi_0,\tkernel)$ is relatively continuous in $\tkernel$
at $\tkernel_0$ on $\theinterestingstates$, i.e.~for all $\bar{s} \in \theinterestingstates$
$$[u(\pi_0,\tkernel)](\cdot|\bar{s}) \xrightarrow{\oactions(\bar{s})} \pi_{\tkernel_0}^*(\cdot|\bar{s})\quad\textnormal{as}\quad\tkernel\rightarrow \tkernel_0.$$
\item Let $\alpha>0$ be chosen as in lemma~\ref{le:alpha} and let  $\beta,\tilde{\beta} \in (0,1)$ be such that $\gamma = \frac{\tilde{\beta}}{(1-\beta)\alpha}<1$ and let
$$U_{\delta}(\tkernel_0) = \{ \tkernel | \max_{(s,a)\in \mathcal{S}\times\mathcal{A}}
\| \tkernel(\cdot|s,a) - \tkernel_0(\cdot|s,a)\|_1 < \delta \}.$$
There exists $\delta >0$ so that for all $\tkernel \in U_{\delta}(\tkernel_0)$ and all $\bar{s}\in \theinterestingstates$ it holds that

$$
\liminf_n \pi_{n,\tkernel}(\oactions(\bar{s})|\bar{s}) \geq x^*(\gamma)\quad\textnormal{and}\quad x^*(\gamma) \rightarrow 1\quad\textnormal{as}\quad (\beta,\tilde{\beta}) \rightarrow 0,
$$
where $x^*(\gamma)=1-\gamma$ (cf. lemma~\ref{le:f}). As a consequence for all $\bar{s}\in \theinterestingstates$ it holds that
$$
\liminf_n \pi_{n,\tkernel}(\oactions(\bar{s})|\bar{s}) \rightarrow 1\quad\textnormal{as}\quad\tkernel \rightarrow \tkernel_0.
$$
\end{enumerate}
\end{theorem}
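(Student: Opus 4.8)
The plan is to reduce the first assertion to the second and then prove the quantitative $\liminf$ bound by turning the \eUDRL{} recursion into a scalar dynamical system governed by $f_\gamma$. For the reduction, observe that since $\pi\mapsto\pi(\oactions(\bar{s})\mid\bar{s})$ is continuous and $u(\pi_0,\tkernel)$ is an accumulation point of $(\pi_{n,\tkernel})_{n\geq0}$, the value $[u(\pi_0,\tkernel)](\oactions(\bar{s})\mid\bar{s})$ is an accumulation value of the scalar sequence $\pi_{n,\tkernel}(\oactions(\bar{s})\mid\bar{s})$, hence lies between $\liminf_n \pi_{n,\tkernel}(\oactions(\bar{s})\mid\bar{s})$ and $1$. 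Thus if part~2 gives $\liminf_n \pi_{n,\tkernel}(\oactions(\bar{s})\mid\bar{s})\to 1$ as $\tkernel\to\tkernel_0$, then $[u(\pi_0,\tkernel)](\oactions(\bar{s})\mid\bar{s})\to1$, so the complementary mass $\sum_{a\notin\oactions(\bar{s})}[u(\pi_0,\tkernel)](a\mid\bar{s})\to0$. Since $\pi^*_{\tkernel_0}(\oactions(\bar{s})\mid\bar{s})=1$, this is exactly the relative convergence of Definition~\ref{de:rellimit}, establishing part~1.

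For part~2, I would start from the recursion rewrite of Lemma~\ref{le:recrewrites}, point~1, splitting $\pi_{n+1}(\oactions(\bar{s})\mid\bar{s})=\frac{P}{P+R}$, where $P=\sum_{a\in\oactions(\bar{s})}u(a)$ and $R=\sum_{a\notin\oactions(\bar{s})}u(a)$ with $u$ as in \eqref{eq:udef}. For the lower bound on $P$, keep only the $h'=h,\,g'=g$ summand, apply Lemma~\ref{le:alpha} to bound the visitation factor below by $\alpha$, and apply Lemma~\ref{le:contQfac} to bound $Q^{\pi_n}_\tkernel(\bar{s},a)\ge 1-\beta$ for optimal $a$, yielding $P\ge (1-\beta)\alpha\,\pi_n(\oactions(\bar{s})\mid\bar{s})$. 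For the upper bound on $R$, note that $Q^*_{\tkernel_0}(\bar{s},a)=0$ for $a\notin\oactions(\bar{s})$ (Lemma~\ref{le:suppstab}, point~5, with Lemma~\ref{le:optactions}); then Lemma~\ref{le:contoptbound}, point~2, and Lemma~\ref{le:opthbound} give $u(a)\le Q^*_\tkernel(\bar{s},a)\le\delta N/2$, so $R\le|\mathcal{A}|\delta N/2$, which we force below $\tilde\beta$ by shrinking $\delta$. Writing $y_n:=\pi_n(\oactions(\bar{s})\mid\bar{s})$ and invoking Remark~\ref{re:monotonicity} to substitute these one-sided bounds \emph{jointly} into $\frac{P}{P+R}$ (treating $P$ as a single increasing variable and $R$ as a single decreasing one, rather than bounding numerator and denominator separately) produces $y_{n+1}\ge \frac{(1-\beta)\alpha y_n}{(1-\beta)\alpha y_n+\tilde\beta}=f_\gamma(y_n)$ with $\gamma=\frac{\tilde\beta}{(1-\beta)\alpha}$. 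Lemma~\ref{le:f}, point~4, then yields $\liminf_n y_n\ge x^*(\gamma)=1-\gamma$.

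The delicate point is that the bound $Q^{\pi_n}_\tkernel(\bar{s},a)\ge1-\beta$ from Lemma~\ref{le:contQfac} is only available once $\pi_n$ already puts mass close to $1$ on the optimal actions of the \emph{lower-horizon} states $(s',h-1,g)$ entering the Bellman expansion -- precisely what we are trying to prove. I would resolve this by an inner induction on the remaining horizon $h$. For $h=1$ the value $Q^{\pi_n}_\tkernel((s,1,g),a)=\sum_{s'\in\rho^{-1}(g)}\tkernel(s'\mid s,a)$ is policy-independent and continuous, so the $f_\gamma$-recursion holds for every $n$ and Lemma~\ref{le:f} gives $\liminf_n y_n\ge x^*(\gamma)$ outright (using $y_0>0$ from $\pi_0>0$). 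Assuming the $\liminf$ bound for all states of $\theinterestingstates$ at horizons $\le h$, the condition $2(1-\pi_n(\oactions(\bar{s}')\mid\bar{s}'))<\delta$ required by Lemma~\ref{le:contQfac} holds for all $n$ beyond some $n_0$ (simultaneously for the finitely many relevant lower states), so the $f_\gamma$-recursion holds for $n\ge n_0$ at horizon $h+1$; applying Lemma~\ref{le:f}, point~4, to the shifted sequence $(y_{n_0+m})_m$, which starts positive by Theorem~\ref{le:detcont}, point~3, propagates the bound to horizon $h+1$.

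The main obstacle, and the step requiring the most care, is making this feedback self-consistent: the horizon-$h$ guarantee that $\pi_n(\oactions(\bar{s}')\mid\bar{s}')$ is close to $1-\gamma$ must be strong enough to \emph{trigger} the Lemma~\ref{le:contQfac} hypothesis at horizon $h+1$, i.e.\ one needs $2\gamma<\delta(\beta)$, where $\delta(\beta)$ is the threshold returned by Lemma~\ref{le:contQfac} for target accuracy $\beta$. I would therefore fix $\beta$ first (determining $\delta(\beta)$), then choose $\tilde\beta$ small enough that $\gamma=\frac{\tilde\beta}{(1-\beta)\alpha}$ satisfies both $\gamma<1$ and $2\gamma<\delta(\beta)$, and finally take the kernel-neighborhood radius $\delta\le\delta(\beta)$ small enough that also $|\mathcal{A}|\delta N/2\le\tilde\beta$ and $\delta\le 2\beta$. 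With these choices the horizon induction closes at every level, giving $\liminf_n\pi_{n,\tkernel}(\oactions(\bar{s})\mid\bar{s})\ge x^*(\gamma)$ for all $\bar{s}\in\theinterestingstates$; since $\gamma$ can be driven to $0$ by shrinking $\tilde\beta$ (equivalently, by shrinking $\delta$, i.e.\ letting $\tkernel\to\tkernel_0$), the fixed point $x^*(\gamma)=1-\gamma$ tends to $1$, which completes part~2 and, through the reduction above, part~1.
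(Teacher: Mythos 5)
Your overall strategy coincides with the paper's: reduce point~1 to point~2, rewrite the recursion via lemma~\ref{le:recrewrites}, bound the visitation factor below by $\alpha$ (lemma~\ref{le:alpha}), bound $Q$ on optimal actions via lemma~\ref{le:contQfac} and on non-optimal actions via $Q^*_{\tkernel}\le\tilde\beta$, apply the monotonicity remark to get $y_{n+1}\ge f_\gamma(y_n)$, and close a horizon induction with lemma~\ref{le:f}. However, there is one concrete gap. To invoke lemma~\ref{le:f}, point~4, on the shifted sequence $(y_{n_0+m})_m$ you need $y_{n_0}=\pi_{n_0,\tkernel}(\oactions(\bar{s})|\bar{s})>0$ \emph{uniformly} over $\tkernel\in U_\delta(\tkernel_0)$ (and over $\bar{s}$), both to start the iteration and to extract an $n_0$ that works simultaneously for all kernels in the fixed neighborhood (the pointwise $\liminf$ from the lower-horizon induction hypothesis is not by itself uniform in $\tkernel$). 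You cite theorem~\ref{le:detcont}, point~3, for this positivity, but that result only separates $\pi_{n,\tkernel}$ from $0$ on neighborhoods $U_{\delta_n}(\tkernel_0)$ that \emph{shrink with $n$}, so it cannot be applied on the fixed $U_\delta(\tkernel_0)$ at the iteration index $n_0$ produced later in the argument. The paper addresses exactly this by proving a separate uniform separation statement, equation~\eqref{eq:pisep}: an induction showing $\pi_{n,\tkernel}(a|\bar{s})\ge(1-\tfrac{\delta}{2})^h(\eta')^h\alpha>0$ for all $n$, all $\tkernel\in U_\delta(\tkernel_0)$ and all optimal $a$, using a fixed optimal trajectory witnessing $Q^*_{\tkernel_0}(\bar{s},a)>0$ together with the visitation bound. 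Your proof needs this (or an equivalent) ingredient; without it the induction step does not close.

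A secondary, non-fatal deviation: you couple the parameters across horizon levels by demanding $2\gamma<\delta(\beta)$ for the \emph{same} $\gamma$ used in the target bound. The paper instead fixes the target pair $(\beta,\tilde\beta)$ arbitrarily (subject only to $\gamma<1$) and introduces auxiliary parameters $(\beta',\tilde\beta')$ at the lower horizons, chosen so that $x^*(\gamma')>1-\delta'/2$ triggers the hypothesis of lemma~\ref{le:contQfac}. Your coupling proves the $\liminf\ge x^*(\gamma)$ bound only for the restricted family of pairs satisfying $2\gamma<\delta(\beta)$, which is weaker than point~2 as stated (though it still suffices for the limiting conclusion $\liminf_n\pi_{n,\tkernel}(\oactions(\bar{s})|\bar{s})\to1$ and hence for point~1). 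Decoupling the parameters as the paper does repairs this at no extra cost.
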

At the core of the proof stands the analysis of dynamical systems and convergence induced by the iterative application of the rational function $f_\gamma(x)=\frac{x}{x+\gamma}$, see lemma~\ref{le:f}. Indeed we will show that for all horizons $N \geq h \geq 1$, all initial conditions $\pi_0>0$
and all $\beta,\tilde{\beta}\in (0,1)$ satisfying $\frac{\tilde\beta}{(1-\beta)\alpha}=\gamma<1$ 
there exists $n_0$ and $\delta >0$ such that for all $\tkernel \in U_{\delta}(\tkernel_0)$ and all $n>n_1\geq n_0$ and all $\bar{s}\in\theinterestingstates$ with remaining horizon $h$ it holds that
\begin{align}
\pi_{n,\tkernel}(\oactions(\bar{s})|\bar{s}) \geq
f_{\gamma}^{\circ (n-n_1)} (\pi_{n_1,\tkernel}(\oactions(\bar{s})|\bar{s}) )
.\label{eq:forMainTheorem}
\end{align}
To apply lemma~\ref{le:f} we need to ensure the positivity of the argument of $f_\gamma$. By theorem~\ref{le:detcont}, point 3.~we have that $\pi_{n,\tkernel}(\oactions(\bar{s})|\bar{s}) >0$ for all $n\geq 0$ for all $\tkernel \in U_{\delta_n}(\tkernel_0)$ for suitable $\delta_n >0$. However, we will require a slightly stronger statement that also ensures that $\pi_{n,\tkernel}(\oactions(\bar{s})|\bar{s}) >0$ 
for all $n\geq 0, \tkernel \in U_{2}(\tkernel_0), \bar{s} \in \theinterestingstates$ that we show in the course of the main theorem, see~equation~\eref{eq:pisep}.
Then by lemma~\ref{le:f} the right hand side converges to $x^*(\gamma)=1-\gamma$, which implies point 2.~of the theorem, see the proof for details. Furthermore point 1.~is an immediate consequence of point 2. We include point 1.~to showcase the relation to the notion of \lq\lq{}continuity of the set of accumulation points relative to the set of optimal actions\rq\rq{} that we mentioned earlier. Point 1.~says that any function $u$ that maps $(\pi_0,\tkernel)$ to an accumulation point is relatively continuous at $\tkernel_0$. Since the set of values of all such functions $u$ at  $(\pi_0,\tkernel)$ is exactly $\mathcal{L}(\pi_0,\tkernel)$ one can interpret 1.~as the relative
continuity of the set $\mathcal{L}(\pi_0,\tkernel)$.

\begin{proof}
It is obvious that point 2.~implies point 1.~since $\pi_{\tkernel_0}^*(\oactions(\bar{s})|\bar{s}) = 1$ on $\theinterestingstates$. Second, we prove that  equation~\eqref{eq:forMainTheorem} implies point 2.
Fix $\pi_0 >0$ and choose $\epsilon >0$. Since $x^*(\gamma) \rightarrow 1$ as
$\gamma \rightarrow 0$ (cf.\ lemma~\ref{le:f}) and $\gamma \rightarrow 0$ as $(\beta,\tilde{\beta}) \rightarrow 0$ we can pick $\beta,\tilde{\beta} >0$ so that
$x^*(\gamma) > 1-\epsilon$. 
Since equation~\eqref{eq:forMainTheorem} holds for every $h$ there are $\delta(h) > 0$ and $n_0(h)$ so that
for all $n > n_1 \geq n_0(h)$, $\tkernel \in U_{\delta(h)}(\tkernel_0)$
and all $\bar{s} \in \theinterestingstates$ the equation remains valid. Let us fix $n_1= \max_h n_0(h)$ and $\delta = \min_h \delta(h)$ and $\eta = \min_{\bar{s}\in \theinterestingstates} \pi_{n_1,\tkernel}(\oactions(\bar{s})|\bar{s})$. By lemma~\ref{le:f} $f_{\gamma}^{\circ (n-n_1)}$ is increasing and it follows that
$
\pi_{n,\tkernel}(\oactions(\bar{s})|\bar{s}) \geq
f_{\gamma}^{\circ (n-n_1)} (\eta).
$
Since $f_{\gamma}^{\circ n} (\eta) \rightarrow x^*(\gamma)$ as
$n\rightarrow \infty$
we obtain 
$
\liminf_n \pi_{n,\tkernel}(\oactions(\bar{s})|\bar{s}) \geq x^*(\gamma) > 1-\epsilon
$. As $\epsilon >0$ is arbitrary we can always find $\delta>0$
so that the above holds. We conclude that as $\tkernel \rightarrow \tkernel_0$
$$
\liminf_n \pi_{n,\tkernel}(\oactions(\bar{s})|\bar{s})
\rightarrow 1.
$$

It remains to prove equation~\eqref{eq:forMainTheorem}. Fixing $2>\delta>0$ lemma~\ref{le:suppstab} point 2.~asserts that $\theinterestingstates \subset \supp \den_{\tkernel,\pi_n} \cap \supp \nu_{\tkernel,\pi_n}$
for all $n\geq 0 $, $\pi_0 > 0$, $\tkernel \in U_{\delta}(\tkernel_0)$.
The policy $\pi_{n+1,\tkernel}(\cdot|\bar{s})$ is then well-defined by
\begin{equation}
\pi_{n+1,\tkernel}(\cdot|\bar{s}) = \frac{\num_{\tkernel,\pi_n}(\cdot|\bar{s})}{\den_{\tkernel,\pi_n}(\bar{s})}
\label{eq:pirec}
\end{equation}
for all $\bar{s}\in \theinterestingstates$. Using the notation
$$
\begin{aligned}
Q_{\tkernel}^{\pi_n,g,h}((s,h',g'),a)
&:= 
\prob_{\tkernel}(\rho(S^{\Sigma}_h)=g|A^{\Sigma}_0=a,S^{\Sigma}_0=s,H^{\Sigma}_0=h',G^{\Sigma}_0=g';\pi_n),
\\
v_{\tkernel,\pi_n}(h',g'|s,h) 
&:=
\prob_{\tkernel}(\rho(H_0^{\Sigma})=h',G_0^{\Sigma}=g'|S_0^{\Sigma}=s,l({\Sigma})=h;\pi_n)
\end{aligned}
$$
and writing $\bar{s} = (s,h,g) \in \theinterestingstates$ lemma~\ref{le:recrewrites} point 1.~allows one to rewrite the policy as
\begin{multline}
\pi_{n+1,\tkernel}(\oactions(\bar{s})|\bar{s}) = \frac{
\sum%
_{a\in \oactions(\bar{s}), h'\geq h, g' \in \mathcal{G}}
Q_{\tkernel}^{\pi_n,g,h}((s,h',g'),a)
\pi_{n,\tkernel}(a|s,h',g') v_{\tkernel,\pi_n}(h',g'|s,h)
}{
\sum%
_{a\in \mathcal{A}, h'\geq h, g' \in \mathcal{G}}
Q_{\tkernel}^{\pi_n,g,h}((s,h',g'),a)
\pi_{n,\tkernel}(a|s,h',g') v_{\tkernel,\pi_n}(h',g'|s,h)
}
.\label{eq:piM}
\end{multline}
We show some estimates that will be used later in the proof.

\emph{Bounding state visitation:}
By lemma~\ref{le:alpha} and using $\bar{s}\in\theinterestingstates \subset \supp \bar{\mu}$ there exists $\alpha > 0$ so that
\begin{equation}
v_{\tkernel,\pi_n}(h,g|s,h) > \alpha\nonumber
.
\end{equation}

\emph{Separation of $\pi_{n,\tkernel}$ from 0 on optimal actions:}
We will show that for each $n \geq 0$ there exist $\eta>0$
such that for all 
$\bar{s} \in \theinterestingstates$, $a\in \oactions(\bar{s})$, $\tkernel \in U_{\delta}(\tkernel_0)$ it holds that
\begin{equation}
\pi_{n,\tkernel}(a\mid \bar{s}) > \eta > 0.
\label{eq:pisep}
\end{equation}
The proof proceeds by induction. The base case $n=0$ follows since $\pi_0 >0$ does not depend on $\tkernel$.
For the induction step assume $\pi_{n,\tkernel}(a\mid \bar{s}) > \eta' > 0$ for all $(\bar{s},a,\tkernel)$ above. Using \eref{eq:piM} we get
$
\pi_{n,\tkernel}(a\mid \bar{s})
\geq
Q_{\tkernel}^{\pi_n} (\bar{s},a)
\pi_n (a| \bar{s})
v_{\tkernel,\pi_n}(h,g|s,h).
$
Because $\bar{s} = (s,h,g) \in \theinterestingstates$
and $a \in \oactions(\bar{s})$
we can fix a sequence
$\bar{s}_0,a_0,\ldots,\bar{s}_h$ with
$\bar{s}_0 = \bar{s}$, $a_0 = a$, $\rho(s_h) = g$
evidencing that the probability $Q_{\tkernel_0}^*(\bar{s},a) $ is positive. We deduce that, for all $i=0,\ldots,h-1$, all $\bar{s}_{i+1}$ belongs to $\theinterestingstates$ and all actions $a_i$ are optimal and $\tkernel_0(s_{i+1}\mid s_i, a_i) = 1$. Thus we can estimate $\tkernel(s_{i+1}\mid s_i, a_i) > 1-\frac{\delta}{2}$ (since $\tkernel_0(s_{i+1}\mid s_i, a_i) - \tkernel(s_{i+1}\mid s_i, a_i) < \frac{\delta}{2}$) and $\pi_{n,\tkernel}(a_i|\bar{s}) > \eta'$. This gives us the desired lower bound 
$$
\pi_{n+1,\tkernel}(a\mid \bar{s}) \geq 
\left(\prod_{i=0}^{h-1} \tkernel(s_{i+1} \mid s_i, a_i) \pi_{n,\tkernel}(a_i \mid s_i)\right) \cdot \alpha \geq (1-\frac{\delta}{2})^h (\eta')^h \alpha > 0.
$$
This completes proof of \eref{eq:pisep} by induction.

\emph{Bounding $Q_{\tkernel}^{\pi_n,g,h}$:} By lemma~\ref{le:contoptbound} and lemma~\ref{le:detopt} we have that for any policy $\pi$ for all $\bar{s} = (s,h,g)$
\begin{align}
Q_{\tkernel}^{\pi,g,h}((s,h',g'),a) \leq
Q_{\tkernel}^*((s,h,g),a)\rightarrow Q_{\tkernel_0}^*((s,h,g),a),\  
\tkernel \rightarrow \tkernel_0
.\label{eq:CQbound}
\end{align}

The proof proceeds by induction on the remaining horizon $h$. For the rest of the proof $\pi_0$ is assumed fixed.
\textit{Base case ($h=1$):}
We estimate $\pi_{n+1,\tkernel}(\oactions(\bar{s})|\bar{s})$ by showing lower estimates on the recursion~\eref{eq:piM}. By lemma~\ref{le:contQfac} we have that if $\bar{s} = (s,1,g)\in \theinterestingstates$ and $a\in \oactions(\bar{s})$ then
\begin{equation}
|Q_{\tkernel}^{\pi_n}(\bar{s},a) - Q_{\tkernel_0}^*(\bar{s},a)|
=
|Q_{\tkernel}^{\pi_n}(\bar{s},a) - 1| \rightarrow 0,\ \tkernel \rightarrow \tkernel_0.
\label{eq:QMbound1}
\end{equation}
Since $h=1$ $Q_{\tkernel}^{\pi_n}(\bar{s},a)$ does not depend on $\pi_n$. Fix $\beta,\tilde{\beta} \in (0,1)$ so that $1 > \gamma = \frac{\tilde{\beta}}{(1-\beta)\alpha}$. By~\eref{eq:CQbound} and~\eref{eq:QMbound1} we can fix $2>\delta>0$
so that for all $\tkernel \in U_{\delta}(\tkernel_0)$ and 
$\bar{s} = (s,1,g) \in \theinterestingstates$ it holds that
\begin{equation}
\begin{aligned}
\tilde{\beta} &> Q_{\tkernel}^*(\bar{s},a), \;\text{for}\; a \notin \oactions(\bar{s}),
\\
1-\beta &< Q_{\tkernel}^{\pi_n}(\bar{s},a), \;\text{for}\; a \in \oactions(\bar{s})
.\label{eq:Qbounds1}
\end{aligned}
\end{equation}

We proceed by bounding $\pi_{n+1,\tkernel}(\oactions(\bar{s})|\bar{s})$ in \eref{eq:piM} from below. For the denominator we find using \eref{eq:Qbounds1} and \eref{eq:CQbound} that
\begin{multline*}
\sum%
_{a\in \mathcal{A}, h'\geq 1, g' \in \mathcal{G}}
Q_{\tkernel}^{\pi_n,g,1}((s,h',g'),a)
\pi_{n,\tkernel}(a|s,h',g') v_{\tkernel,\pi_n}(h',g'|s,1)
\\
\leq
\sum%
_{a\in \oactions(\bar{s}), h'\geq 1, g' \in \mathcal{G}}
Q_{\tkernel}^{\pi_n,g,1}((s,h',g'),a)
\pi_{n,\tkernel}(a|s,h',g') v_{\tkernel,\pi_n}(h',g'|s,1)
+
\tilde{\beta}.
\end{multline*}
Exploiting the monotonicity remark~\ref{re:monotonicity} (M) to use the bounds in~\eref{eq:Qbounds1} and $v_{\tkernel,\pi_n}(h,g|s,h)>\alpha$ yields that $(\forall n\geq 0, \forall \tkernel \in U_{\delta}(\tkernel_0), \forall \bar{s} = (s,1,g) \in \theinterestingstates)$
\begin{align}
\pi_{n+1,\tkernel}(\oactions(\bar{s})|\bar{s})
&\geq^{(M)}
\frac{
\sum\limits_{a \in \oactions(\bar{s})}
(1-\beta)
\pi_{n,\tkernel}(a|s,1,g) v_{\tkernel,\pi_n}(1,g|s,1)
}{
\sum\limits_{a \in \oactions(\bar{s})}
(1-\beta)
\pi_{n,\tkernel}(a|s,1,g) v_{\tkernel,\pi_n}(1,g|s,1)
+
\tilde{\beta}
}\nonumber
\\
&\geq^{(M)}
\frac{
\sum\limits_{a \in \oactions(\bar{s})}
(1-\beta)
\pi_{n,\tkernel}(a|s,1,g) \alpha
}{
\sum\limits_{a \in \oactions(\bar{s})}
(1-\beta)
\pi_{n,\tkernel}(a|s,1,g) \alpha
+
\tilde{\beta}
}\nonumber
\\
&=
\frac{
(1-\beta) \alpha
\pi_{n,\tkernel}(\oactions(\bar{s})|s,1,g)
}{
(1-\beta) \alpha
\pi_{n,\tkernel}(\oactions(\bar{s})|s,1,g)
+
\tilde{\beta}
}
=
f_{\gamma}(\pi_{n,\tkernel}(\oactions(\bar{s})|s,1,g)).\label{eq:pimf1}
\end{align}

\emph{Induction:}
Fix $h>1$ and assume that~\eqref{eq:forMainTheorem} holds for all $h'<h$. Fix $\beta,\tilde\beta \in (0,1)$ so that $1 > \gamma = \frac{\tilde\beta}{(1-\beta)\alpha}$.
By lemma~\ref{le:contQfac} there exists $\delta'$ such that if the following two conditions are met
\begin{align*}
\tkernel &\in U_{\delta'}(\tkernel_0),\\
(\forall \bar{s}'=(s',h',g')\in \theinterestingstates, h'<h)&:\; 1-\pi(\oactions(\bar{s}')|\bar{s}') < \frac{\delta'}{2}
\end{align*}
then it holds that
$$
(\forall \bar{s} = (s,h,g) \in \theinterestingstates,
\forall a \in \oactions(\bar{s}), \tkernel \in U_{\delta'}(\tkernel_0)):\;
|1-Q_{\tkernel}^{\pi}(\bar{s},a)| < \beta
.
$$
The first condition is met by the choice $0 < \delta < \delta' < 2$. To meet the second condition we will use the induction assumption choosing $\beta',\tilde\beta' > 0$ such that
$\gamma' = \frac{\tilde\beta^{\prime}}{(1-\beta')\alpha} < 1$ and $x^*(\gamma')>1-\delta'/2$.
Applying the induction assumption requires a restriction on $\delta$. From the induction assumption and equation~\eref{eq:pisep} it follows that there exist $n_0'$ and $\eta>0$ such that for all $n>n_0'$, $\tkernel \in U_{\delta}(\tkernel_0)$ holds
$\pi_{n,\tkernel}(\oactions(\bar{s}')|\bar{s}') > f^{\circ n-n_0'}_{\gamma'}(\eta)$.
As $f^{\circ n-n_0'}_{\gamma'}(\eta) \rightarrow x^*(\gamma')$ there exists $n_0$ such that 
$\pi_{n,\tkernel}(\oactions(\bar{s}')|\bar{s}') > 1-\delta'/2$ for $n>n_0$,
$\tkernel \in U_{\delta}(\tkernel_0)$. Making use of~\eref{eq:CQbound} we can assume that $\delta>0$ is chosen such that
$$
( \forall \tkernel \in U_{\delta}(\tkernel_0),
\forall \bar{s} \in \theinterestingstates, a \notin \oactions(\bar{s}) ):
Q_{\tkernel}^*(\bar{s},a) < \tilde\beta
,
$$

We proceed by bounding $\pi_{n+1,\tkernel}(\oactions(\bar{s})|\bar{s})$ in \eref{eq:piM} from below. Assuming that $n \geq n_0$ we can utilize the obtained estimates as we already did for $h=1$. As a consequence
(for all $\bar{s}\in\theinterestingstates$ with remaining horizon $h$ and $\tkernel \in U_{\delta}(\tkernel_0)$) it holds
\begin{align*}    
\pi_{n+1,\tkernel}(\oactions(\bar{s})|\bar{s})
&\geq
\frac{
\sum%
_{a\in \oactions(\bar{s}),h'\geq h, g' \in \mathcal{G}}
Q_{\tkernel}^{\pi_n,h,g}((s,h',g'),a)
\pi_{n,\tkernel}(a|s,h',g') v_{\tkernel,\pi_n}(h',g'|s,h)
}{
\sum%
_{a\in \mathcal{A},h'\geq h, g' \in \mathcal{G}}
Q_{\tkernel}^{\pi_n,h,g}((s,h',g'),a)
\pi_{n,\tkernel}(a|s,h',g') v_{\tkernel,\pi_n}(h',g'|s,h)
}
\\
&\geq
\frac{
\sum\limits_{a \in \oactions(\bar{s})}
(1-\beta)
\pi_{n,\tkernel}(a|s,h,g) \alpha
}{
\sum\limits_{a \in \oactions(\bar{s})}
(1-\beta)
\pi_{n,\tkernel}(a|s,h,g) \alpha
+
\tilde{\beta}}
=
\frac{
(1-\beta) \alpha
\pi_{n,\tkernel}(\oactions(\bar{s})|\bar{s})
}{
(1-\beta) \alpha
\pi_{n,\tkernel}(\oactions(\bar{s})|\bar{s})
+
\tilde{\beta}
}\\
&=
f_{\gamma}(\pi_{n,\tkernel}(\oactions(\bar{s})|\bar{s}))
,
\end{align*}
which competes the proof.
\end{proof}

An illustration of the accumulation points of $\min_{\bar{s}\in\theinterestingstates}\pi_n(\oactions(\bar{s})|s)$ and the lower estimate provided by theorem~\ref{le:limdetcont}, point 2.~is given in Figure~\ref{fig:bandit} in the case of a $2$-armed bandit model. All details of the construction of the MDP underlying figure~\ref{fig:bandit} can be found in the appendix B, example~\ref{ex:bandit}. Figure~\ref{fig:bandit:a} shows how the lower bound derived in the theorem and the accumulation points of \eUDRL{}-generated policies behave as functions of the distance to a deterministic kernel. Figure~\ref{fig:bandit:b} shows how \eUDRL{} approaches those accumulation points as a function of both the iteration $n$ and also the initial policy (which is visible at $n=0$). Remark that while the theorem does specify the behavior of $\gamma$ as $\lambda\rightarrow\lambda_0$, it does not provide quantitative information on the specific dependency of $\gamma$ on $\delta$. The derivation of such estimates is the content of section~\ref{ssse:specmuaccbound} below. The graph for $x^*(\gamma)$ in Figure~\ref{fig:bandit:a} is computed using the estimates of corollary~\ref{le:limitbounds} (which provides the quantitative details for theorem~\ref{le:limdetcont}). Regrettably, the bounds derived from the corollary tend to become quite loose as the number of states and the maximum horizon increase. This limitation is an artifact of the bounding of visitation terms in the \eUDRL{} recursion, which was handled somewhat crudely in lemma~\ref{le:alpha}. We explore this issue in greater detail in section~\ref{ssse:specmuaccbound}. However, the primary aim of section~\ref{sse:specmu} was to establish the (relative) continuity of accumulation points of \eUDRL{}-generated policies and the associated goal-reaching objective. The bounds are rather an unexpected byproduct of this work. We emphasize that despite this limitation the proof of continuity and the bounds themselves have not been previously investigated or mentioned in the literature.

\begin{figure}
  \begin{subfigure}{0.48\linewidth}
		\includegraphics[width=\linewidth]{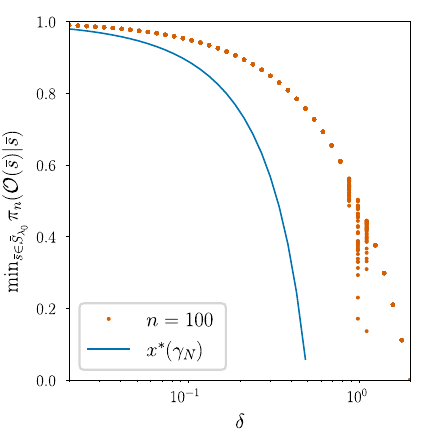}
		\caption{Dependency on distance to determin.~kernel.}
        \label{fig:bandit:a}
	\end{subfigure}
    \begin{subfigure}{0.48\linewidth}
		\includegraphics[width=\linewidth]{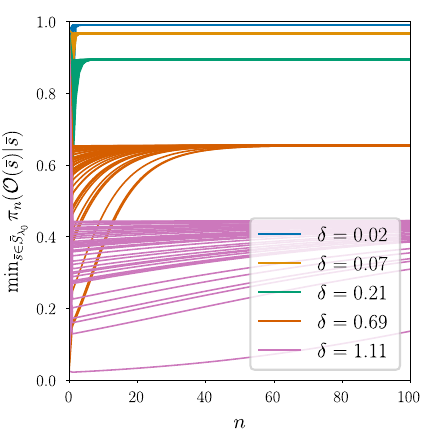}
		\caption{Dependency on iteration.}
        \label{fig:bandit:b}
	\end{subfigure}
    \caption{Illustration of estimates of theorem~\ref{le:limdetcont}, point 2. Plots show the behavior of $\min_{\bar{s}\in\theinterestingstates}\pi_n(\oactions(\bar{s})|s)$ for a $2$-armed bandit. Plot (a) shows the dependency on the distance $\delta$ to a deterministic kernel, where orange points depict accumulation points (approximated by values at $n=100$) and the estimate is depicted in blue (cf.\ corollary~\ref{le:limitbounds}). To illustrate the approach of \eUDRL{} towards the accumulation points plot (b) shows the dependency on the iteration $n$ for varying distances to a deterministic kernel (highlighted by different colors) and varying initial policy.}
    \label{fig:bandit}
\end{figure}

\subsubsection{Extending the continuity results to other segment sub-spaces}
\label{ssse:extendingfinite}

As before we need to modify theorem~\ref{le:limdetcont} to cover algorithms like ODT, RWR (and some specific variants of \eUDRL{}) restricting the recursion to $\SegTrail$ or $\SegDiag$ subspaces.
\begin{theorem}
\label{le:limdetcontDiagTrail}
(Relative continuity of accumulation points  of \eUDRL{}-generated policies at deterministic kernels -- the case $\theinterestingstates \subset \supp \bar{\mu}$)
Theorem~\ref{le:limdetcont} remains valid under the renaming $\pi_n \rightarrow \pi_n^{\diag/\trail}$.
\end{theorem}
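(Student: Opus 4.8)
The plan is to mirror exactly the way Theorem~\ref{le:detcontDiagTrail} was derived from Theorem~\ref{le:detcont}: I would reread the proof of Theorem~\ref{le:limdetcont} and carry it out verbatim under the renaming $\pi_n \rightarrow \pi_n^{\diag/\trail}$, only flagging the places where a supporting ingredient must be swapped for its segment-subspace analog. The observation that makes this painless is that the objects the theorem quantifies over are invariant under the restriction: by \eref{eq:saisthesame}, \eref{eq:isthesame} and \eref{eq:optathesame} the set of critical states $\theinterestingstates$, its state-action refinement $\supp\num_{\tkernel_0,\pi_0}\cap(\mathcal{A}\times\supp\nu_{\tkernel_0,\pi_0})$ and the optimal action sets $\oactions(\bar{s})$ all coincide for $\Seg$, $\SegTrail$ and $\SegDiag$. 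Hence every appeal to Lemma~\ref{le:suppstab} is replaced by Lemma~\ref{le:suppstabTrailDiag} and every appeal to Lemma~\ref{le:detopt} by Lemma~\ref{le:detoptDiagTrail}, with no change to the statements invoked; in particular, well-definedness of $\pi_{n+1,\tkernel}^{\diag/\trail}(\cdot\mid\bar{s})$ on $\theinterestingstates$ follows from Lemma~\ref{le:suppstabTrailDiag}, point~2.

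Next I would replace the recursion rewrite \eref{eq:piM}, which rests on Lemma~\ref{le:recrewrites} point~1, by the corresponding rewrites \eref{eq:segtrailing} and \eref{eq:segdiag}. This produces the subspace numerators $u^{\trail}$ and $u^{\diag}$ in place of $u$ from \eref{eq:udef}, and, crucially, the visitation factor $v_{\tkernel,\pi_n^{\diag/\trail}}(h',g'\mid s,h)$ that appears is the same marginal as before, carrying no conditioning on $l(\Sigma)=\stag{H}_0$ or $\rho(\stag{S}_h)=\stag{G}_0$; therefore Lemma~\ref{le:alpha}, which holds for every policy and makes no reference to a segment subspace, still yields $v_{\tkernel,\pi_n^{\diag/\trail}}(h,g\mid s,h)>\alpha$. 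Likewise Lemma~\ref{le:opthbound} and Lemma~\ref{le:contQfac} are stated for an arbitrary policy and apply verbatim to $\pi_n^{\diag/\trail}$, so the bounds \eref{eq:CQbound}, \eref{eq:QMbound1} and \eref{eq:Qbounds1} on $Q_{\tkernel}^{\pi_n^{\diag/\trail},g,h}$ are unchanged. The analytic engine of the argument, the monotonicity remark~\ref{re:monotonicity} together with the fixed-point Lemma~\ref{le:f}, is reused as is.

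The one point that requires genuine (if routine) care, and the step I expect to be the main obstacle, is verifying that the $f_{\gamma}$ lower bound survives the restriction of the summation ranges. Here I would invoke the universal bounds $u^{\diag}(\pi_n)\leq u^{\trail}(\pi_n)\leq u(\pi_n)$ already established in the proof of Theorem~\ref{le:detcontDiagTrail}. For the denominator, $u^{\diag/\trail}(a)\leq u(a)\leq Q_{\tkernel}^{*}(\bar{s},a)$, so the bound \eref{eq:uub} controlling the non-optimal contribution by $\tilde{\beta}$ carries over unchanged (restricting the ranges of $h',g'$ only deletes nonnegative terms). For the numerator, the diagonal term $a\in\oactions(\bar{s})$, $h'=h$, $g'=g$ is retained in all three recursions, so $u^{\diag/\trail}$ is bounded below exactly as in \eref{eq:ulb}, giving $u^{\diag/\trail}(\oactions(\bar{s}))\geq(1-\beta)\alpha\,\pi_{n,\tkernel}^{\diag/\trail}(\oactions(\bar{s})\mid\bar{s})$. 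Feeding these two estimates through remark~\ref{re:monotonicity} reproduces $\pi_{n+1,\tkernel}^{\diag/\trail}(\oactions(\bar{s})\mid\bar{s})\geq f_{\gamma}(\pi_{n,\tkernel}^{\diag/\trail}(\oactions(\bar{s})\mid\bar{s}))$, i.e.\ \eref{eq:forMainTheorem}. The separation bound \eref{eq:pisep} is re-derived identically, since its inductive step again uses only the retained diagonal term together with a diagonal/trailing trajectory of positive probability supplied by Lemma~\ref{le:suppstabTrailDiag}. With \eref{eq:forMainTheorem} and \eref{eq:pisep} in hand, the horizon-induction and the passage through Lemma~\ref{le:f} proceed word for word, and point~2 implies point~1 exactly as before.
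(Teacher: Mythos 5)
Your proposal is correct and follows essentially the same route as the paper: the paper's proof likewise substitutes Lemma~\ref{le:detoptDiagTrail} for Lemma~\ref{le:detopt}, Lemma~\ref{le:suppstabTrailDiag} (with \eref{eq:isthesame}) for Lemma~\ref{le:suppstab}, replaces the use of Lemma~\ref{le:recrewrites} point~1 in \eref{eq:piM} by points~2 and~3, and then recovers \eref{eq:pimf1} verbatim by exploiting the bounds $u^{\diag}\leq u^{\trail}\leq u$ together with the fact that the retained diagonal term and the unconditioned visitation marginal make the lower and upper estimates carry over unchanged. Your treatment of the separation bound and of the invariance of $\theinterestingstates$ and $\oactions(\bar{s})$ matches the paper's argument, so nothing further is needed.
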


The proof of theorem~\ref{le:limdetcontDiagTrail} follows along similar lines as that of theorem~\ref{le:limdetcont}. The differences are rather small and resemble those described in section~\ref{sse:extendingfinite}. First, to assert that \eUDRL{} is optimal at deterministic points one applies lemma~\ref{le:detoptDiagTrail} instead of lemma~\ref{le:detopt}.
Second, to prove that that $\pi_{n+1}^{\diag/\trail}$ is well-defined through equation~\eref{eq:pirec} one uses equation \eref{eq:isthesame} and lemma~\ref{le:suppstabTrailDiag}, point 2.~instead of lemma~\ref{le:suppstab}. Third, the application of point 1.~of lemma~\ref{le:recrewrites} in equation \eqref{eq:piM} has to be replaced with points 2.~or 3.~of the lemma respectively. To bound the denominator of~equation~\ref{eq:piM} one proceeds along the same line that leads to equation~\eref{eq:pimf1}, yielding the exactly same result as in~\eref{eq:pimf1}. The same reasoning applies to the induction step.
Finally, using lemma~\ref{le:recrewrites}, which allows one to rewrite the~\eUDRL{} recursion using only $\Seg$-space quantities, we can show that there are no further differences even for $\pi_{n+1}^{\diag/\trail}$.
The same approach applies also to the forthcoming corollary of theorem~\ref{le:limdetcont}, which follows theorem~\ref{le:limdetcont} but provides explicit estimates for \eUDRL{}-generated quantities.
\subsubsection{Estimating the location of accumulation points}
\label{ssse:specmuaccbound}

As noted before an important motivation for this work has been to determine whether or not continuing the \eUDRL{} iteration from a certain state is expected to improve the policy. This can be achieved through explicit estimates for \eUDRL{}-generated quantities and their distance to optimality in terms of the distance of transition kernels. The following corollary can be proved along the same lines as theorems~\ref{le:limdetcont} and~\ref{le:limdetcontDiagTrail} taking care of quantitative estimates explicitly. Recall that a convergent sequence $(y_n)_{n\geq0}$ with limit $y$ is said to be $q$-linearly convergent with rate $\gamma$ if and only if 
$$\lim_{n\rightarrow\infty} \Bigg|\frac{y_{n+1}-y}{y_{n}-y}\Bigg| = \gamma \in (0,1)
.$$

\begin{corollary}
\label{le:limitbounds}
(Estimating the location of accumulation points -- the case $\theinterestingstates \subset \supp \bar{\mu}$)
Under the conditions of theorem~\ref{le:limdetcont} assume that $1 > \delta > 0$ and set, as before, $\alpha = \frac{2}{N(N-1)}\min_{\bar{s}\in \theinterestingstates} \bar{\mu}(\bar{s})$.
Define the quantities $\tilde\beta=\frac{N\delta}{2}$, $x^*(\gamma)=1-\gamma$ (cf.~lemma~\ref{le:f}) and for a horizon $h$, $1\leq h\leq N$,
\begin{align*}
\beta_h &= \begin{cases}\max\{\delta,\tilde{\beta}\},\ \textnormal{if}\ h=1,\\
\delta + \kappa_{h-1} + \beta_{h-1},\ \textnormal{if} \ h \geq 2,
\end{cases}
\\
\gamma_h &= \frac{\tilde{\beta}}{(1-\beta_h)\alpha},
\\
\kappa_h &= 2( 1-x^*(\gamma_h) ).
\end{align*}
Further assume $\beta_h,\gamma_h \in (0,1)$ and notice that $\beta_h,\gamma_h,\kappa_h$ are increasing in $h$ and that $\beta_h,\gamma_h,\kappa_h\rightarrow 0$ as $\delta \rightarrow 0$. Then the following assertions hold for all $\tkernel \in U_{\delta}(\tkernel_0)$ 
and for all $\pi_0 >0$:
\begin{enumerate}
    \item 
$\displaystyle
\limsup_n \max_{\bar{s} \in \theinterestingstates }
2 ( 1- \pi_n(\oactions(\bar{s})|\bar{s})) \leq \kappa_{N}
$,
\item
$\displaystyle
\limsup_n \max_{(\bar{s},a) \in \theinterestingstates\times\mathcal{A} }
|Q_{\tkernel}^{\pi_n}(\bar{s},a) - Q_{\tkernel_0}^{*}(\bar{s},a)|
\leq \beta_N 
$,
\item
$\displaystyle
\limsup_n \max_{\bar{s} \in \theinterestingstates}
|V_{\tkernel}^{\pi_n}(\bar{s})- V_{\tkernel_0}^{*}(\bar{s}) |
\leq
\beta_N + \kappa_N 
$,
\item
$\displaystyle
\limsup_n 
|J_{\tkernel}^{\pi_n}- J_{\tkernel_0}^{*} |
\leq
\frac{N\delta}{2} + \beta_N + \kappa_N 
$,\ \textnormal{and}
\item for all $\epsilon>0$
there exist a sequence of $\tkernel_0$-optimal policies $(\pi_{\tkernel_0,n}^{*})_{n\geq 0}$ and $n_0$ so that for all $n\geq n_0$ and for all $\bar{s} \in \theinterestingstates$ it holds that
$$
\|
\pi_{n,\tkernel}(\cdot|\bar{s})
- \pi_{\tkernel_0,n}^{*} (\cdot|\bar{s}) 
\|_1
 \leq 2(1-f_{\gamma'}^{\circ (n-n_0)}(x_0)),
$$
where $x_0= \min_{\bar{s} \in \theinterestingstates} \pi_{n_0,\tkernel}(\oactions(\bar{s})|\bar{s}) > 0$,  $\beta'= \beta_N+\epsilon$, where $\epsilon$ is chosen such that $\gamma' = \frac{\tilde{\beta}}{(1-\beta')\alpha}\in(0,1)$.
\end{enumerate}
The terms on the right hand side of 1.-4.~converge to $0$ as $\delta \rightarrow 0$. In 5.~the sequence $(y_n)_{n\geq0}$ defined by 
$
y_n = 2(1-f_{\gamma'}^{\circ (n-n_0)}(x_0))
$
converges $q$-linearly to the limit $y=2(1-x_{\gamma'}^{*})$ at a convergence rate of $\gamma'$ which tends to 0 as $\delta \rightarrow 0$.
\end{corollary}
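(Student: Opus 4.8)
The plan is to re-run the proof of Theorem~\ref{le:limdetcont} while tracking every constant explicitly, organizing the argument as a nested induction: an outer induction on the remaining horizon $h$ (which yields points 1 and 2 jointly) and, for each fixed $h$, the dynamical-system argument of Lemma~\ref{le:f} in the iteration index $n$. The three sequences $\beta_h,\gamma_h,\kappa_h$ are designed precisely to encode the constants emerging at each horizon: $\tilde\beta=N\delta/2$ is the uniform upper bound on $Q^{\pi}_{\tkernel}(\bar{s},a)$ for non-optimal $a$ (where $Q^*_{\tkernel_0}(\bar{s},a)=0$ by Lemma~\ref{le:suppstab} point 5, so Lemma~\ref{le:opthbound} applies), $\beta_h$ will bound $|Q^{\pi_n}_{\tkernel}(\bar{s},a)-Q^*_{\tkernel_0}(\bar{s},a)|$ at horizon $h$, and $\kappa_h=2(1-x^*(\gamma_h))$ will bound $\limsup_n 2(1-\pi_{n,\tkernel}(\oactions(\bar{s})|\bar{s}))$ at horizon $h$.

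For the base case $h=1$, $Q^{\pi_n}_{\tkernel}((s,1,g),a)=\sum_{s'\in\rho^{-1}(\{g\})}\tkernel(s'|s,a)$ is policy-independent; for $a\in\oactions(\bar{s})$ it equals $\tkernel(s'|s,a)\geq 1-\delta/2$ and for $a\notin\oactions(\bar{s})$ it is at most $\tilde\beta$, giving $\beta_1=\max\{\delta,\tilde\beta\}$. Feeding these bounds together with $v_{\tkernel,\pi_n}(h,g|s,h)>\alpha$ (Lemma~\ref{le:alpha}) into the recursion rewrite~\eqref{eq:piM} and using the monotonicity of Remark~\ref{re:monotonicity} reproduces $\pi_{n+1,\tkernel}(\oactions(\bar{s})|\bar{s})\geq f_{\gamma_1}(\pi_{n,\tkernel}(\oactions(\bar{s})|\bar{s}))$ exactly as in~\eqref{eq:pimf1}, and Lemma~\ref{le:f} point 4 gives $\liminf_n\pi_{n,\tkernel}(\oactions(\bar{s})|\bar{s})\geq x^*(\gamma_1)$, i.e. $\kappa_1$. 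For the induction step at horizon $h$, I would use the horizon-$(h-1)$ bound $\kappa_{h-1}$ to control $\max_{\bar{s}'}2(1-\pi(\oactions(\bar{s}')|\bar{s}'))$ and $\beta_{h-1}$ to control the residual $Q$-error, plug both into the recursive estimate of Lemma~\ref{le:contQfac} together with $\|\tkernel-\tkernel_0\|_1\leq\delta$, to obtain $\beta_h=\delta+\kappa_{h-1}+\beta_{h-1}$; the same $f_{\gamma_h}$ recursion then delivers $\kappa_h$. Taking the maximum over horizons $\leq N$ and using monotonicity of the three sequences in $h$ gives the uniform bounds $\kappa_N$ and $\beta_N$ of points 1 and 2.

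Points 3 and 4 are then direct consequences. Writing $1-V^{\pi_n}_{\tkernel}(\bar{s})=\sum_a(1-Q^{\pi_n}_{\tkernel}(\bar{s},a))\pi_{n,\tkernel}(a|\bar{s})$ and splitting over optimal and non-optimal actions bounds the first group by $\beta_N$ and the second by $\kappa_N/2$, hence $\beta_N+\kappa_N$ for point 3. For the goal-reaching objective I would split $\supp\bar\mu$ into $\theinterestingstates$ and its complement: on $\theinterestingstates$ point 3 applies, while on $\supp\bar\mu\setminus\theinterestingstates$ one has $V^*_{\tkernel_0}=0$ (as in the proof of Corollary~\ref{le:detJcont}), so Lemma~\ref{le:opthbound} gives $V^{\pi_n}_{\tkernel}\leq N\delta/2$; averaging against $\bar\mu$ yields point 4. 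For point 5, I would take $\pi^*_{\tkernel_0,n}$ to agree on $\theinterestingstates$ with $\pi_{n,\tkernel}$ renormalized onto $\oactions(\bar{s})$ (which is optimal there, since every action in $\oactions(\bar{s})$ is individually optimal) and to be optimal off $\theinterestingstates$; a short computation gives $\|\pi_{n,\tkernel}(\cdot|\bar{s})-\pi^*_{\tkernel_0,n}(\cdot|\bar{s})\|_1=2(1-\pi_{n,\tkernel}(\oactions(\bar{s})|\bar{s}))$, and the lower bound $\pi_{n,\tkernel}(\oactions(\bar{s})|\bar{s})\geq f_{\gamma'}^{\circ(n-n_0)}(x_0)$ from Lemma~\ref{le:f} point 4 (with $\beta'=\beta_N+\epsilon$ chosen so $\gamma'<1$) gives the claimed inequality. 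The $q$-linear rate follows since $f_{\gamma'}'(x^*(\gamma'))=\gamma'/(x^*(\gamma')+\gamma')^2=\gamma'$ (using $x^*(\gamma')+\gamma'=1$), so the fixed-point iteration converges linearly with rate $\gamma'$, which tends to $0$ as $\delta\to 0$.

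The main obstacle I anticipate is managing the coupling between the two recursions and the two limits. The $Q$-error bound $\beta_h$ depends on the policy-error bound $\kappa_{h-1}$, which is itself only a $\limsup_n$ statement obtained from Lemma~\ref{le:f}; hence the horizon induction must be carried out \emph{eventually in} $n$, choosing a single $n_0$ and a single neighborhood $U_\delta(\tkernel_0)$ working simultaneously for all $h\leq N$, exactly as in Theorem~\ref{le:limdetcont} where $n_1=\max_h n_0(h)$ and $\delta=\min_h\delta(h)$. One must also verify at each step that $\beta_h,\gamma_h\in(0,1)$ (an explicit hypothesis of the corollary) so that Lemma~\ref{le:f} applies and the bounds stay non-vacuous, and keep the argument of $f_{\gamma'}$ positive via the separation~\eqref{eq:pisep} established inside Theorem~\ref{le:limdetcont}. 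The extension to the $\SegDiag$ and $\SegTrail$ subspaces is routine given Theorem~\ref{le:limdetcontDiagTrail} and the universal bounding $u^{\diag}\leq u^{\trail}\leq u$ already used in Section~\ref{sse:extendingfinite}.
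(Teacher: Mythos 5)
Your proposal follows essentially the same route as the paper's proof: the joint horizon induction establishing the $\beta_h$/$\kappa_h$ claim via the recursive estimate of lemma~\ref{le:contQfac} and the $f_{\gamma}$ dynamical system of lemma~\ref{le:f}, the same derivations of points 3--5 (splitting $\supp\bar{\mu}$ over $\theinterestingstates$, choosing $\pi^*_{\tkernel_0,n}$ dominating $\pi_{n,\tkernel}$ on $\oactions(\bar{s})$, and computing the rate from $f'_{\gamma'}(x^*)=\gamma'$), and the same handling of the $\limsup$ coupling by enlarging $\beta_h$ by $\epsilon$ and working eventually in $n$. The only cosmetic difference is your decomposition of $1-V^{\pi_n}_{\tkernel}(\bar{s})$ in point 3, which gives the marginally tighter $\beta_N+\kappa_N/2$ but is otherwise equivalent.
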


The bounds in the corollary are solely dependent on \(\alpha\) and \(\delta\), where \(\alpha\) is determined by the initial distribution and \(\delta\) represents the distance to a deterministic kernel. Notably, there is no dependence on the initial policy \(\pi_0 > 0\). This feature allows the estimates to be applied even when no information about the initial policy is available. However, this comes at the cost of producing relatively loose estimates. While the initial policy does not affect the asymptotic estimates, it can affect convergence time, compare figure~\ref{fig:bandit:b} and figure~\ref{fig:f} and consider the effect of $x_0$ in the point 5. All estimates converge to \(0\) as \(\delta \to 0\), which implies the continuity of the respective quantities at deterministic kernels (relative continuity in the case of the policy). This can be interpreted as a form of continuity of the set of accumulation points (see the remark below). It will also sometimes be more convenient to work with $\pi_{n,\tkernel}(\oactions(\bar{s})|\bar{s})$ rather than $2(1-\pi_{n,\tkernel}(\oactions(\bar{s})|\bar{s})$. We summarize this in a remark.
\begin{remark}
The estimate in point 1.~of corollary~\ref{le:limitbounds} is equivalent to the following estimate {${\liminf_n \min_{\bar{s} \in \theinterestingstates}
\pi_n(\oactions(\bar{s})|\bar{s}) \geq x^*(\gamma_{N})},$} cf.\ figure \ref{fig:bandit}, \ref{fig:gridworld}. The estimate in point 5.\ of the corollary is equivalent to $\pi_{n,\tkernel}(\oactions(\bar{s})|\bar{s}) \geq f_{\gamma'}^{\circ (n-n_0)}(x_0)$.
\end{remark}
\begin{remark}\label{re:accreformulation} The estimates in the corollary can be stated also in terms of accumulation points of \eUDRL{}-generated quantities. For example point 1.~is equivalent to (for all $\tkernel \in U_{\delta}(\tkernel_0)$ and $\pi_0 >0$)
$$
\sup_{\pi \;\text{accumulation point of}\; (\pi_{n,\tkernel})}
\max_{\bar{s}\in \theinterestingstates}
2(1 - \pi(\oactions(\bar{s})|\bar{s})) \leq \kappa_N.$$
\end{remark}

\begin{proof}
The choice of $1 >\delta >0$ and the form of $U_{\delta}(\tkernel_0)$ is compliant with lemma~\ref{le:suppstab}.
By this we mean that for all $\delta$ (with $U_{\delta}(\tkernel_0)$ defined as above) the points 1., 2., and 3.\ of the lemma~\ref{le:suppstab} hold.
Similarly, the choice of $\alpha$ is compliant with lemma~\ref{le:alpha}.
The restriction on $\gamma_h$: $1>\gamma_h >0$ comes from
lemma~\ref{le:f}, which ensures that we can use them safely with $f_{\gamma},x^*(\gamma)$
as defined in this lemma.

The result for the limits of $\beta_h,\kappa_h,\gamma_h,\tilde{\beta}$ follows from their definitions
and the fact that $x^*(\gamma)$ is a decreasing function in $\gamma$ with a  limit 1 for $\gamma\rightarrow 0$. Further, that $\beta_h,\kappa_h$ are increasing in $h$ follows again from their definition and the previously stated
properties of the function $x^*$.

During the proof we will also utilize that 
for all $\tkernel \in U_{\delta}(\tkernel_0)$,$\bar{s} \in \theinterestingstates$,$a\notin \oactions(\bar{s})$ holds
$$
Q_{\tkernel}^{*}(\bar{s},a) \leq \tilde{\beta} \leq \beta_h,
$$
where $N\geq h\geq 1$. 
The first inequality follows from lemma~\ref{le:opthbound}. The inequality $\tilde{\beta} < \beta_1$ follows from the definition of $\beta_1$. The rest follows
from the previously stated increasing property of $\beta_h$. The result about $Q_{\tkernel}^{*}(\bar{s},a)$ is useful for ensuring that
all action value function-related quantities $Q_{\tkernel}^{\pi_n,h,g}((s,h',g'),a)$ for non-optimal actions $a$ are bounded by $\beta_h$. 

\emph{the claim:}
First we aim to prove the following claim:
$$
\begin{aligned}
(\forall  N \geq h \geq 1, \forall \pi_0 >0 , \forall \tkernel \in U_{\delta}(\tkernel_0)) :\:
&\limsup_n
\max_{\substack{\bar{s} = (s,h',g) \in\theinterestingstates,h'=h,\\a \in \mathcal{A}}}
|Q_{\tkernel}^{\pi_n}(\bar{s},a) - Q_{\tkernel_0}^{*}(\bar{s},a)| \leq \beta_h,
\\
&\limsup_n
\max_{\bar{s} = (s,h',g) \in\theinterestingstates,h'=h}
2 (1-\pi_n(\oactions(\bar{s})|\bar{s}))
\leq \kappa_h.
\end{aligned}
$$
We will proceed by induction by the remaining horizon $h$.

\emph{the claim, base case:}
Assume $h=1$ and $(s,1,g)\in \theinterestingstates$. It holds
\begin{multline*}
|Q_{\tkernel}^{\pi_n}((s,1,g),a) - Q_{\tkernel_0}^{*}((s,1,g),a)|
=
\left|\sum_{s'\in \rho^{-1}(\{g\})} \tkernel(s'|s,a) -  \sum_{s'\in \rho^{-1}(\{g\})} \tkernel_0(s'|s,a)\right| 
\\
\leq
\sum_{s'\in \rho^{-1}(\{g\})} |\tkernel(s'|s,a) - \tkernel_0(s'|s,a)|
\leq
\| \tkernel(\cdot|s,a) - \tkernel_0(\cdot|s,a)\|_1,
\end{multline*}
which gives us, for all $\pi_0 >0$, $\tkernel \in U_{\delta}(\tkernel_0)$ and $n\geq 0$,
\begin{align*}
\max_{\substack{\bar{s}=(s,1,g)\in\theinterestingstates,\\a\in \mathcal{A}}}
|Q_{\tkernel}^{\pi_n}((s,1,g),a) - Q_{\tkernel_0}^{*}((s,1,g),a)|
&\leq
\max_{\substack{\bar{s}=(s,1,g)\in\theinterestingstates,\\a\in \mathcal{A}}}
\| \tkernel(\cdot|s,a) - \tkernel_0(\cdot|s,a)\|_1
\\
&\leq \delta \leq \beta_1,
\end{align*}
from which the first inequality in the claim follows.
For the second inequality, we will proceed like in \eref{eq:pimf1}
bounding $v_{\tkernel,\pi_n}(h,g|s,h)$ terms by $\alpha$ and action value function-related terms using $\beta_1,\tilde{\beta}$. Notice that for $h=1$ the $\limsup_n$ does not really matter
(see the last equation: $\beta_1$ bounds the error for all $n$).
This gives us (for all $n\geq 0$, $\pi_0>0$, $\tkernel\in U_{\delta}(\tkernel_0)$, $\bar{s} = (s,1,g) \in \theinterestingstates$)
$$
\pi_{n+1,\tkernel}(\oactions(\bar{s})|\bar{s}) \geq f_{\gamma_1}(\pi_{n,\tkernel}(\oactions(\bar{s})|\bar{s})).
$$
Using point 4.\ of lemma~\ref{le:f} then leads to 
$$
\liminf_n \pi_{n,\tkernel}(\oactions(\bar{s})|\bar{s}) \geq x^*(\gamma_1).
$$
After rearranging, we get (for all $\pi_0>0$, $\tkernel\in U_{\delta}(\tkernel_0)$)
$$
\limsup_n \max_{\bar{s}=(s,1,g)\in\theinterestingstates} 2(1-\pi_{n,\tkernel}(\oactions(\bar{s})|\bar{s})) \leq 2(1-x^*(\gamma_1)) = \kappa_1.
$$

\emph{the claim, the induction step $h\geq 2$:} Now assume that the claim holds for $h-1$. We aim to prove it for $h$.
For the first inequality of the claim we will use the the recursive estimate from 
the lemma~\ref{le:contQfac} (after substituting $\pi$ with $\pi_n$ and extending maximization in the last term from $a' \in \oactions(\bar{s}')$ to $a' \in \mathcal{A}$):
\begin{multline*}
(\forall \bar{s} = (s,h,g) \in \theinterestingstates, h\geq 2, \forall a\in \oactions(\bar{s}),\forall n\geq 0 ) :
\\
|Q_{\tkernel}^{\pi_n}(\bar{s},a)-Q_{\tkernel_0}^{*}(\bar{s},a) |
\leq
\|\tkernel(\cdot|s,a)-\tkernel_0(\cdot|s,a)\|_1
+
\max_{\bar{s}'=(s',h-1,g) \in\theinterestingstates}
2(1-\pi_n(\oactions(\bar{s}')|\bar{s}')).
\\
+
\max_{\bar{s}'=(s',h-1,g) \in\theinterestingstates, a' \in \mathcal{A}}
|Q_{\tkernel}^{\pi_n}(\bar{s}',a')-Q_{\tkernel_0}^{*}(\bar{s}',a')|.
\end{multline*}
Now we bound $\|\tkernel(\cdot|s,a)-\tkernel_0(\cdot|s,a)\|_1$ by $\delta$ and 
apply $\limsup_n(\cdot)$ on both sides of the inequality leading to
\begin{multline*}
\limsup_n |Q_{\tkernel}^{\pi_n}(\bar{s},a)-Q_{\tkernel_0}^{*}(\bar{s},a) |
\leq
\delta
+
\limsup_n
\max_{\bar{s}'=(s',h-1,g) \in\theinterestingstates}
2(1-\pi_n(\oactions(\bar{s}')|\bar{s}'))
\\
+
\limsup_n
\max_{\bar{s}'=(s',h-1,g) \in\theinterestingstates, a' \in \mathcal{A}}
|Q_{\tkernel}^{\pi_n}(\bar{s}',a')-Q_{\tkernel_0}^{*}(\bar{s}',a')|
\leq \delta + \kappa_{h-1} + \beta_{h-1} = \beta_h, 
\end{multline*}
where we applied the induction assumption.
Further for all $\bar{s} = (s,h,g) \in \theinterestingstates$ and 
$a\notin \oactions(\bar{s})$ holds
$|Q_{\tkernel}^{\pi_n}(\bar{s},a)-Q_{\tkernel_0}^{*}(\bar{s},a) | = Q_{\tkernel}^{\pi_n}(\bar{s},a) \leq Q_{\tkernel}^{*}(\bar{s},a) \leq \tilde{\beta} \leq \beta_1 \leq \beta_h$. After applying $\limsup_n$ we get
$$
\limsup_n |Q_{\tkernel}^{\pi_n}(\bar{s},a)-Q_{\tkernel_0}^{*}(\bar{s},a) | \leq \beta_h.
$$
Putting both results together and maximizing over $\bar{s},a$ on the left-hand side
(it is possible since the right-hand side is independent of $\bar{s},a$ and $\bar{\mathcal{S}}$ and $\mathcal{A}$ are finite) we obtain the first inequality
of the claim.

Now we aim to prove the second inequality of the claim.
We proceed similarly as in the case $h=1$, the difference is that now
$\limsup_n$ in the previously proved inequality with $\beta_{h}$ is important. To be able to bound the errors of the action value function-related terms for the horizon $h$ we have to
enlarge $\beta_{h}$; define $\beta' = \beta_{h}+\epsilon$, with
$\epsilon >0$ being a fixed arbitrary value so that $1> \gamma':= \frac{\tilde{\beta}}{(1-\beta')\alpha}$; and fix the $\pi_n$
sequence such that $\pi_0 >0$, $\tkernel \in U_{\delta}(\tkernel_0)$. Then
we can find $n_0$ such that for all $n \geq n_0$ it holds that (note that $\theinterestingstates$ is finite)
$$
\max_{\bar{s}=(s,h',g)\in\theinterestingstates, h'=h, a\in \mathcal{A}}
|Q_{\tkernel}^{\pi_n}(\bar{s},a)-Q_{\tkernel_0}^{*}(\bar{s},a)|
\leq
\beta'.
$$
Then we can continue as before and get for all $n\geq n_0$, and all $\bar{s}=(s,h',g)\in\theinterestingstates, h'=h$
$$
\pi_{n+1,\tkernel}(\oactions(\bar{s})|\bar{s}) = f_{\gamma'}(\pi_{n,\tkernel}(\oactions(\bar{s})|\bar{s})).
$$
Using point 4.\ of lemma~\ref{le:f} then leads again to (for all $\bar{s}=(s,h',g)\in\theinterestingstates, h'=h$)
$$
\liminf_n \pi_{n,\tkernel}(\oactions(\bar{s})|\bar{s}) \geq x^*(\gamma').
$$
Since we could choose $\epsilon >0$ arbitrarily (and $x^*(\gamma)$ is continuous in $\gamma$ and $\gamma=\frac{\tilde{\beta}}{(1-\beta)\alpha}$ is continuous in $\beta$) we get (for all $\bar{s}=(s,h',g)\in\theinterestingstates, h'=h$)
$$
\liminf_n \pi_{n,\tkernel}(\oactions(\bar{s})|\bar{s}) \geq x^*(\gamma_h),
$$
and since the right side is independent of the fixed sequence we get
$$
(\forall \pi_0 >0, \forall \tkernel \in U_{\delta}(\tkernel_0),
\forall \bar{s}=(s,h',g)\in\theinterestingstates, h'=h): \;
\liminf_n \pi_{n,\tkernel}(\oactions(\bar{s})|\bar{s}) \geq x^*(\gamma_h).
$$
From this follows
$$
(\forall \pi_0 >0, \forall \tkernel \in U_{\delta}(\tkernel_0) ): \;
\limsup_n \max_{\bar{s}=(s,h',g)\in\theinterestingstates, h'=h} 2(1-\pi_{n,\tkernel}
(\oactions(\bar{s})|\bar{s}) \leq 2(1-x^*(\gamma_h)) = \kappa_h
.
$$
This completes the proof of the claim from which also follows both points 1.\ and 2.

3.
For all $\tkernel\in U_{\delta}(\tkernel_0)$, $\pi_0>0$, $n\geq 0$, $\bar{s}\in \theinterestingstates$ it holds that
$$
\begin{aligned}
|V_{\tkernel}^{\pi_n}(\bar{s})- V_{\tkernel_0}^{*}(\bar{s}) |
&\leq
|\sum_{a\in \mathcal{A}} Q_{\tkernel}^{\pi_n}(\bar{s},a)\pi_{n,\tkernel}(a|\bar{s})
- \sum_{a\in \mathcal{A}} Q_{\tkernel_0}^{*}(\bar{s},a)\pi_{n,\tkernel}(a|\bar{s})|
\\
&\quad +
|\sum_{a\in \mathcal{A}} Q_{\tkernel_0}^{*}(\bar{s},a)\pi_{n,\tkernel}(a|\bar{s})
-
\sum_{a\in \mathcal{A}} Q_{\tkernel_0}^{*}(\bar{s},a)\pi_{\tkernel_0}^{*}(\pi_{n,\tkernel})(a|\bar{s})
|
\\
&\leq
\max_{\bar{s},a\in\theinterestingstates\times\mathcal{A}}
|Q_{\tkernel}^{\pi_n}(\bar{s},a)
- Q_{\tkernel_0}^{*}(\bar{s},a)|
+
\max_{\bar{s}\in\theinterestingstates}
2(1-\pi_{n,\tkernel}(\oactions(\bar{s})|\bar{s})).
\end{aligned}
$$
Since the right side of the inequality is not dependent on 
$\bar{s}$ we can also maximize over $\bar{s}$ on the left side and the apply $\limsup_n(\cdot)$, which completes the proof of this point.

4.
For all $\tkernel\in U_{\delta}(\tkernel_0)$, $\pi_0>0$, $n\geq 0$, $\bar{s}\in \theinterestingstates$ it holds that
\begin{align*}
|J_{\tkernel}^{\pi_n}- J_{\tkernel_0}^{*} |
&\leq
\sum_{\bar{s}\in\supp\bar{\mu}} \bar{\mu}(\bar{s})
|V_{\tkernel}^{\pi_n}(\bar{s})- V_{\tkernel_0}^{*}(\bar{s}) |
\\
&=
\sum_{\bar{s}\in\supp\bar{\mu}\setminus\theinterestingstates} 
\bar{\mu}(\bar{s})
V_{\tkernel}^{\pi_n}(\bar{s})
+
\sum_{\bar{s}\in\theinterestingstates} \bar{\mu}(\bar{s})
|V_{\tkernel}^{\pi_n}(\bar{s})- V_{\tkernel_0}^{*}(\bar{s}) |.
\end{align*}
Because of $\supp \bar{\mu} \subset \supp \nu_{\tkernel_0,\pi_0}$
it holds that $\supp \bar{\mu} \setminus \theinterestingstates = \supp \bar{\mu} \setminus \supp \den_{\tkernel_0,\pi_0}$.
Since $V_{\tkernel_0}^{*}(\bar{s}) = 0$ on $\supp \bar{\mu} \setminus  \supp \den_{\tkernel_0,\pi_0}$ (similar to corollary~\ref{le:detJcont}), we can use lemma~\ref{le:opthbound} to bound $V_{\tkernel}^{\pi_n}(\bar{s})$ by $\frac{N\delta}{2}$. The result follows by applying  $\limsup_n(\cdot)$ and then using point 3.\ above.

5.
We choose the sequence $(\pi_{\tkernel_0,n}^{*})_{n\geq 0}$ so that $\pi_{\tkernel_0,n}^*(a\mid\bar{s}) \geq \pi_{n,\tkernel}(a\mid\bar{s})$ for all $\bar{s}\in\theinterestingstates$ and all $a\in\oactions(\bar{s})$, which implies 
$\|
\pi_{n,\tkernel}(\cdot|\bar{s})
- \pi_{\tkernel_0,n}^{*} (\cdot|\bar{s}) 
\|_1
= 2(1-\pi_{n_0,\tkernel}(\oactions(\bar{s})|\bar{s})) 
$.
The first statement follows the same logic as the second statement of the claim
and point 4.\ of lemma~\ref{le:f}.
The only difference is that we are now bounding for all the remaining horizons.
Notice that the $\beta'$ we choose works for all the horizons because
$\beta' > \beta_N \geq \beta_h$, $h\leq N$.
The correctness of the choice of $x_0$ follows from the fact that $f_{\gamma'}$ is
increasing.
The statement about $y_n \rightarrow y$ follows also from point 4.\ of lemma~\ref{le:f}.
Finally, for the q-linear convergence we have (let us denote $x_n := f_{\gamma'}^{\circ(n-n_0)}(x_0)$)
\begin{align*}
\lim_{n\rightarrow\infty}
\frac{y_{n+1}-y_L}{y_{n}-y_L}
&=
\lim_{n\rightarrow\infty}
\frac{2(1-f_{\gamma'}(x_n))-2(1-x^{*}(\gamma'))}{2(1-x_n)-2(1-x^{*}(\gamma'))}
=
\lim_{n\rightarrow\infty}
\frac{ f_{\gamma'}(x_n)-x^{*}(\gamma')}{x_n-x^{*}(\gamma')}
\\
&=^{Heine}
\lim_{x\rightarrow x^{*}(\gamma')}
\frac{ f_{\gamma'}(x)-x^{*}(\gamma')}{x-x^{*}(\gamma')}
\\
&=^{L'Hospital}
\lim_{x\rightarrow x^{*}(\gamma')}
f_{\gamma'}'(x)
=
\lim_{x\rightarrow x^{*}(\gamma')}
( \frac{x}{x+ \gamma'} )'
=
\frac{
\gamma'
}{
(x^{*}(\gamma') + \gamma')^2
}
\\
&=
\gamma' < 1
,
\end{align*}
where we used Heine theorem, L'Hospital rule and definition of $x^*(\gamma')$ from lemma~\ref{le:f}.
\end{proof}

Given a deviation of $\delta$ from a deterministic kernel, corollary~\ref{le:limitbounds} can be used to bound the deviation in the accumulation points of, e.g., the goal-reaching objective (point 4.), or the policy (points 1.~and 5.). An illustration of the accumulation points of $\min_{\bar{s}\in\theinterestingstates}\pi_n(\oactions(\bar{s})|s)$ and the estimates in corollary~\ref{le:limitbounds} is provided in figure~\ref{fig:bandit} in the case of a $2$-armed bandit model (see section~\ref{ssse:specmutheorem} and the description there) and in figure~\ref{fig:gridworld} in case of a simple grid world model. All details of the construction of the MDP underlying figure~\ref{fig:gridworld} can be found in the appendix B, example~\ref{ex:gridworld}. Figure~\ref{fig:gridworld:a} shows how the bound in the corollary, point 5.~(blue line) and the accumulation points of \eUDRL{}-generated policies (orange marks) behave as functions of the distance $\delta$ to a deterministic kernel.  Figure~\ref{fig:gridworld:b} shows how \eUDRL{} approaches those accumulation points as a function of the iteration $n$. Varying initial policies are visible at $n=0$ and varying levels of $\delta$ are highlighted in different colors. Figure~\ref{fig:gridworld:c} contains the same data as Figure~\ref{fig:gridworld:b} but it is organized to reveal the dependency on $\delta$. Notice that initial conditions uniformly fill the plot's area. Figure~\ref{fig:gridworld:d} shows a map of the grid world. Similar to the example presented in figure~\ref{fig:bandit}, it is apparent that the bound in figure~\ref{fig:gridworld:a} (blue line) originates from continuity: it becomes tight as $\delta$ approaches $0$ and becomes loose as $\delta$ increases. Regrettably, the bound deteriorates quickly as $\delta$ increases above $\sim10^{-5}$, which limits the applicability in a practical setting. The origin of this behavior might be seen in lemma~\ref{le:alpha}, which uses a uniform bound $\alpha$ to estimate the visitation probabilities ignoring the specific dynamics of the environment. It remains a question for forthcoming research to develop methods that provide practical estimates that hold on intervals of $\delta$, $N$, number of states, etc.~of practical interest.

\begin{figure}
  \begin{subfigure}{0.48\linewidth}
		\includegraphics[width=\linewidth]{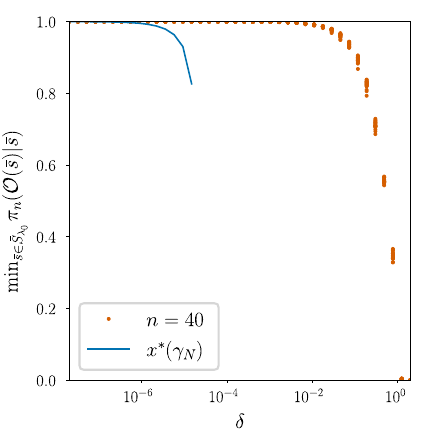}
		\caption{Dependency on distance to determin.~kernel.}
        \label{fig:gridworld:a}
	\end{subfigure}
     \begin{subfigure}{0.48\linewidth}
		\includegraphics[width=\linewidth]{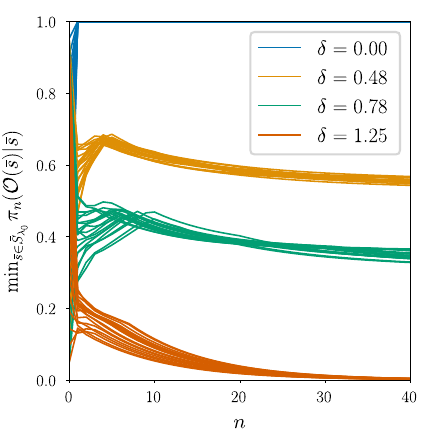}
		\caption{Dependency on iteration.}
        \label{fig:gridworld:b}
	\end{subfigure}\\
     \begin{subfigure}{0.48\linewidth}
		\includegraphics[width=\linewidth]{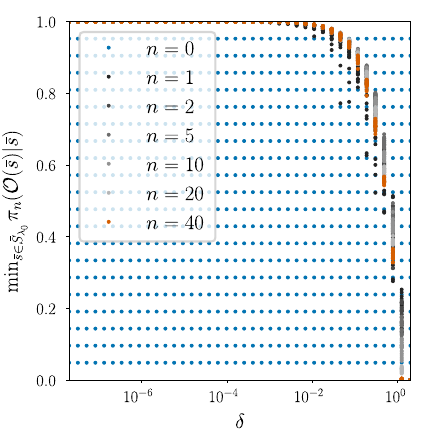}
		\caption{Dependency on distance to determin.~kernel.}
        \label{fig:gridworld:c}
	\end{subfigure}
     \begin{subfigure}{0.48\linewidth}
		\includegraphics[width=\linewidth]{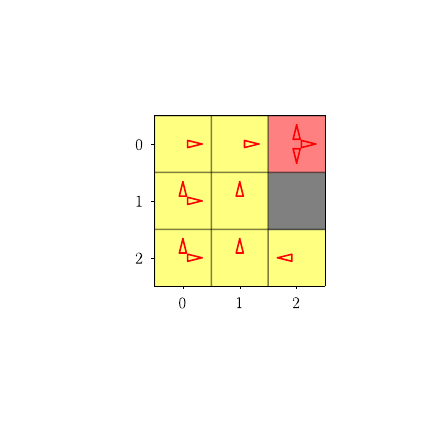}
		\caption{Map of the grid world}
        \label{fig:gridworld:d}
	\end{subfigure}
    \caption{Illustration of estimates of corollary~\ref{le:limitbounds} on the behavior of $\min_{\bar{s}\in\theinterestingstates}\pi_n(\oactions(\bar{s})|\bar{s})$ for the simple grid world example shown in plot (d). Plot (a) shows the dependency of accumulation points (depicted in orange, approximated by values at $n=100$) and the estimate in the corollary (blue). To illustrate the approach of \eUDRL{} towards the accumulation points, plot (b) shows the dependency on the iteration $n$ for varying distances to a deterministic kernel highlighted by different colors and varying initial policy. Plot (c) contains the same information as (b) but it is organized to reveal the dependency on  $\delta$, where varying numbers of iteration are highlighted by different colors. Plot (d) shows the map of the grid world. A wall is depicted in gray, the set $\theinterestingstates$ in yellow, {and the goal in red}. Arrows depict optimal actions associated with the specific state and goal.}
    \label{fig:gridworld}
\end{figure}

\subsection{Asymptotic Continuity Assuming Uniqueness of the Optimal Policy}
\label{sse:specM1}
The main result of this section is to demonstrate the continuity of accumulation points of the sequence of \eUDRL{}-generated policies under the assumption that the optimal policy is unique and deterministic. This assumption can be concisely expressed as follows: for a deterministic kernel $\tkernel_0$ the set of optimal actions has exactly one element $|\oactions(\bar{s})|=1$ on $\theinterestingstates$ (compare definitions~\ref{def:optimalAction} and~\ref{de:theinterestingstates} point 3). Recall also that restricting to $\theinterestingstates$ does not result in a loss of generality as the values (and potential discontinuity) of a policy outside $\theinterestingstates$ have no impact on the continuity of goal-reaching objective. The assumption $|\oactions(\bar{s})|=1$ entails that the factorization with respect to the set of optimal actions becomes trivial and the continuity relative to optimal actions reduces to ordinary continuity. In this situation theorem~\ref{le:detcont} implies that \eUDRL{}-generated policies are continuous at $\tkernel_0$. Working with continuous policies, as opposed to relatively continuous ones, leads to a substantial simplification compared to the discussion in the section~\ref{sse:specmu}. The uniqueness condition on the optimal policy is commonly encountered in publications on the analysis of convergence and stability of RL algorithms. But in the context of \eUDRL{} it is not straightforward to see how this condition could be employed without loss of generality. Similar to the preceding sections, the core idea that enables our asymptotic continuity analysis is to derive bounds for the \eUDRL{} policy recursion through monotonicity bounds for repeated application of rational functions (see remark~\ref{re:monotonicity}). This leads to the study of a dynamical system (see lemma~\ref{le:hlemma}), whose fixed-point properties reveal information about the accumulation points of \eUDRL{}'s policy recursion as $\tkernel\rightarrow\tkernel_0$.

The structure and content of this section roughly align with section~\ref{sse:specmu}. The discussion is often shorter due to the assumption of a unique optimal policy but our analysis is based on the study of a different rational function, which requires an additional lemma about the dynamical system properties. We derive the preliminary lemma in section~\ref{ssse:specM1lemmas}. The main theorem is contained in section~\ref{ssse:specmutheorem}. The result is generalized to other segment subspaces in section~\ref{ssse:specM1extending}. As a corollary of the main result, we provide explicit estimates on the location of accumulation point sets in section~\ref{ssse:specM1accbound}. This yields upper bounds on the errors of policies, values and goal-reaching objective. We conclude the discussion with examples.

\subsubsection{Preliminary lemma}
\label{ssse:specM1lemmas}

Our analysis of \eUDRL{}'s convergence in case of a unique optimal policy draws on the properties of a dynamical system induced by a certain rational function. The emergence of the dynamical system and its relation to the \eUDRL{} policy recursion are described in detail in section~\ref{ssse:specmulemmas}. Here we only analyze the relevant dynamical system in case of a unique optimal policy.

\begin{lemma}(h-lemma)
\label{le:hlemma}
Let $N\geq 1$ be a natural number, let $b_0 = \frac{1}{2N} (\frac{2N-1}{2N})^{2N-1}$ and let $b\in(0,b_0)$. Let ${h}_b:[0,1] \rightarrow [0,1]$ be defined as
$$
{h}_b(x) = \frac{x^{2N}}{x^{2N} + b}.
$$
Then the following assertions hold.
\begin{enumerate}
\item
${h}_b$ is increasing.
\item
${h}_b$ has exactly three fixed points $0$, $x_l(b)$ and $x_u(b)$, where $0 < x_l(b) < x_u(b) < 1$. It holds that
$$
\begin{aligned}
{h}_b(x) &> x, \quad \textnormal{for} \ x_l(b) < x < x_u(b),\\
{h}_b(x) &< x, \quad \textnormal{for} \ 0 < x < x_l(b), \ \textnormal{or}\ x_u(b) < x \leq 1
.
\end{aligned}
$$
\item $h_b( (x_l(b),1] ) \subset  (x_l(b),1]$
and $h_b^{\circ n}(x) \rightarrow x_u(b)$ for all $x \in (x_l(b), 1 ]$.
\item Let $(y_n)_{n\geq0}$, $y_n \in [0,1]$ be a sequence with $y_0 > x_l(b)$ and $y_{n+1} \geq h_b(y_n)$ for all $n\geq 0$. Then $y_n \geq h_b^{\circ n}(y_0)$ and $\liminf_{n} y_n \geq x_u(b)$.
\item 
There exist unique, continuous, strictly monotonic extensions
$\bar{x}_l:[0,b_0] \rightarrow [0,\frac{2N-1}{2N}]$, $\bar{x}_u:[0,b_0]\rightarrow [\frac{2N-1}{2N},1]$, where $\bar{x}_l$ is increasing and $\bar{x}_u$ is decreasing, so that $\bar{x}_l(b) = x_l(b)$ and $\bar{x}_u(b) = x_u(b)$ on $(0,b_0)$.
It follows that $x_l(b) \rightarrow 0$, $x_u(b) \rightarrow 1$ as
$b \rightarrow 0$ and $x_l(b) \rightarrow \frac{2N-1}{2N}$, $x_u(b) \rightarrow \frac{2N-1}{2N}$ as $b \rightarrow b_0$. Since $\bar{x}_l$ is just an extension of $x_l$ we will denote it by the same symbol $x_l$. Similarly, we denote $\bar{x}_u$ by $x_u$.
\item 
For $0 < b < b_0$ the values $x_u(b)$ and $x_l(b)$
can be computed by iterative application of the strict contractions $g_l:[0,\frac{2N-1}{2N}]\rightarrow [0,\frac{2N-1}{2N}]$, $g_l(x) = (\frac{b}{1-x})^{\frac{1}{2N-1}}$ and $g_u:[\frac{2N-1}{2N},1]\rightarrow [\frac{2N-1}{2N},1]$, $g_u(x) = 1-\frac{b}{x^{2N-1}}$.
\end{enumerate}
\end{lemma}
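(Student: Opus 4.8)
The plan is to reduce all six assertions to the analysis of the single auxiliary polynomial $\phi(x) = x^{2N-1}(1-x)$ on $[0,1]$. The key starting observation is the algebraic identity
$$h_b(x) - x = \frac{x^{2N} - x(x^{2N}+b)}{x^{2N}+b} = \frac{x\,(\phi(x) - b)}{x^{2N}+b},$$
whose denominator is strictly positive on $[0,1]$. Hence on $(0,1]$ the sign of $h_b(x)-x$ equals the sign of $\phi(x) - b$, and the positive fixed points of $h_b$ are exactly the solutions of $\phi(x) = b$. This identity converts every statement about $h_b$ into a statement about $\phi$.

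For point 1 I would simply differentiate: writing $u = x^{2N}$, one has $h_b'(x) = \frac{b\,u'}{(u+b)^2}$ with $u' = 2N x^{2N-1}\geq 0$, so $h_b'\geq 0$ and $h_b$ is increasing. For point 2 I would study $\phi$. Its derivative $\phi'(x) = x^{2N-2}\big((2N-1) - 2Nx\big)$ shows $\phi$ strictly increases on $[0,x^*]$ and strictly decreases on $[x^*,1]$, where $x^* = \frac{2N-1}{2N}$, with $\phi(0)=\phi(1)=0$ and maximal value $\phi(x^*) = \big(\tfrac{2N-1}{2N}\big)^{2N-1}\tfrac{1}{2N} = b_0$. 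For $b\in(0,b_0)$ the intermediate value theorem together with strict monotonicity on each piece yields exactly two solutions $x_l(b) < x^* < x_u(b)$ of $\phi(x)=b$, so the fixed points of $h_b$ are precisely $0,\,x_l(b),\,x_u(b)$ with $0<x_l<x_u<1$. The claimed sign pattern of $h_b(x)-x$ is then read off from the sign of $\phi(x)-b$ on the three subintervals of $(0,1]$.

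Points 3 and 4 follow the template already established for $f_\gamma$ in lemma~\ref{le:f}. Since $h_b$ is increasing with $h_b(x_l)=x_l$ and $h_b(1)=\frac{1}{1+b}<1$, it maps $(x_l,1]$ into itself; splitting into $(x_l,x_u)$ and $(x_u,1]$, the iterates $h_b^{\circ n}(x)$ are monotone (increasing on the first, decreasing on the second, by point 2) and bounded towards $x_u$, hence converge to the unique fixed point $x_u$ in the invariant interval, giving point 3. Point 4 is the usual comparison argument: an induction using monotonicity of $h_b$ shows $y_n\geq h_b^{\circ n}(y_0)$, and since $y_0 > x_l$ the right-hand side tends to $x_u$, whence $\liminf_n y_n \geq x_u$. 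For point 5 I would define $\bar{x}_l,\bar{x}_u$ as the inverses of the two strictly monotone restrictions of $\phi$: $\phi|_{[0,x^*]}\colon[0,x^*]\to[0,b_0]$ is a continuous strictly increasing bijection and $\phi|_{[x^*,1]}\colon[x^*,1]\to[0,b_0]$ a continuous strictly decreasing one, so their inverses are continuous, strictly monotone, agree with $x_l,x_u$ on $(0,b_0)$, and carry the prescribed boundary values; uniqueness is forced by continuity and density of $(0,b_0)$.

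Point 6 rearranges $\phi(x)=b$ into the two fixed-point forms $x=\big(\tfrac{b}{1-x}\big)^{1/(2N-1)}=g_l(x)$ and $x = 1-\tfrac{b}{x^{2N-1}}=g_u(x)$, so that $x_l,x_u$ are fixed points of $g_l,g_u$ respectively. I would then verify the self-map property and contractivity by differentiating, $g_l'(x)=\frac{1}{2N-1}\frac{g_l(x)}{1-x}$ and $g_u'(x)=\frac{(2N-1)b}{x^{2N}}$, and bound each using $1-x\geq\frac{1}{2N}$ on $[0,x^*]$, $x\geq x^*$ on $[x^*,1]$, and above all the strict inequality $b<b_0$, after which Banach's fixed-point theorem delivers convergence. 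The main obstacle I anticipate is precisely here: the derivative estimates are sharp, degenerating to exactly $1$ at the boundary $b=b_0$ and at $x=x^*$, so the contraction constants must be extracted carefully from $b<b_0$ together with the fact that $g_l$ maps $[0,x^*]$ into $[0,x^*)$ (its values stay strictly below $x^*$, with maximum $(2Nb)^{1/(2N-1)}<x^*$). Obtaining a Lipschitz constant uniformly below $1$, rather than merely $g_l',g_u'<1$ pointwise, is the delicate step, and it is exactly where the precise value of $b_0$ as the maximum of $\phi$ is used.
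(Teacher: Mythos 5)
Your proposal is correct and follows essentially the same route as the paper: the auxiliary polynomial $\phi(x)=x^{2N-1}(1-x)$ is exactly the paper's $u(x)$, and points 1--6 are handled by the same monotonicity, intermediate-value, inverse-function, and Banach fixed-point arguments. The ``delicate step'' you flag in point 6 resolves just as you suspect: for fixed $b<b_0$ the explicit derivative bounds give $g_l'\leq (b/b_0)^{1/(2N-1)}<1$ and $g_u'\leq b(2N-1)/(\tfrac{2N-1}{2N})^{2N}<1$ uniformly on the respective intervals (the paper additionally invokes compactness and continuity of the derivative to extract the uniform Lipschitz constant), so no further idea is needed.
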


\begin{proof}
\emph{1.} The derivative of ${h}$ is non-negative. It is positive if $x>0$.

\emph{2.} $0$ is a solution of ${h}(x) = x$ and therefore
it is a fixed point. If $x>0$ then
${h}(x) > x \iff u(x) := x^{2N-1}(1-x) > b$ and 
${h}(x) = x \iff u(x) = b$. Define $u(0):=0$, such that $u(x)=0$ implies $x\in\{0,1\}$. Computing the derivative one can verify that
$u$ has two stationary points $x=0$ and $x=\frac{2N-1}{2N}$ and is increasing on $[0,\frac{2N-1}{2N}]$ and decreasing on $[\frac{2N-1}{2N},1]$. $u$ has a local maximum at $x=\frac{2N-1}{2N}$ with $u(\frac{2N-1}{2N}) =  \frac{1}{2N}(\frac{2N-1}{2N})^{2N-1}$. Making use of the assumptions $0 < b < \frac{1}{2N}(\frac{2N-1}{2N})^{2N-1}$ the intermediate value theorem implies that $u(x) =  b$ has exactly two solutions on $[0,1]$. The first solution, $x_l$, lies in $(0,\frac{2N-1}{2N})$. Similarly, the second solution, $x_u$, lies in $(\frac{2N-1}{2N},1)$. The rest of the statement follows since $u(x) > b$ on $(x_l,x_u)$ and $u(x) < b$ on $[0,x_l) \cup (x_u,1]$.

\textit{3.}
Denote $h_1=h\restriction_{I_1}$, $h_2 =h\restriction_{I_2}$, where 
$I_1=(x_l,x_u]$, $I_2= [x_u,1]$.
Since $h$ is increasing $x_l = h_1(x_l) < h_1(I_1) \leq h_1(x_u) = x_u$ and thus 
$h_1(I_1) \subset I_1$, i.e., $h_1:I_1\rightarrow I_1$. Similarly
$h_2:I_2\rightarrow I_2$. We show that if $x \in (x_l,1]$ then $h^{\circ n}(x) \rightarrow x_u$. If $x \in I_1$ then $h^{\circ n}(x) = h_1^{\circ n}(x)$ is an increasing and bounded sequence that has a limit. Since $h$ is continuous this limit is a fixed point. Since
there are two fixed points in $\bar{I}_1$ and
$x_l$ cannot be a limit for any $x \in I_1$ (since $h_1^{\circ n}(x) \geq x > x_l$) it follows $h_1^{\circ n}(x) \rightarrow x_u$. Similarly in the remaining case 
$h_2^{\circ n}(x) \rightarrow x_u$.

\textit{4.} The proof follows the same line as in point 4.~of lemma~\ref{le:f}.

\textit{5.}
Consider the increasing, continuous function $u_1=u\restriction_{[0,\frac{2N-1}{2N}]}$ and set
$\bar{x}_l = u_1^{-1}$, which is also increasing. For all $b\in (0,b_0)$ $u_1^{-1}(b)\in[0,\frac{2N-1}{2N}]$ is the unique solution of the equation $u_1(x)=b$ in this interval, so that $\bar{x}_l(b)=x_l(b)$. Since $u_1(0)=0$ and $u_1(\frac{2N-1}{2N})=b_0$ it follows
$\bar{x}_l(0) = 0$ and $\bar{x}_l(b_0) = \frac{2N-1}{2N}$.
Since $u_1:[0,\frac{2N-1}{2N}] \rightarrow [0,b_0]$ is continuous,
defined on interval and increasing it is an open map. It follows that $u_1$ is a homeomorphism and $\bar{x}_l$ is continuous. The discussion of $\bar{x}_u$ follows along the same line by consider the decreasing, continuous function $u_2=u\restriction_{[\frac{2N-1}{2N},1]}$ and setting
$\bar{x}_u= u_2^{-1}$.

\textit{6.}
Now assume that $b\in (0,b_0)$. We aim to prove that $g_l$ is a strict contraction on a complete metric space $I_l = [0,\frac{2N-1}{2N}]$ with the fixed point $x_l(b)$.
In order to prove $g_l(I_l) \subset I_l$
assume $x\in I_l$. Since $x \leq \frac{2N-1}{2N} \iff 1-x \geq \frac{1}{2N}$ it follows
$
g_l(x) = (\frac{b}{1-x})^{\frac{1}{2N-1}}
< (\frac{b_0}{\frac{1}{2N}})^{\frac{1}{2N-1}}
= \frac{2N-1}{2N}
$. Further, $g_l(x) = (\frac{b}{1-x})^{\frac{1}{2N-1}}
\geq 0$. Combining those points yields $g_l(x) \in I_l$. Completeness follows since $I_l$ is a compact metric space.
Note that for all $x\in I_l$ it holds that
$\frac{d g_l}{dx}(x) = \frac{b^{\frac{1}{2N-1}}}{2N-1}(\frac{1}{1-x})^{1+\frac{1}{2N-1}}
<
\frac{b_0^{\frac{1}{2N-1}}}{2N-1}(2N)^{1+\frac{1}{2N-1}} = 1
$.
Since $0\leq \frac{d g_l}{dx} < 1$ on $I_l$ and $I_l$ is compact and $\frac{d g_l}{dx}$ is continuous there exists $L < 1$ so that 
$|\frac{d g_l}{dx}| < L$.
By Lagrange's mean value theorem it follows that for all $x,x' \in I_l$ it holds that $|g_l(x)-g_l(x')| < L|x-x'|$. This proves that $g_l$ is a
strict contraction on the complete metric space
$I_l$.
From Banach's fixed point theorem it follows that $g_l$ has exactly
one fixed point and the iteration $g_l^{\circ n}(x_0)$ converges to this fixed point
for $n\rightarrow \infty$ for all $x_0 \in I_l$.
It remains to show that $x_l(b)$ is the fixed point of $g_l$. Since $x_l(b)$ is the solution
of $u_1(x) = b$ we have 
$
b = u_1(x_l(b)) = x_l^{2N-1}(b)(1-x_l(b)).
$
Since $b > 0$ this is equivalent to
$x_l(b) = (\frac{b}{1-x_l(b)})^{\frac{1}{2N-1}}=g_l(x_l(b))$.

The claim about $g_u$ can be proved in similar way. 
Assume that $b\in (0,b_0)$ and $x\in I_u = [\frac{2N-1}{2N},1]$.
Since 
$
g_u(x)
=
1 - \frac{b}{x^{2N-1}}
> 1 - \frac{b_0}{(\frac{2N-1}{2N})^{2N-1}}
=
1 - \frac{1}{2N}
=
\frac{2N-1}{2N}
$
and
$
g_u(x)
=
1 - \frac{b}{x^{2N-1}}
< 1 - \frac{0}{1^{2N-1}}
=
1
$
we get $g_u(x)\in I_u$.
Further for all $x\in I_u$ it holds that $0
\leq \frac{d g_u}{dx}(x)
=
\frac{b(2N-1)}{x^{2N}}
<
\frac{b_0(2N-1)}{(\frac{2N-1}{2N})^{2N}}
=
1
$
for all $x\in I_u$. We conclude (similarly as in case of $g_l$)
that $g_u$ is a strict contraction on a complete metric space.
Since
$
b = u_2(u_2^{-1}(b)) = u_2(x_u(b)) = x_u^{2N}(b) (1-x_u(b))
$
we get
$
x_u(b) = 1-\frac{b}{x_u^{2N}(b)} = g_u(x_u(b))
,
$
i.e.,
$x_u(b)$ is fixed point of $g_u$.
\end{proof}

\begin{figure}
    \centering
    \includegraphics[width=0.75\textwidth]{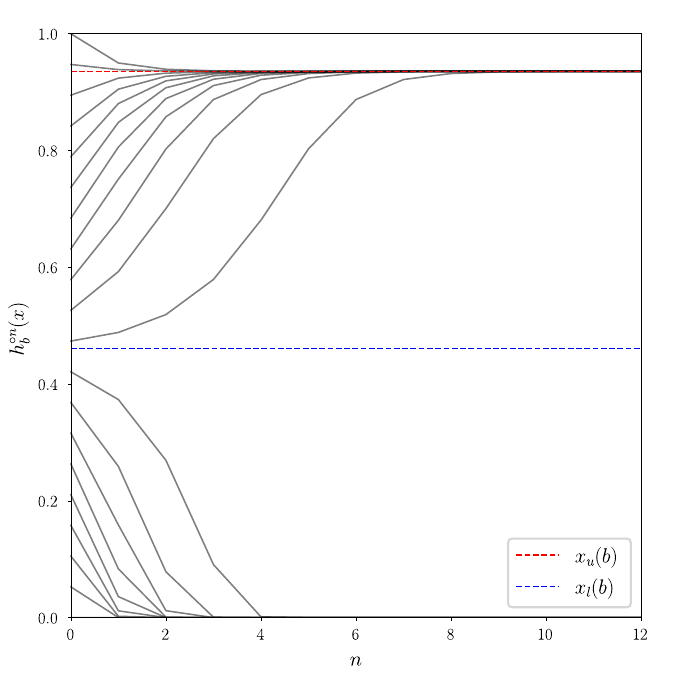}
    \caption{Illustration of the dynamical system induced by iterative application of  $h_b$. The figure shows the dependence of $h_{b}^{\circ n}(x)$ given initial condition $x$ on iteration $n$. The parameter $b$ is chosen as $b=\frac{b_0}{2}$ for $N=2$.
}
    \label{fig:h}
\end{figure}
Figure~\ref{fig:h} illustrates the dynamical system induced by iterative application of $h_{b}$.

\subsubsection{The main theorem}
\label{ssse:specM1theorem}

The following result is a counterpart of theorem~\ref{le:limdetcont} point 2. Instead of assuming $\theinterestingstates\subset \supp\bar\mu$ (as in theorem~\ref{le:limdetcont}) the following theorem applies in a situation where $|\oactions{(\bar{s})}|=1$ on $\theinterestingstates$. Together with lemma~\ref{le:detopt} the theorem demonstrates the continuity of the sets of accumulation points of UDRL generated policies. Since the reasoning is 
the same as in point 1.~of theorem~\ref{le:limdetcont} we skip the analogue of theorem~\ref{le:limdetcont} point 1.~and focus solely on point 2.

\begin{theorem}\label{le:limdetcontM1} (Continuity of \eUDRL{} limit policies at determinist.~points -- the case $|\oactions(\bar{s})|=1$ on $\theinterestingstates$) Let $\{\mathcal{M}_{\tkernel} | \tkernel\in(\Delta \mathcal{S})^{\mathcal{S}\times\mathcal{A}} \}$ and $\{\bar{\mathcal{M}}_{\tkernel} |\tkernel\in (\Delta \mathcal{S})^{\mathcal{S}\times\mathcal{A}} \}$ be compatible families. Let $\tkernel_0\in(\Delta\mathcal{S})^{\mathcal{S}\times\mathcal{A}}$ be a deterministic kernel and let $(\pi_{n,\tkernel})_{n\geq0}$ be a sequence of \eUDRL{}-generated policies with initial condition $\pi_0 > 0$ and transition kernel $\tkernel$. Assume that $|\oactions(\bar{s})|=1$ on $\theinterestingstates$. Then for all $\pi_0>0$ there exists $\delta\in(0,2)$ so that for all $\tkernel \in U_{\delta}(\tkernel_0)$ with
$$U_{\delta}(\tkernel_0) = \{ \tkernel | \max_{(s,a)\in \mathcal{S}\times\mathcal{A}}
\| \tkernel(\cdot|s,a) - \tkernel_0(\cdot|s,a)\|_1 < \delta \}$$
and all $\bar{s} \in \theinterestingstates$ it holds that
$$
\liminf_n \pi_{n,\tkernel}(\oactions(\bar{s})|\bar{s}) \geq x_u(b)\quad\text{and}\quad x_u(b) \rightarrow 1\quad\text{as}\quad\delta \rightarrow 0,
$$
where $$b = \frac{\delta N^2 (N+1)}{4 (1-\frac{\delta}{2})^{2N} \min_{\bar{s}'\in \supp \bar{\mu}} \bar{\mu}(\bar{s}')}$$
and $x_u$ is defined in lemma~\ref{le:hlemma}. It follows that
$$
\liminf_n \pi_{n,\tkernel}(\oactions(\bar{s})|\bar{s})  \rightarrow 1\quad\text{as}\quad\tkernel \rightarrow \tkernel_0.
$$
\end{theorem}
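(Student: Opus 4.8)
The plan is to mirror the proof of theorem~\ref{le:limdetcont}, point 2, replacing the linear dynamical system governed by $f_\gamma$ with the one governed by $h_b$ from lemma~\ref{le:hlemma}. Since $|\oactions(\bar{s})|=1$ on $\theinterestingstates$, the relative continuity of theorem~\ref{le:detcont} collapses to ordinary continuity, and $\pi_{n,\tkernel}(\oactions(\bar{s})|\bar{s})$ is simply the probability of the unique optimal action. Writing $x_n := \min_{\bar{s}\in\theinterestingstates}\pi_{n,\tkernel}(\oactions(\bar{s})|\bar{s})$, the goal is to establish the scalar recursive inequality $x_{n+1}\geq h_b(x_n)$ together with a basin condition $x_{n_0}>x_l(b)$ at some finite iteration $n_0$, and then to invoke lemma~\ref{le:hlemma}, points 3 and 4, to conclude $\liminf_n x_n\geq x_u(b)$; the final limit then follows from $b\to 0$ as $\delta\to 0$ and $x_u(b)\to 1$ (lemma~\ref{le:hlemma}, point 5).

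The central step is the derivation of $x_{n+1}\geq h_b(x_n)$. Fixing $2>\delta>0$, lemma~\ref{le:suppstab}, point 2, guarantees $\theinterestingstates\subset\supp\den_{\tkernel,\pi_n}$, so the recursion~\eqref{eq:piM} is well-defined on $\theinterestingstates$. I would bound its numerator and denominator separately. For the denominator I would split off the non-optimal actions and bound $Q_\tkernel^{\pi_n,g,h}((s,h',g'),a)\leq Q_\tkernel^*((s,h,g),a)\leq\tilde\beta=\tfrac{\delta N}{2}$ for $a\notin\oactions(\bar{s})$, using lemma~\ref{le:contoptbound}, point 2, lemma~\ref{le:suppstab}, point 5, and lemma~\ref{le:opthbound}; since the remaining factors $\pi_n\cdot v$ sum to at most one, the non-optimal contribution is at most $\tilde\beta$. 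For the numerator I would retain the single diagonal term with $a=a^*$ the unique optimal action, $h'=h$, $g'=g$, and bound each of its three factors by powers of $x_n$ and $1-\tfrac{\delta}{2}$ along one value-one optimal trajectory passing through $(s,h,g)$, whose existence is furnished by the construction in lemma~\ref{le:suppstab}, point 1. Concretely the visitation factor $v_{\tkernel,\pi_n}(h,g|s,h)$ is bounded below as in lemma~\ref{le:alpha} by $c^{-1}_{\tkernel,\pi_n}\bar{\mu}_{\min}$ times the prefix probability reaching $(s,h,g)$ from $\supp\bar{\mu}$, and the action-value factor $Q_\tkernel^{\pi_n}((s,h,g),a^*)$ by the probability of the optimal goal-reaching continuation; both the prefix and the continuation consist of optimal actions at states in $\theinterestingstates$, so every policy factor is $\geq x_n$ and every transition factor is $\geq 1-\tfrac{\delta}{2}$. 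Counting at most $2N$ factors of each type and using $c^{-1}_{\tkernel,\pi_n}\geq\tfrac{2}{N(N+1)}$ from~\eqref{eq:cupperbound}, the numerator is at least $A\,x_n^{2N}$ with $A=\tfrac{2}{N(N+1)}(1-\tfrac{\delta}{2})^{2N}\bar{\mu}_{\min}$. Applying the monotonicity of remark~\ref{re:monotonicity} to the rational function $\tfrac{t}{t+\tilde\beta}$ then yields $\pi_{n+1,\tkernel}(\oactions(\bar{s})|\bar{s})\geq\tfrac{A x_n^{2N}}{A x_n^{2N}+\tilde\beta}=h_b(x_n)$ with $b=\tilde\beta/A$, which is precisely the stated value of $b$; taking the minimum over $\bar{s}$ gives $x_{n+1}\geq h_b(x_n)$.

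To close the argument I would supply the basin condition. Observe that $b\to 0$ as $\delta\to 0$, so for $\delta$ small enough $b<b_0$ and lemma~\ref{le:hlemma}, point 5, gives $x_l(b)<\tfrac{2N-1}{2N}$; crucially this upper bound on $x_l(b)$ is uniform in $\delta$. By theorem~\ref{le:detcont} the finite-iteration policy satisfies $\pi_{1,\tkernel}(\oactions(\bar{s})|\bar{s})\to 1$ as $\tkernel\to\tkernel_0$, so after possibly shrinking $\delta$ I may assume $\pi_{1,\tkernel}(\oactions(\bar{s})|\bar{s})>\tfrac{2N-1}{2N}$ for all $\bar{s}\in\theinterestingstates$ and $\tkernel\in U_\delta(\tkernel_0)$; hence $x_1>\tfrac{2N-1}{2N}>x_l(b)$. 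Lemma~\ref{le:hlemma}, points 3 and 4, applied to the sequence $(x_n)_{n\geq 1}$ then gives $\liminf_n x_n\geq x_u(b)$, i.e.\ $\liminf_n\pi_{n,\tkernel}(\oactions(\bar{s})|\bar{s})\geq x_u(b)$ for every $\bar{s}\in\theinterestingstates$. Finally $b\to 0$ forces $x_u(b)\to 1$, giving $\liminf_n\pi_{n,\tkernel}(\oactions(\bar{s})|\bar{s})\to 1$ as $\tkernel\to\tkernel_0$.

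The main obstacle I anticipate is the numerator estimate: one must verify that the visitation and goal-reaching probabilities genuinely factor into at most $2N$ terms each bounded below by $x_n$, which hinges on the structural fact that every critical state lies on a single value-one optimal trajectory from $\supp\bar{\mu}$, so that all of the conditioning actions (not merely those inside the segment) are optimal and thus controlled by $x_n$. A secondary, more delicate point is the simultaneous $\delta$-dependence of the basin threshold $x_l(b)$ and of the finite-iteration policy; this is resolved cleanly by the uniform bound $x_l(b)\leq\tfrac{2N-1}{2N}$ rather than by a quantitative balancing of the two.
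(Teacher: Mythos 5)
Your proposal is correct and follows essentially the same route as the paper's proof: the same splitting of the recursion~\eqref{eq:piM} into a diagonal optimal term bounded below by $A\,x_n^{2N}$ (with the visitation and $Q$-factors each contributing at most $N$ policy and $N$ kernel factors along a value-one optimal trajectory) and a non-optimal remainder bounded by $\tilde\beta=\tfrac{\delta N}{2}$, followed by the monotonicity remark and lemma~\ref{le:hlemma}. The only (immaterial) deviation is the basin condition: the paper secures $x_0=\min_{\bar{s}}\pi_0(\oactions(\bar{s})|\bar{s})>x_l(b)$ directly from $\pi_0>0$ and $x_l(b)\rightarrow 0$ as $\delta\rightarrow 0$, whereas you invoke theorem~\ref{le:detcont} to get $x_1>\tfrac{2N-1}{2N}\geq x_l(b)$; both are valid.
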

In fact we will prove the slightly stronger assertion that there exists $\delta\in(0,2)$ such that for all $\tkernel \in U_{\delta}(\tkernel_0)$ and $\bar{s} \in \theinterestingstates$ it holds that
\begin{align}
\min_{\bar{s} \in \theinterestingstates} \pi_{n+1,\tkernel}(\oactions(\bar{s})|\bar{s})
\geq h_b(\min_{\bar{s} \in \theinterestingstates} \pi_{n,\tkernel}(\oactions(\bar{s})|\bar{s}))
\label{eq:maintheoremM1}
\end{align}
where the function $h_b$ and quantities $x_u(b)$, $x_l(b)$ are defined in lemma~\ref{le:hlemma}. 
\begin{proof}
We utilize the notation introduced in the proof of theorem~\ref{le:limdetcont}. Assume $\pi_0 >0$ is arbitrary and fixed. Fix arbitrary $\delta\in(0,2)$ and assume $\tkernel \in U_{\delta}(\tkernel_0)$.
We show some estimates that will be used later in the proof.

\textit{Bounding state visitation:} %
 Assume $\bar{s}=(s,h,g) \in \theinterestingstates$.
By lemma~\ref{le:suppstab} point 3.~and lemma~\ref{le:detopt} (optimality at deterministic points) we have that for $n\geq 1$ it holds that
\begin{align*}
&\prob_{\tkernel_0} (S_0^{\Sigma} = s, l(\Sigma)=h, \rho(S_h^{\Sigma})=g, H_0^{\Sigma}=h, 
G_0^{\Sigma}=g;\pi_{\tkernel_0}^*)\\
&=\prob_{\tkernel_0} (S_0^{\Sigma} = s, l(\Sigma)=h, \rho(S_h^{\Sigma})=g, H_0^{\Sigma}=h, G_0^{\Sigma}=g;\pi_n ) >0.
\end{align*}
It follows that there exists $t$ such that $\prob_{\tkernel_0} (\bar{S}_t = \bar{s};\pi_{\tkernel_0}^*) > 0$. Thus there exists a prefix
$\bar{s}_0, a_0, \ldots , \bar{s}_t = \bar{s}$ with $\prob_{\tkernel_0} (\bar{S}_t = \bar{s},\ldots, a_0, \bar{s}_0 ;\pi_{\tkernel_0}^*)>0$ and it follows that $\bar{s}_i\in\theinterestingstates$ for $i=0,\ldots,t-1$. Assuming  $\tkernel_0$ and $\pi_{\tkernel_0}^*$ are deterministic   $\tkernel_0(s_{i+1}|s_i, a_i) = 1$ and $\pi_{\tkernel_0}^*(a_i|\bar{s}_i) = 1$ for the optimal action $a_i$. Noting that $\tkernel(s_{i+1}|s_i,a_i)
\geq 1 - \frac{\delta}{2}$
and
$\pi(a_i|\bar{s}_i) = \pi(\oactions(\bar{s}_i)|\bar{s}_i) \geq \min_{\bar{s}'\in\theinterestingstates} \pi(\oactions(\bar{s}')|\bar{s}')$ 
leads to the estimate
\begin{align*}
\prob_{\tkernel} (\bar{S}_t = \bar{s};\pi)
&\geq 
\prob_{\tkernel} (\bar{S}_t = \bar{s},\ldots, a_0, \bar{s}_0 ;\pi)
\\
&=\bar{\mu}(\bar{s}_0)
\pi(a_0|\bar{s}_0)
\tkernel(s_1|s_0,a_0)\ldots \tkernel(s|s_{t-1},a_{t-1})
\\
&\geq (1-\frac{\delta}{2})^N
\min_{\bar{s}'\in \supp \bar{\mu}} \bar{\mu}(\bar{s}')
\left(\min_{\bar{s}'\in\theinterestingstates} \pi(\oactions(\bar{s}')|\bar{s}')\right)^{N}.
\end{align*}
For the state visitation probability $v_{\tkernel,\pi}(h,g|s,h)$ this yields that
\begin{align}
v_{\tkernel,\pi}(h,g|s,h) &\geq  \prob_{\tkernel}(H_0^{\Sigma}=h, G_0^{\Sigma} = g, S_0^{\Sigma}=s, l(\Sigma) = h ; \pi)
\nonumber\\
&=
\sum_{\substack{\sigma:l(\sigma)=h,\\\bar{s}_0^{\sigma} = \bar{s}}} c^{-1} \sum_{t'\leq N-l(\sigma)}
\prob_{\tkernel} (\bar{S}_{t'}= \bar{s}_0^{\sigma}, \ldots, \bar{S}_{t+l(\sigma)}=\bar{s}_{l(\sigma)}^{\sigma} ; \pi)\nonumber
\\
&\geq
\sum_{\substack{\sigma:l(\sigma)=h,\\\bar{s}_0^{\sigma} = \bar{s}}} c^{-1} 
\prob_{\tkernel} (\bar{S}_{t}= \bar{s}_0^{\sigma}, \ldots, \bar{S}_{t+l(\sigma)}=\bar{s}_{l(\sigma)}^{\sigma} ; \pi)\nonumber
\\
&=
c^{-1}
\prob_{\tkernel} (\bar{S}_{t}= \bar{s}; \pi)
\sum_{\substack{\sigma:l(\sigma)=h,\\\bar{s}_0^{\sigma} = \bar{s}}} 
\prob_{\tkernel} (\bar{A}_{t}=a_0^{\sigma},\ldots,\bar{S}_{t+l(\sigma)}=\bar{s}_{l(\sigma)}^{\sigma} | \bar{S}_{t}=\bar{s}_0^{\sigma}; \pi)\nonumber
\\
&\geq
c^{-1} \prob_{\tkernel} (\bar{S}_{t}= \bar{s}; \pi)\nonumber
\\
&\geq
\frac{2}{N(N+1) }
\min_{\bar{s}'\in \supp \bar{\mu}} \bar{\mu}(\bar{s}')
(\min_{\bar{s}'\in\theinterestingstates} \pi(\oactions(\bar{s}')|\bar{s}'))^{N}
(1-\frac{\delta}{2})^N,
\label{eq:M1visitbound}
\end{align}
where we used the fact that $\sum_{\sigma:l(\sigma)=h,\bar{s}_0^{\sigma} = \bar{s}} 
\prob_{\tkernel} (\bar{A}_{t}=a_0^{\sigma},\ldots,\bar{S}_{t+l(\sigma)}=\bar{s}_{l(\sigma)}^{\sigma} | \bar{S}_{t}=\bar{s}_0^{\sigma}; \pi) = 1$.

\emph{Bounding $Q_{\tkernel}^{\pi_n,g,h}$:} If $\bar{s} \in \theinterestingstates$ and $a\in\mathcal{A}\setminus \oactions(\bar{s})$ then by lemma~\ref{le:opthbound}
\begin{align}
Q_{\tkernel}^{\pi_n,g,h}((s,h',g'),a) \leq Q_{\tkernel}^*(\bar{s}, a) \leq \frac{\delta N}{2}.
\label{eq:M1QoutMbound}
\end{align}

\emph{Bounding $Q_{\tkernel}^{\pi}$ for optimal actions:} 
Let $\bar{s} \in \theinterestingstates$ and $a\in \oactions(\bar{s})$. If $h=1$ then $\bar{s} = (s,1,g)$.
As $\tkernel_0$ is deterministic there exists  $s''$  so that  $\tkernel_0(s''|s,a) = 1$. Notice that since $\bar{s} \in \theinterestingstates$ and $a$ is optimal
it holds that $\rho(s'') = g$. Consequently $Q_{\tkernel}^{\pi}((s,1,g),a)
\geq
\tkernel(s''|s,a) \geq 1-\frac{\delta}{2}.
$ For $h=2$ $\bar{s} = (s,2,g)$ there exist $s''$ and $a\in\oactions(\bar{s})$ so that $(s'',1,g) \in \theinterestingstates$ and $\tkernel _0(s''|s,a) = 1$ by lemma~\ref{le:suppstab} point 4. It follows for $a \in \oactions(\bar{s})$ that
$$
\begin{aligned}
Q_{\tkernel}^{\pi}((s,2,g),a) 
&=
\sum_{s'\in \bar{S}_T}
\tkernel(s'|s,a) \sum_{a'\in \mathcal{A}} \pi(a'|(s',1,g)) Q_{\tkernel}^{\pi}(s',1,g,a')
\\
&\geq
\tkernel(s''|s,a) \sum_{a'\in \oactions(\bar{s})}  \pi(a'|(s'',1,g)) Q_{\tkernel}^{\pi}(s'',1,g,a')
\\
&\geq
(1-\frac{\delta}{2})^2  \pi(\oactions((s'',1,g))|(s'',1,g))
\geq
(1-\frac{\delta}{2})^2 \min_{\bar{s}'\in\theinterestingstates} \pi(\oactions(\bar{s}')|\bar{s}')
.
\end{aligned}
$$
Reasoning inductively we obtain for horizon $h$
\begin{align}
Q_{\tkernel}^{\pi}((s,h,g),a)
\geq
(1-\frac{\delta}{2})^N \left(\min_{\bar{s}'\in\theinterestingstates} \pi(\oactions(\bar{s}')|\bar{s}')\right)^{N-1}
.
\label{eq:M1QonMbound}
\end{align}

\emph{Recursive bounding of policies on critical states:}
We begin by estimating $\pi(\oactions(\bar{s})|\bar{s})$, $\bar{s} =(s,h,g) \in \theinterestingstates$ similarly to the proof of theorem~\ref{le:limdetcont}. We use lemma~\ref{le:suppstab} point 2. and lemma~\ref{le:recrewrites} point 1. to get
$(\forall n \geq 0, \forall \tkernel \in U_{\delta}(\tkernel_0))$ that
\begin{align*}
\pi_{n+1,\tkernel}(\oactions(\bar{s})|\bar{s}) 
&=
\frac{
\sum\limits_{a\in \oactions(\bar{s}), h'\geq h, g \in \mathcal{G}}
Q_{\tkernel}^{\pi_n,g,h}((s,h',g'),a)
\pi_{n,\tkernel}(a|s,h',g') v_{\tkernel,\pi_n}(h',g'|s,h)
}{
\sum\limits_{a\in \mathcal{A}, h'\geq h, g \in \mathcal{G}}
Q_{\tkernel}^{\pi_n,g,h}((s,h',g'),a)
\pi_{n,\tkernel}(a|s,h',g') v_{\tkernel,\pi_n}(h',g'|s,h)
}
\\
&\geq
\frac{
\sum\limits_{a\in \oactions(\bar{s}), h'\geq h, g \in \mathcal{G}}
Q_{\tkernel}^{\pi_n,g,h}((s,h',g'),a)
\pi_{n,\tkernel}(a|s,h',g') v_{\tkernel,\pi_n}(h',g'|s,h)
}{
\sum\limits_{a\in \oactions(\bar{s}), h'\geq h, g \in \mathcal{G}}
Q_{\tkernel}^{\pi_n,g,h}((s,h',g'),a)
\pi_{n,\tkernel}(a|s,h',g') v_{\tkernel,\pi_n}(h',g'|s,h)
+\frac{\delta N}{2}
},
\end{align*}
where we made use of the estimate~\eref{eq:M1QoutMbound} and lemma~\ref{le:contoptbound} point 2. For $(h',g') \neq (h,g)$ we bound 
$Q_{\tkernel}^{\pi_n,g,h}((s,h',g'),a)$ from below by zero (as they are probabilities)
and use the monotonicity remark~\ref{re:monotonicity}
to obtain
\begin{align*}
\pi_{n+1,\tkernel}(\oactions(\bar{s})|\bar{s}) \geq
\frac{
\sum_{a\in \oactions(\bar{s})}
Q_{\tkernel}^{\pi_n}((s,h,g),a)
\pi_{n,\tkernel}(a|s,h,g)
v_{\tkernel,\pi_n}(h,g|s,h)
}{
\sum_{a\in \oactions(\bar{s})}
Q_{\tkernel}^{\pi_n}((s,h,g),a)
\pi_{n,\tkernel}(a|s,h,g)
v_{\tkernel,\pi_n}(h,g|s,h)
+\frac{\delta N}{2}
}.
\end{align*}
Employing remark~\ref{re:monotonicity} a second time and the estimates \eref{eq:M1QonMbound} and \eref{eq:M1visitbound}
gives then
\begin{align*}
&\pi_{n+1,\tkernel}(\oactions(\bar{s})|\bar{s})\\
&\geq
\frac{
\frac{2}{N(N+1)}
(\min_{\bar{s}\in \supp \bar{\mu}} \bar{\mu}(\bar{s}) )
(1-\frac{\delta}{2})^{2N} (\min_{\bar{s}'\in \theinterestingstates} \pi_{n,\tkernel}(\oactions(\bar{s}')|\bar{s}'))^{2N-1}  \pi_{n,\tkernel}(\oactions(\bar{s})|\bar{s})
}{
\frac{2}{N(N+1)}
(\min_{\bar{s}\in \supp \bar{\mu}} \bar{\mu}(\bar{s}))
(1-\frac{\delta}{2})^{2N} (\min_{\bar{s}'\in \theinterestingstates} \pi_{n,\tkernel}(\oactions(\bar{s}')|\bar{s}'))^{2N-1}  \pi_{n,\tkernel}(\oactions(\bar{s})|\bar{s})
+\frac{\delta N}{2}
}.
\end{align*}
Finally, bounding  $\pi_{n,\tkernel}(\oactions(\bar{s})|\bar{s})$ from below by its minimum over
$\theinterestingstates$ we obtain $(\forall n \geq 0, \forall \tkernel\in U_{\delta}(\tkernel_0), \forall \bar{s}=(s,h,g) \in \bar{S}(\tkernel_0))$:
\begin{align*}
\pi_{n+1,\tkernel}(\oactions(\bar{s})|\bar{s})
&\geq
\frac{
\frac{2}{N(N+1)}
(\min_{\bar{s}\in \supp \bar{\mu}} \bar{\mu}(\bar{s}))
(1-\frac{\delta}{2})^{2N} (\min_{\bar{s}'\in \theinterestingstates} \pi_{n,\tkernel}(\oactions(\bar{s}')|\bar{s}'))^{2N} 
}{
\frac{2}{N(N+1)}
(\min_{\bar{s}\in \supp \bar{\mu}} \bar{\mu}(\bar{s}))
(1-\frac{\delta}{2})^{2N} (\min_{\bar{s}'\in \theinterestingstates} \pi_{n,\tkernel}(\oactions(\bar{s}')|\bar{s}'))^{2N} 
+\frac{\delta N}{2}
}
\\
&=
\frac{
(\min_{\bar{s}'\in \theinterestingstates} \pi_{n,\tkernel}(\oactions(\bar{s}')|\bar{s}'))^{2N} 
}{
(\min_{\bar{s}'\in \theinterestingstates} \pi_{n,\tkernel}(\oactions(\bar{s}')|\bar{s}'))^{2N} 
+b
}
= h_b(\min_{\bar{s}'\in \theinterestingstates} \pi_{n,\tkernel}(\oactions(\bar{s}')|\bar{s}'))
,
\end{align*}
where
$$
b= \frac{\frac{\delta N}{2}}{\frac{2}{N(N+1)}
(\min_{\bar{s}\in \supp \bar{\mu}} \bar{\mu}(\bar{s}))
(1-\frac{\delta}{2})^{2N}}.
$$
Since the right hand side of the inequality does no longer depend on $\bar{s}$ one can replace the left hand side by the minimum over $\bar{s}$, which proves the inequality~\eqref{eq:maintheoremM1}. Since $b\rightarrow 0$ as $\delta \rightarrow 0$
we can restrict $\delta\in(0,2)$ to ensure that $b < \frac{1}{2N}(\frac{2N-1}{2N})^{2N-1}$.

Without loss of generality we can assume that $\delta\in(0,2)$ is chosen so that
$\pi_0(\oactions(\bar{s})|\bar{s}) > x_l(b)$ for all $\bar{s} \in \theinterestingstates$. This is possible since $x_l(b) \rightarrow 0$ for $\delta \rightarrow 0$ (since $b\rightarrow 0$ as $\delta\rightarrow 0$) and $\pi_0(\oactions(\bar{s})|\bar{s}) > 0$ for all
$\bar{s} \in \theinterestingstates$. Having established $\min_{\bar{s}\in\theinterestingstates} \pi_0(\oactions(\bar{s})|\bar{s}) > x_l(b)$
we can employ lemma~\ref{le:hlemma} point 4. to inequality~\eqref{eq:maintheoremM1} to find for all $\tkernel \in U_{\delta}(\tkernel_0)$ and $\bar{s}\in\theinterestingstates$ that
$$
\liminf_n
\min_{\bar{s}\in\theinterestingstates} \pi_{n,\tkernel}(\oactions(\bar{s})|\bar{s}) \geq x_u(b)
$$
where $x_u(b) \rightarrow 1$, $b \rightarrow 0$. Since $b \rightarrow 0$, $\delta \rightarrow 0$ we also have that $x_u(b) \rightarrow 1$.
\end{proof}

\subsubsection{Extending the continuity results to other segment sub-spaces}
\label{ssse:specM1extending}

As before we need to modify theorem~\ref{le:limdetcontM1} to cover algorithms like ODT, RWR (and some specific variants of \eUDRL{}) restricting the recursion to $\SegTrail$ or $\SegDiag$ subspaces.
\begin{theorem}
\label{le:limdetcontM1DiagTrail}
(Relative continuity of accumulation points  of \eUDRL{}-generated policies at deterministic kernels -- the case $|\oactions(\bar{s})|=1$ on $\theinterestingstates$)
Theorem~\ref{le:limdetcontM1} remains valid under the renaming $\pi_n \rightarrow \pi_n^{\diag/\trail}$.
\end{theorem}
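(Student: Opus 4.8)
The plan is to follow the proof of Theorem~\ref{le:limdetcontM1} essentially verbatim, substituting each $\Seg$-space object by its $\diag/\trail$ counterpart and invoking the corresponding segment-subspace lemmas in place of their full-space versions, exactly in the spirit of the extension carried out in section~\ref{sse:extendingfinite} for the finite-iteration case. Concretely, I would replace lemma~\ref{le:detopt} by lemma~\ref{le:detoptDiagTrail} (optimality at deterministic kernels) and lemma~\ref{le:suppstab} by lemma~\ref{le:suppstabTrailDiag} (stability of $\theinterestingstates$). Equations~\eqref{eq:saisthesame} and~\eqref{eq:isthesame} guarantee that the set of critical states $\theinterestingstates$, its state--action variant, and the sets of optimal actions $\oactions(\bar{s})$ are all left unchanged when passing to $\SegDiag$ or $\SegTrail$. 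Consequently $\pi_{n+1,\tkernel}^{\diag/\trail}(\cdot|\bar{s})$ is well-defined on $\theinterestingstates$ through the recursion~\eqref{eq:pirec} (with $\num,\den$ renamed), and the statement to be proved refers to precisely the same objects $b$, $x_u$, $x_l$ as Theorem~\ref{le:limdetcontM1}. Note that since $|\oactions(\bar{s})|=1$ the relative continuity reduces to ordinary continuity, so no separate treatment of the quotient topology is required.

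First I would observe that the three preliminary estimates driving the proof carry over unchanged: the state-visitation bound~\eqref{eq:M1visitbound}, the bound~\eqref{eq:M1QoutMbound} on $Q_{\tkernel}^{\pi_n,g,h}$ for non-optimal actions, and the bound~\eqref{eq:M1QonMbound} for optimal actions. Indeed, by lemma~\ref{le:recrewrites} the $\diag/\trail$ recursions are written using exactly the same $\Seg$-space quantities $Q_{\tkernel}^{\pi_n,g,h}$, $v_{\tkernel,\pi_n}(h',g'|s,h)$ and $\pi_n$, without any conditioning on $l(\Sigma)=\stag{H}_0$ or $\rho(\stag{S}_h)=\stag{G}_0$, and the visitation quantity $v_{\tkernel,\pi_n}(h,g|s,h)$ is literally the same probability for all three recursions. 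The only place where optimality at $\tkernel_0$ enters the derivation of~\eqref{eq:M1visitbound} is via lemma~\ref{le:detopt}, which I would simply replace by lemma~\ref{le:detoptDiagTrail}.

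The substantive step is the recursive bounding of $\pi_{n+1,\tkernel}^{\diag/\trail}(\oactions(\bar{s})|\bar{s})$ leading to inequality~\eqref{eq:maintheoremM1}, for which I would reuse the monotonicity argument of remark~\ref{re:monotonicity}. Writing the policy as $\frac{P}{P+R}$, with $P$ the numerator summed over $a\in\oactions(\bar{s})$ and $R$ the part over $a\notin\oactions(\bar{s})$, the crucial point is that the lower bound produced in the proof of Theorem~\ref{le:limdetcontM1} retains only the \emph{diagonal} term $(h',g')=(h,g)$ of $P$ and bounds $R\leq\frac{\delta N}{2}$ using~\eqref{eq:M1QoutMbound} together with $\sum_{a,h',g'}\pi_n v = 1$. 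Since this diagonal term is present in all three recursions, and since the chain $u^{\diag}\leq u^{\trail}\leq u$ (introduced in section~\ref{sse:extendingfinite}) combined with the monotonicity of $x\mapsto\frac{x}{x+\frac{\delta N}{2}}$ shows that the $\trail$ and $\diag$ numerators dominate this single shared term, the identical lower bound $h_b\!\left(\min_{\bar{s}'}\pi_{n,\tkernel}^{\diag/\trail}(\oactions(\bar{s}')|\bar{s}')\right)$ is recovered in every case, with the same $b$.

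Once~\eqref{eq:maintheoremM1} is in place, the conclusion follows by applying lemma~\ref{le:hlemma} point 4 exactly as in Theorem~\ref{le:limdetcontM1}, together with $x_l(b)\to0$ and $x_u(b)\to1$ as $\delta\to0$. I expect the main obstacle, such as it is, to be purely bookkeeping: one must check that the denominator bound $R^{\diag/\trail}\leq\frac{\delta N}{2}$ genuinely survives the restriction (it does, since $R^{\diag/\trail}\leq R$ and the $Q$-value bound~\eqref{eq:M1QoutMbound} is uniform in $a\notin\oactions(\bar{s})$), and that the common lower bound on the numerator is the diagonal term shared by all three recursions, so that no estimate beyond those already assembled in the full-space proof is needed.
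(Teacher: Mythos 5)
Your proposal is correct and follows essentially the same route the paper takes: the paper's own justification is exactly the observation that the proof of Theorem~\ref{le:limdetcontM1} carries over with the substitutions described in section~\ref{ssse:extendingfinite} (lemma~\ref{le:detoptDiagTrail} for lemma~\ref{le:detopt}, lemma~\ref{le:suppstabTrailDiag} for lemma~\ref{le:suppstab}, equations~\eqref{eq:isthesame} and~\eqref{eq:saisthesame}, and the chain $u^{\diag}\leq u^{\trail}\leq u$ combined with the fact that the retained lower bound is the shared diagonal term). Your write-up in fact spells out the bookkeeping in more detail than the paper does.
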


The proof of theorem~\ref{le:limdetcontM1DiagTrail}
follows along similar lines as that of theorem~\ref{le:limdetcontM1}. The differences are rather small and resemble those described in section~\ref{ssse:extendingfinite}.

\subsubsection{Estimating the location of accumulation points}
\label{ssse:specM1accbound}

We conclude with a corollary that provides explicit bounds on the accumulation points of \eUDRL{}-generated policies, values and goal-reaching objectives. See section~\ref{ssse:specmuaccbound} for a motivation.

\begin{corollary}\label{le:limitboundsM1}
(Estimating the location of accumulation points -- the case $|\oactions(\bar{s})| = 1$ on $\theinterestingstates$) Under the conditions of theorem~\ref{le:limdetcontM1} let $\delta_0\in(0,2)$ be the unique solution of the equation (cf.~theorem~\ref{le:limdetcontM1} and lemma~\ref{le:hlemma})
\begin{align*}
b(\delta)&=b_0,\quad\textnormal{where}\\
b(\delta)&=\frac{\delta N^{2} (N+1)}{4(1-\frac{\delta}{2})^{2N}\min_{\bar{s} \in \supp \bar{\mu}} \bar{\mu}(\bar{s}) },\\
b_0&=\frac{1}{2N}\left(\frac{2N-1}{2N}\right)^{2N-1}.
\end{align*}

Then there exist
two continuous, strictly monotonic functions
$x_l:[0,\delta_0] \rightarrow [0,\frac{2N-1}{2N}]$,
$x_u:[0,\delta_0] \rightarrow [\frac{2N-1}{2N},1]$,
where $x_l$ is increasing and $x_u$ decreasing with $x_l(0) = 0$, $x_u(0) = 1$ with the following property: For any fixed  $\delta\in(0,\delta_0)$ suppose that the policy $\pi_0$ satisfies $\pi_0>0$ and $\pi_0(\oactions(\bar{s})|\bar{s}) > x_l(\delta)$ for all $\bar{s} \in \theinterestingstates$ then for all $\tkernel \in U_{\delta}(\tkernel_0)$ and all $\bar{s} \in \theinterestingstates$ the following conclusions hold: 
\begin{enumerate}
\item
$\liminf_n \pi_{n,\tkernel}(\oactions(\bar{s})|\bar{s}) \geq x_u(\delta) \rightarrow 1, \delta \rightarrow 0$
or equivalently (in form of error to an optimal policy)
$\limsup_n (1-\pi_{n,\tkernel}(\oactions(\bar{s})|\bar{s})) \leq 1-x_u(\delta)$
\item
$\limsup_n   |V_{\tkernel}^{\pi_n}(\bar{s}) -V_{\tkernel_0}^*(\bar{s}) | \leq
1-(1-\frac{\delta}{2})^N x_u^N(\delta)$
\item
$
\limsup_n |J_{\tkernel}^{\pi_n} -J_{\tkernel_0}^{*}|
\leq
\frac{N\delta}{2} + (1-(1-\frac{\delta}{2})^N x_u^{N}(\delta) )
$
\item ($q$-linear convergence)
There exists a sequence of $\tkernel_0$-optimal policies $(\pi_{\tkernel_0,n}^{*})_{n\geq 0}$ such that for all $n$ it holds
$$
\|\pi_{n,\tkernel}(\cdot|\bar{s}) - \pi_{\tkernel_0,n}^{*}(\cdot|\bar{s})\|_1
\leq
2(1-h_{b}^{\circ n}(x_0))
,
$$
where
$x_0 = \min_{\bar{s} \in \theinterestingstates}
\pi_0 (\oactions(\bar{s})|\bar{s})$.
\end{enumerate}

The terms on the right hand side of 1.-3.~converge to $0$ as $\delta \rightarrow 0$. In 4.~the sequence $(y_n)_{n\geq0}$ defined by 
$
y_n = 2(1-h_{b}^{\circ n}(x_0))
$
converges $q$-linearly to the limit $y=2(1-x_u(\delta))$ at a convergence rate of $\frac{2Nb x_u^{2N-1}}{(x_u^{2N} + b)^2}
$ which tends to 0 as $\delta \rightarrow 0$. 
\end{corollary}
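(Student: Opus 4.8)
The plan is to assemble the four assertions from the machinery already in place, mirroring the structure of Corollary~\ref{le:limitbounds} but substituting the dynamical system of lemma~\ref{le:hlemma} and theorem~\ref{le:limdetcontM1} for that of lemma~\ref{le:f} and theorem~\ref{le:limdetcont}. First I would verify that $\delta_0$ is well-defined: the function $b(\delta)$ is continuous and strictly increasing on $(0,2)$, with $b(\delta)\to 0$ as $\delta\to 0^+$ and $b(\delta)\to\infty$ as $\delta\to 2^-$ (since $(1-\delta/2)^{2N}\to 0$), so $b(\delta)=b_0$ has a unique solution $\delta_0$. Composing the functions $x_l,x_u$ of lemma~\ref{le:hlemma} point 5 (defined on $[0,b_0]$) with $b(\cdot)$ then yields continuous, strictly monotonic maps $x_l,x_u$ on $[0,\delta_0]$, with $x_l$ increasing, $x_l(0)=0$, and $x_u$ decreasing, $x_u(0)=1$, as required. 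Note that for $\delta\in(0,\delta_0)$ one has $b(\delta)<b_0$, so lemma~\ref{le:hlemma} applies.

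Part 1 is then immediate: the proof of theorem~\ref{le:limdetcontM1} (via~\eref{eq:maintheoremM1} and lemma~\ref{le:hlemma} point 4) gives $\liminf_n \min_{\bar{s}\in\theinterestingstates}\pi_{n,\tkernel}(\oactions(\bar{s})|\bar{s})\geq x_u(b(\delta))=x_u(\delta)$, and $x_u(\delta)\to 1$ as $\delta\to 0$ follows from the composition. For part 2, I would reuse the lower bounds established inside the proof of theorem~\ref{le:limdetcontM1}. Since $|\oactions(\bar{s})|=1$, writing $a^*$ for the single optimal action and invoking~\eref{eq:M1QonMbound},
$$V_{\tkernel}^{\pi_n}(\bar{s}) \geq Q_{\tkernel}^{\pi_n}(\bar{s},a^*)\,\pi_{n,\tkernel}(a^*|\bar{s}) \geq \Big(1-\tfrac{\delta}{2}\Big)^N \Big(\min_{\bar{s}'\in\theinterestingstates}\pi_{n,\tkernel}(\oactions(\bar{s}')|\bar{s}')\Big)^{N}.$$
Since $V_{\tkernel_0}^*(\bar{s})=1$ on $\theinterestingstates$ and $V_{\tkernel}^{\pi_n}\leq 1$, applying $\liminf_n$ and using part 1 together with the continuity and monotonicity of $t\mapsto t^N$ (so that $\liminf$ commutes with the power) gives $\limsup_n|V_{\tkernel}^{\pi_n}(\bar{s})-V_{\tkernel_0}^*(\bar{s})|\leq 1-(1-\delta/2)^N x_u^N(\delta)$.

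For part 3 I would decompose $|J_{\tkernel}^{\pi_n}-J_{\tkernel_0}^*|\leq\sum_{\bar{s}\in\supp\bar{\mu}}\bar{\mu}(\bar{s})\,|V_{\tkernel}^{\pi_n}(\bar{s})-V_{\tkernel_0}^*(\bar{s})|$ exactly as in Corollary~\ref{le:limitbounds} point 4. On $\supp\bar{\mu}\setminus\theinterestingstates=\supp\bar{\mu}\setminus\supp\den_{\tkernel_0,\pi_0}$ (the equality holds because $\supp\bar{\mu}\subset\supp\nu_{\tkernel_0,\pi_0}$) one has $V_{\tkernel_0}^*(\bar{s})=0$, so lemma~\ref{le:opthbound} bounds $V_{\tkernel}^{\pi_n}(\bar{s})\leq N\delta/2$; on the remaining states part 2 applies. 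Summing against $\bar{\mu}$ (total mass $1$) and applying $\limsup_n$ produces the claimed $N\delta/2+(1-(1-\delta/2)^Nx_u^N(\delta))$. For part 4, theorem~\ref{le:limdetcontM1} established $\min_{\bar{s}}\pi_{n+1,\tkernel}(\oactions(\bar{s})|\bar{s})\geq h_b(\min_{\bar{s}}\pi_{n,\tkernel}(\oactions(\bar{s})|\bar{s}))$, and the hypothesis $\pi_0(\oactions(\bar{s})|\bar{s})>x_l(\delta)$ ensures $x_0>x_l(b)$, so lemma~\ref{le:hlemma} point 4 yields $\pi_{n,\tkernel}(\oactions(\bar{s})|\bar{s})\geq h_b^{\circ n}(x_0)$ for every $n$. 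Choosing the $\tkernel_0$-optimal (deterministic) policies $\pi_{\tkernel_0,n}^*$ to place mass $1$ on $a^*$ makes $\|\pi_{n,\tkernel}(\cdot|\bar{s})-\pi_{\tkernel_0,n}^*(\cdot|\bar{s})\|_1=2(1-\pi_{n,\tkernel}(\oactions(\bar{s})|\bar{s}))\leq 2(1-h_b^{\circ n}(x_0))$. The $q$-linear rate then follows, as in Corollary~\ref{le:limitbounds} point 5, from Heine's theorem and L'Hospital's rule: the rate equals $h_b'(x_u)=\frac{2Nb\,x_u^{2N-1}}{(x_u^{2N}+b)^2}$, which lies in $(0,1)$ by the fixed-point geometry of lemma~\ref{le:hlemma} point 2 and tends to $0$ as $\delta\to 0$, since then $b\to 0$ and $x_u\to 1$.

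The genuinely delicate step, and the one I would treat most carefully, is the passage from the per-iteration $Q$- and visitation bounds to the $\liminf/\limsup$ statements while keeping all constants explicit, together with confirming the monotonicity and continuity of $x_u(\delta)$ through the composition $x_u\circ b$ and the well-definedness of $\delta_0$. Everything else is a careful but routine reassembly of estimates already proved; the substantive analytic content lives in theorem~\ref{le:limdetcontM1} and lemma~\ref{le:hlemma}.
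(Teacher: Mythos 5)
Your proposal is correct and follows essentially the same route as the paper's proof: uniqueness of $\delta_0$ via monotonicity of $b$, the definition $x_{l/u}=\bar{x}_{l/u}\circ b$ from lemma~\ref{le:hlemma} point 5, point 1 directly from theorem~\ref{le:limdetcontM1}, point 2 from the bound \eref{eq:M1QonMbound}, point 3 by splitting $\supp\bar\mu$ against $\theinterestingstates$ and invoking lemma~\ref{le:opthbound}, and point 4 via lemma~\ref{le:hlemma} point 4 plus Heine/L'Hospital for the rate. The only cosmetic difference is your existence argument for $\delta_0$ ($b(\delta)\to\infty$ as $\delta\to 2^-$ versus the paper's evaluation $b(2/(N+1))>b_0$), which changes nothing.
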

Recall that the functions $x_l$, $x_u$ were defined in lemma~\ref{le:hlemma} as functions of $b$, which, in turn, is a function of $\delta$. We slightly abuse notation in corollary~\ref{le:limitboundsM1} writing  $x_l(\delta)$ for $x_l(b(\delta))$ and $x_u(\delta)$ for $x_l(b(\delta))$.
\begin{proof}
First we have to show the uniqueness of $\delta_0$, which is 
defined as a solution to the equation
$$
b(\delta) = b_0, \quad \delta \in [0,2).
$$
For convenience we restrict the domain of $b(\delta)$
to be $[0,2)$, so that the restricted function is continuous. Since the first derivative of $b$ is positive
$$
\frac{d b}{d \delta}
= 
\frac{
N^{2} (N+1)}
{4 \min_{\bar{s} \in \supp \bar{\mu}} \bar{\mu}(\bar{s})}
\frac{
(1-\frac{\delta}{2})+N\delta}
{(1-\frac{\delta}{2})^{2N+1}}
> 0 \iff 
1 + \delta(N - \frac{1}{2})
> 0,
$$
there can be at most one solution to the equation.
Since $b(0) = 0$, $b(\frac{2}{N+1}) > b_0$ and making use of the mentioned continuity, there exists at least one solution
to the equation. This proves the existence of a unique solution $\delta_0$.

Here, the symbols $x_l,x_u$ refer to compositions $x_l := \bar{x}_l\circ b$, $x_u := \bar{x}_u\circ b$, where $\bar{x}_l,\bar{x}_u$ were defined in point 5.\ of lemma~\ref{le:hlemma}.
Since $b$ is strictly increasing, continuous, and defined on the
interval 
all claimed properties of the
compound functions follow from~\ref{le:hlemma} point 5.

1.
This point is just the theorem~\ref{le:limdetcontM1}. Similarly to the proof of the theorem~\ref{le:limdetcontM1} the condition $\pi_0(\oactions(\bar{s})|\bar{s}) > x_l(\delta)$ comes from
lemma~\ref{le:hlemma} point 4.

2.
Assume $\bar{s} \in \theinterestingstates$.
From \eref{eq:M1QonMbound} it follows
$$
V_{\tkernel}^{\pi_n}(\bar{s})
\geq
\sum_{a \in \oactions(\bar{s})}
Q_{\tkernel}^{\pi_n}
(\bar{s},a)
\pi_{n,\tkernel} (a| \bar{s})
\geq
(1-\frac{\delta}{2})^{N} ( \min_{\bar{s}'\in \theinterestingstates}
\pi(\oactions(\bar{s}')|\bar{s}') )^{N}
$$
Now we apply point 1.
$$
\liminf_{n}
V_{\tkernel}^{\pi_n}(\bar{s})
\geq
(1-\frac{\delta}{2})^{N} ( \min_{\bar{s}'\in \theinterestingstates} \liminf_{n}
\pi(\oactions(\bar{s}')|\bar{s}') )^{N}
\geq
(1-\frac{\delta}{2})^{N}
x_u^{N}
$$
Finally, we conclude that
$$
\limsup_{n}
|
V_{\tkernel_0}^*(\bar{s}) - V_{\tkernel}^{\pi_n}(\bar{s})
|
=
1-\liminf_{n} V_{\tkernel}^{\pi_n}(\bar{s})
\leq 
1-(1-\frac{\delta}{2})^{N}
x_u^{N}
.
$$

3.
The proof follows similarly as for corollary~\ref{le:limitbounds} point 4. except we utilize the previous 
point
$$
\begin{aligned}
|J_{\tkernel_0}^* - J_{\tkernel}^{\pi_n}|
&\leq
\sum_{\bar{s} \in \supp \bar{\mu} \setminus \theinterestingstates } 
\bar{\mu}(\bar{s}) |V_{\tkernel}^{\pi_n}(\bar{s}) |
+
\sum_{\bar{s} \in \supp \bar{\mu} \cap \theinterestingstates } 
\bar{\mu}(\bar{s}) |V_{\tkernel_0}^*(\bar{s}) - V_{\tkernel}^{\pi_n}(\bar{s}) |
\\
&\leq
\frac{N\delta}{2}
+
\sum_{\bar{s} \in \supp \bar{\mu} \cap \theinterestingstates } 
\bar{\mu}(\bar{s}) |V_{\tkernel_0}^*(\bar{s}) - V_{\tkernel}^{\pi_n}(\bar{s})
|,
\\
\limsup_n
|J_{\tkernel_0}^* - J_{\tkernel}^{\pi_n}|
&\leq
\frac{N\delta}{2}
+
1-(1-\frac{\delta}{2})^N x_u^N (\delta)
.
\end{aligned}
$$

4.
The sequence $(\pi_{\tkernel_0,n}^{*})_{n\geq 0}$ is chosen in the same way as in the corollary~\ref{le:limitbounds}.
The first inequality is just the theorem~\ref{le:limdetcontM1}
and lemma~\ref{le:hlemma}.
The the statement $y_n \rightarrow y$, $n \rightarrow \infty$ is lemma~\ref{le:hlemma} point 3.

Consider a sequence $x_n = h_b^{\circ n} (x_0)$, $n\geq 0$.
The last statement about the convergence rate follows by using Heine theorem and L'Hospital rule
\begin{align*}
\lim_{n\rightarrow \infty}
\frac{y_{n+1}-y}{y_n-y}
&=
\lim_{n\rightarrow \infty}
\frac{x_u-h_b(x_n)}{x_u-x_n}
\\
&=^{Heine}
\lim_{x\rightarrow x_u}
\frac{x_u-h_b(x)}{x_u-x}
\\
&=^{L'Hospital}
\lim_{x\rightarrow x_u}
h_b'(x)
=
\lim_{x\rightarrow x_u}
\frac{2Nb x^{2N-1}}{(x^{2N}+b)^2}
=
\frac{2Nb x_u^{2N-1}}{(x_u^{2N}+b)^2}
.
\end{align*}
Since $x_u \rightarrow 1$, $\delta \rightarrow 0$
and $b \rightarrow 0$, $\delta \rightarrow 0$ 
the rate $\frac{2Nb x_u^{2N-1}}{(x_u^{2N}+b)^2} \rightarrow 0$
for  $\delta \rightarrow 0$.
\end{proof}

Similar to corollary~\ref{le:limitbounds}, we can employ corollary~\ref{le:limitboundsM1} to bound the deviation in the accumulation points of \eUDRL{}-generated quantities given a deviation of $\delta$ from a deterministic kernel. In the case of corollary~\ref{le:limitbounds} we already presented the estimates for the accumulation points of $\min_{\bar{s}\in\theinterestingstates}\pi_n(\oactions(\bar{s})|\bar{s})$ for the simple bandit example~\ref{ex:bandit} in figure~\ref{fig:bandit} (see also section~\ref{ssse:specmutheorem} and the explanation there). In fact both conditions $\theinterestingstates \subset \supp \bar{\mu}$ and $(\forall \bar{s}\in \theinterestingstates: \: |\oactions(\bar{s})|=1)$ are simultaneously met by the bandit example, which allows us to compare the estimates from the corollaries~\ref{le:limitbounds} and~\ref{le:limitboundsM1}. A comparison is provided in figure~\ref{fig:banditxu}, which shows how the estimates from the corollaries (blue $x^*$, orange $x_u$, green $x_l$ lines) and the accumulation points of \eUDRL{}-generated policies (orange marks) behave as functions of the distance $\delta$ to a deterministic kernel. Recall that according to corollary~\ref{le:limitboundsM1} $x_l(\delta)$ is an unstable fixed point of the underlying dynamical system. If the initial condition on the policy is such that it is larger than this fixed point (cf.~corollary) then the accumulation points of the \eUDRL{} recursion are bounded from below by the fixed point $x_u(\delta)$. In other words the $x_l(\delta)$ graph separates the plane in two regions, where the lower bound by $x_u(\delta)$ is guaranteed in the upper region. 
\begin{figure}
   \centering
    \includegraphics[width=0.75\linewidth]{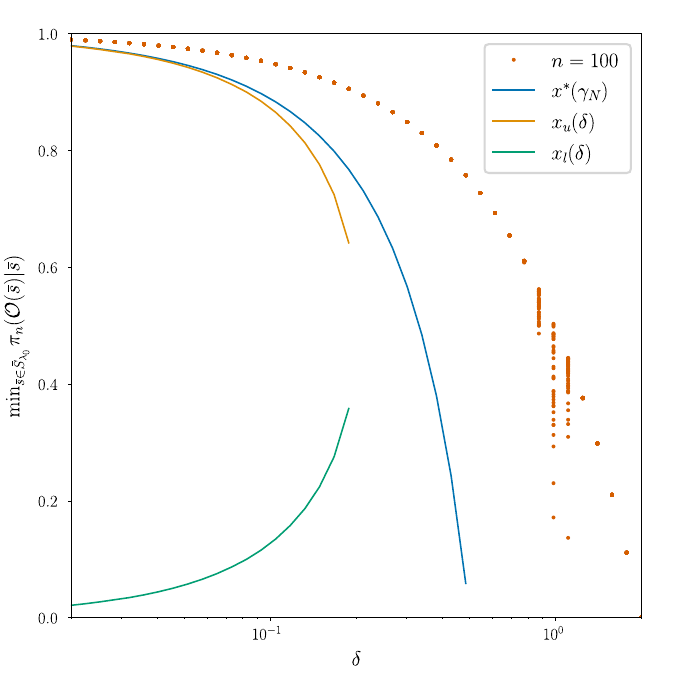}
    \caption{Comparison of estimates of corollaries~\ref{le:limitbounds},~\ref{le:limitboundsM1} on the behavior of $\min_{\bar{s}\in\theinterestingstates}\pi_n(\oactions(\bar{s})|\bar{s})$ in case of a simple bandit example. The plot shows the $\delta$-dependency of accumulation points (depicted in orange, approximated by values at $n=100$) and the estimates from corollary~\ref{le:limitbounds} (blue) and corollary~\ref{le:limitboundsM1} (orange, green).}
    \label{fig:banditxu}
\end{figure}

As a second example we illustrate how the estimates of corollary~\ref{le:limitboundsM1} apply to an ODT architecture trained to identify the optimal policy in a simple grid world (that has a unique optimal policy on $\theinterestingstates$), see figure~\ref{fig:ODTgridworld}. As outlined in section~\ref{se:background} ODT can be interpreted as a special instance of algorithms that fit within CE framework, where \eUDRL{}'s recursion operates on the space of trailing segments $\SegTrail$ rather than $\Seg$. All details of the construction of the MDP underlying figure~\ref{fig:ODTgridworld} and the details of the ODT algorithm are provided in appendix B, example~\ref{ex:ODTgridworld}. Figure~\ref{fig:ODTgridworld:a} shows how the bound in the corollary (orange and green lines) and the accumulation points of ODT-generated policies (orange marks) behave as functions of the distance $\delta$ to a deterministic kernel.  Figure~\ref{fig:ODTgridworld:b} shows how ODT approaches those accumulation points as a function of the iteration $n$. Varying initial policies are visible at $n=0$ and different levels of $\delta$ are highlighted in different colors. Figure~\ref{fig:ODTgridworld:c} contains the same data as Figure~\ref{fig:ODTgridworld:b} but it is organized to reveal the dependency on $\delta$. Figure~\ref{fig:ODTgridworld:d} shows a map of the grid world.
Given the structure of the grid world it is visible that the uniqueness condition on the optimal policy is restrictive in the sense that domains that satisfying the condition $(\forall \bar{s}\in \theinterestingstates: \: |\oactions(\bar{s})|=1)$ seem to occur rarely.

\begin{figure}
  \begin{subfigure}{0.475\linewidth}
		\includegraphics[width=\linewidth]{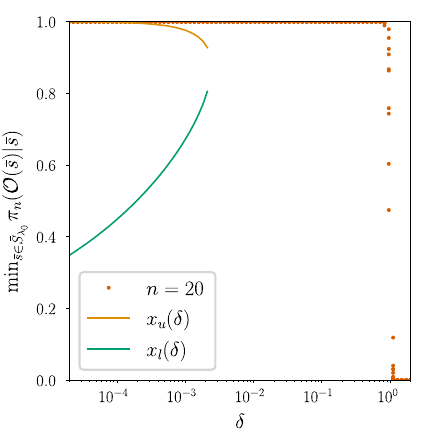}
		\caption{Dependency on distance to determin.~kernel.}
        \label{fig:ODTgridworld:a}
	\end{subfigure}
     \begin{subfigure}{0.475\linewidth}
		\includegraphics[width=\linewidth]{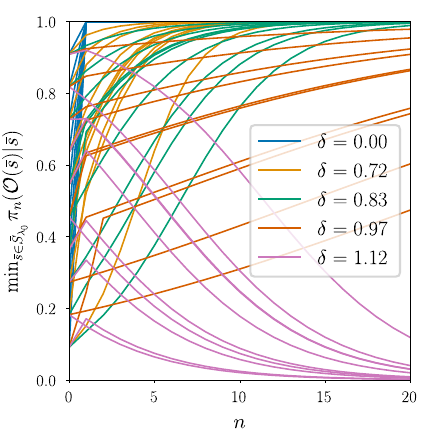}
		\caption{Dependency on iteration.}
        \label{fig:ODTgridworld:b}
	\end{subfigure}\\
     \begin{subfigure}{0.475\linewidth}
		\includegraphics[width=\linewidth]{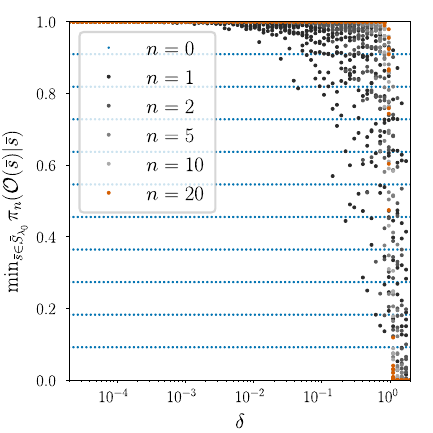}
		\caption{Dependency on distance to determin.~kernel.}
        \label{fig:ODTgridworld:c}
	\end{subfigure}
     \begin{subfigure}{0.475\linewidth}
		\includegraphics[width=\linewidth]{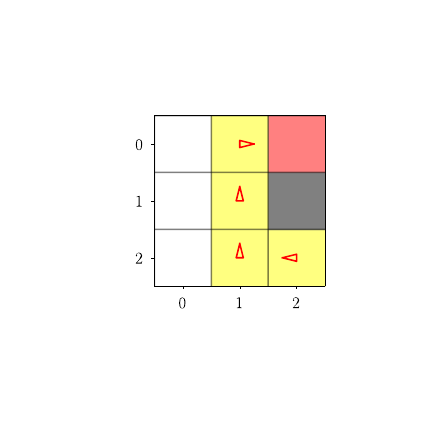}
		\caption{Map of grid world.}
        \label{fig:ODTgridworld:d}
	\end{subfigure}
    \caption{Illustration of the estimates of corollary~\ref{le:limitboundsM1} on the behavior of $\min_{\bar{s}\in\theinterestingstates}\pi_n(\oactions(\bar{s})|\bar{s})$ for ODT training in a simple grid world shown in plot (d). Plot (a) shows the dependency of accumulation points (depicted in orange, approximated by values at $n=100$) and the estimate in the corollary given (orange and green). To illustrate how ODT approaches towards the accumulation points, plot (b) shows the dependency on the iteration $n$ for varying distances to a deterministic kernel highlighted by different colors and varying initial policy. Plot (c) contains the same information as (b) but it is organized to reveal the dependency on $\delta$, where varying numbers of iteration are highlighted by different colors. Plot (d) shows the map of the grid world. A wall is depicted in gray, the set $\theinterestingstates$ in yellow, {and the goal in red}. Arrows depict optimal actions associated with the specific state and goal.}
    \label{fig:ODTgridworld}
\end{figure}

\section{Regularized Recursion}
\label{se:regrec}

This section examines regularization in \eUDRL{}, which is motivated by the following two facts. First, the standard ODT architecture utilizes entropy regularization, whereas we have modeled ODT without regularization so far. Second, on the mathematical side, regularization often leads to significant simplification when discussing continuity. For instance it allows for fully general continuity proofs in situations where it guarantees uniform visitation of all states $\bar s\in \theinterestingstates$ over all iterations of \eUDRL{} (equivalent to a lower bound on the state visitation distribution). As a case study we study a simple form of regularization, which, akin to $\epsilon$-greedy exploration, involves a convex combination of an algorithm's current policy and a uniform policy. More precisely, choosing $\epsilon\in(0,1)$ we extend the \eUDRL{} recursion of formula~\eqref{eq:recursionUsingNumeratorDenominator} to an $\epsilon$-regularized recursion ($\epsilon$-\eUDRL{}) by
\begin{align*}
\pi_{n+1,\epsilon}(a|s,h,g) &= (1-\epsilon)\frac{\num_{\tkernel,\pi_{n,\epsilon}}(a,s,h,g)}{\den_{\tkernel,\pi_{n,\epsilon}}(s,h,g)}+\epsilon\frac{1}{|\mathcal{A}|}\quad \text{for}\:  (s,h,g) \in \supp \den_{\tkernel,\pi_{n,\epsilon}},
\end{align*}
where $\pi_{n+1,\epsilon}(a|s,h,g)=1/{|\mathcal{A}|}$ outside the support $\supp \den_{\tkernel,\pi_{n,\epsilon}}$. The sequence of policies generated by this recursion automatically satisfies $\pi_{n,\epsilon}>0$, which ensures that the relevant state visitation probabilities are strictly positive. As compared to the discussion of continuity in the preceding sections this simplification avoids the complicated maintenance of support terms in the course of proofs (as policy-supports are always maximal). It should be said that the price to pay for this simplification is that $\epsilon$-\eUDRL{} does not converge to optimal policies for $\epsilon>0$. 

Below, we will present versions of the results previously encountered, now with regularization applied. Many of the results presented in sections~\ref{se:suppstab},~\ref{se:optdet} and~\ref{se:continfty} remain valid up to small modification. For brevity and to avoid repetition we decided to leave the precise statement of the $\epsilon$-\eUDRL{} versions of the involved lemmas to appendix~\ref{ap:regrec} focusing only on the main finding in this section. From a conceptual perspective lemma~\ref{le:suppstab} (\emph{Stability of $\theinterestingstates$}), lemma~\ref{le:detopt} (\emph{Optimality of eUDRL policies for deterministic transition kernels}), lemma~\ref{le:contQfac} (\emph{Continuity of action-values in quotient topology}), lemma~\ref{le:alpha} (\emph{Lower bound on visitation probabilities}), lemma~\ref{le:f} (\emph{$f$-lemma}) played key roles in the derivations of our main continuity results in sections~\ref{se:contfinite} and~\ref{se:continfty}. We recapitulate their content highlighting the specific modifications required for $\epsilon$-\eUDRL{}. Lemma~\ref{le:esuppstab} ($\epsilon$-\eUDRL{} version of lemma~\ref{le:suppstab}) addresses the behavior of supports of $\epsilon$-\eUDRL{}-generated quantities along compatible families of MDPs. Lemma~\ref{le:edetopt} ($\epsilon$-\eUDRL{} version of lemma~\ref{le:detopt}) describes the behavior of $\epsilon$-\eUDRL{} at deterministic transition kernels. Writing $\tkernel_0$ for a deterministic kernel, the set
$$\Pi_{\tkernel_0,\epsilon}^* = \left\{(1-\epsilon) \pi_{\tkernel_0}^* + \frac{\epsilon}{|\mathcal{A}|} \mid \pi_{\tkernel_0}^*\; \text{is an optimal policy for}\; \tkernel_0 \;\text{on}\; \theinterestingstates \right\}$$
will take the role of the set of optimal policies. Lemma~\ref{le:edetopt} asserts that the accumulation points of $\epsilon$-\eUDRL{} are contained in $\Pi_{\tkernel_0,\epsilon}^*$. To demonstrate the relative continuity of $\epsilon$-\eUDRL{} policies at an infinite number of iterations, thus, the set $\Pi_{\tkernel_0,\epsilon}^*$ should constitute the relative limit. The lemma also establishes estimates on value functions corresponding to policies in $\Pi_{\tkernel_0,\epsilon}^*$ that are important for the proof of the main theorem. Unlike the optimal action-value function $Q_{\tkernel_0}^*$ for \eUDRL{}, in the context of $\epsilon$-\eUDRL{} there are many possible action-value functions $Q_{\tkernel_0}^{\pi_{\epsilon}^*}$ depending on the choice of $\pi_{\tkernel_0,\epsilon}^* \in \Pi_{\tkernel_0,\epsilon}^*$. $Q_{\tkernel_0}^{\pi_{\epsilon}^*}$ are also not constant on $\oactions(\bar{s})$ in general, and, consequently, do not represent an equivalence class under the quotient map relative to $\oactions(\bar{s})$. Lemma~\ref{le:econtQfac} ($\epsilon$-\eUDRL{} version of lemma~\ref{le:contQfac}) discusses a variation designed to address this point. A lower bound $\alpha(\delta,\epsilon)$ on the state visitation probabilities is provided in lemma~\ref{le:ealpha} ($\epsilon$-\eUDRL{} version of the lemma~\ref{le:alpha}). As before the main continuity results follow by studying a dynamical system to bound the
$\epsilon$-\eUDRL{} policy recursion. The dynamical system properties emerging from an iterative application of a map $z_{\gamma,\epsilon,M}$ ($\epsilon$-\eUDRL{} version of $f_{\gamma}$) are discussed in lemma~\ref{le:z} ($\epsilon$-\eUDRL{} version of lemma~\ref{le:f}). As it turns out this dynamical system has a unique fixed point $x^*(\gamma,\epsilon,M)$. The theorem shows the relative continuity of the set of accumulation points of $\epsilon$-\eUDRL{}-generated policies in $\tkernel$ at $\tkernel_0$ as $n$ tends to infinity. Along the lines outlined in section~\ref{ssse:specmutheorem} we conclude that the set of accumulation points of the sequence of $\epsilon$-\eUDRL{}-generated policies at $\tkernel=\tkernel_0$ is nonempty and contained in $\Pi_{\tkernel_0,\epsilon}^*$. 
\begin{restatable}[]{theorem}{elimdetcont}
\label{le:elimdetcont}
(Relative continuity of $\epsilon$-\eUDRL{} limit policies in deterministic kernels) Let $\{\mathcal{M}_{\tkernel} : \tkernel \in (\Delta \mathcal{S})^{\mathcal{S}\times\mathcal{A}}\}$
and $\{\bar{\mathcal{M}}_{\tkernel} : \tkernel \in (\Delta \mathcal{S})^{\mathcal{S}\times\mathcal{A}}\}$ be compatible families.
Let $\tkernel_0$ be a deterministic kernel
and let $(\pi_{n,\tkernel,\epsilon})_{n\geq 0}$ be a sequence of $\epsilon$-\eUDRL{}-generated policies with initial condition $\pi_0 > 0$, transition kernel $\tkernel$ and regularization parameter $\epsilon\in(0,1)$.
Let $\epsilon_0\in(0,1)$ be a fixed regularization parameter. Then for all $\pi_0>0$ the following statements hold:

\begin{enumerate}
\item
Let $\mathcal{L}(\pi_0,\tkernel,\epsilon)$ denote the set of accumulation points  of $(\pi_{n,\tkernel,\epsilon})_{n\geq 0}$.
Then any function $u:(\pi_0,\tkernel,\epsilon)\rightarrow u(\pi_0,\tkernel,\epsilon)\in\mathcal{L}(\pi_0,\tkernel,\epsilon)$ is relatively continuous in $\tkernel,\epsilon$
at point $\tkernel_0,\epsilon_0$ on $\theinterestingstates$, i.e. for all $\bar{s} \in \theinterestingstates$ 
$$[u(\pi_0,\tkernel,\epsilon_0)](\cdot|\bar{s}) \xrightarrow{\oactions(\bar{s})} \pi_{\tkernel_0,\epsilon_0}^*(\cdot|\bar{s}) \quad
\text{as}\quad \tkernel\rightarrow \tkernel_0.$$

\item
Let $\alpha(\delta,\epsilon)$ is chosen as in lemma~\ref{le:ealpha} and let 
$\beta,\tilde{\beta},\epsilon \in (0,1)$ be such that $1>\gamma+\epsilon$, where $\gamma = \frac{\tilde{\beta}}{((1-\epsilon)^N-\beta)\alpha(\delta,\epsilon)}$ and let 
$$
U_\delta(\tkernel_0) = \{\tkernel\mid\max_{(s,a)\in\mathcal{S}\times\mathcal{A}}
\|
\tkernel(\cdot\mid s,a)-\tkernel_0(\cdot\mid s,a)
\|_1 < \delta
\}.
$$
There exists $\delta >0$ so that for all $\tkernel \in U_{\delta}(\tkernel_0)$ and all $\bar{s}\in \theinterestingstates$ it holds
$$
\liminf_n \pi_{n,\tkernel,\epsilon}(\oactions(\bar{s})|\bar{s}) \geq x^*(\gamma,\epsilon,|\oactions(\bar{s})|)
$$
and
$$
x^*(\gamma,\epsilon,|\oactions(\bar{s})|) \rightarrow 1-\epsilon_0\left(1-\frac{|\oactions(\bar{s})|}{|\mathcal{A}|}\right)
\quad\text{as}\quad
(\beta,\tilde{\beta},\alpha,\epsilon) \rightarrow \left(0,0,\alpha(0,\epsilon_0),\epsilon_0\right),
$$
where $x^*(\gamma,\epsilon,|\oactions(\bar{s})|)$ is given in lemma~\ref{le:z}. Consequently for all $\bar{s}\in \theinterestingstates$ it holds that
$$
\liminf_n \pi_{n,\tkernel,\epsilon}(\oactions(\bar{s})|\bar{s}) \rightarrow 1-\epsilon_0\left(1-\frac{|\oactions(\bar{s})|}{|\mathcal{A}|}\right)
\quad\text{as}\quad (\tkernel,\epsilon) \rightarrow (\tkernel_0,\epsilon_0).
$$
\end{enumerate}
\end{restatable}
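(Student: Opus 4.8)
The plan is to follow the architecture of the proof of Theorem~\ref{le:limdetcont} mutatis mutandis, replacing each unregularized ingredient by its $\epsilon$-\eUDRL{} counterpart: lemma~\ref{le:esuppstab} for support stability of $\theinterestingstates$, lemma~\ref{le:edetopt} for optimality at $\tkernel_0$ and containment of accumulation points in $\Pi_{\tkernel_0,\epsilon}^*$, lemma~\ref{le:econtQfac} for quotient-continuity of action values, lemma~\ref{le:ealpha} for the visitation lower bound $\alpha(\delta,\epsilon)$, and lemma~\ref{le:z} for the scalar dynamical system $z_{\gamma,\epsilon,M}$ with its fixed point $x^*(\gamma,\epsilon,M)$. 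As in the unregularized case I would first observe that point 2 implies point 1. Since the regularized recursion keeps $\pi_{n,\tkernel,\epsilon}>0$ and forces at least $\epsilon/|\mathcal{A}|$ mass on every non-optimal action, we have the a priori upper bound $\pi_{n,\tkernel,\epsilon}(\oactions(\bar{s})|\bar{s}) \leq 1-\epsilon(1-|\oactions(\bar{s})|/|\mathcal{A}|)$, so every accumulation point $\pi$ satisfies $\liminf_n \pi_{n,\tkernel,\epsilon}(\oactions(\bar{s})|\bar{s}) \leq \pi(\oactions(\bar{s})|\bar{s}) \leq 1-\epsilon(1-|\oactions(\bar{s})|/|\mathcal{A}|)$. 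Point 2 drives both bounds to the common value $1-\epsilon_0(1-|\oactions(\bar{s})|/|\mathcal{A}|)$ as $(\tkernel,\epsilon)\to(\tkernel_0,\epsilon_0)$; the per-action limits $\pi(a|\bar{s})\to\epsilon_0/|\mathcal{A}|$ for $a\notin\oactions(\bar{s})$ then follow by a squeeze, because the total non-optimal mass converges while each summand is bounded below by $\epsilon/|\mathcal{A}|$. This is precisely relative continuity with respect to $\oactions(\bar{s})$ and identifies the relative limit as a member of $\Pi_{\tkernel_0,\epsilon_0}^*$.

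The core of the argument is the recursive estimate $\pi_{n,\tkernel,\epsilon}(\oactions(\bar{s})|\bar{s}) \geq z_{\gamma,\epsilon,M}^{\circ (n-n_1)}\bigl(\pi_{n_1,\tkernel,\epsilon}(\oactions(\bar{s})|\bar{s})\bigr)$ with $M=|\oactions(\bar{s})|$, which is the $\epsilon$-analogue of \eqref{eq:forMainTheorem} and is proved by induction on the remaining horizon $h$. Fixing $2>\delta>0$ so that lemma~\ref{le:esuppstab} applies on $\theinterestingstates$, I would write $\pi_{n+1,\tkernel,\epsilon}(\oactions(\bar{s})|\bar{s})$ as $(1-\epsilon)$ times the unregularized ratio, rewritten via lemma~\ref{le:recrewrites} point 1 exactly as in \eqref{eq:piM}, plus the additive term $\epsilon M/|\mathcal{A}|$. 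The visitation factors $v_{\tkernel,\pi_n}(h',g'|s,h)$ are bounded below by $\alpha(\delta,\epsilon)>0$ through lemma~\ref{le:ealpha}; the action values for non-optimal actions are bounded above by $Q^*_\tkernel(\bar{s},a)<\tilde\beta$ using lemma~\ref{le:contoptbound} point 2 and lemma~\ref{le:opthbound}; and for optimal actions one lower-bounds $Q^{\pi_n}_\tkernel(\bar{s},a)$ by $(1-\epsilon)^N-\beta$ via lemma~\ref{le:econtQfac} together with the induction hypothesis that lower-horizon masses are eventually within $\delta'/2$ of their regularized optima. Two applications of the monotonicity remark~\ref{re:monotonicity}, first to discard the off-diagonal $(h',g')\neq(h,g)$ contributions and then to insert the $Q$ and $v$ bounds, collapse the ratio into the scalar map and give $\pi_{n+1,\tkernel,\epsilon}(\oactions(\bar{s})|\bar{s}) \geq z_{\gamma,\epsilon,M}(\pi_{n,\tkernel,\epsilon}(\oactions(\bar{s})|\bar{s}))$ with $\gamma = \tilde\beta/\bigl(((1-\epsilon)^N-\beta)\,\alpha(\delta,\epsilon)\bigr)$. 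Lemma~\ref{le:z} then yields $z_{\gamma,\epsilon,M}^{\circ n}\to x^*(\gamma,\epsilon,M)$ and hence the $\liminf$ bound.

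To finish point 2 I would let $(\beta,\tilde\beta,\alpha,\epsilon)\to(0,0,\alpha(0,\epsilon_0),\epsilon_0)$, which is achieved by shrinking $\delta$ and using continuity of $\alpha$ in $(\delta,\epsilon)$; this forces $\gamma\to 0$, and by the continuity of $x^*$ supplied by lemma~\ref{le:z} we obtain $x^*(\gamma,\epsilon,M)\to x^*(0,\epsilon_0,M)=1-\epsilon_0(1-M/|\mathcal{A}|)$, exactly the claimed limit. As in Theorem~\ref{le:limdetcont}, the horizon induction step for $h\geq 2$ will require intermediate parameters $\beta',\tilde\beta'$ chosen so that $\gamma'<1-\epsilon$ and $x^*(\gamma',\epsilon,M')$ exceeds $1-\delta'/2$ in the reduced sense appropriate to regularized optima, guaranteeing that lemma~\ref{le:econtQfac} delivers the $Q$-bound at horizon $h$ from the bound already established at horizon $h-1$.

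The main obstacle I anticipate is the lower bound on $Q^{\pi_n}_\tkernel(\bar{s},a)$ for optimal actions and its propagation through the horizon induction. Unlike the unregularized case, where an optimal action yields value exactly $1$ at $\tkernel_0$, the regularized policy never executes an optimal action with certainty, so the probability of reaching the intended goal decays geometrically in the horizon, producing the $(1-\epsilon)^N$ factor that must survive in the denominator of $\gamma$. Keeping $(1-\epsilon)^N-\beta>0$ and $\gamma+\epsilon<1$ simultaneously across all horizons, while correctly absorbing the off-diagonal terms by monotonicity in the presence of the additive constant $\epsilon M/|\mathcal{A}|$, is the delicate part of the bookkeeping; once these parameter constraints are maintained, the remaining estimates are routine adaptations of the proof of Theorem~\ref{le:limdetcont}.
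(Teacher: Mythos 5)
Your proposal is correct and follows essentially the same route as the paper's proof: the reduction of point 1 to point 2, the recursive lower bound by the scalar map $z_{\gamma,\epsilon,M}$ with $\gamma = \tilde\beta/\bigl(((1-\epsilon)^N-\beta)\,\alpha(\delta,\epsilon)\bigr)$, the horizon induction using lemma~\ref{le:econtQfac} and the visitation bound of lemma~\ref{le:ealpha}, and the passage to the limit via lemma~\ref{le:z}. The only cosmetic difference is that you spell out the squeeze argument for the per-action limits in point 1 more explicitly than the paper, which simply defers to the analogous reasoning in the unregularized case.
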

The theorem is proved in appendix~\ref{ap:regrec}. Along the lines of section~\ref{ssse:extendingfinite}, we can extend the result also to the segment spaces $\Seg^{\diag}$ and $\Seg^{\trail}$.
\begin{restatable}[]{theorem}{elimdetcontDiagTrail}
\label{le:elimdetcontDiagTrail} (Relative continuity of accumulation points of $\epsilon$-\eUDRL{}-generated policies at deterministic kernels) Theorem~\ref{le:elimdetcont} remains valid under the renaming $\pi_{n,\epsilon} \rightarrow \pi_{n,\epsilon}^{\diag/\trail}$.
\end{restatable}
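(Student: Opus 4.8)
The proof follows the template of Theorem~\ref{le:elimdetcont} (carried out in appendix~\ref{ap:regrec}), replacing every $\Seg$-space quantity by its $\SegTrail$ or $\SegDiag$ counterpart; the required modifications are of exactly the same minor nature as those encountered when passing from Theorem~\ref{le:limdetcont} to Theorem~\ref{le:limdetcontDiagTrail} in section~\ref{ssse:extendingfinite}. The central observation, as before, is that Lemma~\ref{le:recrewrites} expresses the recursion in all three segment spaces through the \emph{same} building blocks --- the conditional probabilities $Q_\tkernel^{\pi_n,g,h}$, the visitation marginals $v_{\tkernel,\pi_n}$, and the policy itself --- the three variants differing only in the ranges of summation. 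Consequently the entire estimation machinery of the unregularized regularized theorem carries over once the summation ranges are adjusted, and the additive regularization term $\epsilon/|\mathcal{A}|$ is untouched by the restriction.

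Concretely, I would first invoke the diagonal/trailing $\epsilon$-analogue of Lemma~\ref{le:esuppstab}, together with equations~\eqref{eq:saisthesame} and~\eqref{eq:isthesame}, to confirm that $\theinterestingstates$ and the state--action set $\supp\num_{\tkernel_0,\pi_0}\cap(\mathcal{A}\times\supp\nu_{\tkernel_0,\pi_0})$ are unchanged under the restriction, so that $\pi_{n+1,\epsilon}^{\diag/\trail}$ is well-defined on $\theinterestingstates$; since regularization already forces $\pi_{n,\epsilon}^{\diag/\trail}>0$, this bookkeeping is in fact immediate. Next I would replace the appeal to point~1 of Lemma~\ref{le:recrewrites} in the regularized analogue of equation~\eqref{eq:piM} by points~2 or~3. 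The decisive step is then the sandwich $u^{\diag}(\pi_{n,\epsilon})\le u^{\trail}(\pi_{n,\epsilon})\le u(\pi_{n,\epsilon})$ established in section~\ref{sse:extendingfinite}: the common upper bound through $Q^*_\tkernel$ bounds all three numerator sums, while the lower bound obtained by discarding the nonnegative off-diagonal terms already holds for $u^{\diag}$ and hence for the trailing case too. This yields the same lower estimate on the ratio part of the recursion --- the analogue of~\eqref{eq:pimf1} --- and therefore the same dynamical-system map $z_{\gamma,\epsilon,M}$ of Lemma~\ref{le:z}, whose fixed point $x^*(\gamma,\epsilon,|\oactions(\bar s)|)$ governs the conclusion. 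The induction on the remaining horizon then proceeds unchanged.

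I expect the only point requiring genuine care, rather than mechanical substitution, to be the lower bound on the restricted visitation probability $v_{\tkernel,\pi_{n,\epsilon}}(h,g\mid s,h)$, i.e.\ the diagonal/trailing $\epsilon$-version of Lemma~\ref{le:ealpha}. One must check that the diagonal segment starting at $(s,h,g)$ --- which additionally satisfies the constraints $l(\Sigma)=\stag{H}_0$ and $\rho(\stag{S}_h)=\stag{G}_0$ defining $\SegDiag$ --- still carries weight bounded below by the same $\bar\mu$-based quantity $\alpha(\delta,\epsilon)$, and that this interacts correctly with the factors $(1-\epsilon)^N$ entering $\gamma$. This is plausible because the prefix--suffix construction of Lemma~\ref{le:suppstab} produces precisely a diagonal segment, so the very segment that witnesses positivity in the $\Seg$ case also witnesses it in the $\SegDiag$ case; confirming that the constants line up is where the argument must be written out carefully, but no new idea is needed.
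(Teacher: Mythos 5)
Your proposal is correct and follows essentially the same route as the paper: invoke the $\Seg^{\diag/\trail}$ variants of the support-stability and optimality lemmas together with equations~\eqref{eq:saisthesame} and~\eqref{eq:isthesame}, replace point~1 of lemma~\ref{le:recrewrites} by points~2 or~3, and use the sandwich $u^{\diag}\le u^{\trail}\le u$ to recover the same bound as in~\eqref{eq:epimf1} and hence the same dynamical system $z_{\gamma,\epsilon,M}$. The one point you flag as needing care --- the restricted visitation bound --- is in fact moot, since lemma~\ref{le:recrewrites} writes all three recursions using the \emph{unconditioned} marginal $\prob(\stag{H}_0=h,\stag{G}_0=g'\mid \stag{S}_0=s,l(\Sigma)=h;\pi)$, so lemma~\ref{le:ealpha} applies verbatim (it holds for any policy, in particular $\pi_{n,\epsilon}^{\diag/\trail}$).
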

Details are provided in appendix~\ref{ap:regrec}. The estimates given in theorem~\ref{le:elimdetcont} can also be made entirely explicit along the lines of section~\ref{ssse:specmuaccbound}, providing estimates for $\epsilon$-\eUDRL{}-generated quantities as a functions of the distance $\delta$ from the deterministic kernel $\tkernel_0$.
\begin{restatable}[]{corollary}{elimitbounds}
\label{le:elimitbounds}
(Estimating the location of accumulation points -- $\epsilon$-\eUDRL{})
Under the conditions of theorem~\ref{le:elimdetcont} assume that $\delta\in(0,1)$ and set
$$
\alpha = \frac{2}{N(N-1)} \left(\min_{\bar{s}\in \theinterestingstates} \bar{\mu}(\bar{s})\right) 
\left(\frac{\epsilon}{|\mathcal{A}|}\right)^N \left(1-\frac{\delta}{2}\right)^N.
$$
Define the quantities $\tilde{\beta} = \frac{N\delta}{2}$, 
$x^*(\gamma,\epsilon,M) = \frac{\hat{x}^*+\sqrt{(\hat{x}^*)^2+\frac{4\gamma\epsilon M}{|\mathcal{A}|}}}{2}$ where $\hat{x}^* = 1-\epsilon(1-\frac{M}{\mathcal{A}})-\gamma$ (cf. lemma~\ref{le:z}) and for horizon $h$, $1\leq h\leq N$,
\begin{align*}
\beta_h &= \begin{cases}\max\{\delta,\tilde{\beta}\},\ \textnormal{if}\ h=1,\\
\delta + \kappa_{h-1} + \beta_{h-1},\ \textnormal{if} \ h \geq 2,
\end{cases}
\\
\gamma_h &= \frac{\tilde{\beta}}{((1-\epsilon)^N-\beta_h)\alpha},
\\
\kappa_h &= \max_{\bar{s}=(s,h',g)\in \theinterestingstates,h'=h} 2\left( 1-\epsilon\left(1-\frac{|\oactions(\bar{s})|}{|\mathcal{A}|}\right)-x^*(\gamma_h,\epsilon,\oactions(\bar{s}))\right).
\end{align*}
Further assume $\beta_h,\gamma_h \in (0,1)$, $1> \gamma_h + \epsilon$ and notice that $\beta_h,\kappa_h,\gamma_h$, are increasing in $h$ and that
 $\beta_h,\kappa_h,\gamma_h \rightarrow 0$ as $\delta \rightarrow 0$.
Then the following assertions hold for all $\tkernel \in U_{\delta}(\tkernel_0)$ 
and for all $\pi_0 >0$:
\begin{enumerate}
\item
$\displaystyle
\limsup_n \max_{\bar{s} \in \theinterestingstates }
2 \left( 1-\epsilon\left(1-\frac{|\oactions(\bar{s})|}{|\mathcal{A}|}\right) - \pi_{n,\epsilon}(\oactions(\bar{s})|\bar{s})\right) \leq \kappa_{N},
$
\item
$\displaystyle
(\exists (\pi_{n,\epsilon}^*), \pi_{n,\epsilon}^*\in \Pi_{\tkernel_0,\epsilon}^* ):\;
\limsup_n \max_{\bar{s},a \in \theinterestingstates\times\mathcal{A} }
|Q_{\tkernel}^{\pi_{n,\epsilon}}(\bar{s},a) - Q_{\tkernel_0}^{\pi_{n,\epsilon}^*}(\bar{s},a)|
\leq \beta_N,
$
\item
$\displaystyle
(\exists (\pi_{n,\epsilon}^*), \pi_{n,\epsilon}^*\in \Pi_{\tkernel_0,\epsilon}^* ):\;
\limsup_n \max_{\bar{s} \in \theinterestingstates}
|V_{\tkernel}^{\pi_{n,\epsilon}}(\bar{s})- V_{\tkernel_0}^{\pi_{n,\epsilon}^*}(\bar{s}) |
\leq
\beta_N + \kappa_N,
$
\item
$\displaystyle
(\exists (\pi_{n,\epsilon}^*), \pi_{n,\epsilon}^*\in \Pi_{\tkernel_0,\epsilon}^* ):\;
\limsup_n 
|J_{\tkernel}^{\pi_{n,\epsilon}}- J_{\tkernel_0}^{\pi_{n,\epsilon}^*} |
\leq
\frac{N\delta}{2} + \beta_N + \kappa_N,
$

\item
for all $\epsilon'>0$ there
exists $n_0$ and $(\pi_{n,\epsilon}^*)_{n\geq 0}$, $\pi_{n,\epsilon}^*\in \Pi_{\tkernel_0,\epsilon}^*$ so that for all $n\geq n_0$ and for all $\bar{s} \in 
\theinterestingstates$ it holds that
$$
\|
\pi_{n,\epsilon}(\cdot|\bar{s})
- \pi_{n,\epsilon}^{*} (\cdot|\bar{s}) 
\|_1
\leq
2\left(1-\epsilon\left(1-\frac{|\oactions(\bar{s})|}{|\mathcal{A}|}\right)-z_{\gamma',\epsilon,|\oactions(\bar{s})|}^{\circ (n-n_0)}(x_0)\right)
$$
where $x_0 = \frac{\epsilon|\oactions(\bar{s})|}{|\mathcal{A}|}$, $\beta' > \beta_N+\epsilon'$, where $\epsilon'$ is chosen such that $1 > \gamma' + \epsilon'$ with $ \gamma' = \frac{\tilde{\beta}}{((1-\epsilon)^N-\beta')\alpha} > 0$.
\end{enumerate}

The terms on the right hand side of 1.-4. converge to 0 as $\delta \rightarrow 0$.
In 5. the sequence $(y_n)_{n\geq 0}$ defined by 
$
y_n = 2(1-\epsilon(1-\frac{|\oactions(\bar{s})|}{|\mathcal{A}|})-z_{\gamma',\epsilon,|\oactions(\bar{s})|}^{\circ (n-n_0)}(x_0))
$
converges q-linearly to the limit 
$
y = 2(1-\epsilon(1-\frac{|\oactions(\bar{s})|}{|\mathcal{A}|})-x^{*}(\gamma',\epsilon,|\oactions(\bar{s})|))
$
at convergence rate
$
\frac{(1-\epsilon)\gamma'}{(x^{*}(\gamma',\epsilon,|\oactions(\bar{s})|)+\gamma')^2}
$
which tends to 0 as $\delta \rightarrow 0$.
\end{restatable}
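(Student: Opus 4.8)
The plan is to transcribe the proof of corollary~\ref{le:limitbounds} line by line, replacing each ingredient by its $\epsilon$-\eUDRL{} counterpart and carrying the regularization floor $\epsilon/|\mathcal{A}|$ through every estimate. The supports and optimality facts are supplied by lemma~\ref{le:esuppstab} and lemma~\ref{le:edetopt}, the visitation lower bound $\alpha(\delta,\epsilon)$ by lemma~\ref{le:ealpha}, the quotient-continuity of action-values by lemma~\ref{le:econtQfac}, and the dynamical-system analysis by lemma~\ref{le:z}. The one scalar recursion that drove the unregularized argument, $\pi_{n+1}(\oactions(\bar s)\mid\bar s)\ge f_{\gamma}(\pi_n(\oactions(\bar s)\mid\bar s))$, is replaced by $\pi_{n+1,\epsilon}(\oactions(\bar s)\mid\bar s)\ge z_{\gamma,\epsilon,|\oactions(\bar s)|}(\pi_{n,\epsilon}(\oactions(\bar s)\mid\bar s))$, where $z_{\gamma,\epsilon,M}(x)=(1-\epsilon)\tfrac{x}{x+\gamma}+\epsilon\tfrac{M}{|\mathcal{A}|}$ is the map of lemma~\ref{le:z}. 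Solving $z_{\gamma,\epsilon,M}(x)=x$ gives the quadratic $x^2-\hat x^*x-\epsilon\tfrac{M}{|\mathcal{A}|}\gamma=0$ whose larger root is precisely the stated $x^*(\gamma,\epsilon,M)$, and $z_{\gamma,\epsilon,M}'(x^*)=(1-\epsilon)\gamma/(x^*+\gamma)^2$ is exactly the claimed $q$-linear rate.

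The core is the same horizon induction as in corollary~\ref{le:limitbounds}, establishing for each $h$ the two bounds $\limsup_n\max|Q^{\pi_{n,\epsilon}}_{\tkernel}-Q^{\pi^*_{n,\epsilon}}_{\tkernel_0}|\le\beta_h$ and $\limsup_n\max\,2\bigl(1-\epsilon(1-|\oactions(\bar s)|/|\mathcal{A}|)-\pi_{n,\epsilon}(\oactions(\bar s)\mid\bar s)\bigr)\le\kappa_h$. In the base case $h=1$ the action-value is still $\sum_{s'\in\rho^{-1}(\{g\})}\tkernel(s'\mid s,a)$, yielding the $\delta$-bound directly; the policy bound follows by rewriting the regularized recursion through the $\epsilon$-version of lemma~\ref{le:recrewrites} (equation~\eqref{eq:piM} carrying the convex-combination prefactor), bounding optimal-action $Q$ below by $(1-\epsilon)^N-\beta_1$ and non-optimal $Q$ above by $\tilde\beta$, bounding the visitation terms below by $\alpha$ via lemma~\ref{le:ealpha}, and applying remark~\ref{re:monotonicity} twice to collapse the rational expression to $z_{\gamma_1,\epsilon,|\oactions(\bar s)|}$; lemma~\ref{le:z}, point 4 then converts the one-step inequality into $\liminf_n\pi_{n,\epsilon}(\oactions(\bar s)\mid\bar s)\ge x^*(\gamma_1,\epsilon,|\oactions(\bar s)|)$, i.e.\ $\kappa_1$. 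The induction step for $Q$ feeds the recursive estimate of lemma~\ref{le:econtQfac} (with $\pi\to\pi_{n,\epsilon}$) the horizon-$(h-1)$ bounds $\kappa_{h-1},\beta_{h-1}$, producing $\beta_h=\delta+\kappa_{h-1}+\beta_{h-1}$; the induction step for the policy repeats the monotonicity collapse with the enlarged $\beta_h$ and again applies lemma~\ref{le:z}, point 4.

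Points 1 and 2 are then immediate readings of the claim. For point 3 I would decompose $V^{\pi_{n,\epsilon}}_{\tkernel}-V^{\pi^*_{n,\epsilon}}_{\tkernel_0}$ into a $Q$-difference term and a policy-difference term exactly as in corollary~\ref{le:limitbounds}, point 3, controlling the first by $\beta_N$ and the second by $\kappa_N$. For point 4 I would split $J$ over $\supp\bar\mu$ into $\theinterestingstates$ and its complement, bound the complement by $\tfrac{N\delta}{2}$ using that $V^{*}_{\tkernel_0}=0$ there together with lemma~\ref{le:opthbound}, and bound the remainder by point 3. For point 5 I would fix the sequence $(\pi^*_{n,\epsilon})_{n}$ with $\pi^*_{n,\epsilon}(a\mid\bar s)\ge\pi_{n,\epsilon}(a\mid\bar s)$ for $a\in\oactions(\bar s)$, so that $\|\pi_{n,\epsilon}(\cdot\mid\bar s)-\pi^*_{n,\epsilon}(\cdot\mid\bar s)\|_1=2\bigl(1-\epsilon(1-|\oactions(\bar s)|/|\mathcal{A}|)-\pi_{n,\epsilon}(\oactions(\bar s)\mid\bar s)\bigr)$, invoke $\pi_{n,\epsilon}(\oactions(\bar s)\mid\bar s)\ge z^{\circ(n-n_0)}_{\gamma',\epsilon,|\oactions(\bar s)|}(x_0)$ with $x_0=\epsilon|\oactions(\bar s)|/|\mathcal{A}|$ (the regularization floor, which lies in the basin of the unique attracting fixed point of lemma~\ref{le:z}), and compute the rate $z'_{\gamma',\epsilon,M}(x^*)$ by the Heine/L'Hospital argument of corollary~\ref{le:limitbounds}, point 5.

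I expect the main obstacle to be the bookkeeping around the reference policies $\pi^*_{n,\epsilon}\in\Pi^*_{\tkernel_0,\epsilon}$ rather than any new analytic estimate. Unlike the unregularized case, where $Q^{*}_{\tkernel_0}$ is canonical and constant on $\oactions(\bar s)$, the limiting action-value $Q^{\pi^*_{\epsilon}}_{\tkernel_0}$ here depends on the chosen $\pi^*_{n,\epsilon}$ and is not constant on optimal actions, so lemma~\ref{le:econtQfac} must be used in its quotient form and a single coherent sequence $(\pi^*_{n,\epsilon})_{n}$ must be selected that simultaneously realizes the $Q$-, $V$- and $J$-bounds of points 2--4. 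A second delicate point is that the target mass $1-\epsilon(1-|\oactions(\bar s)|/|\mathcal{A}|)$ is state-dependent through $|\oactions(\bar s)|$, which forces $\kappa_h$ to carry a maximum over horizon-$h$ critical states and requires the worst-case $x^*(\gamma_h,\epsilon,\cdot)$ to drive the universal policy bound; one must verify that the monotonicity collapse of equation~\eqref{eq:piM} respects this per-state $M$ while the final $\liminf$ estimate stays uniform over $\theinterestingstates$.
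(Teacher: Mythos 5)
Your proposal is correct and follows essentially the same route as the paper: the paper's proof is precisely a transcription of corollary~\ref{le:limitbounds} with the $\epsilon$-versions of the supporting lemmas, the same horizon induction on the pair $(\beta_h,\kappa_h)$ with $f_\gamma$ replaced by $z_{\gamma,\epsilon,M}$, and the same Heine/L'Hospital rate computation giving $(1-\epsilon)\gamma'/(x^*+\gamma')^2$. The two delicate points you flag — carrying an explicit coherent sequence $(\pi^*_{n,\epsilon})\subset\Pi^*_{\tkernel_0,\epsilon}$ because $Q^{\pi^*_\epsilon}_{\tkernel_0}$ is no longer canonical or constant on $\oactions(\bar s)$, and the state-dependent target mass forcing the maximum in $\kappa_h$ — are exactly the modifications the paper singles out.
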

The result is proved in appendix~\ref{ap:regrec}. Similarly to remark~\ref{re:accreformulation}, points 1.-4. of the corollary can be formulated in terms of accumulation points. An illustration of the accumulation points of $\min_{\bar{s}\in \theinterestingstates} \pi_{n,\tkernel,\epsilon} (\oactions(\bar{s})|\bar{s})$ and the lower estimate provided by corollary~\ref{le:elimitbounds} is given in figure~\ref{fig:ebandit} in the case of a 2-armed bandit
model. All details of the construction of the MDP underlying figure~\ref{fig:ebandit} can be found in the appendix B, example~\ref{ex:bandit}. Figure~\ref{fig:ebandit:a} shows the $\delta$-dependency of accumulation points of $\epsilon$-\eUDRL{} together with the estimates on accumulation points $\min_{\bar{s}\in\theinterestingstates} x^*(\gamma,\epsilon,|\oactions(\bar{s})|)$ for several values of regularization parameter. For reference, we also plot the estimates for accumulation points for \eUDRL{} in figure~\ref{fig:ebandit:b} although they are not directly comparable to those of $\epsilon$-\eUDRL{}. Figure~\ref{fig:eODTgridworld} illustrates the derived estimates in case of a regularized ODT recursion applied to a grid world. This example is characterized by a non-deterministic optimal policy, cf.~\ref{fig:eODTgridworld:b}, such that the $x_u$-based bound for ODT without regularization cannot be used. All details of the construction of the MDP underlying figure~\ref{fig:eODTgridworld} can be found in the appendix B, example~\ref{ex:eODTgridworld}.

\begin{figure}
  \begin{subfigure}{0.48\linewidth}
		\includegraphics[width=\linewidth]{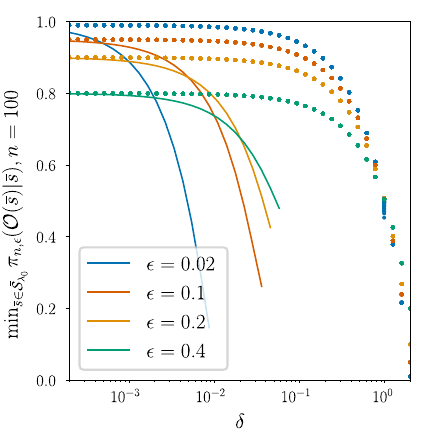}
		\caption{Accumulation point estimates - $\epsilon$-\eUDRL{}.}
        \label{fig:ebandit:a}
	\end{subfigure}
     \begin{subfigure}{0.48\linewidth}
		\includegraphics[width=\linewidth]{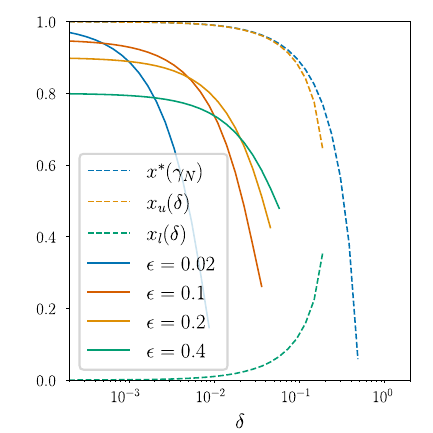}
		\caption{Acc. point estimates - $\epsilon$-\eUDRL{} \& \eUDRL{}.}
        \label{fig:ebandit:b}
	\end{subfigure}
    \caption{Illustration of estimates of corollary~\ref{le:elimitbounds} for $\epsilon$-\eUDRL{}. Plots show the behavior of $\min_{\bar{s}\in\theinterestingstates}\pi_n(\oactions(\bar{s})|\bar{s})$ and estimates for a $2$-armed bandit for several values of the regularization parameter. Plot (a) showcases the dependency on the distance $\delta$ to a deterministic kernel, where dotted graphs depict accumulation points (approximated by values at $n=100$) and the estimates $\min_{M} x^*(\gamma,\epsilon,M)$ provided in the corollary are depicted by solid graphs. For comparison, plot (b) shows the estimates provided in the corollary~\ref{le:elimitbounds} together with the estimates $x^*(\gamma_N)$ and $x_u(\delta)$ from corollaries \ref{le:limitbounds} and \ref{le:limitboundsM1}.}
    \label{fig:ebandit}
\end{figure}

\begin{figure}
  \begin{subfigure}{0.48\linewidth}
		\includegraphics[width=\linewidth]{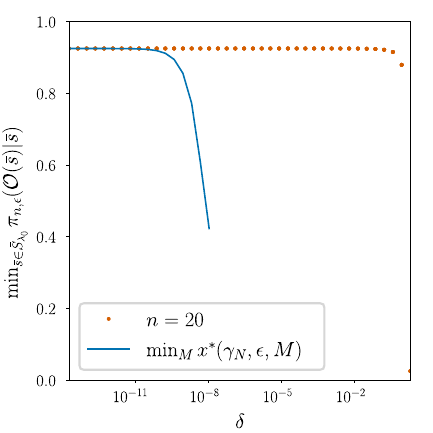}
		\caption{Accumulation point estimates - reg.~ODT}
        \label{fig:eODTgridworld:a}
	\end{subfigure}
     \begin{subfigure}{0.48\linewidth}
		\includegraphics[width=\linewidth]{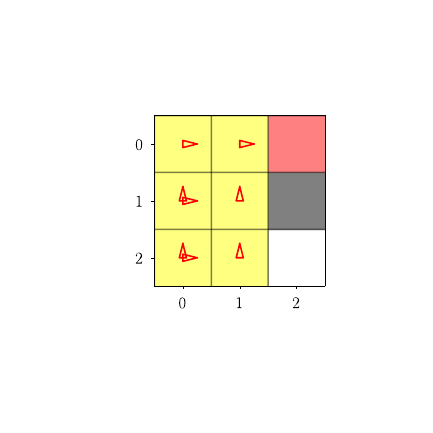}
		\caption{Map of grid world.}
        \label{fig:eODTgridworld:b}
	\end{subfigure}
    \caption{Illustration of estimates of corollary~\ref{le:elimitbounds} for a regularized ODT recursion. Plots show the behavior of accumulation points of  $\min_{\bar{s}\in\theinterestingstates}\pi_n(\oactions(\bar{s})|\bar{s})$ and respective estimates for a grid world example. Plot (a) shows accumulation points of $\min_{\bar{s}\in \theinterestingstates} \pi_{n,\tkernel,\epsilon} (\oactions(\bar{s})|\bar{s})$ of the ODT recursion with uniform regularization, $\epsilon=0.1$, together with the lower bound $\min_{M} x^*(\gamma,\epsilon,M)$. Plot (b) shows the map of the grid world. A wall is depicted in gray, the set $\theinterestingstates$ in yellow, and the goal in red. Arrows depict optimal actions associated with the specific state and goal.}
    \label{fig:eODTgridworld}
\end{figure}
We conclude this section with a brief summary. Adding regularization allowed us to carry out the discussion of continuity in full generality. This covers domains like, e.g., example~\ref{ex:eODTgridworld}, which does not fall into the special cases of section~\ref{se:continfty}. On the other hand the more regularization is applied,
the further the asymptotic policy must deviate from an optimal policy. Notice that policies from $\Pi_{\tkernel_0,\epsilon}^*$, $\bar{s} \in \theinterestingstates$, are only assigning a mass of $1-\epsilon\left(1-\frac{|\oactions(\bar{s})|}{|\mathcal{A}|}\right)$ to $\oactions(\bar{s})$ while an optimal policy is assigning a mass of $1$ to $\oactions(\bar{s})$, cf.~the discussion after the lemma~\ref{le:econtQfac}. On the other hand a smaller value of $\epsilon$ entails a smaller lower 
the bound on the state visitation terms $\alpha(\delta,\epsilon)$. This, in turn, affects $\gamma$
in that a smaller distance to a deterministic kernel $\delta$ is needed to recover asymptotic behavior for $\delta \rightarrow 0$, cf.~corollary~\ref{le:elimitbounds} and lemma~\ref{le:z} for the behavior of respective quantities. The trade-off in the choice of $\epsilon$ is apparent the comparison of $x^*(\gamma,\epsilon,M)$ bounds for several values of $\epsilon$ in figure~\ref{fig:ebandit}.

\section{Related Work}
\label{se:relwork}

In this section, we start by discussing key early works that study the theme of using iterated supervised learning to solve reinforcement learning problems in section~\ref{sec:rw:rwr}.
We then look at works associated with the more specific goal-conditioned reinforcement learning in section~\ref{sec:rw:gcrl}.
After, we tackle the most relevant works to this article---those directly relating to upside-down reinforcement learning---in section~\ref{sec:rw:udrl}.
We end this section by discussing work associated with the transformer architecture used by decision transformers and online decision transformers in section~\ref{sec:rw:transformers}.

\subsection{Reward-Weighted Regression}
\label{sec:rw:rwr}

The idea of using \emph{iterated supervised learning to solve reinforcement learning problems} was previously investigated by \citet{peters2007reinforcement} in the form of Reward-Weighted Regression (RWR),
which itself built on the Expectation-Maximization framework for RL by \citet{dayan1997using}.
However, the setting explored in their works is very limited; the work of \citet{dayan1997using} considered only a bandit scenario
(only one state and a finite number of actions) and
\citet{peters2007reinforcement} restricted their work to immediate reward
problems.
The extension of RWR to consider a full episodic setting (considering not only immediate rewards but rather full returns) was done by \citet{wierstra2008episodic}
and \citet{kober2011policy}.
Efficient off-policy schemes in the context of RWR were later discussed by \citet{hachiya2009efficient,hachiya2011reward}.
Finally, the use of deep neural network approximators was introduced to RWR by \citet{peng2019advantage}.

Theoretical studies of RWR are more sparse, with the monotonicity of RWR proved by~\citet{dayan1997using} and~\citet{peters2007reinforcement} in their respective settings. Both of the aforementioned monotonicity proofs build on the Expectation-Maximization paradigm.
More than a decade after the work on monotonicity, RWR's convergence to a global optimum was
proved by \citet{strupl2021reward} for compact state and action spaces (assuming an infinite number of samples and no function approximation).

RWR remains particularly relevant for this work because \eUDRL{} on $\Seg^{\diag}$
coincides with RWR (as discussed in section~\ref{se:recrewrites}).
This link motivated our approach to the continuity proofs
of \eUDRL{} generated quantities at deterministic kernels.

\subsection{Goal-conditioned Reinforcement Learning Without Relabeling}
\label{sec:rw:gcrl}

To our knowledge, the first work that could be considered goal-conditional RL was published
by 
\citet{schmidhuber1991learning,schmidhuber1990learning} in the context of learning selective attention.
In this work, there are extra goal-defining input patterns that encode various tasks so that the RL machine knows which task to execute next.
The rewards are granted at the end of each task.
The need to propagate learning signals through a non-differentiable
environment is elevated by a separate environment model network.
Essentially, it is model-based RL with sparse rewards, which is equivalent to SL except samples are collected online.
However, note that, in contrast to UDRL, this work and also all other works in this paragraph are missing segment/trajectory relabeling.
The work by \citet{schmidhuber1991learning}
also contributes to a body of literature on attention which provides grounds for today transformer architectures.
The options framework of \citet{sutton1999between} could also be considered as goal-conditioned RL (an option determines a policy).
Providing goals to a policy network provides possibility of generalization to unseen goals when representing the policy by deep network (similarly for value functions).
Building on the work of \citet{sutton2011horde}, the work of \citet{schaul2015universal} proposes convenient approximators of goal-conditioned value functions.
For an example of a more recent work on goal-conditioned RL, we can cite the paper by \cite{faccio2023goal} which introduced goal-conditioned policy generators for deep policy networks.

\subsection{UDRL}
\label{sec:rw:udrl}

UDRL \citep{schmidhuber2019reinforcement} extends the \emph{iterated SL} idea by
learning command-conditioned policies using
trajectory segments relabeled with the achieved goals/returns.
This was motivated by the desire to obtain
generalization across goals/returns (when representing the UDRL policy by
a deep neural network), and by the desire for
more efficient data usage (there are more segments in a given trajectory than there are states).
Trajectory relabeling appeared already before in Hindsight Experience Replay (HER) by \citet{andrychowicz2017hindsight} and we describe how our work is related to this work in more detail below. In the first instance, \citet{schmidhuber2019reinforcement} developed the specific algorithm referred to here as \eUDRL{} for deterministic environments. However, \citet{schmidhuber2019reinforcement} has also speculated on a potential variant of the algorithm for application in stochastic environments.

There are many important stochastic
environments which are close to deterministic in the sense that they only carry minor non-determinism. With this motivation (a return-reaching variant of) \eUDRL{} was
demonstrated by \cite{srivastava2019training} to solve successfully many non-deterministic environments (e.g. MuJoCo tasks) featuring transition kernels exhibiting only small non-determinism.
This was followed by \cite{ghosh2021learning}
in context of GCSL (essentially \eUDRL{} restricted to fixed horizon and state reaching task) with further impressive benchmark performance. Concurrently there appeared also work of \cite{kumar2019reward}
which is again essentially \eUDRL{}.

The idea of combining \eUDRL{} with transformer architectures \citep{schmidhuber1992learningFWP,vaswani2017attention,schlag2021linear} as policy approximator was successfully demonstrated in the Decision Transformer (DT) architecture proposed by \citet{chen2021decision}.
\citet{chen2021decision}, however, only looked at the offline RL setting (equivalent to a single \eUDRL{} iteration).
Shortly after the DT architecture was introduced, a generalized version was proposed by \citet{furuta2021generalized}, which led to a non-trivial performance improvement.

\citet{strupl2022upsidedown} and \citet{paster2022cant}
described the problems and causes of \eUDRL{} convergence issues 
in environments with stochastic transition dynamics.
While the work of \cite{strupl2022upsidedown} rewrote the recursion in \eUDRL{} to identify the convergence issues,
the work of \cite{paster2022cant} directly proposed a solution to this issue in the context of DTs. Several subsequent works were proposed to fix the issues with \eUDRL{} stochastic environments, such as the work by \citet{yang2022dichotomy} in context of DT and 
the work of \citet{faccio2023goal} in context of \eUDRL{}.

Some contributions are of particular relevance for the article at hand. We describe these works in more detail below. 

\paragraph{\citet{brandfonbrener2023does}}
discusses \eUDRL{} (and related algorithms)
in context of offline RL, i.e.,
it investigates just one iteration of \eUDRL{} aiming for
sample bounds. In contrast, we are developing continuity results and bounds for both any finite number of iterations and for the asymptotic case where there is an infinite sample size.
As a consequence, there is an overlap between this work and that of \citet{brandfonbrener2023does} in some simple results (namely, the continuity of expected return (goal-reaching objective in our case) for the first iteration as shown by theorem 1 of \citet{brandfonbrener2023does}). However, note that the continuity for the
first iteration is immediate, as the initial policies do not depend on $\tkernel$. The main difficulty in the article at hand arises as policies are functions of $\tkernel$. This leads to a
discontinuity of policies in $\tkernel$
at $\tkernel_0$ (a deterministic kernel) in $\theinterestingstates$ for the second and later iterations.\footnote{We refer the reader to the discussion at the end of appendix \ref{ap:interiorcont} which justifies this statement.} To overcome this difficulty, we utilize a weaker notion of continuity.
Here, we used the notion of relative continuity (see the induction step
in our proof of theorem~\ref{le:detcont}).
Developing asymptotic results leads to yet another level of complexity (see sections \ref{se:continfty}
and \ref{se:regrec}).

\paragraph{\citet{ghosh2021learning}}
introduces the following lower bound on the goal-reaching objective (see theorem 3.1 of \citet{ghosh2021learning}):
$$
J^{\pi} \geq J_{\mathrm{GCSL}}(\pi) - 4T(T-1)\alpha^2 + C,
$$
where $J^{\pi}$ denotes the goal-reaching objective
for a policy $\pi$, $J_{\mathrm{GCSL}}(\pi)$
denotes a GCSL objective for the policy $\pi$
(the same as \eUDRL{} objective \eref{eq:objective} for trailing segments), $T$ denotes a fixed horizon, $\alpha' := \max_{\bar{s}} D_{\mathrm{TV}}(\pi(\cdot|\bar{s})|\pi_{\mathrm{old}}(\cdot|\bar{s}))$
denotes total variation distance of the policy $\pi$ from
policy $\pi_{\mathrm{old}}$ which was used to collect trajectories, and $C$ is a constant on $\pi$. There are (at least) two problems when
one tries to adopt this result in our work to, for example, assess the continuity of the goal-reaching objective (or the $\delta$-dependent error bounds for the goal-reaching objective).

The first of the aforementioned problems is that to use the bound in a meaningful way, one has to minimize $\alpha'$. This could be done by assuming that a sequence of GCSL (\eUDRL{}) policies has a limit (then $\alpha' \rightarrow 0$).
This is, however, difficult to prove.
To maintain a certain level of rigor, we carefully avoid the limit assumption and instead investigate all the possible accumulation points.
From this perspective, this result is not useful as, for a finite number of iteration, $\alpha'$ could be large.
Nor is it useful for the asymptotic case as we could
not afford the limit assumption.

The second of the aforementioned problems is that the dependence on $\delta$ (the distance of the transition kernel $\tkernel$ to a deterministic kernel
$\tkernel_0$)
is not considered, as, in order to use the bound
(e.g. for assessing continuity in $\tkernel$ at $\tkernel_0$),
one has to determine the dependence on $\delta$
for all three terms on the right side.
This seems to be more challenging than determining this 
dependence for $J^{\pi}$ itself (as we do
here) and would most likely still be much less precise (see the point above).

\paragraph{\citet{kumar2019reward}} introduced an algorithm for learning return-conditioned policies that is essentially \eUDRL{} on $\Seg_{\trail}$ with fixed horizon.
However, while the authors do include some theoretical discussion on the properties of \eUDRL{}, this discussion does not touch on the distance of an MDP transition kernel to a deterministic kernel (the role played by deterministic kernels is not investigated at all). 
Therefore, with regards to the continuity of the transition kernel we are particularly interested in here, this work has little to offer.

\paragraph{\cite{leibovich2023learning}} introduced the Iterated Inversion for Learning Control algorithm~\citep[Algorithm 2]{leibovich2023learning} (IT-IN). This algorithm can be seen as a modification of GCSL, which is studied extensively in~\cite{ghosh2021learning} and the article at hand. The following are important differences:
\begin{itemize}
\item
IT-IN assumes only deterministic environments and deterministic policies that are fitted using mean square loss. To promote exploration noise is added to the policy.
\item
IT-IN adapts the initial goal distribution in every iteration (similarly to eUDRL implementations in context of return-reaching tasks, see e.g. \cite{srivastava2019training}). 
\item
IT-IN handles intent (or desired behavior) by conditioning the policy on an embedding of desired state trajectories (instead of goal states as in GCSL). IT-IN aims to replicate entire state trajectories.
\end{itemize}
The experimental results focus on the evaluation of convergence and performance comparison to RL and Inverse RL baselines. The theoretical analysis demonstrates the convergence of~\cite[Agorithm 1]{leibovich2023learning} using Newton's method under the following premises:
\begin{itemize}
\item
As a consequence of the restriction of the investigation to deterministic environments there exists a map $\mathcal{F}$ between action and state trajectories. The map $\mathcal{F}$ is assumed to be bijective, $\mathcal{F}$ and its inverse are both assumed to be continuously differentiable. There are several additional assumptions on $\mathcal{F}$ including boundedness of Jacobian, Jacobian inverse and Jacobian difference.
\item
The inverse of $\mathcal{F}$ (a policy proxy) is approximated by fitting a linear map on a finite number of samples. No exploration noise is added to policy outputs. The initial goal distribution is assumed fixed over iterations (no adaptation).
\end{itemize}
The theory developed is largely orthogonal to ours. While we extend the known positive behavior of \eUDRL{} in deterministic environments to non-deterministic ones by exploring continuity in the transition kernel (assuming no function approximation and an unlimited sample size), \citet{leibovich2023learning} focuses on proving IT-IN convergence for a specific subclass of deterministic environments, assuming finite sample size and a particular function approximation class.

\paragraph{\cite{andrychowicz2017hindsight}} introduced Hindsight Experience Replay (HER), the popular extension of experience replay that has be applied to many off-policy RL algorithms, such as DQN and DDPG.
HER is closely related to \eUDRL{} in some aspects.
Like \eUDRL{}, HER deals with goals, a goal map, and goal dependent policies (note that a dependence on the time step/remaining horizon could be accounted for by including it into the state representation).
HER is formulated for a fixed horizon with discounting (no discounting case is included).
The principle of the HER extension relies in populating the replay buffer not only with the trajectory that the agent actually encountered, but also with its \lq\lq{}relabeled\rq\rq{} version, where the original goals are replaced with the actually achieved final state and the corresponding rewards.
HER then uses an off-policy algorithm to learn value functions and a policy using data in the replay buffer.

The main distinguishing characteristics of  \eUDRL{} with respect to HER are that
(1) HER uses not just the relabeled trajectories but also the original trajectories, i.e., with the original intended goal;
and, (2) HER does not fit the next policy directly to some action conditional (at least when DQN or DDPG is used as the offline algorithm as in the original HER paper), e.g., DQN aims to rather learn the critic and derive the policy as an epsilon-greedy
policy using the critic.
These are the main problems when one wants to connect HER and \eUDRL{} recursion.
The second problem could be elevated by using a convenient RL algorithm (e.g. RWR) after which, because the substantial part of the data is not relabeled, we are left with some crossover between RWR and \eUDRL{} recursions.
This observation could already hint some properties of an  HER extension of RWR.
Note that some of the problems HER shares with \eUDRL{} were already reported, analyzed, and/or fixed in the literature. We refer the interested reader to the works of \citet{lanka2018archer} and \citet{schramm2023usher}.

\paragraph{\citet{paster2022cant}} describes the causes behind DT divergence in stochastic environments and proposes a fix.
To resolve the issues with convergence in stochastic domains, instead of conditioning on return, \citet{paster2022cant} propose to condition on a statistics
$I(\tau)$ (where $\tau$ stands for a trajectory) which is independent of the stochasticity of the environment. In their work, $I$ is represented by NN and learned using an adversarial scheme. Trajectories are then clustered according to $I$ values with $I$ values subsequently mapped to the average return of their corresponding cluster.
The result is then that by conditioning the policy on $I$ values, the corresponding 
average return can be reached consistently, i.e., in expectation.
This is in contrast to when conditioning on a statistics which is generally dependent on the environment's stochasticity, such as the trajectory return (or some general abstract goals like in \eUDRL{}).
For a detailed description of what it means to be \lq\lq{}independence of the environment stochasticity\rq\rq{}
please refer to the original article by \citet{paster2022cant}.

\subsection{Transformers}
\label{sec:rw:transformers}

Transformers with unnormalized linearized self attention (also known as Unnormalized Linear Transformers, or ULTRAs), appeared at least as early as the 1990s in works by \cite{schmidhuber1992learningFWP,schmidhuber1991learningFWP} under name fast-weight programmers~\citep{schlag2021linear}.
The goal of FWPs was to obtain a more storage-efficient alternative to recurrent networks by processing each input with both a slow and a fast network, where the output of the slow network provided changes to the weights of the fast network.
The attention terminology was introduced shortly after by \citet{schmidhuber1993reducing}.
The modern transformer architecture was introduced by \citet{vaswani2017attention} and scaled quadratically in input size.
More recently, variants of the \citet{vaswani2017attention} architecture have reverted to using linearized attention (e.g., see the work by \citet{katharopoulos2020transformers} and \citet{choromanski2020rethinking}), as this results in a linear scaling.

\section{Conclusion}%
\label{se:conclusion}
Our contribution lies in being the first rigorous treatise on the convergence and stability of some key\lq\lq reinforcement learning through supervised learning\rq\rq{} approaches. Specifically, our study focuses on both MDPs of the CE type (on the side of their mathematical formulation) and on \eUDRL{} (on the side of the algorithms investigated). Taken together, these frameworks offer sufficient generality to accommodate common \lq\lq{}RL through SL\rq\rq{} training schemes, including GCSL, ODT and RWR. In particular, we have shown how GCSL, ODT, and RWR can be expressed in the framework of CEs and interpreted as derivatives of \eUDRL{}. This is achieved by restricting the technical analysis to specific segment spaces $\SegTrail$ and $\SegDiag$. In essence, the training iteration of GCSL, ODT and RWR can be understood as instances of the \eUDRL{} training iteration on $\Seg$, $\SegTrail$ or $\SegDiag$, for a conveniently chosen CE. To accommodate such training schemes, we expanded our analysis of convergence and stability to the segment spaces $\SegTrail$ and $\SegDiag$, in addition to proving results for $\Seg$. Our findings can roughly be categorized into three clusters: (1) stability analysis of training schemes at a finite number of iterations, (2) investigation of the asymptotic properties of training schemes, and (3) results that complement the discussion of algorithms targeting a broader and more complete mathematical picture.

\subsection{Investigation of Stability at a Finite Number of Iterations}

In this work, we have demonstrated the instability of \eUDRL{} at the boundary of the space of transition kernels. Specifically, we have provided examples of environments that are characterized by non-deterministic kernels located on the boundary, where even a tiny perturbation of the kernel can result in a significant, arbitrary shift in the value or in the goal-reaching objective generated by the training schemes. Examples \ref{ex:boundarypoint} and \ref{ex:detpoint}, representing non-deterministic or deterministic transition kernels, respectively, illustrate the presence of non-removable discontinuities in \eUDRL{}-generated policies at the boundary of the transition kernel space for iterations $n \geq 2$. In contrast, within the interior of the kernel space, both policies and values remain continuous. However, at deterministic kernels, while values are continuous, policies might have non-removable discontinuities. To address this, we introduced the concept of relative continuity, which corresponds to a form of continuity with respect to a quotient topology. Using this concept, we showed that \eUDRL{} policies are relatively continuous at deterministic kernels for any finite number of iterations. This entails the continuity of the goal-reaching objective and the stability of \eUDRL{}-type training schemes at deterministic kernels at any finite number of iterations.%
Together with the convergence of \eUDRL{} to optimality at deterministic kernels, this implies near-optimality of the goal-reaching objective at transition kernels that are close to deterministic.

\subsection{Convergence and Investigation of Stability at an Infinite Number of Iterations}

In this work, we investigated the asymptotic behavior of \eUDRL{}-generated quantities as the number of iterations tends to infinity. We established the relative continuity of the sets of accumulation points of \eUDRL{} policies and the continuity of the associated goal-reaching objectives at deterministic kernels, specifically for two important special cases. These cases are characterized by additional assumptions that facilitate the analysis of continuity.

\begin{enumerate}
\item The support of the CE's initial distribution contains the set of \lq\lq{}critical states\rq\rq{} $\theinterestingstates \subset \supp \bar{\mu}$. In simple terms, $\theinterestingstates$ consists of the states that are essential to prove the continuity of the goal-reaching objective at a given deterministic kernel $\tkernel_0$. States that cannot be reached by the CE, in particular, are excluded from $\theinterestingstates$.
\item The optimal policy is unique on $\theinterestingstates$, which means that for all states $(\forall \bar{s}\in \theinterestingstates): |\oactions(\bar{s})|=1$. This condition alleviates the complexity of studying relative continuity, significantly simplifying the analysis of stability and convergence.
\end{enumerate}
Thus, both conditions entail near-optimal behavior of the goal-reaching objective at near-deterministic kernels in the asymptotic limit, as is also the case at any finite number of iterations. Making the established estimates fully explicit, we derived estimates that quantify the error from optimality based on the distance of the transition kernel from a deterministic kernel. We also derived a bound on the \eUDRL{} policy error and assessed its $q$-linear convergence rate. Although we believe that the outlined conditions encompass a wide range of practical scenarios, a fully general discussion of the relative continuity of accumulation point sets for \eUDRL{}-generated policies at deterministic kernels remains an open problem. Motivated by the widespread use of regularization in \lq\lq{}RL through SL\rq\rq{} training schemes (e.g., in ODT), we investigated $\epsilon$-greedy regularization of the policy iteration of \eUDRL{}. We established the relative continuity of the set of accumulation points of $\epsilon$-eUDRL-generated policies in full generality. As before, this entails the continuity of the accumulation point sets of the corresponding goal-reaching objective. We also showed that the goal-reaching objective exhibits near-optimal behavior at near-deterministic kernels in the asymptotic limit for $\epsilon$-\eUDRL{}. Akin to the discussion without regularization, we provide estimates that quantify the error from optimality given the distance of the transition kernel from a deterministic kernel for $\epsilon$-\eUDRL{}. Notice, however, that the asymptotic analysis is conducted in full generality without relying on the special assumptions used in previous cases.

\subsection{Further Results}
We established the continuity of \eUDRL{}-generated policies and goal-reaching objectives at any finite number of iterations for transition kernels located within the interior of the space of all kernels. Unlike deterministic kernels, this does not immediately entail convergence to near-optimality. We include this result to complete the mathematical picture and to stipulate further work. We also included a proof of the optimality of \eUDRL{} at deterministic kernels: While this result may be considered as implicitly understood within the literature, we decided to write it up for completeness. The mathematical investigation is supplemented by a range of worked-out examples that might prove useful for the continued development of this research area. Specifically, we illustrated our results through examples involving a $2$-armed bandit, a random walk on $\mathbb{Z}_3$ and a $3\times 3$ grid world domain. The code which was used for computing the examples and generating the associated figures is available at \url{https://github.com/struplm/eUDRL-GCSL-ODT-Convergence-public}

\acks{%
This work was supported by the European Research Council (ERC, Advanced Grant Number 742870), the Swiss National Supercomputing Centre (CSCS, Project s1090), and by the Swiss National Science Foundation (Grant Number  200021\_192356, Project NEUSYM). We also thank both the NVIDIA Corporation for donating a DGX-1 as part of the Pioneers of AI Research Award and IBM for donating a Minsky machine.}

\bibliography{main}

\newpage

\appendix

\section{The Segment Distribution and its Factorization}
\label{ap:SegmentDist}
Since we assume a deterministic reward function, rewards are completely determined by state-action-state transitions and can be omitted from segments or trajectories. Once a CE has reached an absorbing state in $\bar{\mathcal{S}}_A$, the policy becomes irrelevant as all actions lead to the same absorbing state. For this reason, it is sufficient to restrict the discussion to all policies $\pi$ that deterministically select a unique action $a_A$ in all absorbing states $\bar{\mathcal{S}}_A$, i.e.,
$\forall \bar{s} \in \bar{\mathcal{S}}_A:\:\pi(a_A|\bar{s}) = 1$. In this section, we assume that all trajectories are generated by using one fixed policy $\pi$.
The length $l(\cdot)$ of a trajectory is defined as the number of transitions
until an absorbing state is entered for the first time.
Regarding trajectories, we can consider just prefixes of length $N$ because the
CE MDP bounds the remaining horizon component by $N$. Further, we can
restrict ourselves to the subspace $\Traj \subset (\bar{\mathcal{S}}\times\mathcal{A})^N\times\bar{\mathcal{S}}$ allowed
by remaining horizon/goal dynamics of CE, i.e., for a trajectory
$\tau = (\bar{s}_0,a_0,\ldots,\bar{s}_N) \in \Traj$, $\bar{s}_t = (s_t,h_t,g_t)$, $0\leq t\leq N$,
the remaining horizon $h_t$ decreases by 1
from its initial value $h_0=l(\tau)$ till 0 when entering to
an absorbing state. The goal $g_t=g_0$ remains unchanged.
Thus $\tau$ is fully determined by the initial horizon $h_0=l(\tau)$, the initial goal $g_0$, the
states $s_0,\ldots,s_{l(\tau)}$, and actions $a_0,\ldots,a_{l(\tau)-1}$, i.e.,
$\tau = ((s_0,h_0,g_0),a_0,(s_1,h_0-1,g_0),a_1,\ldots,(s_{l(\tau)},0,g_0),a_A,\ldots,(s_{l(\tau)},0,g_0))$.
The probability of $\tau$ is given as:
$$
\prob( \mathcal{T} = \tau;\pi)
=
\left( \prod_{t=1}^{l(\tau)}
\tkernel(s_t | a_{t-1}, s_{t-1})
\right)
\times
\left( \prod_{t=0}^{l(\tau)-1}
\pi(a_t|\bar{s}_t)
\right)
\times
\bar{\mu}(\bar{s}_0),
$$
where $\mathcal{T}:\Omega\rightarrow\Traj$, $\mathcal{T} = ((S_0,H_0,G_0),A_0,\ldots,(S_N,H_N,G_N))$ is the trajectory random variable map.

\paragraph{Segment Distribution:}
Segments are assumed to be continuous chunks of trajectories in $\Traj$,
so they respect the CE horizon/goal dynamics.
We assume them to be always contained within the length of a trajectory.
Notice that \eUDRL{} is learning actions just in a state which happens to be the first state of a segment.
Since learning actions in absorbing states is not meaningful, we will assume the first state of a segment to be transient (i.e., $\in \bar{S}_T$)\footnote{One could likewise additionally assume the original MDP component of this state to be transient for the similar reason.}.
Thus, segments $\sigma$ are fully determined by the
segment length, denoted by $l(\sigma)$ (the number of transitions), by the
remaining horizon and goal at the segment beginning
$h_0^{\sigma}$,$g_0^{\sigma}$, and by $l(\sigma)+1$ states and $l(\sigma)$
actions. Without loss of generality we will identify $\sigma$ with such a tuple
$\sigma=(l(\sigma),s_0^{\sigma},h_0^{\sigma},g_0^{\sigma},a_0^{\sigma},s_1^{\sigma},a_1^{\sigma},\ldots,s_{l(\sigma)}^{\sigma})$.
The space of all such tuples will be denoted $\Seg$.
Formally, we will assume that there exists a random variable map $\Sigma: \Omega \rightarrow \Seg$, $\Sigma = (l(\Sigma),\stag{S}_0,\stag{H}_0,\stag{G}_0,\stag{A}_0,\stag{S}_1,\stag{A}_1,\ldots,\stag{S}_{l(\Sigma)})$
with distribution $d_{\Sigma}^{\pi}$ given
below.

We construct the segment distribution $d_{\Sigma}^{\pi}$ in a similar way as the state visitation distribution---summing across appropriate trajectory distribution marginals.
This causes the result to be un-normalized (among other restrictions, e.g., on the first state of the segment), therefore we have to include a normalization constant $c$. ($\forall \sigma \in \Seg$):
$$
\begin{aligned}
&(d_{\Sigma}^{\pi_n}(\sigma) :=)\:
\prob(\Sigma=\sigma;\pi)
=
\\
&= c^{-1}\sum_{t \leq N -l(\sigma)}
\prob(S_t=s_0^{\sigma},H_t=h_0^{\sigma},G_t=g_0^{\sigma},A_t=a_0^{\sigma}, \ldots, S_{t+l(\sigma)}=s_{l(\sigma)}^{\sigma} ;\pi)
\\
&=
c^{-1}
\sum_{t \leq N -l(\sigma)}
\left( \prod_{i=1}^{l(\sigma)}
\tkernel(s_{i}^{\sigma} | a_{i-1}^{\sigma}, s_{i-1}^{\sigma})
\right)
\!\cdot\!
\left( \prod_{i=0}^{l(\sigma)-1}
\pi(a_i^{\sigma}|\bar{s}_i^{\sigma})
\right)
\cdot
\prob(S_t=s_0^{\sigma},H_t=h_0^{\sigma},G_t = g_0^{\sigma};\pi)
\\
&=
c^{-1}
\left( \prod_{i=1}^{l(\sigma)}
\tkernel(s_{i}^{\sigma} | a_{i-1}^{\sigma}, s_{i-1}^{\sigma})
\right)
\!\cdot\!
\left(\prod_{i=0}^{l(\sigma)-1}
\pi(a_i^{\sigma}|\bar{s}_i^{\sigma})
\right)
\cdot
\sum_{t \leq N -l(\sigma)}
\prob(S_t=s_0^{\sigma},H_t=h_0^{\sigma},G_t = g_0^{\sigma};\pi)
,
\end{aligned}
$$
where
$$
c := \sum_{\sigma\in \Seg}
\sum_{t \leq N -l(\sigma)}
\prob(S_t=s_0^{\sigma},H_t=h_0^{\sigma},G_t=g_0^{\sigma},A_t=a_0^{\sigma}, \ldots, S_{t+l(\sigma)}=s_{l(\sigma)}^{\sigma} ;\pi) > 0
.
$$
Since we constrained $N\geq 1$, $\supp \bar{\mu} \subset \bar{S}_T$ (which prevents 
degenerate cases), there must be a non-zero probability assigned to some segments
leading to the full summation in the definition of $c$ being positive. 
This means that $c^{-1}$ and the segment distribution are defined correctly.
Given the factorized form of the segment distribution we can conclude the following (through computing marginals and calculating conditional probability ratios):
\begin{equation*}
\prob(l(\Sigma)=k,\stag{S}_0=s_0,\stag{H}_0=h_0,\stag{G}_0=g;\pi)
=
c^{-1}
\sum_{t \leq N -k}
\prob(S_t=s_0,H_t=h_0,G_t = g;\pi)
\end{equation*}
\begin{multline*}
\prob(\stag{A}_0=a_0,\stag{S}_1=s_1,\stag{A}_1=a_1,\ldots,\stag{S}_k=s_k
|\stag{S}_0=s_0,\stag{H}_0=h_0,\stag{G}_0=g_0,l(\Sigma)=k
;\pi)
\\
=
\left( \prod_{i=1}^{k}
\tkernel(s_{i} | a_{i-1}, s_{i-1})
\right)
\cdot
\prod_{i=0}^{k-1}
\pi(a_i|\bar{s}_i)
\end{multline*}
and
$(\forall 0 \leq i \leq k)$:
\begin{multline}
\prob(\stag{A}_i=a_i|\stag{S}_i=s_i,\ldots,\stag{S}_0=s_0,\stag{H}_0=h_0,\stag{G}_0=g,l(\Sigma)=k;\pi)
\nonumber
\\
\begin{aligned}
&=\pi(a_i|s_i,h_0-i,g)
\\
&=\prob(\stag{A}_i=a_i|\stag{S}_i=s_i,\stag{H}_0=h_0,\stag{G}_0=g;\pi), 
\end{aligned}
\tag{\ref{eq:segactions}}
\end{multline}
\begin{multline}
\prob(\stag{S}_i=s_i|\stag{S}_{i-1}=s_{i-1},\stag{A}_{i-1}=a_{i-1},\ldots,\stag{S}_0=s_0,\stag{H}_0=h_0,\stag{G}_0=g_0,l(\Sigma)=k;\pi)
\\
\begin{aligned}
&=
\tkernel(s_i|s_{i-1},a_{i-1})
\\
&=
\prob(\stag{S}_i=s_i|\stag{S}_{i-1}=s_{i-1},\stag{A}_{i-1}=a_{i-1})
\end{aligned}
\tag{\ref{eq:segtransitions}}
\end{multline}
After defining $\stag{H}_i := \stag{H}_0-i,\stag{G}_i := \stag{G}_0$, we can write
$\prob(\stag{A}_i=a_i|\stag{S}_i=s_i,\stag{H}_i=h_i,\stag{G}_i=g_i;\pi)= \pi(a_i|s_i,h_i,g_i)
$.

\paragraph{Bounding $c$:}
For further investigation it is useful to bound the normalizing constant
$c$. We can write
\begin{equation*}
\begin{aligned}
0 < c &= \sum_{\sigma \in \Seg}
\sum_{t \leq N -l(\sigma)}
\prob(S_t=s_0^{\sigma},H_t=h_0^{\sigma},G_t=g_0^{\sigma},A_t=a_0^{\sigma}, \ldots, S_{t+l(\sigma)}=s_{l(\sigma)}^{\sigma} ;\pi)
\\
&=
\sum_{k=1}^{N}
\sum_{t \leq N - k}
\sum_{\sigma : l(\sigma)=k}
\prob(S_t=s_0^{\sigma},H_t=h_0^{\sigma},G_t=g_0^{\sigma},A_t=a_0^{\sigma}, \ldots, S_{t+l(\sigma)}=s_{l(\sigma)}^{\sigma} ;\pi)
\\
&\leq
\sum_{k=1}^{N}
\sum_{t \leq N - k}
1
=
\frac{N(N+1)}{2}
.
\end{aligned}
\tag{\ref{eq:cupperbound}}
\end{equation*}

\section{Motivating Examples}
\label{ap:examples}

We begin this section by demonstrating discontinuities of \eUDRL{} generated quantities in the transition kernel given specific example of compatible families of MDPs.
In order to demonstrate a discontinuity at a specific point $\tkernel_0 \in (\Delta \mathcal{S})^{\mathcal{S}\times\mathcal{A}}$
we take limits with respect to two distinct rays meeting at $\tkernel_0$ and showing that they disagree. Apart of providing plots
depicting how the value of investigated quantity depends on the ray parameter we also provide exact computation of the limits.
During the computation we will use the form of \eUDRL{} recursion
introduced by \citet{strupl2022upsidedown}
($\forall a\in \mathcal{A},(s,h,g)\in \supp \den_{\tkernel,\pi_n}$):
\begin{equation}
\begin{aligned}
\pi_{n+1}(a|s,h,g)
&=
\prob_{\tkernel}(\stag{A}_0=a|\stag{S}_0=s,l(\Sigma)=h,\rho(\stag{S}_{l(\Sigma)}) = g; \pi_n)
\\
&=
\frac{
Q_{A}^{\pi_n,g}(s,h,a)
\pi_{A,n} (a|s,h)
}
{
\sum_{a\in\mathcal{A}} Q_{A}^{\pi_n,g}(s,h,a) \pi_{A,n} (a|s,h)
}
,
\end{aligned}
\label{eq:recursion}
\end{equation}
where $Q_{A}$ denotes an ``average" $Q$-value and $\pi_{A,n}$ denotes
an ``average" policy
\begin{equation}
\begin{aligned}
Q_{A}^{\pi_n,g}(s,h,a) &= 
\prob_{\tkernel}(\rho (\stag{S}_{l(\Sigma)})=g | \stag{A}_0=a, \stag{S}_0=s,l(\Sigma)=h; \pi_n)
\\
\pi_{A,n} (a|s,h) &=
\sum_{h'\geq h,g'\in\mathcal{G}}
\pi_n (a|h',g',s)
\prob_{\tkernel}( \stag{H}_0=h', \stag{G}_0=g' |\stag{S}_0=s,l(\Sigma)=h; \pi_n)
.\label{eq:apolicy}
\end{aligned}
\end{equation}
The derivation of the above \eUDRL{} recursion rewrite uses the same techniques as the proof of lemma \ref{le:recrewrites}.%

At points $\tkernel$ lying on a boundary of $(\Delta \mathcal{S})^{\mathcal{S}\times\mathcal{A}}$ the continuity can break. Therefore, we cannot prove continuity on the whole
boundary.
This is illustrated in the following counter-example.

\begin{example} (non-removable discontinuity of goal reaching objective at a boundary point)
\label{ex:boundarypoint}
Consider an MDP $\mathcal{M} = (\mathcal{S},\mathcal{A},\tkernel,\mu,r)$
with three states $\mathcal{S} = \{0,1,2\}$ and three actions
$\mathcal{A} = \{0,1,2\}$. The state 0 is the only initial state.
We consider its CE $\bar{M}$ with maximum horizon $N=1$, $\mathcal{G}:=\mathcal{S}$, $\rho := \mathrm{id}_{\mathcal{S}}$. The CE's initial distribution fixes the initial remaining horizon at $H_0=1$ and initial goals
are distributed as $\prob(G_0=0) = \prob(G_0=2) = \frac{1}{2}$ (so $\prob(G_0=1) = 0$).
It follows that we are left with only two transient states $(0,1,0)$, $(0,1,2) \in \mathcal{S}\times\mathbb{N}_0\times\mathcal{G}$ differing just by the goal component which are actually visited. Thus it suffices
to provide policies and values only for these states. 
Since we have $N=1$
and the only initial state $S_0 = 0$ it suffice to give the transition kernel only
from this state.
We split the computation into three parts and label them by letters A,B and C for later reference.

\vspace{1em}
\noindent\textbf{A} First we define the parametric transition kernel (with parameter $\alpha \in [0,1]$)
$$
\begin{tabular}{c|ccc}
$\tkernel_{\alpha}(g|a)$ & $g=0$ & $g=1$ & $g=2$ \\
\hline
$a=0$  & $1-\alpha$          & $\frac{\alpha}{4}$ & $\frac{3\alpha}{4}$ \\
$a=1$  & $\frac{3\alpha}{4}$ & $1-\alpha$         & $\frac{\alpha}{4}$  \\
$a=2$  & $\frac{1}{2}$       & $\frac{1}{2}$      & $0$ \\
\end{tabular}
$$
Which we identify with the matrix
$$
\underset{a\in \mathcal{A}, g \in \mathcal{G}}{[\tkernel_{\alpha}(g|a)]} =
\left(
\begin{array}{ccc}
1-\alpha          & \frac{\alpha}{4} & \frac{3\alpha}{4} \\
\frac{3\alpha}{4} & 1-\alpha         & \frac{\alpha}{4}  \\
\frac{1}{2}       & \frac{1}{2}      & 0 \\
\end{array}
\right)
\xrightarrow{\alpha\rightarrow 0+}
\left(
\begin{array}{ccc}
1 & 0 & 0 \\
0 & 1 & 0  \\
\frac{1}{2} & \frac{1}{2} & 0 \\
\end{array}
\right)
=
\underset{a\in \mathcal{A}, g \in \mathcal{G}}{[\tkernel_{0+}(g|a)]}.
$$
We will be
interested how values (policies) behave at and around the point $\alpha = 0$.
For computing the \eUDRL{} policies we will use the recursion
\eqref{eq:recursion}.
We will note argument names inside kernels and value functions
(in case they were substituted by actual values) so it is not necessary
to always look for how we fixed individual argument positions.\footnote{
This is only to improve readability of the example by helping the reader to keep track of substitutions. We use different color (gray)
for these extra notes.
}
Using \eref{eq:apolicy} we get
$Q_{A}^{\pi_n,g} (\subnote{s=}0,\subnote{h=}1,a) = \tkernel(g|a,\subnote{s=}0)$.
Since the condition $\supp \bar{\mu} \subset \bar{\mathcal{S}}_T$ and fixed horizon 1 the segments coincide with trajectories. Therefore, the distribution of the first extended state of a segment is the same
as the initial distribution $\bar{\mu}$:
$\prob(\stag{H}_0=1,\stag{G}_0=g'| \stag{S}_0=0,l(\Sigma)=1 ; \pi_n )
= \prob(G_0=g')$.
Thus we get the following relation for the ``average" policy:
$\pi_{A,n}(a|\subnote{s=}0,\subnote{h=}1) = \sum_{g' \in \mathcal{G}} \pi_n(a|\subnote{s=}0,\subnote{h'=}1,g') \prob(G_0=g')$.
The recursion \eqref{eq:recursion} can then be rewitten
\begin{align*}
\pi_{n+1}(a|\subnote{s=}0,\subnote{h=}1,g) &\propto
Q_{A}^{\pi_n,g} (\subnote{s=}0,\subnote{h=}1,a)
\pi_{A,n}(a|\subnote{s=}0,\subnote{h=}1)
\\
&=
\tkernel(g|a,\subnote{s=}0)
\pi_{A,n}(a|\subnote{s=}0,\subnote{h=}1)
.
\end{align*}
Assuming we start with uniform initial condition $\pi_0$ we obtain also
uniform $\pi_{A,0} = \frac{1}{3}$ leaving 
$\pi_{1}(a|\subnote{s=}0,\subnote{h=}1,g) \propto \tkernel(g|a,\subnote{s=}0)
\frac{1}{3} \propto \tkernel(g|a,\subnote{s=}0)$.
Thus we get
$$
\underset{a\in \mathcal{A}, g \in \mathcal{G}}{[\pi_{1,\alpha}(a|g)]} =
\left(
\begin{array}{ccc}
\frac{1-\alpha}{1-\alpha + \frac{3\alpha}{4} +\frac{1}{2}} &
\frac{\frac{\alpha}{4}}{\frac{\alpha}{4} + 1-\alpha + \frac{1}{2}} &
\frac{3}{4}
\\
\frac{\frac{3\alpha}{4}}{1-\alpha + \frac{3\alpha}{4} +\frac{1}{2}} &
\frac{1-\alpha}{\frac{\alpha}{4} + 1-\alpha + \frac{1}{2}} &
\frac{1}{4}
\\
\frac{\frac{1}{2}}{1-\alpha + \frac{3\alpha}{4} +\frac{1}{2}} &
\frac{\frac{1}{2}}{\frac{\alpha}{4} + 1-\alpha + \frac{1}{2}} &
0
\end{array}
\right)
\xrightarrow{\alpha \rightarrow 0+}
\left(
\begin{array}{ccc}
\frac{2}{3} &
0 &
\frac{3}{4}
\\
0 &
\frac{2}{3} &
\frac{1}{4}
\\
\frac{1}{3} &
\frac{1}{3} &
0
\end{array}
\right)
=
\underset{a\in \mathcal{A}, g \in \mathcal{G}}{[\pi_{1,0+}(a|g)]}
.
$$
This gives us
$$
\underset{a\in \mathcal{A}}{[\pi_{A,1,0+}]}
=
\underset{a\in \mathcal{A}, g \in \mathcal{G}}{[\pi_{1,0+}(a|g)]}
\underset{g \in \mathcal{G}}{[\prob(G_0=g)]}
=
\underset{a\in \mathcal{A}, g \in \mathcal{G}}{[\pi_{1,0+}(a|g)]}
\left(\begin{array}{c}\frac{1}{2}\\0\\\frac{1}{2}\end{array}\right)
= 
\left(\begin{array}{c}\frac{17}{24}\\\frac{1}{8}\\\frac{1}{6}\end{array}\right)
$$
We deduce from
$\pi_{2}(a|\subnote{s=}0,\subnote{h=}1,g) \propto \tkernel(g|a,\subnote{s=}0)
\pi_{A,1}(a)$ that 
$$
\lim_{\alpha \rightarrow 0+}\pi_{2}(a|\subnote{s=}0,\subnote{h=}1,g)
= \frac{ \lim_{\alpha \rightarrow 0+} \tkernel(g|a,\subnote{s=}0) \cdot  \lim_{\alpha \rightarrow 0+} \pi_{A,1}(a) }{ \sum_{a\in \mathcal{A}} \lim_{\alpha \rightarrow 0+} \tkernel(g|a,\subnote{s=}0) \cdot  \lim_{\alpha \rightarrow 0+} \pi_{A,1}(a)}
$$
provided all limits exists and division makes sense. Fortunately we will be interested
just in the column for $g=0$ and $g=1$
$$
\underset{a\in \mathcal{A}}{[\pi_{2,0+}(a|\subnote{g=}0)]} = \left(
\begin{array}{c}
\frac{\frac{17}{24}1}{\frac{17}{24}1 + \frac{1}{8}0 + \frac{1}{6}\frac{1}{2} }
\\
\frac{\frac{1}{8}0}{\frac{17}{24}1 + \frac{1}{8}0 + \frac{1}{6}\frac{1}{2} }
\\
\frac{\frac{1}{6}\frac{1}{2}}{\frac{17}{24}1 + \frac{1}{8}0 + \frac{1}{6}\frac{1}{2}}
\end{array}
\right)
=
\left(
\begin{array}{c}
\frac{\frac{17}{24}}{\frac{19}{24}}
\\
0 
\\
\frac{\frac{1}{12}}{\frac{19}{24}}
\end{array}
\right)
=
\left(
\begin{array}{c}
\frac{17}{19}
\\
0 
\\
\frac{2}{19}
\end{array}
\right)
,
\quad
\underset{a\in \mathcal{A}}{[\pi_{2,0+}(a|\subnote{g=}1)]} = \left(
\begin{array}{c}
0
\\
\frac{3}{5}
\\
\frac{2}{5}
\end{array}
\right).
$$
Finally using 
$$
V_{0+}^{\pi_2}(\subnote{s=}0,\subnote{h=}1,g) = \sum_{a\in \mathcal{A}}\tkernel_{0+}(g|\subnote{s=}0,a) \pi_{2,0+}(a|g)$$
and
$$J_{\tkernel,\pi_{2},0+} = \sum_{g\in \mathcal{G}} V_{0+}^{\pi_2}(\subnote{s=}0,\subnote{h=}1,g) \bar{\mu}(\subnote{s=}0,\subnote{h=}1,g) =
\sum_{g\in \mathcal{G}}
V_{0+}^{\pi_2}(\subnote{s=}0,\subnote{h=}1,g) \prob(G_0=g)
$$
the values
$V_{0+}^{\pi_2}(\subnote{s=}0,\subnote{h=}1,g=0) = \frac{18}{19}$, $V_{0+}^{\pi_2}(\subnote{s=}0,\subnote{h=}1,g=2) = 0$ and the goal reaching objective
$J_{\tkernel,\pi_{2},0+}= \frac{9}{19}$ can be computed.

\vspace{1em}
\noindent{}\textbf{B}
Now we aim to re-compute everything for $\alpha = 0$. Since $\tkernel$ is continuous in $\tkernel$ we get
$\tkernel_0 = \tkernel_{0+}$. Further since $\tkernel_0(g=2|\cdot) = 0$ we
get $\pi_{1,0}(\cdot|g=2) = \frac{1}{\mathcal{A}} = \frac{1}{3}$. Thus
$$
\underset{a\in \mathcal{A}, g \in \mathcal{G}}{[\pi_{1,0}(a|g)]} =
\left(
\begin{array}{ccc}
\frac{2}{3} &
0 &
\frac{1}{3}
\\
0 &
\frac{2}{3} &
\frac{1}{3}
\\
\frac{1}{3} &
\frac{1}{3} &
\frac{1}{3}
\end{array}
\right)
,\quad
\underset{a\in \mathcal{A}}{[\pi_{A,1,0}]}
=
\underset{a\in \mathcal{A}, g \in \mathcal{G}}{[\pi_{1,0}(a|g)]}
\left(\begin{array}{c}\frac{1}{2}\\0\\\frac{1}{2}\end{array}\right)
= 
\left(\begin{array}{c}\frac{1}{2}\\\frac{1}{6}\\\frac{1}{3}\end{array}\right)
$$

$$
\underset{a\in \mathcal{A}}{[\pi_{2,0}(a|\subnote{g=}0)]}
=
\underset{a\in \mathcal{A}} {[\frac{\tkernel_0(a|\subnote{g=}0)) 
\pi_{A,1,0}(a)}{\sum_{a\in \mathcal{A}} \tkernel_0(a|\subnote{g=}0))\pi_{A,1,0}(a)}]}
=
\left(
\begin{array}{c}
\frac{\frac{1}{2}1}{\frac{1}{2}1 + \frac{1}{6}0 + \frac{1}{3}\frac{1}{2}}
\\
\frac{\frac{1}{6}0}{\frac{1}{2}1 + \frac{1}{6}0 + \frac{1}{3}\frac{1}{2}}
\\
\frac{\frac{1}{3}\frac{1}{2}}{\frac{1}{2}1 + \frac{1}{6}0 + \frac{1}{3}\frac{1}{2}}
\end{array}
\right)
=
\left(
\begin{array}{c}
\frac{\frac{1}{2}}{\frac{4}{6}}
\\
0 
\\
\frac{\frac{1}{6}}{\frac{4}{6}}
\end{array}
\right)
=
\left(
\begin{array}{c}
\frac{3}{4}
\\
0 
\\
\frac{1}{4}
\end{array}
\right),
$$
similarly
$$
\underset{a\in \mathcal{A}}{[\pi_{2,0}(a|\subnote{g=}1)]}
=
\left(
\begin{array}{c}
0
\\
\frac{1}{2}
\\
\frac{1}{2}
\end{array}
\right)
.
$$
Finally we get
$$
V_{0}^{\pi_2}(\subnote{s=}0,\subnote{h=}1,g=0) = \frac{7}{8},\quad V_{0}^{\pi_2}(\subnote{s=}0,\subnote{h=}1,g=2) = 0,\quad
J_{\tkernel,\pi_{2},0}= \frac{7}{16}
,
$$
which means that we have a discontinuity at $\alpha = 0$: a boundary point.

\vspace{1em}
\noindent{}\textbf{C}
Moreover, this discontinuity could not be removed as can be seen
by computing the limit with respect to another ray, e.g.,
$$
\underset{a\in \mathcal{A}, g \in \mathcal{G}}{[\tkernel_{\alpha}'(g|a)]} =
\left(
\begin{array}{ccc}
1-\alpha          & \frac{3\alpha}{4} & \frac{\alpha}{4} \\
\frac{\alpha}{4} & 1-\alpha         & \frac{3\alpha}{4}  \\
\frac{1}{2}       & \frac{1}{2}      & 0 \\
\end{array}
\right)
.
$$
Notice that $\tkernel_{0+}' = \tkernel_{0}' = \tkernel_{0+} = \tkernel_{0}$. 
Following the same procedure as in part A we obtain the following results:
$$
\begin{gathered}
\underset{a\in \mathcal{A}, g \in \mathcal{G}}{[\pi_{1,0+}'(a|g)]} =
\left(
\begin{array}{ccc}
\frac{3}{2} & 0            & \frac{1}{4} \\
0           & \frac{2}{3}  & \frac{3}{4}  \\
\frac{1}{3} & \frac{1}{3}  & 0 \\
\end{array}
\right)
,\quad
\underset{a\in \mathcal{A}}{[\pi_{A,1,0+}'(a)]} =
\left(
\begin{array}{c}
\frac{11}{24} \\
\frac{3}{8} \\
\frac{1}{6}  \\
\end{array}
\right)
,
\\
\underset{a\in \mathcal{A}}{[\pi_{2,0+}'(a|g=0)]} =
\left(
\begin{array}{c}
\frac{11}{13} \\
0 \\
\frac{2}{13}  \\
\end{array}
\right)
,\quad
\underset{a\in \mathcal{A}}{[\pi_{2,0+}'(a|g=1)]} =
\left(
\begin{array}{c}
0 \\
\frac{9}{11} \\
\frac{2}{11}  \\
\end{array}
\right),
\\
V_{0+}^{'\pi_2}(g=0) = \frac{12}{13},\quad
V_{0+}^{'\pi_2}(g=2) = 0,\quad
J_{0+}^{'\pi_2} = \frac{6}{13}.
\end{gathered}
$$
\end{example}

\textbf{Discussing alternative definitions of \eUDRL{} recursion outside of $\supp \den_{\tkernel,\pi_n}$:} 
Since the computation of the limits in \textit{\textbf{A}} and \textit{\textbf{C}} does not depend
on the way we defined the \eUDRL{} recursion outside of $\supp \den_{\tkernel,\pi_n}$ in \eref{eq:recursionUsingNumeratorDenominator},
the presented non-removable discontinuities of $\pi_2$ and $J^{\pi_2}$
at $\tkernel_0$ cannot vanish by employing any alternative definition
of the recursion outside of $\supp \den_{\tkernel,\pi_n}$. This means there is no suitable alternative definition of \eUDRL{} recursion (outside of $\supp \den_{\tkernel,\pi_n}$) which would remove the present discontinuity, e.g. in the goal-reaching objective. Therefore, the concrete
definition of the recursion in \eref{eq:recursionUsingNumeratorDenominator} outside of $\supp \den_{\tkernel,\pi_n}$ does not matter much from this point of view (the discontinuities will always be there). The same reasoning applies also for discontinuities presented in the next example.

The next example illustrates non-removable discontinuity of
\eUDRL{} generated policies (for $n \geq 2$)
at deterministic points. However,
while we will see the limits with respect to different rays
disagree, they are all optimal policies. This could motivate us
to prove relative continuity (some weaker notion of continuity)
for these policies, as was successfully achieved in theorem \ref{le:detcont}.
\begin{example}
\label{ex:detpoint}
(non-removable discontinuity of policy at deterministic point)
Let the MDP $\mathcal{M}$ be the same as in example \ref{ex:boundarypoint} except that we vary the parametric
transition kernels to illustrate the behavior around (and at) the specific deterministic transition kernel. Further, we change the distribution of the initial goals to the uniform distribution, i.e., $\prob(G_0 = 0) = \prob(G_0 = 1) = \prob(G_0 = 2) = \frac{1}{3}$.
Since the computation proceeds as in the example \ref{ex:boundarypoint} we just introduce the new kernels and summarize
the results.

\vspace{1em}
\noindent{}\textbf{A}
$$
\underset{a\in \mathcal{A}, g \in \mathcal{G}}{[\tkernel_{\alpha}(g|a)]} =
\left(
\begin{array}{ccc}
1-\alpha  & \alpha   & 0 \\
0         & 1-\alpha & \alpha  \\
\alpha    & 1-\alpha & 0 \\
\end{array}
\right)
\xrightarrow{\alpha\rightarrow 0+}
\left(
\begin{array}{ccc}
1  & 0  & 0 \\
0  & 1  & 0 \\
0  & 1  & 0 \\
\end{array}
\right)
=
\underset{a\in \mathcal{A}, g \in \mathcal{G}}{[\tkernel_{0+}(g|a)]}
(=
\underset{a\in \mathcal{A}, g \in \mathcal{G}}{[\tkernel_{0}(g|a)]}
)
.
$$
Notice continuity at $\alpha = 0$ ($\tkernel_{0+} = \tkernel_{0}$).
The results follow:
$$
\begin{gathered}
\underset{a\in \mathcal{A}, g \in \mathcal{G}}{[\pi_{1,0+}(a|g)]} =
\left(
\begin{array}{ccc}
1 & 0            & 0 \\
0           & \frac{1}{2}  & 1  \\
0 & \frac{1}{2}  & 0 \\
\end{array}
\right)
,\quad
\underset{a\in \mathcal{A}}{[\pi_{A,1,0+}(a)]} =
\left(
\begin{array}{c}
\frac{1}{3} \\
\frac{1}{2} \\
\frac{1}{6}  \\
\end{array}
\right)
,
\\
\underset{a\in \mathcal{A}}{[\pi_{2,0+}(a|g=0)]} =
\left(
\begin{array}{c}
1 \\
0 \\
0  \\
\end{array}
\right)
,\quad
\underset{a\in \mathcal{A}}{[\pi_{2,0+}(a|g=1)]} =
\left(
\begin{array}{c}
0 \\
\frac{3}{4} \\
\frac{1}{4}  \\
\end{array}
\right)
,\\
V_{0+}^{\pi_2}(g=0) = 1,\quad
V_{0+}^{\pi_2}(g=1) = 1,\quad
V_{0+}^{\pi_2}(g=2) = 0,\quad
J_{0+}^{\pi_2} = \frac{2}{3}.
\end{gathered}
$$

\vspace{1em}
\noindent{}\textbf{B}
$$
\begin{gathered}
\underset{a\in \mathcal{A}, g \in \mathcal{G}}{[\pi_{1,0}(a|g)]} =
\left(
\begin{array}{ccc}
1 & 0            & \frac{1}{3} \\
0 & \frac{1}{2}  & \frac{1}{3} \\
0 & \frac{1}{2}  & \frac{1}{3} \\
\end{array}
\right)
,\quad
\underset{a\in \mathcal{A}}{[\pi_{A,1,0}(a)]} =
\left(
\begin{array}{c}
\frac{4}{9} \\
\frac{5}{18} \\
\frac{5}{18}  \\
\end{array}
\right)
,\quad
\underset{a\in \mathcal{A},g\in \mathcal{G}}{[\pi_{2,0}(a|g)]} =
\left(
\begin{array}{ccc}
1 & 0 & \frac{1}{3} \\
0 & \frac{1}{2} & \frac{1}{3} \\
0 & \frac{1}{2} & \frac{1}{3} \\
\end{array}
\right)
,\\
V_{0}^{\pi_2}(g=0) = 1,\quad
V_{0}^{\pi_2}(g=1) = 1,\quad
V_{0}^{\pi_2}(g=2) = 0,\quad
J_{0}^{\pi_2} = \frac{2}{3}.
\end{gathered}
$$

\vspace{1em}
\noindent{}\textbf{C}
$$
\underset{a\in \mathcal{A}, g \in \mathcal{G}}{[\tkernel_{1,\alpha}'(a|g)]} :=
\left(
\begin{array}{ccc}
1-\alpha & 0            & \alpha \\
\alpha   & 1-\alpha  & 0  \\
0        & 1-\alpha  & \alpha \\
\end{array}
\right).
$$
Notice that $\tkernel_{1,0+}' = \tkernel_{1,0}' = \tkernel_{1,0} = \tkernel_{1,0+}$.
$$
\begin{gathered}
\underset{a\in \mathcal{A}, g \in \mathcal{G}}{[\pi_{1,0+}'(a|g)]} =
\left(
\begin{array}{ccc}
1 & 0            & \frac{1}{2} \\
0 & \frac{1}{2}  & 0  \\
0 & \frac{1}{2}  & \frac{1}{2} \\
\end{array}
\right)
,\quad
\underset{a\in \mathcal{A}}{[\pi_{A,1,0+}'(a)]} =
\left(
\begin{array}{c}
\frac{1}{2} \\
\frac{1}{6} \\
\frac{1}{3}  \\
\end{array}
\right)
,\quad
\underset{a\in \mathcal{A},g\in \mathcal{G}}{[\pi_{2,0+}'(a|g)]} =
\left(
\begin{array}{ccc}
1 & 0           & \frac{3}{5} \\
0 & \frac{1}{3} & 0 \\
0 & \frac{2}{3} & \frac{2}{5}  \\
\end{array}
\right)
,\\
V_{0+}^{'\pi_2}(g=0) = 1,\quad
V_{0+}^{'\pi_2}(g=1) = 1,\quad
V_{0+}^{'\pi_2}(g=2) = 0,\quad
J_{0+}^{'\pi_2} = \frac{2}{3}.
\end{gathered}
$$
\end{example}
The dependence of the goal reaching objective and the specific 
$\pi_2$ component on $\alpha$ is depicted in figure \ref{fig:discont}
(for both examples \ref{ex:boundarypoint} and \ref{ex:detpoint}).
While policies are clearly discontinuous at $\alpha = 0$ in both
examples, we see continuity of a goal-reaching objective (on A and C rays) at the deterministic kernel
(example \ref{ex:detpoint}). This hints at a possible 
continuity of goal reaching objectives and relative continuity
of policies at deterministic points (kernels), as was later 
proved in this paper. Unfortunately, we cannot hope for
the similar (general) result in a case of non-deterministic boundary points (kernels) which was clearly demonstrated in example \ref{ex:boundarypoint} and figures \ref{fig:discont:c} and \ref{fig:discont:d}. In addition to the examples above, we include a discussion describing the causes that lead to discontinuities of \eUDRL{}-generated quantities at the end of section~\ref{ap:interiorcont}. 

In the following example, we want to illustrate (relative) continuity of \eUDRL{} generated policies and goal reaching objectives for a finite number of iterations, which was proved in section \ref{se:contfinite}. 
\begin{example}
\label{ex:oscillations} (example featuring random walk on $\mathbb{Z}_3$)
In this example, we consider an MDP $\mathcal{M} = (\mathcal{S},\mathcal{A},\tkernel,\mu,r)$ with state space $\mathcal{S} = \mathbb{Z}_3$ and action space $\mathcal{A}=\{0,1\}$.
The transition kernel depends on the parameter $\alpha \in [0,1]$ that regulates the
stochasticity.
When action $0$ is selected, the MDP stays in the current state $s\in\mathcal{S}$ with probability $1-\alpha$ and transits to the state $s+1$ with probability $\alpha$.
When action $1$ is selected, the MDP stays in the current state $s \in \mathcal{S}$ with probability $\alpha$ and transits to the state $s+1$ with probability $1-\alpha$, i.e., the states we are transiting stay the same; just the probabilities are swapped. For both actions, the transition kernels are essentially
different random walks on $\mathcal{S} = \mathbb{Z}_3$.
We consider $\mathcal{M}$'s CE $\bar{\mathcal{M}}$ with remaining horizon $N=8$,
$\mathcal{G}:=\mathcal{S}$, $\rho :=\mathrm{id}_{\mathcal{S}}$.
The CE's initial distribution is a uniform distribution over all extended states.
\end{example}
The figures \ref{fig:oscillationsPI} and \ref{fig:oscillations} show the dependence
of $\min_{\bar{s}\in\theinterestingstates} \pi_{n,\tkernel}(\oactions(\bar{s})|\bar{s})$ and the goal reaching objective respectively on iteration $n$.
Continuity in the transition kernel is hinted
at by these figures as the plots for different $\delta := \|\tkernel-\tkernel_0\|_1$
approach the plot for $\delta = 0 $ (i.e., $\tkernel_0$ plot) smoothly as
$\delta \rightarrow 0$. Plots in this particular example are interesting
as they exhibit significant oscillations in the first few iterations.
A more detailed plot for the oscillating goal reaching objective (for one specific $\tkernel$ and one specific initial condition) is shown in \ref{fig:oscillations:b}. Here we see that the best results are obtained
for either iteration 2 or 3. Furthermore, the performance of the goal reaching objective deteriorates
when \eUDRL{} is continued.
The general non-monotonicity of a goal reaching objective could raise the question of
whether or not it makes sense to continue \eUDRL{} iterations in order to reach some
\lq\lq{}optimal\rq\rq{} behavior. These questions motivate our theoretical investigation
of the asymptotic properties of \eUDRL{} iteration in section \ref{se:continfty}.

In order to compare bounds derived for all special cases from section
\ref{se:continfty}, we include a simple bandit example below (which
complies with both of the conditions $(\forall \bar{s}\in \theinterestingstates): |\oactions(\bar{s})|=1$ and $\theinterestingstates \subset \supp \bar{\mu}$).
\begin{example}\label{ex:bandit} (a simple bandit)
We consider an MDP $\mathcal{M} = (\mathcal{S},\mathcal{A},\tkernel,\mu,r)$ with a state space $\mathcal{S} = \{0,1\}$ and an action space $\mathcal{A}=\{0,1\}$.
The transition kernel depends on parameter $\alpha \in [0,1]$ which regulates
stochasticity.
When action $0$ is selected, the MDP stays in the current state $s\in\mathcal{S}$ with probability $1-\alpha$ and transitions to the other state with probability $\alpha$.
When action $1$ is selected, the MDP stays in the current state $s \in \mathcal{S}$ with probability $\alpha$ and transitions to the other state with probability $1-\alpha$.
We consider $\mathcal{M}$'s CE $\bar{\mathcal{M}}$ with remaining horizon $N=1$,
$\mathcal{G}:=\mathcal{S}$, $\rho :=\mathrm{id}_{\mathcal{S}}$.
The CE's initial distribution is a uniform distribution over states with the original MDP state component $s=0$. Since the initial horizon can be just 1, there are just two initial states in the CE which differ in the goal component: $(0,1,0)$ and $(0,1,1)$.
\end{example}
For detailed plots concerning the bandit example, see figures \ref{fig:bandit}, \ref{fig:banditxu}, \ref{fig:ebandit} and \ref{fig:banditap}.
\begin{figure}
    \begin{subfigure}{0.48\linewidth}
		\includegraphics[width=\linewidth]{exportedsvg/banditpitraj_svg-tex.pdf}
		\caption{Dependency on iteration.}
        \label{fig:banditap:a}
	\end{subfigure}
     \begin{subfigure}{0.48\linewidth}
		\includegraphics[width=\linewidth]{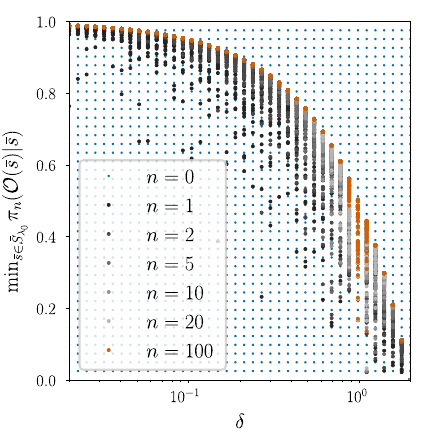}
		\caption{Dependency on distance to determin.~kernel.}
        \label{fig:banditap:b}
	\end{subfigure}\\    
    \caption{Illustration of behavior of $\min_{\bar{s}\in\theinterestingstates}\pi_n(\oactions(\bar{s})|s)$ for \eUDRL{} recursion in a example~\ref{ex:bandit}. 
    Plot (a) shows the dependency on the iteration $n$ for varying distances to a deterministic kernel highlighted by different colors and varying initial policy. Plot (b) contains the same information as (a) but it is organized to reveal the dependency on $\delta$, where varying numbers of iteration are highlighted by different colors.}
    \label{fig:banditap}
\end{figure}

Next, we include an example featuring a tiny grid world domain.
In this example, we wanted to demonstrate $x^*$ bound on \eUDRL{} policy
accumulation points introduced in section \ref{se:continfty}.
In order to satisfy the assumption $\theinterestingstates \subset \supp \bar{\mu}$ required by this bound, we chosen the uniform initial distribution for the associated CE.
\begin{example}\label{ex:gridworld} (a tiny grid world domain)
The domain is described by a 3 by 3 map (see figure \ref{fig:gridworld:d}).
The gray color depicts a wall. Other positions correspond to the MDP
states. Thus, we have a domain 
$\mathcal{M} = (\mathcal{S},\mathcal{A},\tkernel,\mu,r)$
with 8 states ($|\mathcal{S}| = 8$) and 4 actions 
($\mathcal{A}=\{\text{``right"},\text{\lq\lq{}left\rq\rq{}},\text{\lq\lq{}up\rq\rq{}},\text{\lq\lq{}down\rq\rq{}}\}$).
The transition kernel is parameterized by the parameter $\alpha \in [0,1]$,
regulating the distance from a deterministic kernel (for $\alpha =0$ the kernel
is deterministic and we denote it $\tkernel_0$).
In case of $\alpha = 0$, the kernel transits as indicated by the name of an action if possible. If it is not possible (one would move out of the grid or into a wall), the MDP stays in its current state.
In case of non-deterministic kernels ($\alpha >0$), the exact kernel
depends on \lq\lq{}available states\rq\rq{} which is union (over all actions) of all states the deterministic kernel can move in one step. If given an action $a$
the deterministic kernel would transit to $s'$ and \lq\lq{}available states\rq\rq{}$\setminus \{s'\} \neq \emptyset$, the kernel transits to 
$s'$ with probability $1-\alpha$ and remaining probability $\alpha$
is uniformly distributed between all states in \lq\lq{}available states\rq\rq{}$\setminus \{s'\}$. If \lq\lq{}available states\rq\rq{}$\setminus \{s'\} = \emptyset$, then 
the kernel transits to $s'$ with probability 1.
We consider $\mathcal{M}$'s CE $\bar{\mathcal{M}}$ with the remaining horizon $N=4$,
$\mathcal{G}:=\mathcal{S}$, $\rho :=\mathrm{id}_{\mathcal{S}}$.
The CE's initial distribution is the uniform distribution over all CE states.
\end{example}
The bunch of example accumulation points, together with the lower bound $x^*(\gamma_N)$ on the kernel distance $\delta:=\|\tkernel-\tkernel_0\|_1$,
is plotted in the figure \ref{fig:gridworld:a}. The map
of the grid world domain is shown in figure \ref{fig:gridworld:d}.

Further, we give an example which features the $x_u$ bound on the \eUDRL{}
policy accumulation points introduced in section \ref{se:continfty} together
with ODT recursion. In order to do this, we followed the approach
explained in the background section which assumes fixed horizon scenario with the reward at the end. This approach employs first a state space transformation (which consists of forming a $K$-tuples of states)
and then builds CE on top of the new state space.
The fixed horizon and the optimal policy uniqueness (just on $\theinterestingstates$) requirements are handled by a convenient choice
of the CE initial distribution and the goal map. Finally, we utilized the
fact that ODT recursion becomes just \eUDRL{} recursion (on the
described CE with transformed states) on the restricted segment space $\Seg^{\trail}$.
\begin{example}\label{ex:ODTgridworld} (a tiny grid world domain with ODT recursion)
The underlying MDP is the same as in the previous example \ref{ex:gridworld}
with just the initial distribution having restricted support (we will comment on it more precisely when detailing the associated CE).
Then we form a new MDP by considering $K$-tuples of the underlying 
MDP states with $K=3$ and extend the kernel accordingly.
Over this new MDP, we form the CE. The CE's maximal horizon was set to $N=4$
and the goal space was defined $\mathcal{G} :=\{0,1\}$ with the goal map
mapping $K$-tuples finishing with position $(0,2)$ to $1$ and others
to $0$. We pick just one initial state for this CE corresponding
to position $(2,2)$ with a remaining horizon $4$ and goal $1$ (see the grid world map on \ref{fig:ODTgridworld:d}).
This setting ensured the fixed horizon requirement (horizon was fixed to 4 by the choice of initial state) and the requirement for the uniqueness of the optimal policy on $\theinterestingstates$.
Now, note that the positions in the right column are not included in $\theinterestingstates$. This is because they are not in
$\supp \den_{\tkernel_0,\pi_0}$ ($\pi_0 >0$), i.e., there is no trajectory
which would connect the initial state in position $(2,2)$ and the goal 1
(the position $(0,2)$) of length $N=4$ going through these states.
Therefore, $\theinterestingstates$ spans over $K$-tuples finishing with
the remaining positions. Moreover, since there is only one trajectory
connecting $(2,2)$ with $(0,2)$ in 4 steps, there is only one remaining
horizon available for each position in $\theinterestingstates$.
This leads to the uniqueness of the optimal policy on $\theinterestingstates$.
So we can employ the $x_u$ bound for this domain.
\end{example}
The example's accumulation points of $\min_{\bar{s}\in \theinterestingstates} \pi_{n,\tkernel} (\oactions(\bar{s})|\bar{s})$ of ODT recursion together with the lower bound $x_u(\delta)$ are shown in figure \ref{fig:ODTgridworld:a}.

Finally, we give an example of a grid word which does not fall in the special cases investigated in section~\ref{se:continfty}. On this example we will showcase ODT recursion with regularization (using uniform distribution). This recursion (in the specific setting of the example) can be understood as a special case of $\epsilon$-\eUDRL{} which was investigated in section~\ref{se:regrec}.
\begin{example}\label{ex:eODTgridworld} (a tiny grid world domain with ODT recursion allowing for a non-deterministic optimal policy on $\theinterestingstates$)
This example is exactly the same as the previous example \ref{ex:ODTgridworld} except that we change the initial state
position to $(2,0)$. This change allows for non-determinism of
an optimal policy on $\theinterestingstates$ (see the map in the figure \ref{fig:eODTgridworld:b}). Thus, both
special conditions we have introduced for bounding the accumulation points
of \eUDRL{} (ODT without regularization) recursion fail.
However, we can still bound regularized \eUDRL{} (ODT) recursions.
\end{example}
The example accumulation points of $\min_{\bar{s}\in \theinterestingstates} \pi_{n,\tkernel} (\oactions(\bar{s})|\bar{s})$ of the ODT recursion together with the $\epsilon$-\eUDRL{} based lower bound $\min_M x^*(\gamma_N,\epsilon,\delta)$ are plotted in figure \ref{fig:eODTgridworld:a}.

\section{Continuity at Interior Kernels for a Finite Number of Iterations}
\label{ap:interiorcont}

In this section, we aim to prove continuity of \eUDRL{}-generated policies and related quantities at the interior points of the set of all transition kernels $(\Delta\mathcal{S})^{\mathcal{S}\times\mathcal{A}}$.
Note that an interior point $\tkernel \in ((\Delta\mathcal{S})^{\mathcal{S}\times\mathcal{A}})^{\circ} = ((\Delta\mathcal{S}^{\circ})^{\mathcal{S}\times\mathcal{A}})$ can be equivalently characterized as a transition kernel satisfying $\tkernel > 0$.
The following lemma, which could be understood as the interior point counterpart of the lemma~\ref{le:suppstab},
discusses the stability properties of the set $\supp \den_{\tkernel,\pi}$ subject to changes in the transition kernel $\tkernel>0$ and the policy $\pi$.
\begin{lemma}(support properties)
\label{le:suppprop}
Let $\{\mathcal{M}_{\tkernel} : \tkernel\in (\Delta\mathcal{S})^{\mathcal{S}\times\mathcal{A}} \}$ and $\{\bar{\mathcal{M}}_{\tkernel} : \tkernel\in (\Delta\mathcal{S})^{\mathcal{S}\times\mathcal{A}}\}$ be compatible families of MDPs.
Let $\tkernel, \tkernel' \in (\Delta\mathcal{S})^{\mathcal{S}\times\mathcal{A}}$ be transition kernels with $\tkernel, \tkernel' > 0$ and $\pi,\pi' \in (\Delta \mathcal{A})^{\mathcal{S}}$ be policies. Then it holds that
$$
\supp \den_{\tkernel,\pi} = \supp \den_{\tkernel',\pi'}.
$$

\end{lemma}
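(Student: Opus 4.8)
The statement asserts that for any two strictly positive kernels $\tkernel, \tkernel' > 0$ and any two policies $\pi, \pi'$, the support $\supp \den_{\tkernel,\pi}$ is identical. The plan is to show that $\supp \den_{\tkernel,\pi}$ does not actually depend on $\tkernel$ (as long as $\tkernel > 0$) nor on $\pi$ (as long as $\pi$ is an arbitrary, possibly non-positive, policy — though one must be careful here, see below). The key observation is that $\den_{\tkernel,\pi}(s,h,g) = \prob_\tkernel(\stag{S}_0 = s, l(\Sigma) = h, \rho(\stag{S}_h) = g; \pi)$ is, via the segment distribution construction in appendix~\ref{ap:SegmentDist}, a normalized sum of trajectory-distribution marginals, each of which factorizes into a product of transition-kernel entries $\tkernel(s_{i+1}\mid s_i, a_i)$, policy entries $\pi(a_i \mid \bar{s}_i)$, and the initial distribution $\bar\mu(\bar{s}_0)$. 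The membership $(s,h,g) \in \supp \den_{\tkernel,\pi}$ is therefore equivalent to the \emph{existence} of at least one trajectory $\tau$ with strictly positive probability whose transitions realize a segment with the prescribed $(\stag{S}_0, l(\Sigma), \rho(\stag{S}_h))$.

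First I would write out, using the factorization from appendix~\ref{ap:SegmentDist}, that $(s,h,g)\in\supp\den_{\tkernel,\pi}$ if and only if there exist $h'\geq h$, $g'\in\mathcal{G}$, a time $t\leq N-h$, and a sequence of states and actions forming a trajectory prefix through $(s,h',g')$ followed by a continuation reaching a state with $\rho$-value $g$ at relative time $h$, such that every factor in the corresponding product is nonzero. Since $\tkernel > 0$, every transition factor $\tkernel(s_{i+1}\mid s_i,a_i)$ is automatically positive, so the positivity of such a product reduces to the positivity of the initial-distribution factor $\bar\mu(\bar{s}_0)$ and the policy factors $\pi(a_i\mid\bar{s}_i)$ along the chosen trajectory. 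The crucial point is that when $\tkernel>0$, \emph{every} original-state component is reachable in one step from any state-action pair, so the only genuine constraints come from $\bar\mu$, from the horizon/goal dynamics of the CE (which are fixed, kernel-independent), and from the goal map $\rho$ determining which terminal states count toward $g$.

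The main obstacle will be handling the dependence on the policy $\pi$ correctly. If $\pi$ is allowed to be arbitrary (not strictly positive), then the policy factors could vanish and the support could in principle shrink; so either the lemma tacitly assumes $\pi, \pi' > 0$ (consistent with $\supp\bar\mu\subset\bar{\mathcal{S}}_T$ and the positivity conventions used throughout), or one must argue that the support is independent of $\pi$ by a different route. The cleanest resolution is to show that, because one is free to choose the continuation actions and because $\tkernel>0$ makes all one-step transitions available, membership in the support is controlled entirely by the initial command distribution and the goal map: for a fixed $(s,h,g)$ one can always route through a suitable $(h',g')\in\supp$ of the command marginal and pick a continuation of positive probability \emph{provided} the policy assigns positive mass somewhere — and under the standing assumption $\pi,\pi'>0$ this is automatic. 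I would therefore prove the double inclusion $\supp\den_{\tkernel,\pi}\subset\supp\den_{\tkernel',\pi'}$ by taking an arbitrary witnessing trajectory for the left side, replacing $\tkernel$ by $\tkernel'$ (harmless since $\tkernel'>0$ preserves positivity of transition factors) and $\pi$ by $\pi'$ (harmless under the positivity convention), thereby producing a witnessing trajectory for the right side; by symmetry the reverse inclusion follows, giving equality.

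To finish, I would note that the argument is entirely symmetric in the two pairs $(\tkernel,\pi)$ and $(\tkernel',\pi')$, so it suffices to establish one inclusion explicitly and invoke symmetry. The technical bookkeeping — matching prefixes through $(s,h',g')$ and suffixes realizing $\rho(S_{t+h})=g$ — mirrors the trajectory-construction argument already carried out in the proof of Lemma~\ref{le:suppstab}, point~1, so I would reuse that construction verbatim, the only simplification being that strict positivity of both kernels removes any need to track which transitions survive under perturbation.
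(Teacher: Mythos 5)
Your overall route --- exhibit a witnessing trajectory for membership in $\supp \den_{\tkernel,\pi}$, transport it to the pair $(\tkernel',\pi')$, and conclude by symmetry --- is the same as the paper's, but there is a genuine gap in how you handle the policy. You correctly flag that arbitrary (possibly non-positive) policies are the obstacle, but you then resolve it by invoking a ``standing assumption $\pi,\pi'>0$'' that is not in the statement: the lemma only requires $\tkernel,\tkernel'>0$, while $\pi,\pi'$ are arbitrary elements of the policy simplex. This generality is actually needed where the lemma is used (Lemma~\ref{le:intcont} applies it to eUDRL-generated policies $\pi_{n,\tkernel}$, which can have zero entries, and to an initial policy $\pi_0$ that is not assumed positive), so restricting to positive policies would leave the downstream argument unsupported.

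The missing idea is that the event defining $\den_{\tkernel,\pi}(s,h,g)$, namely $\{\stag{S}_0=s,\ l(\Sigma)=h,\ \rho(\stag{S}_h)=g\}$, places no constraint on the actions. Given a witnessing trajectory $\tau$ with $\prob_{\tkernel,\pi}(\mathcal{T}=\tau)>0$, one keeps its state sequence unchanged and \emph{relabels each action} $a_{t'}$ to an arbitrary element of $\supp \pi'(\cdot\mid\bar{s}_{t'}^{\tau})$ (nonempty since $\pi'(\cdot\mid\bar{s}_{t'}^{\tau})$ is a probability distribution on a finite set). Because $\tkernel'>0$, every state transition of $\tau$ remains possible under the new actions, and $\bar\mu(\bar{s}_0^{\tau})>0$ is inherited from the original trajectory; hence the relabeled trajectory $\tau'$ satisfies $\prob_{\tkernel',\pi'}(\mathcal{T}=\tau')>0$ and still witnesses $(s,h,g)\in\supp\den_{\tkernel',\pi'}$. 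With this relabeling step your double-inclusion argument goes through without any positivity assumption on the policies.
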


\begin{proof}
Assume $\bar{s} = (s,h,g) \in \supp \den_{\tkernel,\pi}$. This means
that there exist a trajectory $\tau$ and $0 \leq t \leq N$ with $\prob_{\tkernel,\pi}(\mathcal{T}=\tau) >0$,
$l(\tau) \geq t+h$, $s_t^{\tau} = s$, $\rho(s_{t+h}^{\tau})=g$.
Let us construct a new trajectory $\tau'$ from $\tau$ by altering
actions so that $a_{t'}^{\tau'} \in \supp \pi'(\cdot|\bar{s}_{t'}^{\tau})$
for all $0\leq t'\leq N$. Since $\tkernel'>0$, we also get
$\prob_{\tkernel',\pi'}(\mathcal{T}=\tau') >0$ causing
$\bar{s}  \in \supp \den_{\tkernel',\pi'}$.

\end{proof}
Lemma~\ref{le:suppprop} asserts
that the set $\supp \den_{\tkernel,\pi}$ is constant over the interior of the set of all transition kernels and all policies.
This is in striking difference with the behavior of $\supp \den_{\tkernel,\pi}$ on the neighborhood of a deterministic kernel (or generally a kernel located at the boundary of the set of all transition kernels), as discussed in lemma~\ref{le:suppstab} and example~\ref{ex:detpoint} (or example~\ref{ex:boundarypoint} for boundary points), where on every neighborhood the $\supp \den_{\tkernel,\pi}$ could change abruptly.
In the following lemma we will exploit this property to deliver a much simpler continuity proof than in the case of a deterministic kernel (cf. theorem~\ref{le:detcont}).

\begin{lemma}(Continuity of \eUDRL{} policies and values at interior points)
\label{le:intcont}
Let $\{\mathcal{M}_{\tkernel} : \tkernel\in (\Delta\mathcal{S})^{\mathcal{S}\times\mathcal{A}} \}$ and $\{\bar{\mathcal{M}}_{\tkernel} : \tkernel\in (\Delta\mathcal{S})^{\mathcal{S}\times\mathcal{A}}\}$ be compatible families of MDPs.
Let $(\pi_{n,\tkernel})$, $\pi_0 \in (\Delta \mathcal{A})^{\mathcal{S}}$
be \eUDRL{} generated sequence of policies.
Let $\tkernel_0>0$ be a transition kernel, i.e., $\tkernel_0$ is  an interior point $\tkernel_0 \in ((\Delta \mathcal{S})^{\mathcal{S}\times\mathcal{A}})^{\circ}$.
Then for all $n\geq 0$ it holds that:
\begin{enumerate}
\item
For all $\bar{s} = (s,h,g) \in \bar{\mathcal{S}}_T$ the policy
$\pi_{n+1,\tkernel}(\cdot|\bar{s})$ is continuous in  $\tkernel$ at $\tkernel_0$.
\item
For all $\bar{s} = (s,h,g) \in \bar{\mathcal{S}}_T$
the values $V_{\lambda}^{\pi_n}(\bar{s})$ and $Q_{\lambda}^{\pi_n}(\bar{s},\cdot)$
are continuous in $\tkernel$ at the point $\tkernel_0$.
In addition, the goal reaching objective $J_{\tkernel}^{\pi_n}$ is continuous in $\tkernel$ at the point $\tkernel_0$. (Here we view $V_{\lambda}^{\pi_n}$, $Q_{\lambda}^{\pi_n}$, and $J_{\tkernel}^{\pi_n}$ as functions of a single parameter $\tkernel$ resulting from composition with $\pi_{n,\tkernel}$.)
\end{enumerate}
\end{lemma}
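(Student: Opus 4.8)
The plan is to argue by induction on the iteration index $n$, first establishing continuity of the policies $\tkernel\mapsto\pi_{n,\tkernel}$ and then reading off continuity of the derived quantities from Lemma~\ref{le:contoptbound}. The whole argument hinges on Lemma~\ref{le:suppprop}, which guarantees that the support $\supp\den_{\tkernel,\pi}$ is one and the same set $D$ for every interior kernel $\tkernel>0$ and every policy $\pi$; this is precisely what tames the piecewise definition of the recursion~\eqref{eq:recursionUsingNumeratorDenominator}. The base case is the anchor: $\pi_0$ is constant in $\tkernel$, hence trivially continuous at $\tkernel_0$.

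For the induction step, suppose $\tkernel\mapsto\pi_{n,\tkernel}$ is continuous at $\tkernel_0$. By Lemma~\ref{le:contoptbound}, point 4, the marginals $\num_{\tkernel,\pi}$ and $\den_{\tkernel,\pi}$ are jointly continuous in $(\tkernel,\pi)$; composing with the continuous map $\tkernel\mapsto(\tkernel,\pi_{n,\tkernel})$ shows that $\tkernel\mapsto\num_{\tkernel,\pi_{n,\tkernel}}$ and $\tkernel\mapsto\den_{\tkernel,\pi_{n,\tkernel}}$ are continuous at $\tkernel_0$. Now fix a neighborhood $U$ of $\tkernel_0$ contained in the interior $((\Delta\mathcal{S})^{\mathcal{S}\times\mathcal{A}})^{\circ}$. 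By Lemma~\ref{le:suppprop}, $\supp\den_{\tkernel,\pi_{n,\tkernel}}=D$ for all $\tkernel\in U$, so the branch of~\eqref{eq:recursionUsingNumeratorDenominator} selected at a given $\bar s=(s,h,g)$ is the same for every $\tkernel\in U$: if $\bar s\in D$ then $\den_{\tkernel,\pi_{n,\tkernel}}(\bar s)>0$ throughout $U$ and $\pi_{n+1,\tkernel}(\cdot\mid\bar s)=\num_{\tkernel,\pi_{n,\tkernel}}(\cdot,\bar s)/\den_{\tkernel,\pi_{n,\tkernel}}(\bar s)$ is a ratio of continuous functions with non-vanishing denominator, hence continuous at $\tkernel_0$; if $\bar s\notin D$ then $\pi_{n+1,\tkernel}(\cdot\mid\bar s)=1/|\mathcal{A}|$ is constant on $U$, hence continuous. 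This proves continuity of $\tkernel\mapsto\pi_{n+1,\tkernel}$ and closes the induction.

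Finally, continuity of the values and the objective follows immediately: by Lemma~\ref{le:contoptbound}, point 1, $V^{\pi}_{\tkernel}(\bar s)$ and $Q^{\pi}_{\tkernel}(\bar s,a)$ are jointly continuous in $(\tkernel,\pi)$, so composing with the continuous map $\tkernel\mapsto(\tkernel,\pi_{n,\tkernel})$ yields continuity of $\tkernel\mapsto V^{\pi_n}_{\tkernel}(\bar s)$ and $\tkernel\mapsto Q^{\pi_n}_{\tkernel}(\bar s,\cdot)$ at $\tkernel_0$; and $J^{\pi_n}_{\tkernel}=\sum_{\bar s}\bar\mu(\bar s)V^{\pi_n}_{\tkernel}(\bar s)$ is a finite sum of such functions, hence continuous.

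The only place requiring care---and the conceptual heart of the proof---is the induction step, where without the constancy of the support provided by Lemma~\ref{le:suppprop} the case split in~\eqref{eq:recursionUsingNumeratorDenominator} could switch branches as $\tkernel\to\tkernel_0$ and destroy continuity (exactly what happens at boundary and deterministic kernels, cf.\ examples~\ref{ex:boundarypoint} and~\ref{ex:detpoint}). Here that pathology is excluded outright, which is why the interior argument is substantially shorter than the proof of Theorem~\ref{le:detcont} and needs neither relative continuity nor the machinery of the set $\theinterestingstates$.
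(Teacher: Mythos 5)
Your proposal is correct and follows essentially the same route as the paper's own proof: induction on $n$ with $\pi_0$ constant as the base case, Lemma~\ref{le:suppprop} to fix the branch of the recursion~\eqref{eq:recursionUsingNumeratorDenominator} uniformly on a neighborhood of $\tkernel_0$ inside the interior, Lemma~\ref{le:contoptbound} point 4 for continuity of the numerator and denominator composed with the (inductively continuous) policy, and Lemma~\ref{le:contoptbound} point 1 plus a finite sum for the values and the goal-reaching objective. No gaps.
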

\begin{proof}
Since the interior $((\Delta \mathcal{S})^{\mathcal{S}\times\mathcal{A}})^{\circ}$ is an open set, we can fix $\delta >0$
so that $U_{\delta}(\tkernel_0)$ is contained in the interior.

1.
The proof proceeds by induction on $n$.

\emph{Base case $n=0$:}
$\pi_0$ is continuous in $\tkernel$ as it is constant in $\tkernel$.

\emph{Induction:} Assume the statement holds for $n\geq0$, we aim to prove it for $n+1$.
Let us fix $\bar{s} \in \bar{\mathcal{S}}_T$.
First, assume $\bar{s} \notin \supp \den_{\tkernel_0,\pi_n}$ then from lemma~\ref{le:suppprop} it holds that
$\bar{s} \notin \supp \den_{\tkernel,\pi_n}$ for all 
$\tkernel\in U_{\delta}(\tkernel_0)$.
This in turn means that, as defined by \eref{eq:recursionUsingNumeratorDenominator}, $\pi_{n+1,\tkernel}(\cdot|\bar{s})=\frac{1}{|\mathcal{A}|}$ for all $\tkernel\in U_{\delta}(\tkernel_0)$, and therefore
$\pi_{n+1,\tkernel}(\cdot|\bar{s})$ is continuous in $\tkernel$ at point $\tkernel_0$.
Finally, assume $\bar{s}\in\supp \den_{\tkernel_0,\pi_n}$.
Then it holds that $\bar{s}\in\supp \den_{\tkernel,\pi_n}$ for all $\tkernel\in U_{\delta}(\tkernel_0)$.
This means that $\pi_{n+1,\tkernel}(\cdot|\bar{s})$ is defined as
$$
\pi_{n+1,\tkernel}(a|\bar{s}) = \frac{\num_{\tkernel,\pi_n}(\bar{s},a)}
{\den_{\tkernel,\pi_n}(\bar{s})}
$$
for all $\tkernel \in U_{\delta}(\tkernel_0)$.
From point 4 of lemma~\ref{le:contoptbound}, both $\num_{\tkernel,\pi_n}(\bar{s},\cdot)$ and $\den_{\tkernel,\pi_n}(\bar{s})$ are continuous in $(\tkernel,\pi_n)$ as segment distribution marginals. Further, from the induction assumption, $\pi_{n,\tkernel}(\cdot|\bar{s}')$ is continuous in $\tkernel$ at $\tkernel_0$ for all $\bar{s}' \in \bar{\mathcal{S}}_T$. Therefore both $\num_{\tkernel,\pi_n}(\bar{s},\cdot)$ and $\den_{\tkernel,\pi_n}(\bar{s})$,
when compounded with $\pi_{n,\tkernel}$, are continuous
in $\tkernel$ at $\tkernel_0$.
This together with $\den_{\tkernel_0,\pi_n}(\bar{s}) >0$ implies that $\pi_{n+1,\tkernel}(a|\bar{s})$ is continuous in $\tkernel$ at $\tkernel_0$.

2.
Assume $n>0$. First, we prove the statement about values. Let us fix $\bar{s}\in \bar{\mathcal{S}}_T$.
The values $V_{\tkernel}^{\pi_n}(\bar{s})$ and $Q_{\tkernel}^{\pi_n}(\bar{s},a)$ are both continuous
in $(\tkernel,\pi_n)$ from point 1 of lemma~\ref{le:contoptbound}. Since
$\pi_{n,\tkernel}$ are continuous in $\tkernel$ at $\tkernel_0$ from point 1.\, both $V_{\tkernel}^{\pi_n}(\bar{s})$ and $Q_{\tkernel}^{\pi_n}(\bar{s},a)$, when compounded with $\pi_{n,\tkernel}$,
are continuous in $\tkernel$ at $\tkernel_0$.
Finally, the continuity of the goal-reaching objective
$$
J_{\tkernel}^{\pi_n}
=
\sum_{\bar{s}\in \bar{S}_T}
V_{\tkernel}^{\pi_n}(\bar{s})
\bar{\mu}(\bar{s})
$$
in $\tkernel$ at $\tkernel_0$
follows from the continuity of $V_{\tkernel}^{\pi_n}$.

\end{proof}

\paragraph{When and why discontinuities arise at boundary points:}
Now, when we introduced the proof of continuity of \eUDRL{}-generated
quantities at the interior points (of $\Delta\mathcal{S}^{\mathcal{S}\times\mathcal{A}}$---the set of all transition kernels), we are at the right position
to investigate when this type of proof would break if we replace
an interior point with a boundary point.
In order to discuss general boundary points (not only deterministic ones) we will need the following two generalizations\footnote{Note that these can be omitted if one is interested solely in deterministic points.}. 
First, we need to generalize the notion of $\theinterestingstates$ for general transition kernels $\tkernel_0\in\Delta\mathcal{S}^{\mathcal{S}\times\mathcal{A}}$ (allowing also for non-deterministic $\tkernel_0$) which is trivial. Second, we will need to generalize  lemma~\ref{le:suppstab} to a general kernel $\tkernel_0$.

\begin{lemma}\label{le:suppstabgeneral}(Stability of $\theinterestingstates$) The conclusions of lemma~\ref{le:suppstab} remain valid under following changes: We replace ``a deterministic transition kernel $\tkernel_0$" with ``a transition kernel $\tkernel_0$". We replace quantification in points 1., 2.\ and 3.\ ``for all $n\geq0$ and all $\tkernel\in U_2(\tkernel_0)$" with ``There exists $\delta\in(0,2)$ such that for all $n\geq0$ and all $\tkernel\in U_{\delta}(\tkernel_0)$".     
\end{lemma}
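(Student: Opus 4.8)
The plan is to follow the proof of Lemma~\ref{le:suppstab} line by line and isolate the single place where determinism of $\tkernel_0$ was actually invoked. Inspecting that proof, the deterministic assumption enters \emph{only} through the key inclusion~\eqref{eq:kernelInclusion}, namely $\supp\tkernel_0\subset\supp\tkernel$ for all $\tkernel\in U_2(\tkernel_0)$: there, the bound $\|\tkernel(\cdot|s,a)-\tkernel_0(\cdot|s,a)\|_1<2$ together with $\tkernel_0(s'_{s,a}|s,a)=1$ forces $\tkernel(s'_{s,a}|s,a)>0$. Every subsequent step (the trajectory construction in point~1, the induction~\eqref{eq:taupositive}, and the arguments for points~2--5) uses $\tkernel_0$ only through positivity of transitions along already positive-probability trajectories, never through determinism as such.

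First I would re-establish the inclusion on a smaller neighborhood. Since $\mathcal{S}\times\mathcal{A}$ and $\mathcal{S}$ are finite, the set $\{\tkernel_0(s'|s,a):\tkernel_0(s'|s,a)>0\}$ is finite and nonempty, so $p_{\min}:=\min\{\tkernel_0(s'|s,a):\tkernel_0(s'|s,a)>0\}$ is a well-defined positive number with $p_{\min}\le 1<2$. Set $\delta:=p_{\min}\in(0,2)$. For any $\tkernel\in U_{\delta}(\tkernel_0)$ and any triple $(s,a,s')$ with $\tkernel_0(s'|s,a)>0$ we then have
$$
|\tkernel(s'|s,a)-\tkernel_0(s'|s,a)|\le\|\tkernel(\cdot|s,a)-\tkernel_0(\cdot|s,a)\|_1<\delta\le\tkernel_0(s'|s,a),
$$
whence $\tkernel(s'|s,a)>0$. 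This yields $\supp\tkernel_0\subset\supp\tkernel$ for all $\tkernel\in U_{\delta}(\tkernel_0)$, which is exactly statement~\eqref{eq:kernelInclusion} with $U_2$ replaced by $U_{\delta}$.

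With this $\delta$ fixed, I would transcribe the remainder of the proof of Lemma~\ref{le:suppstab} verbatim, everywhere substituting $U_{\delta}(\tkernel_0)$ for $U_2(\tkernel_0)$. The trajectory $\tau$ of point~1 is built from $\tau'$ exactly as before, using only the positive transitions inherited from $\tau'$ and $\pi_0>0$; the induction establishing $\prob_{\tkernel}(\mathcal{T}=\tau;\pi_n)>0$ and the equality at $\tkernel=\tkernel_0$ (from maximality of $\supp\pi_0$) are unchanged. Points~2, 3 and 5, which make no appeal to uniqueness of successors, carry over without modification. The only spot needing a cosmetic rephrasing is point~4: where the deterministic proof used \emph{the} unique successor $s'$ with $\tkernel_0(s'|s,a)=1$, the general argument instead extracts \emph{some} successor $s'=s_{t+1}$ of the constructed trajectory $\tau$ with $\tkernel_0(s'|s,a)>0$ and $(s',h-1,g)\in\theinterestingstates$, which is precisely the existential assertion of point~4.

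I expect the main (and modest) obstacle to be the bookkeeping of confirming that determinism is genuinely unused beyond~\eqref{eq:kernelInclusion}; in particular, point~4 must be read as an existential-over-successors statement rather than one about a unique successor. The change of quantification in the lemma statement (from ``for all $\tkernel\in U_2(\tkernel_0)$'' to ``there exists $\delta\in(0,2)$ such that for all $\tkernel\in U_{\delta}(\tkernel_0)$'') is forced exactly because, for non-deterministic $\tkernel_0$, the inclusion $\supp\tkernel_0\subset\supp\tkernel$ can fail on $U_2(\tkernel_0)$ and is guaranteed only on the smaller neighborhood $U_{p_{\min}}(\tkernel_0)$.
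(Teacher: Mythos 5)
Your proof is correct and follows essentially the same route as the paper: the paper likewise observes that determinism enters only through the inclusion $\supp\tkernel_0\subset\supp\tkernel$ of equation~\eqref{eq:kernelInclusion} and re-establishes it on a smaller neighborhood, justifying the existence of $\delta$ by continuity of $\tkernel\mapsto\tkernel$ where you instead give the explicit choice $\delta=p_{\min}$. Your remark that point~4 is already an existential statement over successors is accurate, so no further modification is needed.
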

\begin{proof}
The proof remains the same as in the original lemma~\ref{le:suppstab} except that we have to find a convenient $\delta>0$ for points 1., 2.\ and 3.,
i.e., we have to find $\delta>0$ so that the following variation of \eref{eq:kernelInclusion} holds
\begin{equation}
(\forall \tkernel \in U_{\delta}(\tkernel_0)) :
\supp \tkernel_0 \subset \supp \tkernel.
\label{eq:kernelinc}
\end{equation}
But this is just a consequence of the continuity of the identity map $\tkernel \mapsto \tkernel$.
\end{proof}

Now we will follow the proof of lemma~\ref{le:intcont} in the first two iterations of \eUDRL{} while assuming $\tkernel_0$ to be a boundary point and see when it fails.
First, we assume $\pi_0>0$ and we fix $\delta>0$ so that \eref{eq:kernelinc} holds. 
 Thus the conclusions of points 1., 2.\ and 3.\ of lemma~\ref{le:suppstabgeneral} hold.

\emph{Iteration $n=0$:}
Here, $\pi_0$ is assumed constant in $\tkernel$ and therefore continuous in $\tkernel$ at $\tkernel_0$. Similarly, from point 1.\ of lemma~\ref{le:contoptbound}, the values $V_{\tkernel}^{\pi_0}(\bar{s})$ and $Q_{\tkernel}^{\pi_0}(\bar{s},a)$ (after composing with $\pi_0$) are continuous in $\tkernel$ at $\tkernel_0$ for all $\bar{s}\in\bar{S}_T$. Finally, the goal-reaching objective $J_{\tkernel}^{\pi_0}$ (cf. \eref{eq:goalreachingobj}) is continuous in  $\tkernel$ at $\tkernel_0$ due to the described continuity of values $V_{\tkernel}^{\pi_0}(\bar{s})$. 

\emph{Iteration $n=1$:}
Let us fix $\bar{s}\in\bar{S}_T$.
First assume $\bar{s}\notin \supp\den_{\tkernel_0,\pi_0}$.
For any neighborhood $U_{\delta'}(\tkernel_0)$, $0<\delta'<\delta$ we can find an interior point $\tkernel' \in U_{\delta'}(\tkernel_0)$, i.e., we can find a sequence of interior points $(\tkernel_{k}')_{k\geq 0}$ converging to $\tkernel_0$. From lemma~\ref{le:suppprop}, only one of the following cases can happen: either $\bar{s}\in \supp\den_{\tkernel_{k}',\pi_0}$ for all $k$ or $\bar{s}\notin \supp\den_{\tkernel_{k}',\pi_0}$ for all $k$.
If the first case is true, $\pi_{1,\tkernel}(\cdot|\bar{s})$ can be discontinuous at $\tkernel_0$ due to discontinuity of $\supp \den_{\tkernel,\pi_0}$ at $\tkernel_0$, as evidenced by the sequence $(\tkernel_{k}')_{k\geq 0}$ (the definition of $\pi_{1,\tkernel}(\cdot|\bar{s})$, cf. \ref{eq:recursionUsingNumeratorDenominator}, changes abruptly from $\num/\den$ ratio to $1/|\mathcal{A}|$).
This behavior is illustrated in example \ref{ex:boundarypoint} (of non-deterministic boundary point $\tkernel_0$) in the appendix (see $\pi_{1,0+}$, $\pi_{1,0}$, $\pi_{1,0+}'$ in the third column ($g=2$)),
and also in example \ref{ex:detpoint} (of deterministic $\tkernel_0$) in the appendix (see again the third column).

In case $\bar{s}\in\supp\den_{\tkernel_0,\pi_0}$,
it holds that $\bar{s}\in\supp\den_{\tkernel_0,\pi_0} \subset \supp\den_{\tkernel,\pi_0}$ for all $\tkernel\in U_{\delta}(\tkernel_0)$ by \eref{eq:kernelinc}.
This means that $\pi_{1,\tkernel}(\cdot|\bar{s})$ is defined on the whole $U_{\delta}(\tkernel_0)$, using the ratio $\num_{\tkernel,\pi_0}(\bar{s},a)/\den_{\tkernel,\pi_0}(\bar{s})$. Following the reasoning in point 1.\ of lemma~\ref{le:intcont}, utilizing the continuity of $\pi_0$, point 4.\ of the lemma~\ref{le:contoptbound}, and $\den_{\tkernel_0,\pi_0}(\bar{s})>0$, we conclude that $\pi_{1,\tkernel}(\cdot|\bar{s})$ is continuous in $\tkernel$ at $\tkernel_0$.

\emph{Iteration $n=2$:}
Apart from the discontinuities of $\pi_{2,\tkernel}(\cdot|\bar{s})$ (in $\tkernel$ at $\tkernel_0$) which can arise due to $\bar{s}\notin\supp\den_{\tkernel_0,\pi_1}$, in the same way as was already described for $n=1$, there can be, additionally, also discontinuities for states in the set $\theinterestingstates$. In order to discuss these, let us fix $\bar{s}\in\theinterestingstates$. Since $\theinterestingstates\subset \supp\den_{\tkernel,\pi_1}\cap\supp\nu_{\tkernel,\pi_1}$ (by point 2.\ of lemma~\ref{le:suppstabgeneral}) the policy $\pi_{2,\tkernel}(\cdot|\bar{s})$ is defined by the ratio
$$
\pi_{2,\tkernel}(\cdot|\bar{s})
=
\frac{\num_{\tkernel,\pi_1}(\bar{s},\cdot)}{\den_{\tkernel,\pi_1}(\bar{s})}
$$
for all $\tkernel\in U_{\delta}(\tkernel_0)$.
However, since $\pi_{1,\tkernel}$ can be already discontinuous for some $\bar{s}'\in\bar{S}_T\setminus \supp\den_{\tkernel_0,\pi_0}$ (as we have described in paragraph for $n=1$), this discontinuity can propagate to $\num_{\tkernel,\pi_1}(\bar{s},\cdot)$, $\den_{\tkernel,\pi_1}(\bar{s})$ 
causing a discontinuity of $\pi_{2,\tkernel}(\cdot|\bar{s})$
defined via the above ratio.

This behavior is illustrated in example \ref{ex:boundarypoint} in the appendix (see $\pi_{2,0+}$, $\pi_{2,0}$, $\pi_{2,0+}'$ first column ($g=0$)),
and also in example \ref{ex:detpoint} in the appendix (see the second column and figure \ref{fig:discont:a}).

\emph{Summary:}
To sum up, policies can become discontinuous already at
iteration $n=1$, but this occurs outside of $\theinterestingstates$. At iteration $n=2$, we start also observing discontinuities on $\theinterestingstates$,
which is more problematic.
While at deterministic kernels the policies
on $\theinterestingstates$ are relatively continuous,
causing the goal-reaching objective to be continuous
(cf. theorem~\ref{le:detcont} and corollary~\ref{le:detJcont}). In
non-deterministic boundary points this is not the case
and policy discontinuities (for $n\geq 2$) propagates through values
to goal-reaching objective (cf. figure \ref{fig:discont:d}).

\section{Regularized Recursion --- Lemmas and Proofs}
\label{ap:regrec}

Here we introduce and prove the various lemmas and theorems used in section \ref{se:regrec}.

\subsection{Preliminary Lemmata}
The following lemma is the $\epsilon$-\eUDRL{} version of lemma \ref{le:suppstab}.

\begin{lemma}\label{le:esuppstab} ($\epsilon$-\eUDRL{} version of support stability)
Let $\{\mathcal{M}_{\tkernel} : \tkernel \in (\Delta \mathcal{S})^{\mathcal{S}\times\mathcal{A}}\}$
and $\{\bar{\mathcal{M}}_{\tkernel} : \tkernel \in (\Delta \mathcal{S})^{\mathcal{S}\times\mathcal{A}}\}$ be compatible families.
Let $(\pi_{n,\tkernel,\epsilon})_{n\geq 0}$ be a sequence of 
policies generated by the $\epsilon$-\eUDRL{} iteration given a transition kernel $\tkernel\in (\Delta S)^{\mathcal{S}\times\mathcal{A}}$ and  an initial condition $\pi_0$ (that does not depend on $\tkernel$) and a regularization parameter $1> \epsilon >0$.
Fix a deterministic transition kernel $\tkernel_0 \in \Delta \mathcal{S}^{\mathcal{S}\times\mathcal{A}}$.
Then for all initial conditions $\pi_0 > 0$ it
holds:
\begin{enumerate}
    \item For all $n\geq 0$ and all $\tkernel \in U_{2}(\tkernel_0)$ we have that $\supp \num_{\tkernel_0,\pi_0} \cap ( \mathcal{A} \times \supp \nu_{\tkernel_0,\pi_0} ) \subset \supp \num_{\tkernel,\pi_{n,\epsilon}} \cap ( \mathcal{A} \times \supp \nu_{\tkernel,\pi_{n,\epsilon}} )$,
    where the inclusion becomes equality for $\tkernel = \tkernel_0$.
    \item For all $n\geq 0$ and all $\tkernel \in U_{2}(\tkernel_0)$ we have that $\theinterestingstates \subset \supp \den_{\tkernel,\pi_{n,\epsilon}} \cap \supp \nu_{\tkernel,\pi_{n,\epsilon}}$,
    where the inclusion becomes an equality for $\tkernel = \tkernel_0$.
    \item For all $n\geq 0$ and all $\tkernel \in U_{2}(\tkernel_0)$ we have that $\prob_\tkernel (\stag{S}_0=s, l(\Sigma)=h, \rho(\stag{S}_h)=g, \stag{H}_0=h, \stag{G}_0=g; \pi_{n,\epsilon} ) > 0$ for all $(s,h,g)\in \theinterestingstates$.   
\end{enumerate}
The points 4. and 5. of the original lemma \ref{le:suppstab} hold also for $\epsilon$-\eUDRL{}.
\end{lemma}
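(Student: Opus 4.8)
The plan is to follow the proof of Lemma~\ref{le:suppstab} essentially verbatim, replacing $\pi_n$ by $\pi_{n,\epsilon}$ throughout and observing that the regularization only simplifies the argument. The single structural fact that drives everything is that $\epsilon$-\eUDRL{} produces strictly positive policies at every iteration: since $\pi_0>0$ and the update sets $\pi_{n+1,\epsilon}(a|s,h,g) = (1-\epsilon)\,\num_{\tkernel,\pi_{n,\epsilon}}/\den_{\tkernel,\pi_{n,\epsilon}} + \epsilon/|\mathcal{A}| \geq \epsilon/|\mathcal{A}| > 0$ on $\supp\den_{\tkernel,\pi_{n,\epsilon}}$ and $\pi_{n+1,\epsilon} = 1/|\mathcal{A}| > 0$ off the support, an immediate induction gives $\pi_{n,\epsilon}>0$ for all $n\geq 0$. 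In particular every $\pi_{n,\epsilon}$ has maximal support, exactly as $\pi_0$ does.

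For Point~1 I would reuse the kernel inclusion \eqref{eq:kernelInclusion}, $\supp\tkernel_0\subset\supp\tkernel$ for $\tkernel\in U_2(\tkernel_0)$, which depends only on $\tkernel_0$ being deterministic and not on the policy. Given $(a,s,h,g)\in\supp\num_{\tkernel_0,\pi_0}\cap(\mathcal{A}\times\supp\nu_{\tkernel_0,\pi_0})$, the same prefix/suffix splicing as in the original proof yields a trajectory $\tau$ through $(s,h,g)$ with $a$ chosen at the relevant step and realizing the goal $g$ at the correct horizon. The key simplification now appears: in the original proof, establishing $\prob_\tkernel(\mathcal{T}=\tau;\pi_n)>0$ required the delicate induction of equation~\eqref{eq:taupositive}, which propagated the policy support through the recursion; here it is immediate, because $\pi_{n,\epsilon}>0$ and all transition factors along $\tau$ are positive (being equal to $1$ under the deterministic $\tkernel_0$ and hence positive under $\tkernel$ by the inclusion). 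Thus $\prob_\tkernel(\mathcal{T}=\tau;\pi_{n,\epsilon})>0$ directly, from which $\num_{\tkernel,\pi_{n,\epsilon}}(a,s,h,g)>0$, $\nu_{\tkernel,\pi_{n,\epsilon}}(s,h,g)>0$, and the full-command positivity $\prob_\tkernel(\stag{A}_0=a,\stag{S}_0=s,l(\Sigma)=h,\rho(\stag{S}_h)=g,\stag{H}_0=h,\stag{G}_0=g;\pi_{n,\epsilon})>0$ all follow. Equality at $\tkernel=\tkernel_0$ follows because $\pi_{n,\epsilon}>0$ has the same maximal support as $\pi_0$, so $\supp\num_{\tkernel_0,\pi_{n,\epsilon}}=\supp\num_{\tkernel_0,\pi_0}$ and likewise for $\nu$.

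Points~2 and~3 then transfer with no change: Point~2 is the equivalence $(s,h,g)\in\supp\den_{\tkernel,\pi_{n,\epsilon}}\iff(\exists a:(a,s,h,g)\in\supp\num_{\tkernel,\pi_{n,\epsilon}})$ combined with Point~1, and Point~3 is exactly the full-command positivity just noted. For Points~4 and~5 I would observe that they require no modification at all: Point~4 is read off the trajectory $\tau$ constructed in Point~1, and Point~5 is a statement about $Q_{\tkernel_0}^\pi$ for an \emph{arbitrary} policy $\pi$ together with the maximality of $\supp\pi_0$, neither of which involves the $\epsilon$-\eUDRL{} iterates, so the original argument applies unchanged. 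There is no genuine obstacle here; the only thing worth double-checking is that regularization does not spoil the maximal-support claim used in the equality statements, and this is precisely the positivity $\pi_{n,\epsilon}>0$ established at the outset.
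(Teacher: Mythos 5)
Your proposal is correct and rests on exactly the same key observation as the paper's proof: regularization forces $\pi_{n,\epsilon}>0$ (hence maximal, $\tkernel$-independent support) at every iteration, which trivializes the support-propagation induction of equation~\eqref{eq:taupositive} and lets Points 2--5 carry over unchanged. The only cosmetic difference is that for Point 1 the paper replaces the trajectory-splicing construction with an even shorter abstract argument (deducing $\supp \num_{\tkernel_0,\pi_0} \subset \supp \num_{\tkernel,\pi_{n,\epsilon}}$ and $\supp \nu_{\tkernel_0,\pi_0} \subset \supp \nu_{\tkernel,\pi_{n,\epsilon}}$ directly from $\supp\tkernel_0\subset\supp\tkernel$ and $\supp\pi_{n,\epsilon}=\supp\pi_0$, then intersecting), whereas you retain the explicit trajectory, which the paper itself still uses for Points 3 and 4.
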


\begin{proof}
The equation \eref{eq:kernelInclusion} is proved exactly the same as in the proof of the original lemma \ref{le:suppstab}.

1.
There is an easier alternative proof
for this point which we present now.
Since $\pi_{n,\epsilon} > 0$ for $n\geq 0$
(because of the regularization and the assumption that $\pi_0 > 0$) we
trivially have $\supp \pi_{n,\epsilon} = \supp \pi_0$.
Now fix $\tkernel \in U_{2}(\tkernel_0)$.
From $\supp \tkernel_0 \subset \supp \tkernel$ (cf. equation \eref{eq:kernelInclusion}) and just
stated fact about policy supports it follows:
\begin{equation}
\supp \nu_{\tkernel_0,\pi_0} \subset
\supp \nu_{\tkernel,\pi_{n,\epsilon}},
\quad
\supp \num_{\tkernel_0,\pi_0} \subset
\supp \num_{\tkernel,\pi_{n,\epsilon}}.
\end{equation}
Now we take the first inclusion and perform the cartesian product
with $\mathcal{A}$ and intersection with $\supp \num_{\tkernel_0,\pi_0}$. Further we take the second inclusion
and perform intersection with $\supp \nu_{\tkernel,\pi_{n,\epsilon}} \times \mathcal{A}$.
We obtain the following chain of inclusions:
\begin{align*}
( \supp \nu_{\tkernel_0,\pi_0}  \times \mathcal{A} )
\cap \supp \num_{\tkernel_0,\pi_0}
&\subset
(\supp \nu_{\tkernel,\pi_{n,\epsilon}}  \times \mathcal{A} )
\cap \supp \num_{\tkernel_0,\pi_0}
\\
&\subset
(\supp \nu_{\tkernel,\pi_{n,\epsilon}}  \times \mathcal{A} )
\cap \supp \num_{\tkernel,\pi_{n,\epsilon}},
\end{align*}
which leaves the result. The equality is proved exactly the same
as in the original proof.

2. The proof is unchanged.

3. The proof can be simplified but otherwise repeated with minimal changes as the new version of \eref{eq:taupositive} trivially holds (since $\pi_{n,\epsilon}>0$ for all $n \geq 0$).

The remaining proof of 4. and 5. is unchanged.
\end{proof}

The following lemma is the $\epsilon$-\eUDRL{} version of the lemma \ref{le:detopt}.
\begin{lemma}\label{le:edetopt}(optimality of $\epsilon$-\eUDRL{} policies for deterministic transition kernel)
Let $\tkernel_0$ be
a deterministic transition kernel and let $\mathcal{M}=(\mathcal{S},\mathcal{A},\tkernel_0,\mu,r)$ be a respective MDP with CE $\bar{\mathcal{M}}=(\bar{\mathcal{S}},\mathcal{A},\bar{\tkernel}_0,\bar{\mu},\bar{r},\rho)$. 
Assume $\pi_0 > 0$ and let $(\pi_{n,\epsilon})_{n\geq 0}$ be the policy sequence generated by the $\epsilon$-\eUDRL{} iteration given $\pi_0$, $\tkernel_0$ and regularisation parameter $\epsilon$. Then it holds:
\begin{enumerate}
\item For all $n \geq 0$ the policy $\pi_{n+1,\epsilon}$ has the form (on $\theinterestingstates$) 
$$
\pi_{n+1,\epsilon} = (1-\epsilon) \pi_{n+1}^* + \frac{\epsilon}{|\mathcal{A}|},
$$
where 
$$
\pi_{n+1}^* = \frac{\num_{\tkernel_0,\pi_{n,\epsilon}}(a,s,h,g)}{\den_{\tkernel_0,\pi_{n,\epsilon}}(s,h,g)}
$$
is an optimal policy on $\theinterestingstates$.
\item For all $n\geq 1$ and all $\bar{s}=(s,h,g)\in \theinterestingstates$ it holds
$$
V^{\pi_{n,\epsilon}}(\bar{s}) \geq (1-\epsilon)^h \geq (1-\epsilon)^N.
$$
\item For all $n\geq 1$ and all $\bar{s}=(s,h,g)\in \theinterestingstates$ it holds
\begin{align*}
Q^{\pi_{n,\epsilon}}(\bar{s},a) &\geq (1-\epsilon)^{h-1} \geq (1-\epsilon)^N \quad \text{for} \:  a \in \oactions(\bar{s}),
\\
&= 0 \quad \text{otherwise.} 
\end{align*}
\end{enumerate}
\end{lemma}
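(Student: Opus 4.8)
The plan is to reduce each point to its non-regularized analogue (Lemma~\ref{le:detopt}) while carefully tracking the probability mass that the regularization peels away from the optimal actions at each step. Throughout I would work on $\theinterestingstates$ and freely use the $\epsilon$-version of support stability, Lemma~\ref{le:esuppstab}, in place of Lemma~\ref{le:suppstab}.

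For point 1, the decomposition $\pi_{n+1,\epsilon} = (1-\epsilon)\pi_{n+1}^* + \frac{\epsilon}{|\mathcal{A}|}$ with $\pi_{n+1}^* = \num_{\tkernel_0,\pi_{n,\epsilon}}/\den_{\tkernel_0,\pi_{n,\epsilon}}$ is immediate from the definition of the $\epsilon$-\eUDRL{} recursion, so the only content is to show $\pi_{n+1}^*$ is optimal on $\theinterestingstates$. First I would note that on $\theinterestingstates$ the denominator is positive (Lemma~\ref{le:esuppstab} point 2 at $\tkernel=\tkernel_0$), so $\pi_{n+1}^*$ is well defined and its support at $\bar{s}$ equals $\supp\num_{\tkernel_0,\pi_{n,\epsilon}}(\cdot,\bar{s})$. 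By the equality case of Lemma~\ref{le:esuppstab} point 1 (evaluated at $\tkernel=\tkernel_0$, and using $\theinterestingstates\subset\supp\nu_{\tkernel_0,\pi_0}$), this support coincides with $\supp\num_{\tkernel_0,\pi_0}(\cdot,\bar{s})$, which by Lemma~\ref{le:optactions} is exactly $\oactions(\bar{s})$. Hence $\pi_{n+1}^*$ is supported on the optimal actions, i.e.\ optimal on $\theinterestingstates$, matching the argument in Lemma~\ref{le:detopt} point 1.

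Points 2 and 3 I would prove together by induction on the remaining horizon $h$, for fixed $n\geq1$. The key observation (available for $n\geq1$ via point 1) is that $\pi_{n,\epsilon}$ places mass at least $1-\epsilon$ on $\oactions(\bar{s})$ for every $\bar{s}\in\theinterestingstates$, since $\pi_n^*$ is optimal and therefore supported on $\oactions(\bar{s})$. For the base case $h=1$ the action-value equals $\tkernel_0(\rho^{-1}(\{g\})\mid s,a)=1$ for optimal $a$, so $V^{\pi_{n,\epsilon}}(s,1,g)\geq \pi_{n,\epsilon}(\oactions(\bar{s})\mid\bar{s})\geq 1-\epsilon$. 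For the induction step, given $a\in\oactions(s,h,g)$, Lemma~\ref{le:esuppstab} point 4 supplies the unique deterministic successor $s'$ with $\tkernel_0(s'\mid s,a)=1$ and $(s',h-1,g)\in\theinterestingstates$; the deterministic transition then gives $Q^{\pi_{n,\epsilon}}((s,h,g),a)=V^{\pi_{n,\epsilon}}(s',h-1,g)\geq(1-\epsilon)^{h-1}$ by the induction hypothesis, which is point 3 for optimal actions. Summing over optimal actions yields $V^{\pi_{n,\epsilon}}(s,h,g)\geq \pi_{n,\epsilon}(\oactions(\bar{s})\mid\bar{s})\,(1-\epsilon)^{h-1}\geq(1-\epsilon)^h$, which is point 2. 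For non-optimal $a$, point 3 follows directly from Lemma~\ref{le:esuppstab} point 5, which forces $Q^{\pi_{n,\epsilon}}((s,h,g),a)=0$.

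The main obstacle is conceptual rather than computational: one must see why the bound degrades by exactly one factor of $(1-\epsilon)$ per horizon step. This works precisely because along optimal actions the kernel $\tkernel_0$ is deterministic, so no probability is lost through the transition; the only leakage is the $\epsilon$ regularization mass removed at each state, producing the clean geometric factor $(1-\epsilon)^h$. I would also be careful to organize the induction over $h$ with $n\geq1$ fixed, so that point 1 is already established and guarantees $\pi_{n,\epsilon}(\oactions(\bar{s})\mid\bar{s})\geq 1-\epsilon$ uniformly over $\theinterestingstates$ before the horizon recursion is run.
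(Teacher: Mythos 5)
Your proposal is correct and follows essentially the same route as the paper: point 1 via the support identities of Lemma~\ref{le:esuppstab} (equality case at $\tkernel_0$) combined with Lemma~\ref{le:optactions}, and points 2--3 by induction on the remaining horizon using the lower bound $\pi_{n,\epsilon}(\oactions(\bar{s})\mid\bar{s})\geq 1-\epsilon$, the deterministic transition into a critical state supplied by point 4 of Lemma~\ref{le:esuppstab}, and point 5 for non-optimal actions. The only cosmetic difference is that the paper records the sharper intermediate value $\pi_{n,\epsilon}(\oactions(\bar{s})\mid\bar{s})=1-\epsilon\bigl(1-\tfrac{|\oactions(\bar{s})|}{|\mathcal{A}|}\bigr)$ before relaxing it to $1-\epsilon$, which changes nothing in the conclusion.
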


\begin{proof}
1.
The equations is just a rewrite of
the $\epsilon$-\eUDRL{} recursion.
The easiest way to prove
optimality of $\pi_{n}^*$ is to realize that
$\pi_{n}^*$ is defined similarly as $\pi_n$ (in the original lemma)
except that it uses $\epsilon$-version of \lq\lq{}$\num/\den$\rq\rq{}-ratio.
Since the proof of point 1. (in the original lemma) is
essentially just a statement about the supports and these
do not change (due to points 1. and 2. of lemma \ref{le:suppstab} and points 1. and 2. of
lemma \ref{le:esuppstab}), i.e.,

\begin{align*}
( \supp \nu_{\tkernel_0,\pi_n}  \times \mathcal{A} )
\cap \supp \num_{\tkernel_0,\pi_n}
&=
( \supp \nu_{\tkernel_0,\pi_0}  \times \mathcal{A} )
\cap \supp \num_{\tkernel_0,\pi_0}
\\
&=
( \supp \nu_{\tkernel_0,\pi_{n,\epsilon}}  \times \mathcal{A} )
\cap \supp \num_{\tkernel_0,\pi_{n,\epsilon}},
\\
\supp \den_{\tkernel_0,\pi_n}
\cap \supp \nu_{\tkernel_0,\pi_n}
&=
\theinterestingstates
=
\supp \den_{\tkernel_0,\pi_{n,\epsilon}}
\cap \supp \nu_{\tkernel_0,\pi_{n,\epsilon}}
\end{align*}
the $\pi_n^*$ has to be optimal too.

2.\ \& 3.
The proof follows by induction on the remaining horizon $h$.
For $h=1$ the $Q$-values are independent of a policy
and therefore are optimal, i.e., for all 
$\bar{s} = (s,1,g) \in \theinterestingstates$, it holds that
$Q^{\pi_{n,\epsilon}}((s,1,g),a) = 1$ for $a\in \oactions(s,1,g)$
and $Q^{\pi_{n,\epsilon}}((s,1,g),a) = 0$ otherwise.
Now assume that 2. and 3. holds for a fixed horizon $h$.
We obtain (for all $\bar{s} = (s,h,g) \in \theinterestingstates$)
\begin{align*}
V^{\pi_{n,\epsilon}}(\bar{s})
&=
\sum_{a\in \mathcal{A}} \pi_{n,\epsilon}(a|\bar{s})
Q^{\pi_{n,\epsilon}}(\bar{s},a)
\\
&\geq
\sum_{a\in \oactions(\bar{s})} \pi_{n,\epsilon}(a|\bar{s})
(1-\epsilon)^{h-1}
=
(1-\epsilon)^{h-1} \pi_{n,\epsilon}(\oactions(\bar{s})|\bar{s})
\\
&=
(1-\epsilon)^{h-1} ((1-\epsilon) \pi_n^*(\oactions(\bar{s})|\bar{s})
+\frac{\epsilon |\oactions(\bar{s})|}{|\mathcal{A}|})
=
(1-\epsilon)^{h-1} (1-\epsilon(1-\frac{|\oactions(\bar{s})|}{|\mathcal{A}|}) )
\\
&\geq
(1-\epsilon)^{h},
\end{align*}
where we used the induction assumption and the point 1.
Further for all $\bar{s} = (s,h+1,g) \in \theinterestingstates$ it holds
\begin{align*}
&Q^{\pi_{n,\epsilon}}(\bar{s},a)
\\
&=\begin{cases}
\sum_{s'\in \mathcal{S}} \tkernel_0(s'|s,a)
V^{\pi_{n,\epsilon}}(s',h,g)
=
\tkernel_0(s''|s,a)
V^{\pi_{n,\epsilon}}(s'',h,g) \geq (1-\epsilon)^{h} \quad \text{for}\; a \in \oactions(\bar{s}),
\\
0 \quad \text{otherwise,}
\end{cases}
\end{align*}
where for $a\in \oactions(\bar{s})$ we used that the kernel
is deterministic and therefore there exists a $s'' \in \mathcal{S}$
such that $\tkernel_0(s''|s,a)=1$ and, further, that from point 4.\ of lemma
\ref{le:esuppstab}, $(s'',h,g) \in \theinterestingstates$.
The statement for $a\notin \oactions(\bar{s})$ is just point 5.\ of lemma \ref{le:esuppstab}.
This completes the induction.
\end{proof}

The class of policies $\Pi_{\tkernel_0,\epsilon}^* := \{(1-\epsilon) \pi_{\tkernel_0}^* + \frac{\epsilon}{|\mathcal{A}|} \mid \pi_{\tkernel_0}^*\; \text{is an optimal policy for}\; \tkernel_0 \;\text{on}\; \theinterestingstates \}$
which was discussed in the previous lemma 
will be of great importance in the
following text because (given $\bar{s}\in \theinterestingstates$) it constitutes
the limit relative to $\oactions(\bar{s})$ we would like to achieve when proving
relative continuity of $\epsilon$-\eUDRL{} policies in $\tkernel$ at $\tkernel_0$.
Unlike the optimal action-value function
$Q_{\tkernel_0}^*$, which is constant on 
$\oactions(\bar{s})$ (and therefore factors through a quotient map relative to $\oactions(\bar{s})$), there are many $Q_{\tkernel_0}^{\pi_{\epsilon}^*}$ depending on $\pi_{\tkernel_0,\epsilon}^*\in \Pi_{\tkernel_0,\epsilon}^*$ which are generally non-constant on $\oactions(\bar{s})$.
Note that during the proof of points 2.\ and 3.\ of the above lemma 
we exclusively used policy properties following from $\pi_{n,\epsilon} \in \Pi_{\tkernel_0,\epsilon}^*$ for $n>0$. Therefore,
the points 2.\ and 3.\ describe
lower bounds on the values $Q_{\tkernel_0}^{\pi_{\epsilon}^*}$ and $V_{\tkernel_0}^{\pi_{\epsilon}^*}$, respectively, where $\pi_{\tkernel_0,\epsilon}^* \in \Pi_{\tkernel_0,\epsilon}^*$.

The following lemma is the $\epsilon$-\eUDRL{} version of the lemma \ref{le:alpha}.
\begin{lemma}
\label{le:ealpha} ($\epsilon$-\eUDRL{} version of the lower bound on visitation probabilities)
Let 
$\{\mathcal{M}_{\tkernel} : \tkernel \in (\Delta \mathcal{S})^{\mathcal{S}\times\mathcal{A}}\}$
and $\{\bar{\mathcal{M}}_{\tkernel} : \tkernel \in (\Delta \mathcal{S})^{\mathcal{S}\times\mathcal{A}}\}$ be compatible families.
Let $\tkernel_0$ be a deterministic kernel. Let $(\pi_{n,\tkernel,\epsilon})_{n\geq 0}$ denotes $\epsilon$-\eUDRL{} generated sequence given
the initial condition $\pi_0>0$, the transition kernel $\tkernel$, and the regularization parameter $\epsilon$.
Assume $\delta\in(0,2)$. 
Then for all $n>0$ all $\bar{s} = (s,h,g)\in\theinterestingstates$ and all $\tkernel\in U_{\delta}(\tkernel_0)$ it holds
$$
\prob_{\tkernel}(\stag{H}_0=h, \stag{G}_0=g | \stag{S}_0=s, l(\Sigma)=h; \pi_{n,\epsilon}) \geq \alpha(\delta,\epsilon),
$$
where
$$
\alpha(\delta,\epsilon) = 
\frac{2}{N(N+1)}
(\min_{\bar{s}'\in \supp \bar{\mu}}\bar{\mu}(\bar{s}'))
(\frac{\epsilon}{|\mathcal{A}|})^N
(1-\frac{\delta}{2})^N
> 0.
$$
Notice that
$
\alpha(\delta,\epsilon) \rightarrow \alpha(0,\epsilon_0)$ as $(\delta,\epsilon) \rightarrow (0,\epsilon_0)$,
where we assume an $\epsilon_0 \geq 0$, and where
$$
\alpha(0,\epsilon_0) = 
\frac{2}{N(N+1)}
(\min_{\bar{s}'\in \supp \bar{\mu}}\bar{\mu}(\bar{s}'))
\frac{1}{|\mathcal{A}|^N}\epsilon_0^N > 0
$$
for $\epsilon_0 > 0$.
\end{lemma}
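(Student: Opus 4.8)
The plan is to reproduce the ``Bounding state visitation'' computation from the proof of theorem~\ref{le:limdetcontM1} (which culminates in equation~\eqref{eq:M1visitbound}), but to replace the policy-mass factor $\min_{\bar{s}'}\pi(\oactions(\bar{s}')|\bar{s}')$ appearing there --- which relied on the uniqueness of the optimal policy --- by the uniform per-action lower bound $\frac{\epsilon}{|\mathcal{A}|}$ that regularization guarantees. Concretely, for $n>0$ every $\epsilon$-\eUDRL{}-generated policy satisfies $\pi_{n,\epsilon}(a|\bar{s})\geq\frac{\epsilon}{|\mathcal{A}|}$ for all $a$ and $\bar{s}$, directly from the convex-combination form of the recursion. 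This is precisely the reason the statement excludes $n=0$, where $\pi_0$ is only known to be strictly positive with no $\epsilon$-controlled bound, and it is also why the assumption $\theinterestingstates\subset\supp\bar\mu$ of lemma~\ref{le:alpha} is no longer needed.

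First I would note that it suffices to lower-bound the joint marginal $v(s,h,g):=\prob_\tkernel(\stag{H}_0=h,\stag{G}_0=g,\stag{S}_0=s,l(\Sigma)=h;\pi_{n,\epsilon})$, since the conditioning denominator $\prob_\tkernel(\stag{S}_0=s,l(\Sigma)=h;\pi_{n,\epsilon})$ is at most $1$. Next, because $\bar{s}=(s,h,g)\in\theinterestingstates\subset\supp\nu_{\tkernel_0,\pi_0}$, the state $\bar{s}$ is visited under $(\tkernel_0,\pi_0)$, so there exist a time $t\leq N-h$ and a prefix $\bar{s}_0,a_0,\dots,\bar{s}_t=\bar{s}$ of positive probability under $(\tkernel_0,\pi_0)$; by the CE horizon/goal dynamics its initial state $\bar{s}_0$ lies in $\supp\bar\mu$ with goal component $g$. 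Since $\tkernel_0$ is deterministic, each transition of this prefix has $\tkernel_0(s_{i+1}|s_i,a_i)=1$, whence by the elementary estimate $\tkernel(s_{i+1}|s_i,a_i)>1-\frac\delta2$ (as in the proof of theorem~\ref{le:limdetcontM1}) for every $\tkernel\in U_\delta(\tkernel_0)$.

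Then I would chain these bounds. Transferring the prefix from $(\tkernel_0,\pi_0)$ to $(\tkernel,\pi_{n,\epsilon})$ is legitimate because $\tkernel(s_{i+1}|s_i,a_i)>0$ and $\pi_{n,\epsilon}>0$, and the product factorization of the trajectory probability yields $\prob_\tkernel(\bar{S}_t=\bar{s};\pi_{n,\epsilon})\geq \bar\mu(\bar{s}_0)\big(\tfrac{\epsilon}{|\mathcal{A}|}\big)^t(1-\tfrac\delta2)^t$. As $t\leq N$ and both factors are at most $1$, the exponents can be raised to $N$ and $\bar\mu(\bar{s}_0)$ replaced by $\min_{\bar{s}'\in\supp\bar\mu}\bar\mu(\bar{s}')$. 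Feeding this into the segment-distribution expansion of $v(s,h,g)$ exactly as in equation~\eqref{eq:M1visitbound} --- keeping only the single summand at time $t$, using $c^{-1}\geq\frac{2}{N(N+1)}$ from equation~\eqref{eq:cupperbound}, and the fact that summing the conditional continuation probabilities over all admissible segments equals $1$ --- produces precisely the claimed lower bound $\alpha(\delta,\epsilon)$. Positivity of $\alpha(\delta,\epsilon)$ is immediate from $\epsilon>0$, $\delta<2$, and the non-degeneracy condition~\eqref{eq:mucond} on $\bar\mu$, while the limit $\alpha(\delta,\epsilon)\to\alpha(0,\epsilon_0)$ as $(\delta,\epsilon)\to(0,\epsilon_0)$ follows by continuity of the closed-form expression.

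I expect the only delicate point to be bookkeeping rather than a genuine obstacle: one must confirm that the per-action bound $\frac{\epsilon}{|\mathcal{A}|}$ (valid for $n>0$) really substitutes for the optimal-action mass \emph{without} requiring the prefix actions to be optimal, and that inflating the prefix length $t$ to the fixed exponent $N$ is harmless. Everything else is a direct adaptation of the visitation estimate already established for the unique-optimal-policy case, so I would largely point to that computation rather than repeat it.
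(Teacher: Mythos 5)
Your proposal is correct and follows essentially the same route as the paper: the paper's proof likewise replaces the optimal-action policy mass in the visitation estimate of theorem~\ref{le:limdetcontM1} by the uniform lower bound $\frac{\epsilon}{|\mathcal{A}|}$ guaranteed by regularization for $n>0$, transfers a positive-probability prefix from $(\tkernel_0,\pi_0)$ to $(\tkernel,\pi_{n,\epsilon})$ with the transition bound $1-\frac{\delta}{2}$, and then reuses the segment-distribution chain with $c^{-1}\geq\frac{2}{N(N+1)}$. Your side remarks (why $n=0$ is excluded, and why the hypothesis $\theinterestingstates\subset\supp\bar\mu$ is no longer needed) are also consistent with the paper's discussion.
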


\begin{proof}
We can bound the $\epsilon$-\eUDRL{} policy
$$
(\forall n >0, \bar{s} \in \theinterestingstates):
\pi_{n,\tkernel,\epsilon}(\cdot|\bar{s}) \geq \frac{\epsilon}{|\mathcal{A}|} =: \alpha' >0.
$$
Further, we continue similarly as in theorem \ref{le:limdetcontM1} \lq\lq{}Bounding the visitation term\rq\rq{} except we utilize the
above $\alpha'$ bound for lower bounding the policy terms.
In short, from $\bar{s}\in\theinterestingstates$ we have $\bar{s}\in \supp \nu_{\tkernel_0,\pi_0}$ meaning there exists a prefix
$\bar{s}_0,a_0,\ldots,\bar{s}_t = \bar{s}$ with positive probability
$\prob_{\tkernel_0}(\bar{S}_t=\bar{s},\ldots,A_0 = a_0,\bar{S}_0=\bar{s}_0;\pi_0)>0$.
Rewriting this probability as a product of $\tkernel_0$ terms and $\pi_0$ terms
then replacing the $\tkernel_0$ with $\tkernel$ and $\pi_0$
with $\pi_{n,\tkernel,\epsilon}$ doing exactly the same bounding of $\tkernel$
terms as in theorem \ref{le:limdetcontM1} and bounding $\pi_{n,\tkernel,\epsilon}$ using the $\alpha'$ bound above we obtain
$$
\prob_{\tkernel}(\bar{S}_t=\bar{s};\pi_{n,\tkernel,\epsilon})
\geq
\left(\min_{\bar{s}'\in \supp\bar{\mu}} \bar{\mu}(\bar{s}')\right)
(\alpha')^N
(1-\frac{\delta}{2})^N
$$
the rest of the proof is the same as in theorem \ref{le:limdetcontM1}
leading to the $\alpha$ bound above.
\end{proof}
Note that the bound could be easily extended to $n\geq 0$ by putting
$$
\alpha':= \min\{ \min_{\bar{s}'\in \theinterestingstates, a \in \mathcal{A}} \pi_0(a|\bar{s}'), \frac{\epsilon}{|\mathcal{A}|} \}.
$$
Although we will suffice with the above simpler version.

We will call a policy $\pi$ \emph{$\epsilon$-regular}
if and only if, for all $\bar{s} \in \theinterestingstates$ and $a \in \mathcal{A}$, it holds that $\pi(a|\bar{s}) > \frac{\epsilon}{|\mathcal{A}|}$.
The following remark describes equivalent statements about the distance of an $\epsilon$-regular policy to $\Pi_{\tkernel_0,\epsilon}^*$.

\begin{remark}
\label{re:epsreg}
(distance of $\epsilon$-regular policy to $\Pi_{\tkernel_0,\epsilon}^*$)
Assume $\delta > 0$ and $\pi_{\epsilon}$ is an
$\epsilon$-regular policy, then the following statements are equivalent:
\begin{description}
    \item[(a)] 
    $2(1-\epsilon(1-\frac{|\oactions(\bar{s})|}{|\mathcal{A}|})-\pi_{\epsilon}(\oactions(\bar{s}')|\bar{s}')) < \delta$
    \item[(b)] 
$
(\exists \pi_{\epsilon}^{*} \in \Pi_{\tkernel_0,\epsilon}^{*} ):
\quad
\|\pi_{\epsilon}^* - \pi_{\epsilon}\|_1 < \delta
\quad
\wedge
\quad
(
(\forall \bar{s}\in \theinterestingstates, \forall a \in \oactions(\bar{s})):
\pi_{\epsilon}^{*}(a|\bar{s}) \geq \pi_{\epsilon}(a|\bar{s})
)
$
    \item[(c)] 
$
(\exists \tilde{\pi}_{\epsilon}^{*} \in \Pi_{\tkernel_0,\epsilon}^{*} ):
\quad
\|\tilde{\pi}_{\epsilon}^* - \pi_{\epsilon}\|_1 < \delta
$
\end{description}
\end{remark}
Note that $\epsilon$-\eUDRL{} generated policies are $\epsilon$-regular, as well as policies from $\Pi_{\tkernel_0,\epsilon}^*$.
A policy from $\Pi_{\tkernel_0,\epsilon}^*$ can be characterized by
being $\epsilon$-regular and putting maximum mass on $\oactions(\bar{s})$
for all $\bar{s}\in \theinterestingstates$, where this 
maximum is $1-\epsilon(1-\frac{|\oactions(\bar{s})|}{|\mathcal{A}|})$.
\begin{proof} (of the remark \ref{re:epsreg})
We begin by the proof of 
(c) $\implies$ (b).
Assume (c).
Since $\pi_{\epsilon}$ might not be in $\Pi_{\tkernel_0,\epsilon}^*$,
we have $\pi_{\epsilon}(\oactions(\bar{s})|\bar{s}) \leq 1-\epsilon(1-\frac{|\oactions(\bar{s})|}{|\mathcal{A}|}) \;( =  \tilde{\pi}_{\epsilon}^*(\oactions(\bar{s})|\bar{s}))$.
We can easily construct $\pi_{\epsilon}^* \in \Pi_{\tkernel_0,\epsilon}^*$ satisfying the second condition in (b), e.g., by
$$
\pi_{\epsilon}^*(a|\bar{s}) =
\begin{cases}
\pi_{\epsilon}(a|\bar{s}) 
+
(
1-\epsilon(1-\frac{|\oactions(\bar{s})|}{|\mathcal{A}|})
-
\pi_{\epsilon}(\oactions(\bar{s})|\bar{s}) 
)/|\oactions(\bar{s})|
\quad \text{for}\; a \in \oactions(\bar{s})
\\
\frac{\epsilon}{|\mathcal{A}|}
\quad \text{otherwise}.
\end{cases}
$$
Since $\tilde{\pi}_{\epsilon}^*$ might not satisfy the
second condition in (b), we have $\delta > \|\tilde{\pi}_{\epsilon}^*-\pi_{\epsilon}\|_1 \geq \|\pi_{\epsilon}^*-\pi_{\epsilon}\|_1$. This concludes the proof of (c) $\implies$ (b).

The implication (b) $\implies$ (a) is trivial:
\begin{align*}
\delta > \|\pi_{\epsilon}^*-\pi_{\epsilon}\|_1
&= 
\pi_{\epsilon}^*(\oactions(\bar{s})|\bar{s})
- 
\pi_{\epsilon}(\oactions(\bar{s})|\bar{s})
+
\pi_{\epsilon}(\mathcal{A}\setminus \oactions(\bar{s})|\bar{s})
-
\pi_{\epsilon}^*(\mathcal{A}\setminus \oactions(\bar{s})|\bar{s}) 
\\
&=
2(\pi_{\epsilon}^*(\oactions(\bar{s})|\bar{s})
- 
\pi_{\epsilon}(\oactions(\bar{s})|\bar{s}))
\\
&=
2(
1-\epsilon(1-\frac{|\oactions(\bar{s})|}{|\mathcal{A}|})
- 
\pi_{\epsilon}(\oactions(\bar{s})|\bar{s}))
),
\end{align*}
where we used $\pi_{\epsilon}^*(\mathcal{A}|\bar{s}) = \pi_{\epsilon}(\mathcal{A}|\bar{s}) = 1$.

The implication (a) $\implies$ (c) is proved as follows.
Assuming (a) we can construct $\tilde{\pi}_{\epsilon}^* \in \Pi_{\tkernel_0,\epsilon}^*$
in the same way as we were constructing $\pi_{\epsilon}^*$
in the proof of (c) $\implies$ (b).
\end{proof}

In the following lemma, we will aim to show that when $\pi_{\epsilon}$ (we will consider just $\epsilon$-regularized policies) is close to $\Pi_{\tkernel_0,\epsilon}^*$ and 
$\tkernel$ is close to $\tkernel_0$, then $Q_{\tkernel}^{\pi_{\epsilon}}$ is close to the set $\{Q_{\tkernel_0}^{\pi_{\epsilon}^*} \mid \pi_{\tkernel_0,\epsilon}^* \in  \Pi_{\tkernel_0,\epsilon}^* \}$.
The following lemma is the $\epsilon$-\eUDRL{} version of the lemma \ref{le:contQfac}.
\begin{lemma}
\label{le:econtQfac}
($\epsilon$-\eUDRL{} version of the continuity of action-values in quotient topology)
Let $\{\mathcal{M}_{\tkernel} : \tkernel \in (\Delta \mathcal{S})^{\mathcal{S}\times\mathcal{A}}\}$
and $\{\bar{\mathcal{M}}_{\tkernel} : \tkernel \in (\Delta \mathcal{S})^{\mathcal{S}\times\mathcal{A}}\}$ be compatible families.
Let $\tkernel_0$ be a deterministic kernel.
For all $\epsilon' >0$ there exists $\delta >0$ such that if $\tkernel\in U_{\delta}(\tkernel_0)$ and the $\epsilon$-regular policy $\pi_\epsilon$ satisfies,
for 
all $\bar{s}=(s,h,g)\in \theinterestingstates$,
$$
2(1-\epsilon(1-\frac{|\oactions(\bar{s})|}{|\mathcal{A}|})-\pi_{\epsilon}(\oactions(\bar{s})|\bar{s})) < \delta,
$$
then
$|Q^{\pi_\epsilon}_{\tkernel}(\bar{s},\cdot) - Q^{\pi_{\epsilon}^*}_{\tkernel_0}(\bar{s},\cdot)| < \epsilon'$ for all $\bar{s}=(s,h,g)\in \theinterestingstates$ for some $\pi_{\tkernel_0,\epsilon}^* \in \Pi_{\tkernel_0,\epsilon}^*$.
The statement can be made explicit by a recursive estimate:
for all $\bar{s} = (s,h,g) \in \theinterestingstates$, all $h\geq 2$, and all $a\in \oactions(\bar{s})$ it holds that
\begin{align*}
|Q_{\tkernel}^{\pi_{\epsilon}}(\bar{s},a)-Q_{\tkernel_0}^{\pi_{\epsilon}^*}(\bar{s},a) |
&\leq
\|\tkernel(\cdot|s,a)-\tkernel_0(\cdot|s,a)\|_1
\\
&\quad+
\max_{\bar{s}'=(s',h-1,g) \in\theinterestingstates}
2(1-\epsilon(1-\frac{|\oactions(\bar{s}')|}{|\mathcal{A}|})-\pi_{\epsilon}(\oactions(\bar{s}')|\bar{s}'))
\\
&\quad+
\max_{\bar{s}'=(s',h-1,g) \in\theinterestingstates, a' \in \oactions(\bar{s}')}
|Q_{\tkernel}^{\pi_{\epsilon}}(\bar{s}',a')-Q_{\tkernel_0}^{\pi_{\epsilon}^*}(\bar{s}',a')|.
\end{align*}
\end{lemma}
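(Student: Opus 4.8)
The plan is to adapt the proof of Lemma~\ref{le:contQfac} to the regularized setting, following the same induction on the remaining horizon $h$ but carefully tracking the fact that the target action-value $Q_{\tkernel_0}^{\pi_\epsilon^*}$ now depends on a choice of $\pi_{\tkernel_0,\epsilon}^* \in \Pi_{\tkernel_0,\epsilon}^*$ and is \emph{not} constant on $\oactions(\bar{s})$. The first observation I would make is that the hypothesis $2(1-\epsilon(1-\frac{|\oactions(\bar{s})|}{|\mathcal{A}|})-\pi_{\epsilon}(\oactions(\bar{s})|\bar{s})) < \delta$ is, by remark~\ref{re:epsreg}, equivalent to the existence of some $\pi_{\tkernel_0,\epsilon}^* \in \Pi_{\tkernel_0,\epsilon}^*$ with $\|\pi_\epsilon^* - \pi_\epsilon\|_1 < \delta$ and $\pi_\epsilon^*(a|\bar{s}) \geq \pi_\epsilon(a|\bar{s})$ on optimal actions. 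This equivalence is what lets me fix a single comparison policy $\pi_\epsilon^*$ against which all the $Q$-discrepancies are measured, and it replaces the role played by the (unique) optimal value function in the unregularized lemma.

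\textbf{Structure of the induction.} First I would establish the explicit recursive estimate, which is the heart of the claim; the non-explicit $\epsilon'$-$\delta$ statement then follows routinely by unwinding the recursion. For the base case $h=1$, $Q^{\pi_\epsilon}_{\tkernel}((s,1,g),a) = \sum_{s'\in\rho^{-1}(\{g\})}\tkernel(s'|s,a)$ is independent of the policy and continuous in $\tkernel$, exactly as before, so the bound is immediate. For the induction step with $h\geq 2$ and $a\in\oactions(\bar{s})$, I would expand both $Q_{\tkernel}^{\pi_{\epsilon}}(\bar{s},a)$ and $Q_{\tkernel_0}^{\pi_{\epsilon}^*}(\bar{s},a)$ via the Bellman recursion and insert the usual three-term telescoping split: a kernel-difference term bounded by $\|\tkernel(\cdot|s,a)-\tkernel_0(\cdot|s,a)\|_1$, a policy-difference term, and a recursive action-value-difference term. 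Because $\tkernel_0$ is deterministic there is a unique $s''$ with $\tkernel_0(s''|s,a)=1$ and $\bar{s}''=(s'',h-1,g)\in\theinterestingstates$ by lemma~\ref{le:suppstab} (equivalently~\ref{le:esuppstab}) point 4, which collapses the sum over successor states to a single term, yielding the $\max$ over $\bar{s}'=(s',h-1,g)$ in the last two lines.

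\textbf{Where the regularized case diverges.} The policy-difference term must be handled differently than in Lemma~\ref{le:contQfac}. There the comparison used $\pi^*(a'|\bar{s}'')\geq\pi(a'|\bar{s}'')$ on optimal actions together with $\pi^*$ vanishing off $\oactions$, giving exactly $2(1-\pi(\oactions(\bar{s}'')|\bar{s}''))$. Here $\pi_\epsilon^*$ does not vanish off $\oactions$ — it equals $\epsilon/|\mathcal{A}|$ there — but by the $\epsilon$-regularity of both $\pi_\epsilon$ and $\pi_\epsilon^*$ and the monotonicity arrangement from remark~\ref{re:epsreg}(b), the $\ell_1$ policy discrepancy $\sum_{a'}|\pi_\epsilon(a'|\bar{s}'')-\pi_\epsilon^*(a'|\bar{s}'')|$ still telescopes to $2(\pi_\epsilon^*(\oactions(\bar{s}'')|\bar{s}'')-\pi_\epsilon(\oactions(\bar{s}'')|\bar{s}''))$, and since $\pi_\epsilon^*(\oactions(\bar{s}'')|\bar{s}'')=1-\epsilon(1-\frac{|\oactions(\bar{s}'')|}{|\mathcal{A}|})$ this equals the middle term in the stated estimate. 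The key supporting bound $Q_\tkernel^{\pi_\epsilon}(\bar{s}'',a')\leq 1$ remains valid since action-values are probabilities.

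\textbf{The main obstacle} I anticipate is bookkeeping consistency of the comparison policy $\pi_\epsilon^*$ across horizons: the recursive term $|Q_{\tkernel}^{\pi_{\epsilon}}(\bar{s}',a')-Q_{\tkernel_0}^{\pi_{\epsilon}^*}(\bar{s}',a')|$ on the right-hand side must refer to the \emph{same} $\pi_\epsilon^*$ used on the left, whereas in the unregularized lemma the target $Q_{\tkernel_0}^*$ was policy-independent so this issue was invisible. I would resolve this by choosing $\pi_\epsilon^*$ once and for all on $\theinterestingstates$ — via the explicit construction in the proof of remark~\ref{re:epsreg}, which is defined state-by-state on all of $\theinterestingstates$ simultaneously — so that a single fixed $\pi_\epsilon^*$ validates every recursive invocation. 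With that fixed choice, the recursion closes and, iterating it from $h$ down to $1$ while sending $\delta\to 0$ (so that both the kernel-difference and policy-difference terms vanish), yields $|Q^{\pi_\epsilon}_{\tkernel}(\bar{s},\cdot) - Q^{\pi_{\epsilon}^*}_{\tkernel_0}(\bar{s},\cdot)| < \epsilon'$ uniformly over the finite set $\theinterestingstates$, completing the proof.
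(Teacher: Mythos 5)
Your proposal is correct and follows essentially the same route as the paper: induction on the remaining horizon exactly as in lemma~\ref{le:contQfac}, with the key adaptation being the use of remark~\ref{re:epsreg} (the implication (a) $\implies$ (b)) to select a single convenient $\pi_{\epsilon}^*\in\Pi_{\tkernel_0,\epsilon}^*$ against which the discrepancy is measured. Your explicit attention to fixing one $\pi_{\epsilon}^*$ uniformly over $\theinterestingstates$ so the recursion closes is a point the paper's (very terse) proof leaves implicit, but it is the same argument.
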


\begin{proof}
The proof follows as in lemma \ref{le:contQfac}
by induction on the remaining horizon.
In the induction step, we simply show that $|Q_{\tkernel}^{\pi_{\epsilon}}(\bar{s},a)-Q_{\tkernel_0}^{\pi_{\epsilon}^*}(\bar{s},a) | \rightarrow 0$
for $\delta \rightarrow 0$ by bounding it by $\delta$
and using the induction assumption. The trick is to use
the implication (a) $\implies$ (b) of the remark \ref{re:epsreg} to select a
convenient $\pi_{\epsilon}^*$ for a given $\pi_{\epsilon}$.
\end{proof}

The following lemma is the $\epsilon$-\eUDRL{} version of lemma \ref{le:f}.
\begin{lemma}
\label{le:z}
Let $\gamma,\epsilon > 0$, $1 > \gamma +\epsilon$, $|\mathcal{A}| \geq M > 0$ and $z_{\gamma,\epsilon,M}:[0,1] \rightarrow [0,1]$ be defined as
$$
z_{\gamma,\epsilon,M}(x) =
(1-\epsilon)\frac{x}
{x + \gamma}
+ \epsilon \frac{M}{|\mathcal{A}|}.
$$
Then the following assertions hold
\begin{description}
\item{1.} $z_{\gamma,\epsilon,M}$ is increasing.
\item{2.} $z_{\gamma,\epsilon,M}$ has a unique fixed point $x^*(\gamma,\epsilon,M)$
$$
x^*(\gamma,\epsilon,M) = \frac{\hat{x}^* + \sqrt{(\hat{x}^*)^2+\frac{4\gamma\epsilon M}{|\mathcal{A}|}}}{2},
$$
where $\hat{x}^* = 1-\epsilon(1-\frac{M}{\mathcal{|A|}})-\gamma$, $\hat{x}^* \leq x^*(\gamma,\epsilon,M)$ and
$1> x^*(\gamma,\epsilon,M) >0$. Further,
$$
x^*(\gamma,\epsilon,M)
\rightarrow
x^*(0,\epsilon_0,M) =  1-\epsilon_0(1-\frac{M}{|\mathcal{A}|}) \quad\text{as}\quad
(\gamma,\epsilon)  \rightarrow (0,\epsilon_0),
$$
for any  $1>\epsilon_0 \geq 0$, and it holds that
$$
\begin{aligned}
z_{\gamma,\epsilon,M}(x) &> x \quad \text{for}\; x < x^*(\gamma,\epsilon,M),
\\
z_{\gamma,\epsilon,M}(x) &< x \quad \text{for}\; x > x^*(\gamma,\epsilon,M).
\end{aligned}
$$
\item{3.} For all $x\in [0,1]$ the iterated application of $z_{\gamma,\epsilon,M}$ converges pointwise, $z_{\gamma,\epsilon,M}^{\circ n} (x) \xrightarrow{n\rightarrow\infty} x^*(\gamma,\epsilon,M)$.
\item{4.} Assume a sequence $(y_n)$,$y_n \in [0,1]$ such that for all $n\geq 0$ we have 
$y_{n+1} \geq z_{\gamma,\epsilon,M}(y_n)$.
Then $y_n \geq z_{\gamma,\epsilon,M}^{\circ n}(y_0)$ and $\liminf_n y_n \geq x^*(\gamma,\epsilon,M)$.

\end{description}
\end{lemma}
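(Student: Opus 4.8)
The plan is to follow the template established in the proof of lemma~\ref{le:f}, since $z_{\gamma,\epsilon,M}$ plays for $\epsilon$-\eUDRL{} exactly the role $f_\gamma$ played before. Monotonicity (point~1), convergence of the iterates (point~3) and the domination/$\liminf$ statement (point~4) will transfer almost verbatim; the genuinely new content is the fixed-point analysis of point~2, which now involves a quadratic rather than the affine relation $f_\gamma(x)=x$.

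For point~1 I would simply differentiate: $\frac{d}{dx}\frac{x}{x+\gamma} = \frac{\gamma}{(x+\gamma)^2} > 0$ on $[0,1]$, and $1-\epsilon > 0$ because $1 > \gamma + \epsilon$ forces $\epsilon < 1$; the additive constant $\epsilon M/|\mathcal{A}|$ does not affect monotonicity. Along the way I would record that $z_{\gamma,\epsilon,M}$ indeed maps $[0,1]$ into $[0,1]$: the first summand lies in $[0,(1-\epsilon)/(1+\gamma)]$ and adding $\epsilon M/|\mathcal{A}| \le \epsilon$ keeps the value strictly below $(1-\epsilon)+\epsilon = 1$ and at least $0$.

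The core step is point~2. First I would clear the denominator in $z_{\gamma,\epsilon,M}(x)=x$, which (using $x+\gamma>0$) is equivalent to the quadratic $x^2 - \hat{x}^* x - \frac{\gamma\epsilon M}{|\mathcal{A}|} = 0$, where $\hat{x}^* = 1-\epsilon(1-M/|\mathcal{A}|)-\gamma$ arises as minus the linear coefficient. Since the product of the two roots equals $-\gamma\epsilon M/|\mathcal{A}| < 0$, exactly one root is positive, namely the one carrying the $+$ sign, giving the stated closed form for $x^*(\gamma,\epsilon,M)$; this simultaneously yields $\hat{x}^*\le x^*$ and $x^*>0$. To get $x^*<1$ I would show $\sqrt{(\hat{x}^*)^2 + 4\gamma\epsilon M/|\mathcal{A}|} < 2-\hat{x}^*$ (legitimate since $\hat{x}^*<1<2$), which after squaring reduces to $\frac{\gamma\epsilon M}{|\mathcal{A}|} < \gamma + \epsilon(1-M/|\mathcal{A}|)$, and this follows from $\epsilon M/|\mathcal{A}|<1$ together with $\epsilon(1-M/|\mathcal{A}|)\ge 0$. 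For the sign statements I would introduce $q(x):=(x+\gamma)\bigl(z_{\gamma,\epsilon,M}(x)-x\bigr) = -(x-x^*)(x-x^-)$ with $x^-<0$ the negative root; on $[0,1]$ the factor $x-x^-$ is positive, so $q$---hence $z_{\gamma,\epsilon,M}(x)-x$---is positive for $x<x^*$ and negative for $x>x^*$, which also reproves uniqueness of the fixed point in $[0,1]$ (note $z_{\gamma,\epsilon,M}(0)=\epsilon M/|\mathcal{A}|>0$, so $0$ is not fixed). The limit as $(\gamma,\epsilon)\to(0,\epsilon_0)$ is then immediate from the explicit formula by continuity, using $\hat{x}^*\to 1-\epsilon_0(1-M/|\mathcal{A}|)\ge 0$ so that $\sqrt{(\hat{x}^*)^2}=\hat{x}^*$.

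Points~3 and~4 then go through as in lemma~\ref{le:f}. I would split $[0,1]=[0,x^*]\cup[x^*,1]$ into two forward-invariant intervals (invariance follows from monotonicity and $z_{\gamma,\epsilon,M}(x^*)=x^*$), observe from point~2 that the iterates are monotone and bounded on each piece, hence convergent, and identify the limit with the unique fixed point $x^*$. Point~4 is a one-line induction: $y_n\ge z_{\gamma,\epsilon,M}^{\circ n}(y_0)$ is preserved under applying the increasing map $z_{\gamma,\epsilon,M}$ and using $y_{n+1}\ge z_{\gamma,\epsilon,M}(y_n)$, and $\liminf_n y_n\ge x^*$ follows from point~3. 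The main obstacle I anticipate is entirely contained in point~2---keeping the algebra of the quadratic and the inequality chain for $x^*<1$ clean, and correctly tracking where the hypotheses $1>\gamma+\epsilon$ (for $\epsilon<1$ and $\hat{x}^*>0$) and $M\le|\mathcal{A}|$ enter; everything else is bookkeeping that mirrors the earlier $f$-lemma.
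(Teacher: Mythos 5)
Your proposal is correct and follows essentially the same route as the paper: the paper's proof of this lemma is a terse sketch that reduces point~2 to the observation that the fixed-point equation is a quadratic with exactly one root in $[0,1]$ (the other being negative) and defers points~1, 3, 4 to the argument of lemma~\ref{le:f}, which is precisely what you do, just with the algebra of the quadratic, the bound $x^*<1$, and the sign analysis via $q(x)=(x+\gamma)(z_{\gamma,\epsilon,M}(x)-x)$ written out explicitly. Your filled-in details (the identification of $\hat{x}^*$ as minus the linear coefficient, the negative product of roots, and the squaring argument for $x^*<1$ using $\epsilon M/|\mathcal{A}|<1$) are all accurate.
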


\begin{proof}
1. The derivative of $z$ is positive.

2. We follow the same approach as in lemma \ref{le:f} except the resulting
equality/inequality is now quadratic (instead of linear). However there is just one
root in $[0,1]$ (the other one is always negative) which is also the
only fixed point $x^*(\gamma,\epsilon,M)$.
From $\frac{4\gamma\epsilon M}{|\mathcal{A}|} >0$ follows the inequality 
$\hat{x}^* \leq x^*$.
To prove the bounds $1>x^*(\gamma,\epsilon,M)>0$ is trivial. 

3.\ \&\ 4. The rest follows as in lemma \ref{le:f} except it is somewhat more simplified
as there are no complications with the point $x=0$.
\end{proof}

\begin{figure}
    \centering
    \includegraphics[width=0.75\textwidth]{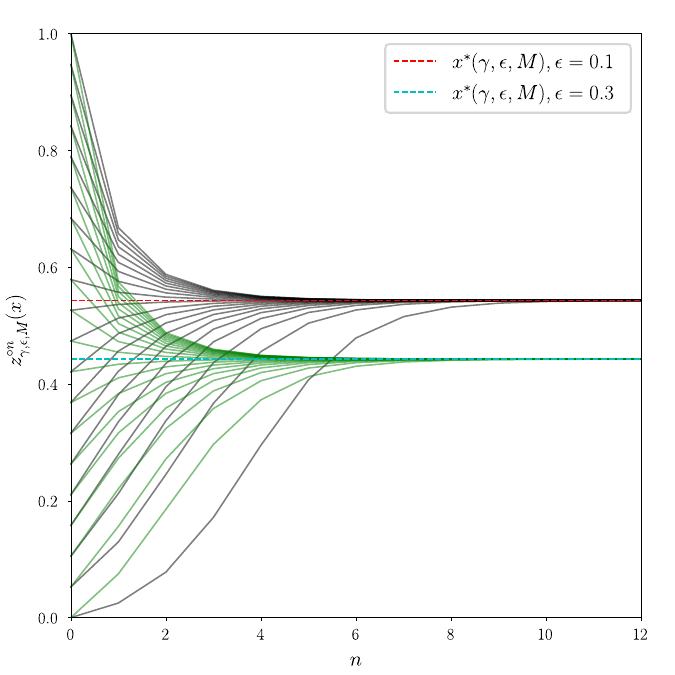}
    \caption{The map $z_{\gamma,\epsilon,M}$ as a dynamical system:
    the dependence of $z_{\gamma,\epsilon,M}^{\circ n}(x)$ (repeated application of $z_{\gamma,\epsilon,M}$ given initial condition $x$) on iteration $n$ for various
    initial conditions $x$. The parameters are set with 
    $|\mathcal{A}| = 4$, $M = 1$, and $\gamma = 0.4$. The regularization parameter is set to  $\epsilon = 0.1$ for black trajectories
    and to $\epsilon = 0.3$ for green trajectories, respectively.    
}
    \label{fig:z}
\end{figure}
The figure \ref{fig:z} illustrates the dynamical system induced by $z_{\gamma,\epsilon,M}$.
The lemma shows some of its properties (e.g., the fixed point). 

\subsection{The Main Theorem}
The following theorem is the $\epsilon$-\eUDRL{} version of theorem \ref{le:limdetcont}.
\elimdetcont*
At the core of the proof is the analysis of dynamical systems and convergence
induced by iterative application of the rational function $z_{\gamma,\epsilon,M}(x) = (1-\epsilon)\frac{x}{x+\gamma} +\epsilon\frac{M}{|\mathcal{A}|}$ (see lemma \ref{le:z}). Indeed, we will show that for all horizons $N \geq h \geq 1$ and all $\beta,\tilde{\beta},\epsilon \in (0,1)$ satisfying $\gamma+\epsilon<1$ (with $\gamma$ as in point 2. above), 
there exists $n_0$ and $\delta >0$ such that, for all initial conditions
$\pi_0>0$
all $\tkernel \in U_{\delta}(\tkernel_0)$, $n>n_1\geq n_0$,
and all $\bar{s}\in\theinterestingstates$ with remaining horizon $h$, it holds that
\begin{equation}
\pi_{n,\tkernel,\epsilon}(\oactions(\bar{s})|\bar{s}) \geq
z_{\gamma,\epsilon,|\oactions(\bar{s})|}^{\circ (n-n_1)} (\frac{\epsilon|\oactions(\bar{s})|}{|\mathcal{A}|})
.
\label{eq:eforMainTheorem}
\end{equation}
Then, by lemma \ref{le:z}, the right-hand side converges to $x^*(\gamma,\epsilon,|\oactions(\bar{s})|)$, which implies point~2.\ of the theorem (see the proof for details). Furthermore, point~1.\ is a consequence of point~2.\ The argument for this is the same as in section \ref{ssse:specmutheorem}.
\begin{proof}
The proof is analogous to the proof of theorem \ref{le:limdetcont} thus we will concentrate on
discussing the small differences (of course, with the lemmas introduced in this appendix instead of their original versions).

Since the lower bound $\pi_{n,\epsilon}(\oactions(\bar{s})) \geq \frac{\epsilon|\oactions(\bar{s})|}{|\mathcal{A}|}\geq \frac{\epsilon}{|\mathcal{A}|} > 0$ (for all $\bar{s}\in \theinterestingstates$) coming from $\epsilon$-regularity of $\epsilon$-\eUDRL{}-generated policies is used in \eref{eq:eforMainTheorem}, we do not need to prove any analogies to \eref{eq:pisep}.

In point 1., $(\beta,\tilde{\beta},\alpha(\delta,\epsilon),\epsilon) \rightarrow (0,0,\alpha(0,\epsilon_0),\epsilon_0)$
(where $\epsilon_0 >0$ and $\alpha(0,\epsilon_0) >0$ from lemma \ref{le:z})
implies $\gamma \rightarrow 0$ (continuity of $\gamma$ at point $(0,0,\alpha(0,\epsilon_0),\epsilon_0)$). Further,
$(\gamma,\epsilon) \rightarrow (0,\epsilon_0)$ causes
$x^* \rightarrow 1-\epsilon_0(1-\frac{|\oactions(\bar{s})|}{|\mathcal{A}|})$, according to lemma \ref{le:z}.

The proof proceeds by showing implications 2.$\implies$1. and \eref{eq:eforMainTheorem}$\implies$2. similarly as in the original proof.
Here we will focus on the proof of \eref{eq:eforMainTheorem}.
Fixing $2>\delta>0$ and $\epsilon >0$,
 lemma \ref{le:esuppstab} asserts that $\theinterestingstates \subset \supp \den_{\tkernel,\pi_{n,\epsilon}} \cap \supp \nu_{\tkernel,\pi_n}$
for all $n\geq 0$,$\pi_0 \geq 0$, $\tkernel \in U_{\delta}(\tkernel_0)$.
The policy $\pi_{n+1,\tkernel,\epsilon}$ is then well defined by
$$
\pi_{n+1,\tkernel,\epsilon}(\cdot|\bar{s}) =
\frac{\num_{\tkernel,\pi_{n,\epsilon}}(\cdot|\bar{s})}
{\den_{\tkernel,\pi_{n,\epsilon}}(\cdot|\bar{s})}
+ \frac{\epsilon}{|\mathcal{A}|}
\label{eq:epirec}
$$
for all $\bar{s} \in \theinterestingstates$.
Analogous to \eref{eq:piM}, we can rewrite the above equation similarly
as in theorem \ref{le:limdetcont} using $Q_{\tkernel}^{\pi_{n,\epsilon},g,h}$
and $v_{\tkernel,\pi_{n,\epsilon}}$.
The proof proceeds by induction on the remaining horizon $h$.

\emph{Base case $(h=1)$:}
Now we fix $\beta,\tilde{\beta} \in (0,1)$ arbitrary so that
$1> \gamma + \epsilon$, where $\gamma = \frac{\tilde{\beta}}{((1-\epsilon)^N-\beta)\alpha(\delta,\epsilon)}$.
Note that we will be restricting (decreasing) $\delta>0$
several times in the proof.
Since $\alpha(\delta,\epsilon)$ is deceasing in $\delta$, the decrease in $\delta$ will cause an increase in $\alpha$, which further causes a decrease in $\gamma$ leaving the constraint $1> \gamma + \epsilon$ in place.
According to 
point 1.\ of lemma \ref{le:contoptbound} (continuity of $Q^*_{\tkernel}$),
point 5.\ of lemma \ref{le:esuppstab},
 lemma \ref{le:econtQfac}, and lemma \ref{le:edetopt}, we can fix $2>\delta >0$
so that for all $\tkernel \in U_{\delta}(\tkernel_0)$,
all $\bar{s} = (s,1,g) \in \theinterestingstates$, and all $n\geq 1$ it holds that
$$
\begin{aligned}
\tilde{\beta}  &> Q_{\tkernel}^*(\bar{s},a)\quad \text{for}\; a\notin \oactions(\bar{s}),
\\
(1-\epsilon)^N - \beta &< Q_{\tkernel}^{\pi_{n,\epsilon}}(\bar{s},a) \quad \text{otherwise.}
\end{aligned}
$$
Further, we lower bound the recursion for $\pi_{n+1,\tkernel,\epsilon}(\oactions(\bar{s})|\bar{s})$ in exactly
the same way as in theorem \ref{le:limdetcont} using the above $\tilde{\beta}$ and $\beta$ bounds, and also the $\alpha$ bound on the 
state visitation terms leaving $(\forall n \geq 0, \forall \pi_0 >0, \forall \tkernel \in U_{\delta}(\tkernel_0),\forall \bar{s} = (s,1,g) \in \theinterestingstates )$
\begin{align}
\pi_{n+1,\tkernel,\epsilon}(\oactions(\bar{s})|\bar{s})
&\geq
\frac{((1-\epsilon)^N-\beta)\alpha \pi_{n,\tkernel,\epsilon}(\oactions(\bar{s})|\bar{s})}
{((1-\epsilon)^N-\beta)\alpha \pi_{n,\tkernel,\epsilon}(\oactions(\bar{s})|\bar{s}) + \tilde{\beta}}
+
\frac{\epsilon |\oactions(\bar{s})|}{|\mathcal{A}|}
\nonumber\\
&=
z_{\gamma,\epsilon,\oactions(\bar{s})} (\pi_{n,\tkernel,\epsilon}(\oactions(\bar{s})|\bar{s})).
\label{eq:epimf1}
\end{align}

\emph{Induction step:} Now we aim to prove that the statement holds for a fixed $h$ with $h>1$, while working under the assumption that it holds for a smaller $h$ (induction assumption).
Fix $\beta,\tilde{\beta} \in (0,1)$ arbitrary so that
$1 > \gamma + \epsilon$, where $\gamma = \frac{\tilde{\beta}}{((1-\epsilon)^N -\beta)\alpha(\delta,\epsilon)}$.
By lemma \ref{le:econtQfac} there exists $\delta'>0$ such that
if the following two conditions are met
$$
\begin{gathered}
\tkernel \in U_{\delta'}(\tkernel_0),
\\
(\forall \bar{s}' = (s',h',g') \in \theinterestingstates,h' < h ):
\quad
1 - \epsilon(1 - \frac{\oactions(\bar{s}')}{|\mathcal{A}|}) - \pi_{\epsilon}(\oactions(\bar{s}')|\bar{s}')
< \frac{\delta'}{2}
\end{gathered}
$$
then it holds that
$$
(\forall \bar{s} = (s,h,g) \in \theinterestingstates, \forall a \in \oactions(\bar{s}), \forall \tkernel\in U_{\delta'}(\tkernel_0) ):
\quad
Q_{\tkernel}^{\pi_{\epsilon}}(\bar{s},a) \geq
(1-\epsilon)^N - \beta.
$$
The first condition is met by the choice $0<\delta < \delta'<2$.
To meet the second condition, we will use the induction assumption
choosing $\beta',\tilde{\beta}' >0$ such that $\gamma'+\epsilon < 1$
(where $\gamma' = \frac{\tilde{\beta}'}{((1-\epsilon)^N-\beta')\alpha(\delta,\epsilon)}$) and $x^*(\gamma',\epsilon,|\oactions(\bar{s}')|) > 1-\epsilon(1-\frac{|\oactions(\bar{s}')|}{|\mathcal{A}|})-\delta'/2$
(this can be done since $x^*(\gamma',\epsilon,|\oactions(\bar{s}')|) \rightarrow 1 - \epsilon(1-\frac{|\oactions(\bar{s}')|}{|\mathcal{A}|})$ for $(\beta',\tilde{\beta}') \rightarrow 0$). 
Applying the induction assumption requires a restriction on $\delta$. From the induction assumption, it follows that there exists $n_0'$ such that,
for all $n>n_0', \tkernel \in U_{\delta}(\tkernel_0)$, it holds that
$\pi_{n,\tkernel,\epsilon}(\oactions(\bar{s}')| \bar{s}') > z_{\gamma',\epsilon,|\oactions(\bar{s}')|}^{n-n_0'}(\frac{\epsilon|\oactions(\bar{s}')|}{|\mathcal{A}|})$.
As $z_{\gamma',\epsilon,|\oactions(\bar{s}')|}^{n-n_0'}(\frac{\epsilon|\oactions(\bar{s}')|}{|\mathcal{A}|}) \rightarrow x^*(\gamma',\epsilon,|\oactions(\bar{s}')|)$ there exists $n_0$ such that
$\pi_{n,\tkernel,\epsilon}(\oactions(\bar{s}')|\bar{s}') > 1 -\epsilon(1-\frac{|\oactions(\bar{s}')|}{|\mathcal{A}|}) - \delta'/2$
for $n>n_0$, $\tkernel \in U_{\delta}(\tkernel_0)$.

In addition, by restricting $\delta > 0$ accordingly, we force the bound $Q_{\tkernel}^*(\bar{s},a) <\tilde{\beta}$ for all $\bar{s} \in \theinterestingstates$ and all $a \notin \oactions(\bar{s})$. Finally, we can apply
$\beta$, $\tilde{\beta}$ and state visitation bounds as we
did before (for $h=1$) to get the desired result.
\end{proof}

\subsection{Extending the Continuity Results to Other Segment Sub-Spaces}
As before we need to modify theorem \ref{le:elimdetcont} to cover algorithms like ODT with regularization restricting the recursion to $\Seg^{\diag}$ or $\Seg^{\trail}$. First, we will introduce $\Seg^{\diag/\trail}$ variants of lemmas \ref{le:esuppstab} and \ref{le:edetopt}.
Note that equations \eref{eq:isthesame}, \eref{eq:saisthesame} and \eref{eq:optathesame}
depends only on $\pi_0$ and are therefore applicable also in $\Seg^{\diag/\trail}$.
Following the same reasoning as in section \ref{sse:suppstabDiagTrail},
one can extend the results in lemma \ref{le:esuppstab} also to $\Seg^{\diag/\trail}$.
\begin{lemma}\label{le:esuppstabDiagTrail} ($\epsilon$-\eUDRL{} version of support stability for $\Seg^{\diag/\trail}$) The lemma \ref{le:esuppstab} remains valid under renaming $\pi_{n,\epsilon}\rightarrow \pi_{n,\epsilon}^{\diag/\trail}$.
\end{lemma}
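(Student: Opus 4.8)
The plan is to follow the proof of Lemma~\ref{le:esuppstab} (the $\epsilon$-\eUDRL{} support-stability statement on $\Seg$) while incorporating the diagonal/trailing modifications already employed in the proof of Lemma~\ref{le:suppstabTrailDiag}. The two sets of changes are essentially orthogonal: the regularization forces maximal policy supports, and the segment-subspace restriction only alters the sets over which certain summations range. First I would record that the kernel inclusion \eref{eq:kernelInclusion}, $\supp \tkernel_0 \subset \supp \tkernel$ for all $\tkernel\in U_2(\tkernel_0)$ (with $\tkernel_0$ deterministic), holds verbatim, since it concerns only the kernels and is independent of the chosen segment space.

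The key simplification, exactly as in Lemma~\ref{le:esuppstab}, is that the regularized recursion yields $\pi_{n,\epsilon}^{\diag/\trail}>0$ for every $n\geq 0$ (the additive term $\epsilon/|\mathcal{A}|$ together with $\pi_0>0$), so that $\supp\pi_{n,\epsilon}^{\diag/\trail}=\supp\pi_0$ for all $n$. Next I would invoke the identities \eref{eq:saisthesame} and \eref{eq:isthesame}, which equate the critical-state set $\theinterestingstates$ and its state-action refinement $\supp\num_{\tkernel_0,\pi_0}\cap(\mathcal{A}\times\supp\nu_{\tkernel_0,\pi_0})$ with their $\diag/\trail$ counterparts. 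Because these identities depend only on $\pi_0$ and $\tkernel_0$, and not on the regularization or on the later iterates, they carry over unchanged; this is precisely the observation that \eref{eq:isthesame}, \eref{eq:saisthesame} and \eref{eq:optathesame} depend only on $\pi_0$.

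With these in hand, point~1 follows from the alternative argument of Lemma~\ref{le:esuppstab}: combining $\supp\tkernel_0\subset\supp\tkernel$ with $\supp\pi_{n,\epsilon}^{\diag/\trail}=\supp\pi_0$ gives $\supp\nu_{\tkernel_0,\pi_0}\subset\supp\nu_{\tkernel,\pi_{n,\epsilon}^{\diag/\trail}}$ and $\supp\num_{\tkernel_0,\pi_0}^{\diag/\trail}\subset\supp\num_{\tkernel,\pi_{n,\epsilon}^{\diag/\trail}}^{\diag/\trail}$, after which the same Cartesian-product and intersection manipulation yields the claimed inclusion, with equality at $\tkernel=\tkernel_0$ by maximality of $\pi_0$. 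Point~2 is immediate from point~1 via the equivalence $(s,h,g)\in\supp\den^{\diag/\trail}\iff(\exists a)\,(a,s,h,g)\in\supp\num^{\diag/\trail}$. For point~3 the analogue of \eref{eq:taupositive} holds trivially, since $\pi_{n,\epsilon}^{\diag/\trail}>0$; the trajectory $\tau$ constructed as in Lemma~\ref{le:suppstabTrailDiag} places $(a,s,h,g)$ in a genuinely diagonal segment, so its probability under $\pi_{n,\epsilon}^{\diag/\trail}$ is positive and the required marginal is positive. Points~4 and~5 transfer verbatim, using the constructed $\tau$ and the fact that one begins with a diagonal/trailing segment in the $Q$-value contradiction argument.

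I expect the only delicate point to be confirming that the diagonal-segment trajectory construction of Lemma~\ref{le:suppstabTrailDiag} is compatible with the positivity shortcut of the regularized setting --- that is, that the segments used to certify $\pi_{n+1,\epsilon}^{\diag/\trail}(a_i\mid s_i,H-i,g)>0$ are themselves diagonal and hence admissible in $\Seg^{\diag/\trail}$. This is exactly the step flagged in the proof of Lemma~\ref{le:suppstabTrailDiag}, and regularization only makes it easier, as every policy mass is bounded below by $\epsilon/|\mathcal{A}|$; no genuinely new estimate is required.
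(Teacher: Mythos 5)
Your proposal is correct and follows essentially the same route the paper takes: the paper's own justification is precisely that equations \eref{eq:saisthesame} and \eref{eq:isthesame} depend only on $\pi_0$ and that the reasoning of Lemma~\ref{le:suppstabTrailDiag} combines with the positivity shortcut $\supp\pi_{n,\epsilon}^{\diag/\trail}=\supp\pi_0$ from the regularized setting, which is exactly what you carry out. Your writeup is in fact more explicit than the paper's two-sentence remark, and the "delicate point" you flag (diagonality of the certifying segments) is the same one the paper flags in the unregularized version.
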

Similarly we can derive the variant of lemma \ref{le:edetopt} for $\Seg^{\diag/\trail}$.
\begin{lemma}\label{le:edetoptDiagTrail} (Optimality of $\epsilon$-\eUDRL{} policies for deterministic kernels in $\Seg^{\diag/\trail}$) The lemma \ref{le:edetopt} remains valid under renaming $\pi_{n,\epsilon}\rightarrow \pi_{n,\epsilon}^{\diag/\trail}$.
\end{lemma}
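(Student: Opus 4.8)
The plan is to follow the proof of Lemma~\ref{le:edetopt} line by line, substituting the trailing/diagonal analogues of each auxiliary result. The enabling observation, already recorded in the text preceding Lemma~\ref{le:esuppstabDiagTrail}, is that the support identities \eref{eq:saisthesame}, \eref{eq:isthesame} and \eref{eq:optathesame} depend only on the fixed initializer $\pi_0$. Since every $\epsilon$-\eUDRL{}-generated policy is strictly positive, hence of maximal support, these identities hold unchanged in the regularized recursion on $\SegTrail$ and $\SegDiag$. Consequently the set $\theinterestingstates$, its state-action refinement $\supp \num_{\tkernel_0,\pi_0}\cap(\mathcal{A}\times\supp\nu_{\tkernel_0,\pi_0})$, and the optimal-action sets $\oactions(\bar{s})$ are identical whether the recursion is run on $\Seg$, $\SegTrail$ or $\SegDiag$.

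For point~1, I would use the recursion rewrites \eref{eq:segtrailing} and \eref{eq:segdiag} to express the regularized policy as $\pi_{n+1,\epsilon}^{\diag/\trail}=(1-\epsilon)\pi_{n+1}^{*}+\frac{\epsilon}{|\mathcal{A}|}$ on $\theinterestingstates$, with $\pi_{n+1}^{*}=\num_{\tkernel_0,\pi_{n,\epsilon}^{\diag/\trail}}^{\diag/\trail}/\den_{\tkernel_0,\pi_{n,\epsilon}^{\diag/\trail}}^{\diag/\trail}$. Exactly as in the unregularized case, establishing that $\pi_{n+1}^{*}$ is optimal on $\theinterestingstates$ reduces to a statement about supports, and these stay constant by the relevant points of Lemma~\ref{le:suppstabTrailDiag} and Lemma~\ref{le:esuppstabDiagTrail} together with \eref{eq:isthesame} and \eref{eq:saisthesame}. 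Optimality of $\pi_{n+1}^{*}$ then follows by invoking Lemma~\ref{le:detoptDiagTrail} in place of Lemma~\ref{le:detopt}.

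For points~2 and~3, the key point is that the state- and action-value functions $V^{\pi}$ and $Q^{\pi}$ of a fixed policy are intrinsic to the CE (cf.~\eref{eq:valueexpr}) and make no reference to the segment subspace; segments enter only through the policy-fitting step, which point~1 has already handled. Hence the induction on the remaining horizon $h$ from the proof of Lemma~\ref{le:edetopt} transfers verbatim once the appeals to points~4 and~5 of Lemma~\ref{le:esuppstab} are replaced by the corresponding points of Lemma~\ref{le:esuppstabDiagTrail}. The base case $h=1$ is unaffected, since the $Q$-values are then policy-independent, and the inductive step uses only $\pi_{n,\epsilon}^{\diag/\trail}\in\Pi_{\tkernel_0,\epsilon}^{*}$ for $n>0$ (supplied by point~1) to obtain $V^{\pi_{n,\epsilon}^{\diag/\trail}}(\bar{s})\geq(1-\epsilon)^{h-1}\pi_{n,\epsilon}^{\diag/\trail}(\oactions(\bar{s})\mid\bar{s})\geq(1-\epsilon)^h$ and the matching lower bound for $Q$.

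I expect the only step needing genuine attention to be confirming that the support identities survive the passage to regularization. But since regularization can only preserve or enlarge supports and these identities are anchored at the fixed $\pi_0$, this is immediate; the remainder is a mechanical substitution of lemma references, which is why the analogue for the diagonal and trailing spaces can be stated as a renaming.
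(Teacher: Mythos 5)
Your proposal is correct and matches the paper's argument: the paper likewise proves this by following the proof of Lemma~\ref{le:edetopt} and, in point~1, replacing Lemma~\ref{le:esuppstab} with Lemma~\ref{le:esuppstabDiagTrail} and invoking equations~\eref{eq:isthesame} and~\eref{eq:saisthesame}. Your write-up simply spells out in more detail the same substitution the paper records in one sentence.
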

The proof follows the same lines as in lemma \ref{le:edetopt} except in point 1.\ we use lemma \ref{le:esuppstabDiagTrail} instead of lemma \ref{le:esuppstab} and equations \eref{eq:isthesame} and \eref{eq:saisthesame}. Finally, we can state the $\Seg^{\diag/\trail}$ version of the theorem \ref{le:elimdetcont}.
\elimdetcontDiagTrail*
The proof follows similarly as the proof of theorem \ref{le:elimdetcont} except of small differences which resembles those described already in section \ref{ssse:extendingfinite}.
First, to assert that $\epsilon$-\eUDRL{}-generated policies are in $\Pi_{\tkernel_0,\epsilon}^*$ at deterministic points one applies lemma \ref{le:edetoptDiagTrail} instead of lemma \ref{le:edetopt}. Second, to prove that $\pi_{n+1,\epsilon}^{\diag/\trail}$ is
well defined through equation \eref{eq:epirec}, one uses equation \eref{eq:isthesame} and point 2.\ of lemma \ref{le:esuppstabDiagTrail} instead of lemma \ref{le:esuppstab}. Third, the application of point 1.\ of lemma \ref{le:recrewrites} has to be replaced with points 2.\ and 3.\ of the lemma respectively. To bound the $\epsilon$-\eUDRL{} recursion, one proceeds along the same lines yielding the exactly the same result as in \eref{eq:epimf1}. The same reasoning applies to the induction step. Finally, making use of lemma \ref{le:recrewrites}, which allows us to rewrite the $\epsilon$-\eUDRL{} recursion using only $\Seg$-space-related quantities, shows that there are no further differences even for $\pi_{n+1,\epsilon}^{\diag/\trail}$.
The same approach also applies to the forthcoming corollary of the theorem \ref{le:elimdetcont}.

\subsection{Estimating the Location of Accumulation Points}
The following corollary is the $\epsilon$-\eUDRL{} version of the corollary \ref{le:limitbounds}.
\elimitbounds*
\begin{proof}
The proof follows the same as the proof of corollary \ref{le:limitbounds} except for slight variations in the corollary statement which has to be addressed. However, the main arguments are the same and we will not repeat them here. Again we rather try to
concentrate on differences. Of course, one has to use lemmas introduced in this appendix instead of their original versions.
The definition of $\alpha$ comes from lemma \ref{le:ealpha}.
We decided to drop $\delta,\epsilon$ arguments for convenience because
in this corollary proof there are no any frequent readjustments of
$\delta$ in opposite to the previous theorem.

\emph{The claim} has now a sightly different form $(\forall  N \geq h \geq 1, \forall \pi_0 >0 , \forall \tkernel \in U_{\delta}(\tkernel_0)):$
\begin{align*}
&
(\exists (\pi_{n,\tkernel_0,\epsilon}^*), \pi_{n,\tkernel_0,\epsilon}^*\in \Pi_{\tkernel_0,\epsilon}^*):
\;
\limsup_n
\max_{\bar{s} = (s,h',g) \in\theinterestingstates,h'=h, a \in \mathcal{A}}
|Q_{\tkernel}^{\pi_{n,\epsilon}}(\bar{s},a) - Q_{\tkernel_0}^{\pi_{n,\epsilon}^*}(\bar{s},a)| \leq \beta_h,
\\
&\limsup_n
\max_{\bar{s} = (s,h',g) \in\theinterestingstates,h'=h}
2 (1-\epsilon(1-\frac{|\oactions(\bar{s})|}{|\mathcal{A}|})-\pi_{n,\epsilon}(\oactions(\bar{s})|\bar{s}))
\leq \nu_h
\end{align*}
Notice that even before we were using convenient $\pi_{\tkernel_0}^*$
to a policy $\pi$ in the original version of lemma \ref{le:econtQfac}. The difference is that now
we have to mention the chosen sequence $(\pi_{n,\tkernel_0,\epsilon}^*)$ explicitly in the claim
statement, because $Q_{\tkernel_0}^{\pi_{n,\epsilon}^*}$ depends on the particular choice of $\pi_{n,\epsilon}^* \in \Pi_{\tkernel_0,\epsilon}^*$ (unlike the optimal value $Q_{\tkernel_0}^*$).

Points 1.\ and 2.\ follows again directly from the claim.
points 3., 4., and 5.\ are proved in exactly the same way as in the original corollary. We comment only on point 5.\ here. 
We chose the sequence $(\pi_{n,\epsilon}^{*})_{n\geq 0}$ so that $\pi_{n,\epsilon}^*(a\mid\bar{s}) \geq \pi_{n,\epsilon}(a\mid\bar{s})$, for all $\bar{s}\in\theinterestingstates$ and all $a\in\oactions(\bar{s})$, which implies 
$\|
\pi_{n,\epsilon}(\cdot|\bar{s})
- \pi_{n,\epsilon}^{*} (\cdot|\bar{s}) 
\|_1
= 2(1-\epsilon(1-\frac{|\oactions(\bar{s})|}{|\mathcal{A}|})-\pi_{n,\epsilon}(\oactions(\bar{s})|\bar{s}))
$.
We conclude with the computation of
the convergence rate. 
After applying Heine and L'Hospital
theorems (in exactly the same way as in the original proof)
one is left with
\begin{align*}
\lim_{n\rightarrow \infty} \frac{y_{n+1}-y_L}{y_n-y_L}
&=
\lim_{x\rightarrow x^*(\gamma',\epsilon,|\oactions(\bar{s})|)}
z_{\gamma',\epsilon,|\oactions(\bar{s})|}'
\\
&=
\lim_{x\rightarrow x^*(\gamma',\epsilon,|\oactions(\bar{s})|)}
\frac{(1-\epsilon)\gamma'}{(x+\gamma')^2}
=
\frac{(1-\epsilon)\gamma'}{(x^*(\gamma',\epsilon,|\oactions(\bar{s})|)+\gamma')^2}
\\
&\leq
\frac{(1-\epsilon)\gamma'}{(\hat{x}^*(\gamma',\epsilon,|\oactions(\bar{s})|)+\gamma')^2}
=
\frac{(1-\epsilon)\gamma'}{(1-\epsilon(1-\frac{|\oactions(\bar{s})|}{|\mathcal{A}|}))^2}
\\
&\leq
\frac{\gamma'}{(1-\epsilon)}
=
\gamma' + \frac{\epsilon\gamma'}{(1-\epsilon)}
\leq
\gamma' + \frac{\epsilon\gamma'}{\gamma'}
=
\gamma'+\epsilon
<
1.
\end{align*}
\end{proof}

\end{document}